\newif\ifabstract
\newif\iffull
\newcommand{\myparskip}{3pt}
\newcommand{\sgn}{\text{sgn}}
\newcommand{\loss}{\hat{L}_{n}(\bm{\bm{\theta}}^{*};p)}
\newcommand{\lossc}{\hat{L}_{n}(\bm{\bm{\theta}}^{*})}
\newcommand{\error}{\hat{R}_{n}(\bm{\bm{\theta}}^{*})}
\newcommand{\thres}{\sigma_{\text{th}}}
\newtheorem{theorem}{Theorem}
\newtheorem{lemma}{Lemma}
\newtheorem{example}{Example}
\newtheorem{claim}{Claim}
\newtheorem{proposition}{Proposition}
\newtheorem{assumption}{Assumption}
\newcommand{\footremember}[2]{%
   \footnote{#2}
    \newcounter{#1}
    \setcounter{#1}{\value{footnote}}%
}
\newcommand{\footrecall}[1]{%
    \footnotemark[\value{#1}]%
} 
\newenvironment{proof}{\par \smallskip{\bf Proof:}}{\hfill\stopproof}
\def\stopproof{\square}
\def\square{\vbox{\hrule height.2pt\hbox{\vrule width.2pt height5pt \kern5pt
\vrule width.2pt} \hrule height.2pt}}
\begin{document}

\title{Understanding the Loss Surface of Neural Networks for Binary Classification}
\author{Shiyu Liang\footremember{uiuc}{University of Illinois at Urbana-Champaign}\\ sliang26@illinois.edu
            \and Ruoyu Sun\footrecall{uiuc}\\ ruoyus@illinois.edu
            \and Yixuan Li\footremember{cornell}{Facebook Research }\\yixuanl@fb.com
            \and R. Srikant\footrecall{uiuc}\\rsrikant@illinois.edu
            }

\date{}
\maketitle

\begin{abstract}
It is widely conjectured that the reason that training algorithms for neural networks are successful because all local minima lead to similar performance; for example, see \cite{lecun2015deep, choromanska2015loss,dauphin2014identifying}. Performance is typically measured in terms of two metrics: training performance and generalization performance. Here we focus on the training performance of neural networks for binary classification, and provide conditions under which the training error is zero at all local minima of appropriately chosen surrogate loss functions. Our conditions are roughly in the following form: the neurons have to be increasing and strictly convex, the neural network should either be single-layered or is multi-layered with a shortcut-like connection, and the surrogate loss function should be a smooth version of hinge loss. We also provide counterexamples to show that, when these conditions are relaxed, the result may not hold.
\end{abstract}

\section{Introduction}
Local search algorithms like stochastic gradient descent~\cite{bottou2010large} or variants have gained huge success in training deep neural networks 
(see,  \cite{krizhevsky2012imagenet}; \cite{goodfellow2013maxout}; \cite{wan2013regularization}, for example). 
Despite the spurious saddle points and local minima on the loss surface~\cite{dauphin2014identifying}, it has been widely conjectured that all local minima of the empirical loss lead to similar training performance~\cite{lecun2015deep,choromanska2015loss}. For example, \cite{li2015convergent} empirically showed that neural networks with identical architectures but different initialization points can converge to local minima with similar classification performance.  However, it still remains a challenge to characterize the theoretical properties of the loss surface for neural networks.

In the setting of regression problems, theoretical justifications has been established to support the conjecture that all local minima  lead to similar training performance. For shallow models, \cite{andoni2014learning, sedghi2014provable, janzamin2015beating, haeffele2015global, gautier2016globally, brutzkus2017globally, soltanolkotabi2017learning, soudry2017exponentially, goel2017learning, du2017convolutional, zhong2017recovery, li2017convergence} provide conditions under which the local search algorithms are guaranteed to converge to the globally optimal solution for the regression problem. 
For deep linear networks, it has been shown that every local minimum of the empirical loss is a global minimum~\cite{baldi1989neural, kawaguchi2016deep, freeman2016topology, hardt2016identity, yun2017global}. In order to characterize the loss surface of more general deep networks for regression tasks, \cite{choromanska2015loss} have proposed an interesting approach. Based on certain constructions on network models and additional assumptions, they relate the loss function to a spin glass model and show that the almost all local minima have similar empirical loss and the number of bad local minima decreases quickly with the distance to the global optimum. Despite the interesting results, it remains a concern to properly justify their assumptions. 
More recently, it has been shown ~\cite{nguyen2017loss1,nguyen2017loss2} that, when the dataset satisfies certain conditions, if one layer in the multilayer network has more neurons than the number of training samples, then a subset of local minima are global minima.   

Although the loss surfaces in regression tasks have been well studied, the theoretical understanding of loss surfaces in classification tasks is still limited. \cite{nguyen2017loss2,boob2017theoretical,soltanolkotabi2017theoretical} treat the classification problem as the regression problem by using quadratic loss, and show that (almost) all local minima are global minima. However, the global minimum of the quadratic loss does not necessarily have zero misclassification error even in the simplest cases (e.g., every global minimum of quadratic loss can have non-zero misclassification error even when the dataset is linearly separable and the network is a linear network). This issue was mentioned in \cite{nguyen2017loss1} and a different loss function was used, but their result only studied the linearly separable case and a subset of the critical points.

In view of the prior work, the context and contributions of our paper are as follows:

\begin{itemize}
\item Prior work on quadratic and related loss functions suggest that one can achieve zero misclassification error at all local minima by overparameterizing the neural network. The reason for over-parameterization is that the quadratic loss function tries to match the output of the neural network to the label of each training sample.
\item On the other hand, hinge loss-type functions only try to match the sign of the outputs with the labels. So it may be possible to achieve zero misclassification error without over-parametrization. We provide conditions under which the misclassification error of neural networks is zero at all local minima for hinge-loss functions. 
\item Our conditions are roughly in the following form: the neurons have to be increasing and strictly convex, the neural network should either be single-layered or is multi-layered with a shortcut-like connection and the surrogate loss function should be a smooth version of the hinge loss function.
\item We also provide counterexamples to show that when these conditions are relaxed, the result may not hold. 
\item We establish our results under the assumption that either the dataset is linearly separable or the positively and negatively labeled samples are located on different subspaces. Whether this assumption is necessary is an open problem, except in the case of certain special neurons.
\end{itemize}

The outline of this paper is as follows. In Section~\ref{sec::prelim}, we present the necessary definitions. In Section~\ref{sec::main-results}, we present the main results and we discuss each condition in Section~\ref{sec::discussions}. Conclusions are presented in Section~\ref{sec::conclusions}. All proofs are provided in Appendix.

\section{Preliminaries}\label{sec::prelim}
\textbf{Network models.} Given an input vector $x$ of dimension $d$, we consider a neural network with $L$ layers for binary classification. We denote by $M_{l}$  the number of neurons on the $l$-th layer (note that $M_{0}=d$ and $M_{L}=1$). We denote the neuron activation function by  $\sigma$. 
Let $\bm{W}_{l}\in\mathbb{R}^{M_{l-1}\times M_{l}}$ denote the weight matrix connecting the $(l-1)$-th layer and the $l$-th layer and $\bm{b}_{l}\in\mathbb{R}^{M_{l}}$ denote the bias vector for the neurons in the $l$-th layer. Therefore, the output of the  network $f:\mathbb{R}^{d}\rightarrow \mathbb{R}$ can be expressed by
\begin{equation*}
f(x;\bm{\theta})=\bm{W}^{\top}_{L}\sigma\left(...\sigma(\bm{W}^{\top}_{1}x+\bm{b}_{1})+\bm{b}_{L-1}\right)+\bm{b}_{L}, 
\end{equation*}
where $\bm{\theta}$ 
denotes all parameters in the neural network. 

\textbf{Data distribution.} In this paper, we consider binary classification tasks where each sample $(\bm{X},Y)\in\mathbb{R}^{d}\times \{-1,1\}$ is drawn from an underlying data distribution $\mathbb{P}_{\bm{X}\times Y}$
defined on $\mathbb{R}^{d}\times \{-1,1\}$. The sample $(\bm{X},Y)$ is considered positive if $Y=1$, and negative otherwise. 
Let $\mathcal{E}=\{\bm{e}_{1},...,\bm{e}_{d}\}$ denote a set of orthonormal basis on the space $\mathbb{R}^{d}$. Let  $\mathcal{U}_{+}$ and $\mathcal{U}_{-}$ denote two subsets of $\mathcal{E}$ such that all positive and negative samples are located on the linear span of the set $\mathcal{U}_{+}$ and $\mathcal{U}_{-}$, respectively,   
i.e., $\mathbb{P}_{\bm{X}|Y}(\bm{X}\in\text{Span}(\mathcal{U}_{+})|Y=1)=1$ and $\mathbb{P}_{\bm{X}|Y}(\bm{X}\in\text{Span}(\mathcal{U}_{-})|Y=-1)=1$. Let $r$ denote the size of the set $\mathcal{U}_{+}\cup\mathcal{U}_{-}$, $r_{+}$ denote the size of the set $\mathcal{U}_{+}$ and $r_{-}$ denote the size of the set $\mathcal{U}_{-}$, respectively.

\textbf{Loss and error.} Let $\mathcal{D}=\{(x_{i},y_{i})\}_{i=1}^{n}$ denote a dataset with $n$ samples, each independently drawn from the distribution $\mathbb{P}_{\bm{X}\times Y}$. Given a neural network $f(x;\bm{\theta})$ parameterized by $\bm{\theta}$ and a loss function $\ell:\mathbb{R}\rightarrow\mathbb{R},$
in binary classification tasks\footnote{We note that, in regression tasks, the empirical loss is usually defined as $\hat{L}_{n}(\bm{\theta})=\frac{1}{n}\sum_{i=1}^{n}\ell(y_{i}-f(x_{i};\bm{\theta}))$.}, we define the \textbf{empirical loss} $\hat{L}_{n}(\bm{\theta})$ as the average loss of the network $f$ on a  sample in the dataset {$\mathcal{D}$, i.e.,}
$$
\hat{L}_{n}(\bm{\theta})=\frac{1}{n}\sum_{i=1}^{n}\ell(-y_{i}f(x_{i};\bm{\theta})).
$$
Furthermore, for a neural network $f$, we define a binary classifier $g_{f}:\mathbb{R}^{d}\rightarrow \{-1,1\}$ of the form $g_{f}=\sgn(f)$, where the sign function $\sgn(z)=1$, if $z\ge 0$, and $\sgn(z)=0$ otherwise. 
We define the \textbf{training error} (also called the \textbf{misclassification error}) $\hat{R}_{n}(\bm{\theta})$ as the misclassification rate of the neural network $f(x;\bm{\theta})$ on the dataset $\mathcal{D}$, i.e., 
\begin{equation*}
\hat{R}_{n}(\bm{\theta})=\frac{1}{n}\sum_{i=1}^{n}\mathbb{I}\{y_{i}\neq \sgn(f(x_{i};\bm{\theta}))\},
\end{equation*}
where $\mathbb{I}\{\cdot\}$ is the indicator function. 
The training error $\hat{R}_{n}$ measures the classification performance of the network $f$ on the finite samples in the dataset $\mathcal{D}$.

\section{Main Results}\label{sec::main-results}

In this section, we present the main results. We first introduce several important conditions in order to derive the main results, and we will provide further discussions on these conditions in the next section. 

\subsection{Conditions}
To fully specify the problem, we need to specify our assumptions on several components of the model, including: (1) the loss function, (2) the data distribution, (3) the network architecture and (4) the neuron activation function.  
\begin{assumption}[Loss function] \label{assump::loss}
Let $\ell_{p}:\mathbb{R}\rightarrow \mathbb{R}$ denote a loss function satisfying the following conditions: (1) $\ell_{p}$ is a surrogate loss function, i.e., $\ell_{p}(z)\ge \mathbb{I}\{z\ge0\}$ for all $z\in\mathbb{R}$, where $\mathbb{I}(\cdot)$ denotes the indicator function; (2) $\ell_{p}$ has continuous derivatives up to order $p$ on $\mathbb{R}$; (3) $\ell_{p}$ is non-decreasing (i.e., $\ell'_{p}(z)\ge 0$ for all $z\in\mathbb{R}$) and there exists  a positive constant $z_{0}$ such that $\ell'_{p}(z)=0$ iff $z\le -z_{0}$. 
\end{assumption}
The first condition in Assumption~\ref{assump::loss} ensures that the training error $\hat{R}_{n}$ is always upper bounded by the empirical loss $\hat{L}_{n}$, i.e., $\hat{R}_{n}\le \hat{L}_{n}$. This guarantees that the neural network can correctly classify all samples in the dataset (i.e., $\hat{R}_{n}=0$), when the neural network achieves zero empirical loss (i.e., $\hat{L}_{n}=0$).
The second condition ensures that the empirical loss $\hat{L}_{n}$ has continuous derivatives with respect to the parameters up to a sufficiently high order. The third condition ensures that the loss function is non-decreasing and $\ell'_{p}(z)=0$ is achievable if and only if $z\le -z_{0}$. Here, we provide a simple example of the loss function satisfying all conditions in Assumption~\ref{assump::loss}: the polynomial hinge loss, i.e., $\ell_{p}(z)=[\max\{z+1,0\}]^{p+1}$. We note that, in this paper, we use $\hat{L}_{n}(\bm{\theta};p)$ to denote the empirical loss when the loss function is $\ell_{p}$ and the network is parametrized by a set of parameters $\bm{\theta}$. Further results on the impact of loss functions are presented in Section~\ref{sec::discussions}.

\begin{assumption}[Data distribution] \label{assump::full-rank}
Assume that for random vectors $\bm{X}_{1},...,\bm{X}_{r_{+}}$ independently drawn from the distribution $\mathbb{P}_{\bm{X}|Y=1}$ and $\bm{Z}_{1},...,\bm{Z}_{r_{-}}$ independently drawn from the distribution $\mathbb{P}_{\bm{X}|Y=-1}$, matrices $\left(\bm{X}_{1},...,\bm{X}_{r_{+}}\right)\in\mathbb{R}^{r_{+}\times d}$ and $\left(\bm{Z}_{1},...,\bm{Z}_{r_{-}}\right)\in\mathbb{R}^{r_{-}\times d}$ are full rank matrices with probability one. 
\end{assumption}
Assumption ~\ref{assump::full-rank} states that support of the conditional distribution $\mathbb{P}_{\bm{X}|Y=1}$ is sufficiently rich so that $r_+$ samples drawn from it will be linearly independent. In other words, by stating this assumption, we are avoiding trivial cases where all the positively labeled points are located in a very small subset of the linear span of $\mathcal{U}_+.$ Similarly for the negatively labeled samples.
\begin{assumption}[Data distribution] \label{assump::different-subspaces}
Assume $|\mathcal{U}_{+}\cup\mathcal{U}_{-}|>\max\{|\mathcal{U}_{+}|,|\mathcal{U}_{-}|\}$, i.e., $r>\max\{r_{+},r_{-}\}$.
\end{assumption}

Assumption~\ref{assump::different-subspaces} assumes that the positive and negative samples are not located on the same linear subspace. Previous works \cite{belhumeur1997eigenfaces, chennubhotla2001sparse, cootes2001active,belhumeur1997eigenfaces} have observed that some classes of natural images (e.g., images of faces, handwritten digits, etc) can be reconstructed from lower-dimensional representations. For example, using dimensionality reduction methods such as PCA, one can approximately reconstruct the original image from only a small number of principal components~\cite{belhumeur1997eigenfaces, chennubhotla2001sparse}. 
Here, Assumption~\ref{assump::different-subspaces} states that both the positively and negatively labeled samples have lower-dimensional representations, and they do not exist in the same lower-dimensional subspace. We provide additional analysis in Section~\ref{sec::discussions}, showing how our main results generalize to other data distributions. 
\begin{wrapfigure}{R}{0.45\linewidth}
\centering
\includegraphics[width=1\linewidth]{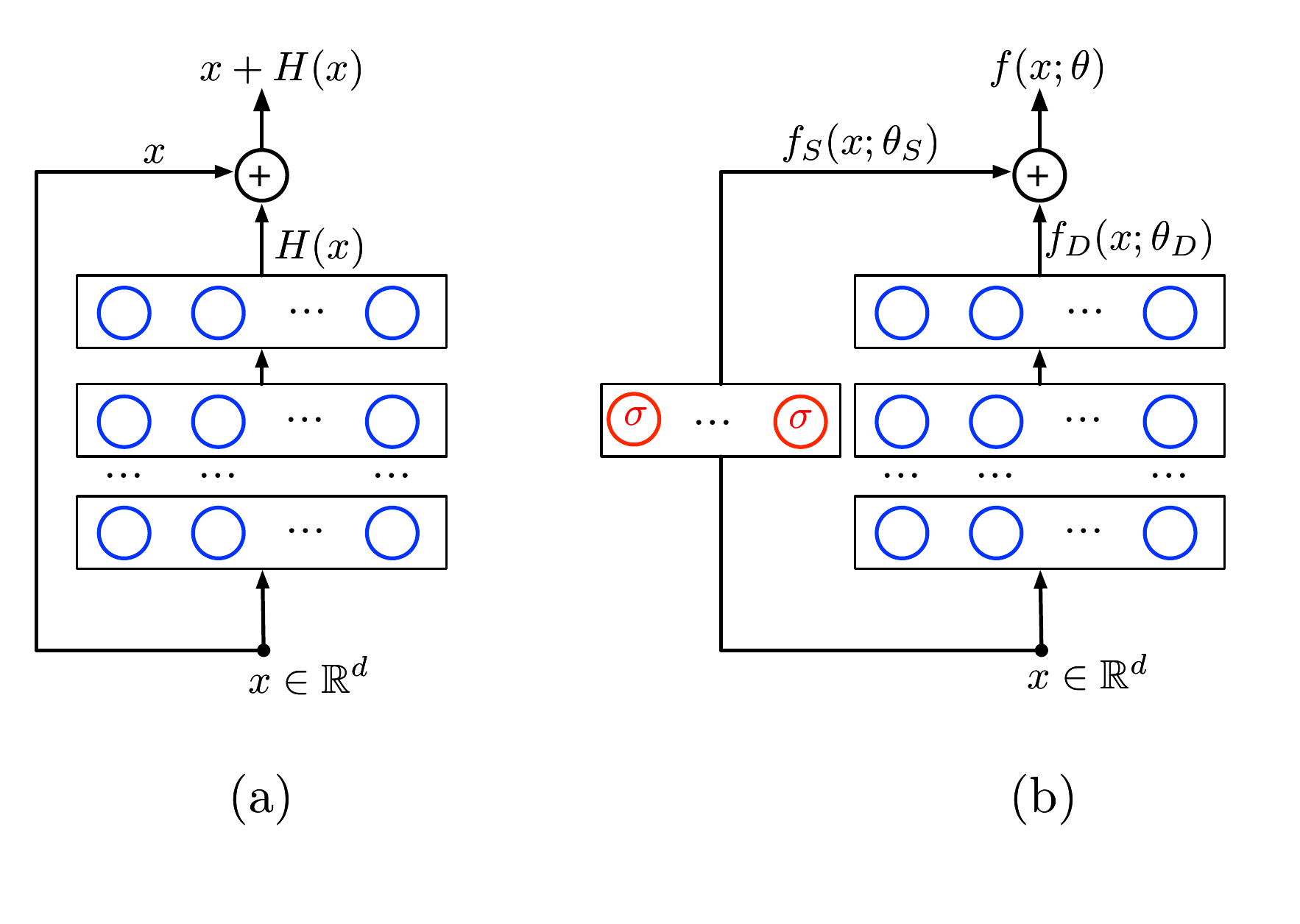}
\caption{(a) The identity shortcut connection adopted in the residual network~\cite{he2016deep}. (b) The shortcut-like connection adopted in this paper. }
\label{fig::network}
\end{wrapfigure}
\begin{assumption}[Network architecture]\label{assump::shortcut-connection}
Assume that the  neural network $f$ is a single-layered neural network, or more generally, 
 has shortcut-like connections shown in Fig~\ref{fig::network}~(b), where $f_{S}$ is a single layer network and $f_{D}$ is a feedforward network. 
\end{assumption}
 
Shortcut connections are widely used in the modern network architectures (e.g., Highway Networks~\cite{srivastava2015highway}, ResNet~\cite{he2016deep}, DenseNet~\cite{huang2017densely}, etc.), where the skip connections allow the deep layers to have direct access to the outputs of shallow layers.  For instance, in the residual network, each residual block has a identity shortcut connection, shown in Fig~\ref{fig::network}~(a), where the output of each residual block is the vector sum of its input and the output of a network $H$. 

Instead of using the identity shortcut connection, in this paper, we first pass the input through a single layer network $f_{S}(x;\bm{\theta}_{S})=a_{0}+\bm{a}^{\top}\sigma\left(\bm{W}^{\top}x\right)$, where  vector $\bm{a}$ denotes the weight vector, matrix $\bm{W}$ denotes the weight matrix and vector $\bm{\theta}_{S}$ denotes the vector containing all parameters in $f_{S}$. We next add the output of this network  to a network $f_{D}$ and use the addition as the output of the whole network, i.e., 
$f(x;\bm{\theta})=f_{S}(x;\bm{\theta}_{S})+f_{D}(x;\bm{\theta}_{D}),$
where vector  $\bm{\theta}_{D}$ and $\bm{\theta}$ denote the vector containing all parameters in the network $f_{D}$ and the whole network $f$, respectively.
We note here that, in this paper, we do not restrict the number of layers and neurons in the network $f_{D}$ and this means that the network $f_{D}$ can be a feedforward network introduced in Section~\ref{sec::prelim} or a single layer network or even a constant. In fact, when the network $f_{D}$ is a single layer network or a constant, the whole network $f$ becomes a single layer network. 
Furthermore, we note that, in Section~\ref{sec::discussions}, we will show  that if we remove this connection or replace this shortcut-like connection with the identity shortcut connection, the main result does not hold.


\begin{assumption}[Neuron activation] \label{assump::neurons}
Assume that neurons $\sigma(z)$ in the network $f_{S}$ are real analytic and  satisfy $\sigma''(z)>0$ for all $z\in\mathbb{R}$. Assume that neurons in the network $f_{D}$ are real functions on $\mathbb{R}$.

\end{assumption}
In Assumption~\ref{assump::neurons}, we  assume that neurons in the network $f_{S}$ are infinitely differentiable and have  positive second order derivatives on $\mathbb{R}$, while neurons in the network $f_{D}$ are real functions. 
We make the above assumptions to ensure that the loss function $\hat{L}_{n}(\bm{\theta}_{S},\bm{\theta}_{D};p)$ is partially differentiable w.r.t. the parameters $\bm{\theta}_{S}$ in the network $f_{S}$ up to a sufficiently high order and allow us to use Taylor expansion  in the analysis. Here, we list a few neurons which can be used in the network $f_{S}$:  softplus neuron, i.e., $\sigma(z)=\log_{2}(1+e^{z})$, quadratic neuron, i.e, $\sigma(z)=z^{2}$, etc.
 We note that neurons in the network $f_{S}$ and $f_{D}$ do not need to be of the same type and this means that a more general class of neurons can be used in the network $f_{D}$, e.g., threshold neuron, i.e., $\sigma(z)=\mathbb{I}\{z\ge 0\}$, rectified linear unit $\sigma(z)=\max\{z,0\}$, sigmoid neuron $\sigma(z)=\frac{1}{1+e^{-z}}$, etc. Further discussion on the effects of neurons on the main results are provided in Section~\ref{sec::discussions}.

\subsection{Main Results}
Now we present the following theorem to show that when assumptions~\ref{assump::loss}-\ref{assump::neurons} are satisfied, every local minimum of the empirical loss function has zero training error  if the number of neurons in  the network $f_{S}$ are chosen appropriately.  

\begin{theorem}[Linear subspace data]\label{thm::convex-finite-deep}
Suppose that assumptions \ref{assump::loss}-\ref{assump::neurons} are satisfied.
Assume that  samples in the dataset $\mathcal{D}=\{(x_{i},y_{i})\}_{i=1}^{n}, n\ge 1$ are independently drawn from the distribution $\mathbb{P}_{\bm{X}\times Y}$.  Assume that the number of neurons $M$ in the network $f_{S}$ satisfies $M\ge 2\max\{\frac{n}{\Delta r},r_{+},r_{-}\}$, where $\Delta r=r-\max\{r_{+},r_{-}\}$. If $\bm{\theta}^{*}=(\bm{\theta}_{S}^{*},\bm{\theta}_{D}^{*})$ is a local minimum of the loss function $\hat{L}_{n}(\bm{\theta}_{S},\bm{\theta}_{D};p)$ and $p\ge 6$, then  $\hat{R}_{n}(\bm{\theta}^{*}_{S},\bm{\theta}^{*}_{D})=0$ holds with probability one.
\end{theorem}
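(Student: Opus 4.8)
The plan is to show that at a local minimum every training sample is classified with margin at least $z_0$, which by Assumption~\ref{assump::loss} immediately forces $\hat{R}_{n}=0$. Write $f=f_{S}+f_{D}$ and freeze $\bm{\theta}_{D}=\bm{\theta}_{D}^{*}$, so that $\bm{\theta}_{S}^{*}$ is a local minimum of the single-hidden-layer loss $\bm{\theta}_{S}\mapsto \hat{L}_{n}(\bm{\theta}_{S},\bm{\theta}_{D}^{*};p)$ with $f_{S}(x;\bm{\theta}_{S})=a_{0}+\sum_{j=1}^{M}a_{j}\sigma(\bm{w}_{j}^{\top}x)$. Set $q_{i}=\ell_{p}'(-y_{i}f(x_{i};\bm{\theta}^{*}))\ge 0$ and $v_{i}=y_{i}q_{i}$. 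Since Assumption~\ref{assump::loss} says $\ell_{p}'(z)=0$ iff $z\le -z_{0}$, it suffices to prove that $q_{i}=0$ for every $i$: then $y_{i}f(x_{i};\bm{\theta}^{*})\ge z_{0}>0$, hence $\sgn(f(x_{i}))=y_{i}$ for all $i$. The entire proof therefore reduces to showing that the stationarity and curvature conditions at the local minimum are incompatible with $q\neq 0$.

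First I would record the first-order conditions. Differentiating in $a_{0}$, in each $a_{j}$, and in each $\bm{w}_{j}$ gives $\sum_{i}v_{i}=0$, $\sum_{i}v_{i}\sigma(\bm{w}_{j}^{*\top}x_{i})=0$, and $a_{j}\sum_{i}v_{i}\sigma'(\bm{w}_{j}^{*\top}x_{i})x_{i}=0$. In particular $\sum_{i:y_{i}=1}q_{i}=\sum_{i:y_{i}=-1}q_{i}$, so if any $q_{i}>0$ then active samples of both labels exist. Next I would exploit Assumption~\ref{assump::different-subspaces}: since $\mathcal{E}$ is orthonormal and (say) $r_{-}=\max\{r_{+},r_{-}\}$, the subspace $G=\mathrm{Span}(\mathcal{U}_{+}\setminus\mathcal{U}_{-})$ has dimension $\Delta r>0$ and is orthogonal to $\mathrm{Span}(\mathcal{U}_{-})$. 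Perturbing the input weights along $G$, i.e.\ $\bm{w}_{j}\mapsto \bm{w}_{j}+t\,u_{j}$ with $u_{j}\in G$, leaves every negative sample completely unaffected ($u_{j}^{\top}x_{i}=0$ there), so only positive samples feel the perturbation.

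The heart of the argument is a second-order, first-order-neutral perturbation. I would additionally impose the $n_{+}$ linear constraints $\sum_{j}a_{j}\sigma'(\bm{w}_{j}^{*\top}x_{i})(u_{j}^{\top}x_{i})=0$ over the positive samples, which kills the first-order change of every $f(x_{i})$; the over-parameterization $M\ge 2n/\Delta r$ guarantees that these constraints leave an ample null space of admissible $\{u_{j}\}\subset G^{M}$. Along such a direction the only surviving second-order term of the loss is the strictly convex curvature of $\sigma$, and a short computation gives $\tfrac{d^{2}}{dt^{2}}\hat{L}_{n}\big|_{0}=-\tfrac{1}{n}\sum_{j}a_{j}\,u_{j}^{\top}A_{j}\,u_{j}$ with $A_{j}=\sum_{i:y_{i}=1}q_{i}\,\sigma''(\bm{w}_{j}^{*\top}x_{i})\,x_{i}x_{i}^{\top}\succeq 0$ (here $q_{i}\ge 0$ and $\sigma''>0$ by Assumption~\ref{assump::neurons}). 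Local minimality forces this quantity to be $\ge 0$ for every admissible $\{u_{j}\}$. The crux is then to show that whenever some positive $q_{i}>0$ one can still select an admissible first-order-neutral direction making $\sum_{j}a_{j}u_{j}^{\top}A_{j}u_{j}>0$ (routing curvature through a neuron whose output weight $a_{j}$ has the matching sign, and running the symmetric construction on $G'=\mathrm{Span}(\mathcal{U}_{-}\setminus\mathcal{U}_{+})$ to cover the opposite sign), contradicting the second-order condition; applying this to both labels yields $q\equiv 0$.

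I expect the main obstacle to be exactly this last step: converting ``no admissible positive-curvature direction'' into ``$q=0$''. Two issues must be handled with care. First, $\ell_{p}''$ is \emph{not} assumed to have a definite sign, which is precisely why the perturbation is engineered to be first-order neutral on every sample, so that only the strictly positive $\sigma''$-curvature enters the second-order term; the real analyticity of $\sigma$ is what rules out degenerate cancellations. Second, one must ensure the matrices $A_{j}$ genuinely admit a positive direction inside the constraint null space — in particular that the active samples have nonzero component in $G$ — and here Assumption~\ref{assump::full-rank} together with the ``with probability one'' clause is used, guaranteeing the sample vectors lie in general position so that the $\sigma''$-weighted second-moment matrices do not degenerate, while the counts $M\ge 2\max\{r_{+},r_{-}\}$ and $M\ge 2n/\Delta r$ provide, respectively, enough neurons to realize the curvature and a large enough null space of first-order-neutral directions. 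Once $q_{i}=0$ is established for all $i$, Assumption~\ref{assump::loss} gives $y_{i}f(x_{i};\bm{\theta}^{*})\ge z_{0}>0$ for every sample, hence $\hat{R}_{n}(\bm{\theta}^{*})=0$.
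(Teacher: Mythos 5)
Your strategy for the case where every output weight $a_{j}^{*}$ is nonzero is essentially the paper's own (Claim 2 of its proof): keep the majority-sign neurons, perturb their input weights inside $\text{Span}(\mathcal{U}_{+}\setminus\mathcal{U}_{-})$ so negative samples are untouched, impose first-order neutrality to kill the sign-indefinite $\ell_{p}''$ term, and use $M\Delta r\ge 2n$ for the dimension count. The first genuine gap is that you never treat the case where some $a_{j}^{*}=0$, and your purely second-order method cannot reach it: every derivative of $\hat{L}_{n}$ in $\bm{w}_{j}$ carries a factor $a_{j}^{*}$, so for such a neuron the entire Hessian block in $\bm{w}_{j}$ (indeed every pure $\bm{w}_{j}$-derivative up to order $p$) vanishes identically, and the mixed block $\partial_{a_{j}}\nabla_{\bm{w}_{j}}\hat{L}_{n}$ vanishes as well by the strengthened stationarity condition of Lemma~\ref{lemma::nec-single} (which holds \emph{without} the $a_{j}$ factor you wrote). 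The second-order test is then vacuous --- in the extreme case $a_{1}^{*}=\dots=a_{M}^{*}=0$ your quadratic form is identically zero and your usable-neuron count collapses --- so no contradiction with $q\neq 0$ can be extracted. The paper needs a separate higher-order argument here: perturb $(a_{1},\bm{w}_{1})$ with asymmetric scalings $\Delta a_{1}=\pm\varepsilon^{9/4}$, $\Delta\bm{w}_{1}=\varepsilon\bm{u}_{1}$, so that the leading term of the Taylor expansion is $\tfrac{1}{2n}\sgn(\Delta a_{1})\,\varepsilon^{17/4}\sum_{i}\ell_{p}'(-y_{i}f(x_{i};\bm{\theta}^{*}))(-y_{i})\sigma''({\bm{w}_{1}^{*}}^{\top}x_{i})(\bm{u}_{1}^{\top}x_{i})^{2}$, whose sign can be flipped through $\sgn(\Delta a_{1})$; this forces the coefficient to vanish for every $\bm{u}_{1}$, and it is precisely why $p\ge 6$ appears in the statement. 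None of this is in your proposal.

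The second gap is the step you yourself call the crux, and it is not closed by appeals to general position or real analyticity. From minimality and same-sign $a_{j}$ you obtain $\sum_{j}a_{j}\bm{u}_{j}^{\top}A_{j}\bm{u}_{j}=0$ for each admissible direction, hence $q_{i}=0$ only for those positive samples with $\bm{u}_{j}^{\top}x_{i}\neq 0$ for some $j$; all of the active samples could lie in the set $C_{1}$ on which every admissible direction is blind ($\bm{u}_{j}^{\top}x_{i}=0$ for all $j$). The paper resolves this with three ingredients missing from your sketch: (i) once $q_{i}=0$ we also have $\ell_{p}''(-y_{i}f(x_{i};\bm{\theta}^{*}))=0$ (because $\ell_{p}'\equiv 0$ on $(-\infty,-z_{0}]$), so such samples drop out of the Hessian form entirely and one may re-impose first-order neutrality only over $C_{1}$; (ii) a counting argument via Assumption~\ref{assump::full-rank} showing $|C_{1}|<r_{+}\le M_{0}$ with probability one (otherwise the defining coefficients of the admissible direction would be forced to zero); and (iii) a second family of directions $\bm{u}_{j}=\alpha_{j}\bm{e}_{1}$, feasible because there are only $|C_{1}|<M_{0}$ constraints on the $M_{0}$ scalars $\alpha_{j}$, which sees every sample of $C_{1}$ since $\bm{e}_{1}^{\top}x_{i}\neq 0$ with probability one. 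Relatedly, you gloss over the dichotomy that makes the construction legitimate: when the neutrality system admits only the trivial solution, its coefficient matrix has rank $n$, and then the stationarity equations $\bm{P}\bm{q}=\bm{0}_{n}$ coming from Lemma~\ref{lemma::nec-single} already force $q=0$ directly. As written, your proposal reproduces the easier half of the paper's argument and leaves both the zero-weight case and the $C_{1}$ resolution open.
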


\textbf{Remark:} (i) By setting the network $f_{D}$ to a constant, it directly follows from Theorem~\ref{thm::convex-finite-deep} that if a single layer network $f_{S}(x;\bm{\theta}_{S})$ consisting of neurons satisfying Assumption~\ref{assump::neurons} and all other conditions in Theorem~\ref{thm::convex-finite-deep}  are satisfied, then every local minimum of the empirical loss $\hat{L}_{n}(\bm{\theta}_{S};p)$ has zero training error.  (ii) The positiveness of $\Delta r$ is guaranteed by Assumption~\ref{assump::different-subspaces}. In the worst case (e.g., $\Delta r=1$ and $\Delta r=2$), the number of neurons needs to be at least greater than the number of samples, i.e., $M\ge n$. However, when the two orthonormal basis sets $\mathcal{U}_{+}$ and $\mathcal{U}_{-}$ differ significantly (i.e., $\Delta r\gg1$), the number of neurons required by Theorem~\ref{thm::convex-finite-deep} can be significantly smaller than the number of samples (i.e., $n\gg {2n}/{\Delta r}$). In fact, we can show that, when the neuron has quadratic activation function $\sigma(z)=z^{2}$, the assumption $M\ge 2n/\Delta r$ can be further relaxed such that the number of neurons is independent of the number of samples. We discuss this in  the following proposition. 


\begin{proposition}\label{prop::results-quadratic}
Assume that assumptions~\ref{assump::loss}-\ref{assump::neurons} are satisfied. 
Assume that  samples in the dataset $\mathcal{D}=\{(x_{i},y_{i})\}_{i=1}^{n}, n\ge 1$ are independently drawn from the distribution $\mathbb{P}_{\bm{X}\times Y}$. Assume that neurons in the network $f_{S}$ satisfy $\sigma(z)=z^{2}$ and the number of neurons in the network $f_{S}$ satisfies $M>r$. If $\bm{\theta}^{*}=(\bm{\theta}_{S}^{*},\bm{\theta}_{D}^{*})$ is a local minimum of the loss function $\hat{L}_{n}(\bm{\theta}_{S},\bm{\theta}_{D};p)$ and $p\ge 6$, then $\hat{R}_{n}(\bm{\theta}^{*}_{S},\bm{\theta}_{D})=0$ holds with probability one.
\end{proposition}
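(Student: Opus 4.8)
The plan is to exploit the fact that a quadratic neuron collapses $f_S$ into a single quadratic form, so that the entire dependence of the loss on $\bm{\theta}_S$ is through a scalar bias and one symmetric matrix living in the fixed $r$-dimensional data subspace; this is exactly what lets the neuron count drop from the $n$-dependent bound of Theorem~\ref{thm::convex-finite-deep} to $M>r$. First I would write $f(x_i;\bm{\theta}^*)=a_0+x_i^\top Q x_i+f_D(x_i;\bm{\theta}_D^*)$, where $Q=\sum_{j=1}^M a_j\bm{w}_j\bm{w}_j^\top$ is symmetric and $\bm{w}_j$ is the $j$-th column of $\bm{W}$. Since every sample lies in $V=\mathrm{Span}(\mathcal{U}_+\cup\mathcal{U}_-)$, of dimension $r$, the loss depends on $\bm{\theta}_S$ only through $(a_0,P_VQP_V)\in\mathbb{R}\times\mathrm{Sym}(V)$. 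Writing $\beta_i=\ell'_p(-y_i f(x_i;\bm{\theta}^*))\ge 0$ and $N=\sum_{i=1}^n y_i\beta_i\,x_ix_i^\top$, stationarity of $\hat{L}_{n}$ in $a_0$ gives $\sum_{i:y_i=1}\beta_i=\sum_{i:y_i=-1}\beta_i$, stationarity in $a_j$ gives $\bm{w}_j^\top N\bm{w}_j=0$ for every $j$, and stationarity in $\bm{w}_j$ gives $N\bm{w}_j=0$ whenever $a_j\neq 0$. The aim of the local analysis is to upgrade these relations to $P_VNP_V=0$.

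Granting $P_VNP_V=0$, the conclusion follows from the block structure forced by Assumptions~\ref{assump::full-rank} and~\ref{assump::different-subspaces}. Here $N=\Sigma_+-\Sigma_-$ with $\Sigma_+=\sum_{i:y_i=1}\beta_i x_ix_i^\top\succeq 0$ supported on $\mathrm{Span}(\mathcal{U}_+)$ and $\Sigma_-\succeq 0$ supported on $\mathrm{Span}(\mathcal{U}_-)$, so $N=0$ on $V$ forces $\Sigma_+=\Sigma_-$, a matrix whose range lies in $\mathrm{Span}(\mathcal{U}_+)\cap\mathrm{Span}(\mathcal{U}_-)=\mathrm{Span}(\mathcal{U}_+\cap\mathcal{U}_-)=:W_0$. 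Because $r>\max\{r_+,r_-\}$, the sets $S_+=\mathcal{U}_+\setminus\mathcal{U}_-$ and $S_-=\mathcal{U}_-\setminus\mathcal{U}_+$ are both nonempty, so $W_0$ is a proper subspace of $\mathrm{Span}(\mathcal{U}_+)$. Since $\mathrm{range}(\Sigma_+)=\mathrm{Span}\{x_i:y_i=1,\beta_i>0\}\subseteq W_0$, every positively-labelled sample with $\beta_i>0$ would have to lie in this fixed proper subspace, an event of probability zero under the genericity underlying Assumption~\ref{assump::full-rank}. Hence $\beta_i=0$ for all positive $i$, and symmetrically for all negative $i$. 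By condition~(3) of Assumption~\ref{assump::loss}, $\beta_i=0$ is equivalent to $y_i f(x_i;\bm{\theta}^*)\ge z_0>0$, i.e. every sample is correctly classified, giving $\hat{R}_{n}(\bm{\theta}^*)=0$.

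The crux is therefore the upgrade from the stationarity relations to $P_VNP_V=0$, and I expect this to be the main obstacle. The clean mechanism is a first-order probe using a spare neuron: if some neuron $j_0$ has $a_{j_0}^*=0$, then shifting its weight to $\bm{w}_{j_0}^*+u$ leaves $f$, and hence the loss, unchanged, so for $|u|$ small the shifted parameter is again a local minimum; differentiating the loss with respect to turning on $a_{j_0}$ then shows its derivative equals $-\tfrac1n(\bm{w}_{j_0}^*+u)^\top N(\bm{w}_{j_0}^*+u)$, which must vanish for both signs of the perturbation and for all small $u$, and expanding in $u$ forces $N\bm{w}_{j_0}^*=0$ together with $u^\top Nu\equiv 0$, i.e. $P_VNP_V=0$. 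The same conclusion is immediate whenever the active weight vectors already span $V$. The hard part is the degenerate case in which every neuron is active yet $\{\bm{w}_j^*\}$ spans only a proper subspace $U\subsetneq V$: the first-order relations then pin down $N$ only on $U$, and to kill $N$ on $V\ominus U$ one must invoke the second- and higher-order optimality conditions exactly as in the proof of Theorem~\ref{thm::convex-finite-deep}, perturbing a neuron by $\bm{w}_j^*\mapsto \bm{w}_j^*+sv$ for $v\in V\ominus U$ and tracking the first nonvanishing Taylor coefficient of $\hat{L}_{n}$ in $s$; this is where the hypothesis $p\ge 6$ is needed, to guarantee that $\ell_p$ is smooth enough to carry the expansion to the order at which the sign obstruction appears.

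Finally, I would explain why the neuron count improves to the $n$-independent bound $M>r$. All admissible perturbations of the quadratic part live in the fixed space $\mathrm{Sym}(V)$, of dimension $r(r+1)/2$, rather than having to separate the $n$ samples individually; consequently $M>r$ neurons already furnish the directions needed to certify $P_VNP_V=0$, whereas the general neurons of Theorem~\ref{thm::convex-finite-deep} force a number of neurons that grows with $n$. Combining this capacity observation with the first-/higher-order analysis of the previous paragraph and the block-structure argument yields $\hat{R}_{n}(\bm{\theta}^*)=0$ with probability one.
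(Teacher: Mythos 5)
Your reduction to the single matrix $N=\sum_{i=1}^{n}\beta_{i}y_{i}x_{i}x_{i}^{\top}$, the three stationarity identities, the spare-neuron argument when some $a_{j_{0}}^{*}=0$ (which is in fact cleaner than the paper's sixth-order Taylor expansion with its $\varepsilon^{9/4}$ scaling), and the block-structure endgame deducing $\beta_{i}\equiv 0$ almost surely from $P_{V}NP_{V}=0$ are all correct. The genuine gap is the case you yourself flag as the crux and then leave unproved: all $a_{j}^{*}\neq 0$ with the weights not spanning $V$. Your plan there --- perturb one neuron by $\bm{w}_{j}^{*}\mapsto\bm{w}_{j}^{*}+sv$ and track higher-order Taylor coefficients ``exactly as in the proof of Theorem~\ref{thm::convex-finite-deep}'', spending the hypothesis $p\ge 6$ on it --- does not go through. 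The higher-order expansion in that proof works only because the perturbed neuron is \emph{inactive}: every nuisance term carries the factor $a_{1}^{*}=0$, so the lone surviving coefficient is the signed quantity $\sum_{i}\ell_{p}'(\cdot)(-y_{i})\sigma''(\cdot)(\bm{u}^{\top}x_{i})^{2}$. With $a_{j}^{*}\neq 0$, already the second-order coefficient of a single-neuron perturbation is $\frac{4{a_{j}^{*}}^{2}}{n}\sum_{i}\ell_{p}''(\cdot)({\bm{w}_{j}^{*}}^{\top}x_{i})^{2}(v^{\top}x_{i})^{2}-\frac{2a_{j}^{*}}{n}v^{\top}Nv$, and the $\ell_{p}''$ term neither vanishes nor has controllable size; second- or higher-order conditions on one neuron only yield one-sided inequalities that never isolate $v^{\top}Nv$. (Also, $p\ge 6$ is actually consumed by the inactive-neuron case; the all-active case needs only second-order information.)

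What closes this case in the paper --- and it is the only place $M>r$ enters --- is a \emph{simultaneous} perturbation of all $M$ neurons. The projections of $\bm{w}_{1}^{*},\dots,\bm{w}_{M}^{*}$ onto the $r$-dimensional data subspace are $M>r$ vectors, hence linearly dependent, so there exist $\alpha_{j}$, not all zero, with $\bigl(\sum_{j}\alpha_{j}\bm{w}_{j}^{*}\bigr)^{\top}x_{i}=0$ for every $i$. Testing the Hessian with $\bm{u}_{j}=(\alpha_{j}/a_{j}^{*})\bm{u}$ then kills the entire $\ell_{p}''$ block, because for quadratic neurons $\sum_{j}a_{j}^{*}\sigma'({\bm{w}_{j}^{*}}^{\top}x_{i})(\bm{u}_{j}^{\top}x_{i})=2(\bm{u}^{\top}x_{i})\bigl(\sum_{j}\alpha_{j}\bm{w}_{j}^{*}\bigr)^{\top}x_{i}=0$, leaving $-\frac{2}{n}\bigl(\sum_{j}\alpha_{j}^{2}/a_{j}^{*}\bigr)\,\bm{u}^{\top}N\bm{u}\ge 0$ for all $\bm{u}$. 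If $\sum_{j}\alpha_{j}^{2}/a_{j}^{*}\neq 0$ this fixes a sign of $\bm{u}^{\top}N\bm{u}$, and taking $\bm{u}$ to be a basis vector in $\mathcal{U}_{-}\setminus\mathcal{U}_{+}$ or $\mathcal{U}_{+}\setminus\mathcal{U}_{-}$ forces $\beta_{i}=0$ on one class and then, via the $a_{0}$ equation, on the other; if $\sum_{j}\alpha_{j}^{2}/a_{j}^{*}=0$, the refined directions $\bm{u}_{j}=(\alpha_{j}/a_{j}^{*}+v\,\sgn(\alpha_{j}))\bm{u}$ make the Hessian form linear in $v$ at leading order, forcing $\bm{u}^{\top}N\bm{u}\equiv 0$. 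Relatedly, your heuristic that $M>r$ suffices because perturbations live in $\mathrm{Sym}(V)$ of dimension $r(r+1)/2$ is not the operative mechanism: what $M>r$ buys is precisely the linear dependence supplying directions that annihilate the $\ell_{p}''$ term. Without this (or an equivalent) device, your proposal does not establish the proposition in its main case.
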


\textbf{Remark:} Proposition~\ref{prop::results-quadratic} shows that if the number of neuron $M$ is greater than the dimension of the subspace, i.e., $M>r$, then every local minimum of the empirical loss function has zero training error. We note here that although the result is stronger with  quadratic neurons, it does not imply that the quadratic neuron has advantages over the other types of neurons (e.g., softplus neuron, etc). This is due to the fact that when the neuron has positive derivatives on $\mathbb{R}$, the result in Theorem~\ref{thm::convex-finite-deep} holds for the dataset where positive and negative samples are linearly separable. We provide the formal statement of this result in Theorem~\ref{thm::linear-sep-deep}. However, when the neuron has quadratic activation function, the result in Theorem~\ref{thm::convex-finite-deep} may not hold for linearly separable dataset and we will illustrate this by providing a counterexample in the next section. 

As shown in Theorem~\ref{thm::convex-finite-deep}, when the data distribution satisfies Assumption~\ref{assump::full-rank} and~\ref{assump::different-subspaces}, every local minimum of the empirical loss has zero training error. However, we can easily see that distributions satisfying these two assumptions may not be linearly separable. Therefore, to provide a complementary result to Theorem~\ref{thm::convex-finite-deep},  we consider the case where the data distribution is linearly separable.
Before presenting the result, we first present the following assumption on the data distribution.  
\begin{assumption}[Linear separability]\label{assump::linear-sep}
Assume that there exists a vector $\bm{w}\in\mathbb{R}^{d}$ such that  the data distribution satisfies {$\mathbb{P}_{\bm{X}\times Y}(Y\bm{w}^{\top}X>0)=1$}.
\end{assumption}
In Theorem~\ref{thm::linear-sep-deep}, we will show that when the samples drawn from the data distribution are linearly separable, and the network has a shortcut-like connection shown in Figure~\ref{fig::network}, all local minima of the empirical loss function have zero training errors if the type of the neuron in  the network $f_{S}$ are chosen appropriately.

\begin{theorem}[Linearly separable data]\label{thm::linear-sep-deep}
Suppose that the loss function $\ell_{p}$ satisfies  Assumption~\ref{assump::loss} and the network architecture satisfies Assumption~\ref{assump::shortcut-connection}.
 Assume that  samples in the dataset $\mathcal{D}=\{(x_{i},y_{i})\}_{i=1}^{n}, n\ge 1$ are independently drawn from a distribution satisfying Assumption~\ref{assump::linear-sep}. Assume that the single layer network $f_{S}$ has $M\ge1$ neurons and neurons $\sigma$ in the network $f_{S}$ are twice differentiable and satisfy $\sigma'(z)>0$ for all $z\in\mathbb{R}$. If $\bm{\theta}^{*}=(\bm{\theta}^{*}_{S},\bm{\theta}^{*}_{D})$ is a local minimum of the loss function $\hat{L}_{n}(\bm{\theta}^{}_{S},\bm{\theta}^{}_{D};p)$, $p\ge 3$, then  $\hat{R}_{n}(\bm{\theta}^{*}_{S},\bm{\theta}^{*}_{D})=0$ holds with probability one.
\end{theorem}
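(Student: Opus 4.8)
The plan is to show that at any local minimum $\bm{\theta}^*=(\bm{\theta}_S^*,\bm{\theta}_D^*)$ one has $\ell_p'(-y_i f(x_i;\bm{\theta}^*))=0$ for every sample $i$. Once this is established, condition (3) of Assumption~\ref{assump::loss} forces $-y_i f(x_i;\bm{\theta}^*)\le -z_0$, i.e. $y_i f(x_i;\bm{\theta}^*)\ge z_0>0$, so every sample is classified correctly and $\hat{R}_n(\bm{\theta}^*)=0$. Write $f_S(x;\bm{\theta}_S)=a_0+\sum_{j=1}^M a_j\,\sigma(\bm{w}_j^\top x)$, set $v_i=-y_i f(x_i;\bm{\theta}^*)$, and let $\bm{w}$ be the separating vector of Assumption~\ref{assump::linear-sep}, so that $y_i\,\bm{w}^\top x_i=|\bm{w}^\top x_i|>0$ for all $i$ with probability one. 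Since $f$ is smooth enough in $\bm{\theta}_S$ (using $\sigma\in C^2$ and $\ell_p\in C^p$, $p\ge 3$), the first-order condition $\nabla\hat{L}_n(\bm{\theta}^*)=\bm{0}$ holds, and I would split the argument according to whether some output weight of $f_S$ is nonzero.

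In the first case, suppose $a_j\neq 0$ for some $j$. Differentiating w.r.t. $\bm{w}_j$ (which appears only in $f_S$) gives $\frac1n\sum_i \ell_p'(v_i)(-y_i)\,a_j\,\sigma'(\bm{w}_j^\top x_i)\,x_i=\bm{0}$; dividing by $a_j$ and taking the inner product with $\bm{w}$ yields $\sum_i \ell_p'(v_i)\,\sigma'(\bm{w}_j^\top x_i)\,(y_i\bm{w}^\top x_i)=0$. Every summand is nonnegative because $\ell_p'\ge 0$ (Assumption~\ref{assump::loss}), $\sigma'>0$ (hypothesis), and $y_i\bm{w}^\top x_i>0$; hence each summand vanishes, and since the last two factors are strictly positive, $\ell_p'(v_i)=0$ for all $i$, as desired. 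Note this case needs only first-order information.

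The remaining case is $a_1=\dots=a_M=0$, so $f_S\equiv a_0$ is constant and the derivative above carries no information. Here I would perturb a single neuron (one exists since $M\ge 1$) along the curve $a_1(t)=t^2$, $\bm{w}_1(t)=\bm{w}_1^*+t\bm{w}$, leaving all other parameters (including $\bm{\theta}_D^*$) fixed, and set $\phi(t)=\hat{L}_n$ along this curve. The neuron's contribution to $f(x_i;\cdot)$ is $g_i(t)=t^2\sigma((\bm{w}_1^*+t\bm{w})^\top x_i)$, whose expansion (valid since $\sigma$ is twice differentiable) is $g_i(t)=\sigma(u_i)\,t^2+\sigma'(u_i)(\bm{w}^\top x_i)\,t^3+o(t^3)$ with $u_i=\bm{w}_1^{*\top}x_i$; in particular the linear term is absent. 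Composing with $\ell_p\in C^3$ and summing, $\phi(t)=\phi(0)+c_2 t^2+c_3 t^3+o(t^3)$ where $c_2=-\frac1n\sum_i\ell_p'(v_i)\,y_i\sigma(u_i)$ and $c_3=-\frac1n\sum_i\ell_p'(v_i)\,\sigma'(u_i)(y_i\bm{w}^\top x_i)$. The coefficient $c_2$ is exactly the partial derivative $\partial\hat{L}_n/\partial a_1$ at $\bm{\theta}^*$ and therefore vanishes by stationarity, while $c_3\le 0$ with equality iff $\ell_p'(v_i)=0$ for all $i$, by the same sign argument as before. If some $\ell_p'(v_i)>0$ then $c_3<0$ and $\phi(t)-\phi(0)=c_3 t^3+o(t^3)<0$ for small $t>0$, contradicting that $\bm{\theta}^*$ is a local minimum; hence $\ell_p'(v_i)=0$ for all $i$ in this case too.

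The crux and main obstacle is this last case: when all output weights vanish the first-order conditions are uninformative, so one must extract a descent direction from higher-order terms. The delicate point is the design of the curve, namely scaling $a_1$ like $t^2$ so that both the first- and second-order variations of the loss are annihilated (the second precisely by stationarity in $a_1$), while moving $\bm{w}_1$ along the separator $\bm{w}$ produces a strictly signed third-order term. This is exactly what forces $p\ge 3$ (three continuous derivatives of $\ell_p$) together with the twice-differentiability of $\sigma$. Everything else reduces to the elementary but pivotal observation that linear separability makes $y_i\bm{w}^\top x_i>0$, turning the relevant sums into sums of nonnegative terms that can vanish only when each $\ell_p'(v_i)=0$.
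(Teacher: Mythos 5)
Your proposal is correct, and it reaches the conclusion under exactly the stated hypotheses. The overall skeleton is the same as the paper's: force $\ell_p'(-y_i f(x_i;\bm{\theta}^*))=0$ for every $i$ by combining $\ell_p'\ge 0$, $\sigma'>0$, and the separability inequality $y_i\bm{w}^\top x_i>0$ (which holds for all samples with probability one), then conclude $y_i f(x_i;\bm{\theta}^*)\ge z_0>0$ from condition (3) of Assumption~\ref{assump::loss}. Where you genuinely diverge is the degenerate case in which the output weights of $f_S$ vanish. The paper buries this case inside Lemma~\ref{lemma::nec-single}: if $a_j^*=0$ and $\bm{g}_j=\sum_i \ell_p'(-y_if(x_i;\bm{\theta}^*))y_i\sigma'({\bm{w}_j^*}^\top x_i)x_i\neq\bm{0}_d$, then the $(a_j,\bm{w}_j)$ block of the Hessian has a zero $\bm{w}_j$--$\bm{w}_j$ block but nonzero cross term, so taking $\bm{\omega}=\bm{g}_j$ and a small $\alpha$ violates positive semidefiniteness, contradicting the second-order necessary condition; this yields the full vector identity $\bm{g}_j=\bm{0}_d$ for every $j$, which is then dotted with the separator $\bm{w}$. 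You instead build an explicit curve $a_1(t)=t^2$, $\bm{w}_1(t)=\bm{w}_1^*+t\bm{w}$, annihilate the $t^2$ coefficient by first-order stationarity in $a_1$, and extract a strictly negative $t^3$ coefficient unless every $\ell_p'(v_i)$ vanishes. The two mechanisms exploit the same interaction term $\Delta a_1\cdot\Delta\bm{w}_1$, but yours is more elementary (no Hessian or PSD condition, only a one-dimensional Taylor expansion along a curve) and proves only the scalar fact $\bm{g}_1^\top\bm{w}=0$ that is actually needed, whereas the paper's lemma delivers the stronger, reusable identity $\bm{g}_j=\bm{0}_d$, which it also invokes in the proofs of Theorem~\ref{thm::convex-finite-deep} and Proposition~\ref{prop::results-quadratic}. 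One small correction to your closing commentary: your expansion only needs $\sigma\in C^1$ and $\ell_p\in C^2$ (the remainder is $O(g_i(t)^2)=O(t^4)$), so $p\ge 3$ is a hypothesis you inherit from the theorem rather than something your curve construction forces; it is the paper's Lemma~\ref{lemma::nec-single}, stated for losses with three continuous derivatives, that motivates the $p\ge 3$ requirement.
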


\textbf{Remark:} Similar to Proposition~\ref{prop::results-quadratic}, Theorem~\ref{thm::linear-sep-deep} does not require the number of neurons to be in scale with the number of samples. In fact, we make a   weaker assumption here: the single layer network $f_{S}$ only needs to have at least one neuron, in contrast to at least $r$ neurons required by Proposition~\ref{prop::results-quadratic}. Furthermore, we note here that, in Theorem~\ref{thm::linear-sep-deep}, we assume that neurons in the network $f_{S}$  have positive derivatives on $\mathbb{R}$. This implies that Theorem~\ref{thm::linear-sep-deep} may not hold for a subset of neurons considered in Theorem~\ref{thm::convex-finite-deep} (e.g., quadratic neuron, etc). We will provide further discussions on the effects of neurons in the next section.  

So far, we have provided results showing that under certain constraints on the (1) neuron activation function, (2) network architecture, (3) loss function and (4) data distribution, every local minimum of the empirical loss function has zero training error. In the next section, we will discuss the implications of these conditions on our main results.

\section{Discussions}\label{sec::discussions}
In this section, we discuss the effects of the (1) neuron activation, (2) shortcut-like connections, (3) loss function and (4) data distribution on the main results, respectively. We show that the result may not hold if these assumptions are relaxed.   
\subsection{Neuron Activations}\label{sec::neuron}
To begin with, we discuss whether the results in Theorem~\ref{thm::convex-finite-deep} and~\ref{thm::linear-sep-deep} still hold if we vary the neuron activation function in the single layer network $f_{S}$. Specifically, we consider the following five classes of neurons: (1) softplus class, (2) rectified linear unit (ReLU) class, (3) leaky rectified linear unit (Leaky ReLU) class, (4) quadratic class and (5) sigmoid class. In the following, for each class of neurons, we show whether the main results hold and provide counterexamples if certain conditions in the main results are violated. We summarize our findings in Table~\ref{tab:neuron-family}. We visualize some neurons activation functions from these five classes in Fig.~\ref{fig::loss}(a). 

\begin{figure}[t]
\centering
\includegraphics[width=0.75\linewidth]{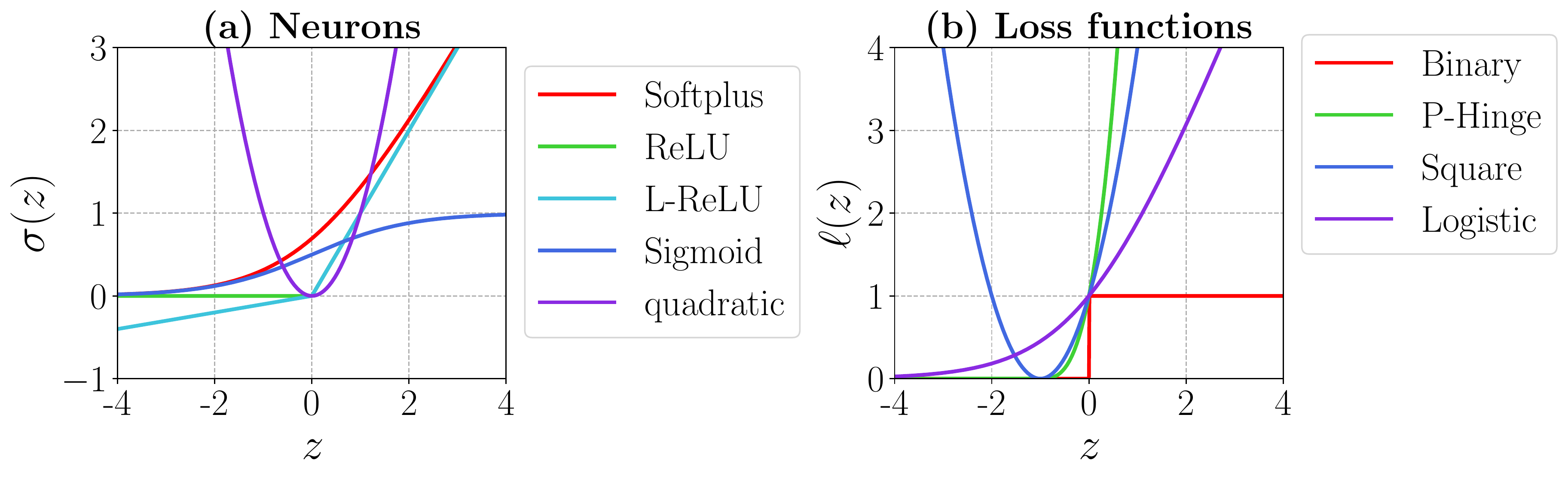}
\caption{(a) Five types of neuron activations, including  softplus neuron, ReLU, Leaky-ReLU, sigmoid neuron, quadratic neuron. (b) Four types of surrogate loss functions, including binary loss (i.e., $\ell(z)=\mathbb{I}\{z\ge 0\}$), polynomial hinge loss (i.e., $\ell(z)=[\max\{z+1,0\}]^{p+1}$), square loss (i.e., $\ell(z)=(1+z)^2$) and logistic loss (i.e., $\ell(z)=\log_{2}(1+e^z)$). Definitions of all neurons can be found in Section~\ref{sec::neuron}. }
\label{fig::loss}
\end{figure}

\textbf{Softplus class} contains neurons with real analytic activation functions $\sigma$, where $\sigma'(z)>0$, $\sigma''(z)>0$ for all $z\in\mathbb{R}$. A widely used neuron in this class is the softplus neuron, i.e., $\sigma(z)=\log_{2}(1+e^{z})$, which is a smooth approximation of ReLU. We can see that neurons in this class satisfy assumptions in both Theorem~\ref{thm::convex-finite-deep} and~\ref{thm::linear-sep-deep} and this  indicates that both theorems hold for the neurons in this class.

\textbf{ReLU class} contains neurons with  $\sigma(z)=0$ for all $z\le 0$ and $\sigma(z)$ is piece-wise continuous on $\mathbb{R}$. Some commonly adopted neurons in this class include: threshold units, i.e., $\mathbb{I}\{z\ge0\}$, rectified linear units (ReLU), i.e., $\max\{z,0\}$ and rectified quadratic units (ReQU), i.e., $\left[\max\{z, 0\}\right]^{2}$.  We can see that neurons in this class do not satisfy neither assumptions in Theorem~\ref{thm::convex-finite-deep} nor~\ref{thm::linear-sep-deep}.
In proposition~\ref{prop::relus}, we show that when the single layer network $f_{S}$ consists of neurons in the ReLU class, even if all other conditions in Theorem~\ref{thm::convex-finite-deep} or~\ref{thm::linear-sep-deep} are satisfied, the empirical loss function can have a local minimum with non-zero training error.

\begin{proposition}\label{prop::relus}
Suppose that assumptions~\ref{assump::loss} and~\ref{assump::shortcut-connection} are satisfed.
Assume that neurons in the network $f_{S}$ satisfy that $\sigma(z)=0$ for all $z\le 0$ and $\sigma(z)$ is piece-wise continuous on $\mathbb{R}$. Then there exists a  network architecture $f_{D}$ and a distribution satisfying assumptions in Theorem~\ref{thm::convex-finite-deep} or~\ref{thm::linear-sep-deep} such that with probability one, the empirical loss $\hat{L}_{n}(\bm{\theta};p),p\ge2$ has a local minima $\bm{\theta}^{*}=(\bm{\theta}_{S}^{*},\bm{\theta}^{*}_{D})$ 
satisfying $\hat{R}_{n}(\bm{\theta}^{*})\ge\frac{\min\{n_{+},n_{-}\}}{n}$, where $n_{+}$ and $n_{-}$ are the number of positive and negative samples, respectively. 
\end{proposition}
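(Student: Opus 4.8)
The plan is to exploit the defining flatness of ReLU-class neurons: since $\sigma(z)=0$ for all $z\le 0$, both $\sigma$ and $\sigma'$ vanish on the negative half-line, so a neuron whose pre-activation is strictly negative on every training point is ``dead'' and contributes nothing---neither value nor gradient---to $f_{S}$. I would place all data in a common open halfspace and choose the weights of $f_{S}$ so that every neuron is dead, collapsing $f_{S}$ to its bias $a_{0}$; taking $f_{D}\equiv 0$ (permitted by Assumption~\ref{assump::shortcut-connection}) then makes $f(\cdot;\bm{\theta})$ constant on the sample, and a constant classifier must misclassify an entire class. The only real work is to verify that such a configuration is a genuine local minimum rather than a saddle.

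\textbf{Construction.} I would take the data to be compactly supported in $\{x: e_{1}^{\top}x\ge \delta\}$ for a fixed $\delta>0$. To meet the hypotheses of Theorem~\ref{thm::convex-finite-deep} I would, e.g., put the positive conditional law on $\mathrm{Span}\{e_{1},e_{2}\}$ and the negative one on $\mathrm{Span}\{e_{1},e_{3}\}$, each absolutely continuous on its subspace (so Assumption~\ref{assump::full-rank} holds) with $r_{+}=r_{-}=2<3=r$ (so Assumption~\ref{assump::different-subspaces} holds); to meet Theorem~\ref{thm::linear-sep-deep} instead I would concentrate the positives near $e_{1}+e_{2}$ and negatives near $e_{1}-e_{2}$, which is linearly separable by $e_{2}$ yet still lies in $\{e_{1}^{\top}x>0\}$. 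In either case set every weight vector of $f_{S}$ to $-e_{1}$, so that the $j$-th pre-activation at $x_{i}$ equals $-e_{1}^{\top}x_{i}\le -\delta<0$. With probability one all $n$ samples fall in the halfspace and the full-rank condition holds, hence every neuron is strictly dead and $f_{S}(x_{i};\bm{\theta}_{S})=a_{0}$ for all $i$.

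\textbf{Reduction and local-minimality.} Because the sample is finite and compact and the pre-activations are bounded away from $0$ by a uniform margin, any sufficiently small perturbation of $(a_{0},\bm{a},\bm{W})$ keeps every pre-activation negative; thus each $\sigma$ stays identically $0$ and the output at each $x_{i}$ equals $a_{0}$ throughout a neighborhood, independently of $\bm{a}$ and $\bm{W}$. Consequently, locally, $\hat{L}_{n}(\bm{\theta};p)=g(a_{0}):=\tfrac{1}{n}\big(n_{+}\ell_{p}(-a_{0})+n_{-}\ell_{p}(a_{0})\big)$. I would then analyze $g$ using Assumption~\ref{assump::loss}: since $\ell_{p}'(z)=0$ for $z\le -z_{0}$ and $\ell_{p}'(z)>0$ otherwise, we get $g'(a_{0})<0$ for $a_{0}\le -z_{0}$ and $g'(a_{0})>0$ for $a_{0}\ge z_{0}$ (assuming $n_{+},n_{-}>0$; otherwise the claimed bound is $0$ and vacuous), while $g'$ is continuous. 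Hence $g$ attains its global minimum at some $a_{0}^{*}\in[-z_{0},z_{0}]$. Fixing $a_{0}=a_{0}^{*}$ (and $\bm{a}^{*}$, $\bm{W}^{*}$ as above) yields $\bm{\theta}^{*}$ with $\hat{L}_{n}(\bm{\theta};p)=g(a_{0})\ge g(a_{0}^{*})=\hat{L}_{n}(\bm{\theta}^{*};p)$ for all nearby $\bm{\theta}$, i.e.\ a local minimum.

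\textbf{Error and main obstacle.} At $\bm{\theta}^{*}$ the network equals the constant $a_{0}^{*}$ on every sample, so $\sgn(f(\cdot;\bm{\theta}^{*}))$ is a single label and misclassifies every sample of the opposite class; since that class contains either $n_{+}$ or $n_{-}$ points, $\hat{R}_{n}(\bm{\theta}^{*})\ge \min\{n_{+},n_{-}\}/n$. I expect the delicate step to be the local-minimality argument: one must argue that no small perturbation can profitably ``wake up'' a dead neuron, which is exactly where strict negativity of the pre-activations---guaranteed by the open-halfspace, compact support and the finiteness of the sample---is used to lock the neurons in their flat region and reduce the full parameter space to the scalar problem $g$. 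A secondary point requiring care is checking that the chosen distribution simultaneously satisfies Assumptions~\ref{assump::full-rank}--\ref{assump::different-subspaces} (or Assumption~\ref{assump::linear-sep}) and stays within a common halfspace, which is a routine but necessary verification.
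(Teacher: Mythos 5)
Your proposal is correct and follows essentially the same route as the paper's proof: take $f_{D}\equiv 0$, arrange the data so that every ReLU-class neuron's pre-activation is uniformly strictly negative (the paper does this by supporting the data on $\mathbb{R}^{d-1}\times\{1\}$ and taking the last weight coordinate to be $-K-1$, you by supporting it in a halfspace $\{e_{1}^{\top}x\ge\delta\}$ and taking weights $-e_{1}$ — the same margin trick), so the network is locally constant equal to $a_{0}$, then set $a_{0}^{*}$ to the global minimizer of the resulting one-dimensional problem, which yields a local minimum whose constant classifier misclassifies at least $\min\{n_{+},n_{-}\}$ samples. Your halfspace formulation and the explicit analysis of $g$ via the sign of $\ell_{p}'$ are fine substitutes for the paper's corresponding steps, and your example distributions satisfy Assumptions~\ref{assump::full-rank}--\ref{assump::different-subspaces} (resp.\ Assumption~\ref{assump::linear-sep}) just as the paper's do.
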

\textbf{Remark:} 
(i) We note here that the above result holds in the over-parametrized case, where the number of neurons in the network $f_{S}$ is larger than the number of samples in the dataset. In addition, all counterexamples shown in Section~\ref{sec::neuron} hold in the over-parametrized case.  
(ii) We note here that applying the same analysis, we can generalize the above result to a larger class of neurons satisfying the following condition: there exists a scalar $z_{1}$ such that $\sigma(z)=$ constant for all $z\le z_{1}$ and $\sigma(z)$ is  piece-wise continuous on $\mathbb{R}$. (iii) We note that the training error is strictly non-zero when the dataset has both positive and negative samples and this can happen with probability at least $1-e^{-\Omega(n)}$.

\begin{table}[t]\label{tab:neuron-family}
\centering
\small
\begin{tabular}{cccccc}
\toprule
{\bf Theorem} & {\bf Softplus} & {\bf ReLU} 	& {\bf Leaky-ReLU} 	& {\bf Sigmoid} 	& {\bf Quadratic}\\
\midrule
{\bf \ref{thm::convex-finite-deep}}& Yes & No & No & No & Yes\\
{\bf \ref{thm::linear-sep-deep}} & Yes & No  &No & No & No\\
\bottomrule
\end{tabular}
\caption{\small The result whether Theorem~\ref{thm::convex-finite-deep} or~\ref{thm::linear-sep-deep} hold for all neurons in each class. The definition of each class can be found in Section~\ref{sec::neuron}.}
\end{table}

\textbf{Leaky-ReLU class}  contains neurons with $\sigma(z)=z$ for all $z\ge 0$ and $\sigma(z)$ is piece-wise continuous on $\mathbb{R}$. Some commonly used neurons in this class include ReLU, i.e., $\max\{z, 0\}$, leaky rectified linear unit (Leaky-ReLU), i.e., $\sigma(z)=z$ for $z\ge 0$, $\sigma = \alpha z$ for $z\le 0$ and some constant $\alpha\in(0,1)$, exponential linear unit (ELU), i.e., $\sigma(z)=z$ for $z\ge 0$, $\sigma(z)=\alpha(\exp(z)-1)$ for $z\le 0$ and some constant $\alpha<0$. We can see that all neurons in this class do not satisfy assumptions in Theorem~\ref{thm::convex-finite-deep}, while some neurons in this class satisfy the condition in Theorem~\ref{thm::linear-sep-deep} (e.g., linear neuron, $\sigma(z)=z$) and some neurons do not (e.g., ReLU). In Proposition~\ref{prop::relus}, we have provided a counterexample showing that Theorem~\ref{thm::linear-sep-deep} does not hold for some neurons in this class (e.g., ReLU).
Next, we will present the following proposition to show that when the network $f_{S}$ consists of neurons in the Leaky-ReLU class, even if all other conditions in Theorem~\ref{thm::convex-finite-deep} are satisfied, the empirical loss function is likely to have a local minimum with non-zero training error with high probability.  

\begin{proposition}\label{prop::piecewise}
Suppose that Assumption~\ref{assump::loss} and~\ref{assump::shortcut-connection} are satisfied.
Assume that neurons in the network $f_{S}$ satisfy that $\sigma(z)=z$ for all $z\ge 0$ and $\sigma(z)$ is piece-wise continuous on $\mathbb{R}$. Then there exists a network architecture $f_{D}$ and a distribution satisfying assumptions in Theorem~\ref{thm::convex-finite-deep} such that, with probability at least $1-e^{-\Omega(n)}$, the empirical loss $\hat{L}_{n}(\bm{\theta};p),p\ge 2$ has  a local minima $\bm{\theta}^{*}=(\bm{\theta}_{S}^{*},\bm{\theta}^{*}_{D})$ with non-zero training error.
\end{proposition}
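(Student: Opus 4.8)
The plan is to build an explicit distribution on three coordinate axes and exploit the fact that, since $\sigma(z)=z$ for all $z\ge 0$, the network $f_{S}$ collapses to an \emph{affine} classifier on any region of parameter space where every pre-activation is strictly positive. On such a region $\sigma$ never reaches its kink, so only the identity branch matters and the construction is insensitive to the (unspecified) behaviour of $\sigma$ on the negative axis; this is what lets one argument cover ReLU, Leaky-ReLU, ELU and the linear neuron simultaneously. Concretely, I would work in $\mathbb{R}^{3}$ with $\mathcal{U}_{+}=\{\bm e_{1},\bm e_{2}\}$ and $\mathcal{U}_{-}=\{\bm e_{1},\bm e_{3}\}$, so $r_{+}=r_{-}=2$ and $r=3$; Assumption~\ref{assump::different-subspaces} holds since $3>2$. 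Let $\mathbb{P}_{\bm X|Y=1}$ be supported on $\{(1,t,0):t\in\mathbb{R}\}\subset\text{Span}(\mathcal{U}_{+})$ with $t$ continuous and symmetric about $0$, and $\mathbb{P}_{\bm X|Y=-1}$ on $\{(1,0,s):s\in\mathbb{R}\}\subset\text{Span}(\mathcal{U}_{-})$ with $s$ continuous and symmetric. Any two positive (resp.\ negative) samples are then linearly independent with probability one, giving Assumption~\ref{assump::full-rank}, and every sample has first coordinate $1$, so the whole dataset lies in $\{x_{1}>0\}$. Set $f_{D}\equiv 0$.

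First I would check that, once the dataset contains positive points with $t>0$ and $t<0$ and negative points with $s>0$ and $s<0$ — an event of probability at least $1-e^{-\Omega(n)}$ by independence and symmetry — the empirical data are not affinely separable in the reduced coordinates $(t,s)$: a predictor $b+v_{2}t+v_{3}s$ that is positive on $(t,0)$ for both signs of $t$ forces $b>0$ by a convex-combination argument, while being negative on $(0,s)$ for both signs of $s$ forces $b<0$, a contradiction. This is exactly the XOR obstruction, and it guarantees that no affine classifier attains zero error.

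The core step is the local-minimum construction. Because $x_{1}=1$ on every sample, on the open region $\mathcal{N}=\{\bm\theta_{S}:\bm w_{j}^{\top}x_{i}>0\ \forall i,j\}$ the network equals $a_{0}+\bm v^{\top}x$ with $\bm v=\sum_{j}a_{j}\bm w_{j}$, so $\hat L_{n}(\bm\theta;p)=F(a_{0},\bm v)$ there, where $F(a_{0},\bm v)=\frac1n\sum_{i}\ell_{p}(-y_{i}(a_{0}+\bm v^{\top}x_{i}))$ is the empirical risk of a linear predictor. For the polynomial hinge loss (and any coercive $\ell_{p}$ allowed by Assumption~\ref{assump::loss}), $F$ is continuous and, by non-separability, coercive in its effective coordinates, hence attains a minimum at some $(a_{0}^{*},\bm v^{*})$. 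I then realize $\bm v^{*}$ inside $\mathcal{N}$: the cone $C=\{\bm w:\bm w^{\top}x_{i}>0\ \forall i\}$ is open and nonempty (it contains $\bm e_{1}$) and therefore spans $\mathbb{R}^{3}$, so $\bm v^{*}=t\,\bm w_{1}-t\,\bm w_{2}$ with $\bm w_{1}\in C$ and $\bm w_{2}=\bm w_{1}-t^{-1}\bm v^{*}\in C$ for large $t$, the remaining $M-2$ neurons taken inactive ($a_{j}=0,\ \bm w_{j}=\bm e_{1}$, any $M\ge2$). Call the result $\bm\theta^{*}$; it lies in the interior of $\mathcal{N}$, so a whole neighborhood stays in $\mathcal{N}$, where $\hat L_{n}(\bm\theta;p)=F(a_{0},\bm v(\bm\theta))\ge F(a_{0}^{*},\bm v^{*})=\hat L_{n}(\bm\theta^{*};p)$. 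Thus $\bm\theta^{*}$ is a local minimum, and $\sgn(f(\,\cdot\,;\bm\theta^{*}))$ is the affine classifier attaining $\min F$, which by the XOR argument misclassifies at least one sample, giving $\hat R_{n}(\bm\theta^{*})>0$.

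The step I expect to be the main obstacle is making the local-minimum claim rigorous rather than merely plausible: I must guarantee that $\bm\theta^{*}$ sits \emph{strictly} inside $\mathcal{N}$, so the kink of $\sigma$ is never reached under small perturbations — including flat perturbations that move $\bm w_{j}$ but keep $\bm v$ fixed, and perturbations that change $\bm v$ — and that the minimum of $F$ is genuinely attained, i.e.\ coercivity of $F$, which is precisely where non-separability and hence the $1-e^{-\Omega(n)}$ probability enter. The remaining verifications (Assumptions~\ref{assump::full-rank}–\ref{assump::different-subspaces}, and that only the $z\ge0$ branch of $\sigma$ is ever used, so the same construction works for every neuron in the class) are routine.
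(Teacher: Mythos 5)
Your proposal is correct and, at its core, takes the same route as the paper's own proof: set $f_{D}\equiv 0$, choose an XOR-type distribution supported on a set where one coordinate is identically $1$ (the paper uses positives on $([-2,-1]\cup[1,2])\times\{0\}\times\{1\}\times\{0\}^{d-3}$ and negatives with the roles of the first two coordinates swapped), and park the network in the open region where every pre-activation is strictly positive, so that $\sigma$ acts as the identity, the network collapses to an affine model on the data, and the local-minimum value equals the optimal affine-model loss, which is positive because the data are not linearly separable with probability $1-e^{-\Omega(n)}$. Your verification of Assumptions~\ref{assump::full-rank} and~\ref{assump::different-subspaces}, the XOR/convex-combination argument, and the observation that a neighborhood of $\bm\theta^{*}$ stays inside $\mathcal{N}$ are all sound.

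Two points of comparison. First, your realization $\bm v^{*}=t\,\bm w_{1}-t\,\bm w_{2}$ with $\bm w_{1},\bm w_{2}\in C$ genuinely requires $M\ge 2$, whereas the statement quantifies over the given $f_{S}$ and so must also cover $M=1$. The repair is already contained in your own ``effective coordinates'' observation: since $\bm e_{1}^{\top}x_{i}\equiv 1$ on the data, replacing $(a_{0}^{*},\bm v^{*})$ by $(a_{0}^{*}-c,\,\bm v^{*}+c\,\bm e_{1})$ does not change the predictor on the samples, and for $c$ large enough $\bm v^{*}+c\,\bm e_{1}\in C$; then a single neuron with weight $\bm v^{*}+c\,\bm e_{1}$ and output weight $1$ suffices. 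This shift along the always-positive coordinate is exactly the paper's construction (it copies the optimal affine weight into every neuron and adds $K+1$ to the coordinate that equals $1$ on all samples), which is why the paper's version works for any $M\ge 1$. Second, your coercivity/attainment step is a place where you are more careful than the paper, which simply posits a global minimizer of the affine-model loss; your argument via the trivial recession cone of the XOR configuration is valid for the polynomial hinge loss and any coercive $\ell_{p}$. Strictly speaking, Assumption~\ref{assump::loss} by itself does not force $\ell_{p}$ to be coercive (a bounded, nondecreasing loss with $\ell_{p}'=0$ exactly on $(-\infty,-z_{0}]$ is admissible), so both your proof and the paper's carry the same implicit requirement here; you are not losing any generality relative to the paper, and you have at least made the requirement explicit.
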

\textbf{Remark:}  We note that applying the same proof, we can generalize the above result to  a larger class of neurons, i.e., neurons satisfying the condition that there exists two scalars $z_{1}$ and $\alpha$ such that $\sigma(z)=\alpha(z-z_{1})$ for all $z\ge 0$ and $\sigma$ is piece-wise continuous on $\mathbb{R}$. In addition, we note that the ReLU neuron (but not all neurons in the ReLU class) satisfies the definition of both  ReLU class and Leaky-ReLU class, and therefore both Proposition~\ref{prop::relus} and~\ref{prop::piecewise} hold for the ReLU neuron.


\textbf{Sigmoid class} contains neurons with $\sigma(z)+\sigma(-z)\equiv$ constant on $\mathbb{R}$. We list a few commonly adopted neurons in this family: sigmoid neuron, i.e.,  $\sigma(z)=\frac{1}{1+e^{-z}}$, hyperbolic tangent neuron, i.e., $\sigma(z)=\frac{e^{z}-1}{e^{z}+1}$, arctangent neuron, i.e., $\sigma(z)=\tan^{-1}(z)$ and softsign neuron, i.e., $\sigma(z)=\frac{z}{1+|z|}$. We note that all real odd functions\footnote{A real function $f:\mathbb{R}\rightarrow\mathbb{R}$ is an odd function, if $f(x)+f(-x)\equiv0$ for all $x\in\mathbb{R}$.} satisfy the conditions of the sigmoid class.
We can see that none of the above neurons satisfy assumptions in Theorem~\ref{thm::convex-finite-deep}, since neurons in this class satisfy either $\sigma''(z)+\sigma''(-z)\equiv0$ for all $z\in\mathbb{R}$ or $\sigma(z)$ is not twice differentiable.
 For Theorem~\ref{thm::linear-sep-deep}, we can see that some neurons in this class satisfy the condition in Theorem~\ref{thm::linear-sep-deep} (e.g., sigmoid neuron) and some neurons do not (e.g., constant neuron $\sigma(z)\equiv0$ for all $z\in\mathbb{R}$).
In Proposition~\ref{prop::relus}, we provided a counterexample showing that Theorem~\ref{thm::linear-sep-deep} does not hold for some neurons in this class (e.g., constant neuron).
Next, we present the following proposition showing that when the network $f_{S}$ consists of neurons in the sigmoid class, then there always exists a data distribution satisfying the assumptions in Theorem~\ref{thm::convex-finite-deep} such that, with a positive probability, the empirical loss has a local minima with non-zero training error.

\begin{proposition}\label{prop::logistic}
Suppose that assumptions~\ref{assump::loss} and~\ref{assump::shortcut-connection} are satisfed.
Assume that  there exists a constant $c\in\mathbb{R}$ such that neurons in the network $f_{S}$ satisfy $\sigma(z)+\sigma(-z)\equiv c$ for all $z\in\mathbb{R}$. Assume that the dataset $\mathcal{D}$ has $2n$ samples. There exists a network architecture $f_{D}$ and a distribution satisfying assumptions in Theorem~\ref{thm::convex-finite-deep} such that, with a positive probability, the empirical loss function $\hat{L}_{2n}(\bm{\theta};p), p\ge2$ has a local minimum $\bm{\theta}^{*}=(\bm{\theta}_{S}^{*},\bm{\theta}^{*}_{D})$ satisfying $\hat{R}_{2n}(\bm{\theta}^{*})\ge\frac{\min\{n_{-},n_{+}\}}{2n}$, where $n_{+}$ and $n_{-}$ denote the number of positive and negative samples in the dataset, respectively.
\end{proposition}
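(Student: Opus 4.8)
The plan is to turn the defining symmetry $\sigma(z)+\sigma(-z)\equiv c$ of the sigmoid class into a rigid constraint on the entire network, and then exhibit a dataset on which that constraint forces a constant (hence useless) optimal predictor. First I would record the identity that drives everything. Writing $f_S(x;\bm{\theta}_S)=a_0+\bm{a}^{\top}\sigma(\bm{W}^{\top}x)$ and using $\sigma(-\bm{w}_j^{\top}x)=c-\sigma(\bm{w}_j^{\top}x)$ gives $f_S(x)+f_S(-x)=2a_0+c\sum_j a_j=:2b_0(\bm{\theta}_S)$, which is independent of $x$. Taking $f_D\equiv 0$ (a constant, permitted by Assumption~\ref{assump::shortcut-connection}) yields the key identity $f(x;\bm{\theta})+f(-x;\bm{\theta})=2b_0$ for all $x$ and all $\bm{\theta}$; note also that $\sigma(0)=c/2$.

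Next I would fix the distribution and the favourable event. Take $d=2$, let the positive conditional law be supported on $\{\bm{e}_1,-\bm{e}_1\}$ and the negative one on $\{\bm{e}_2,-\bm{e}_2\}$, each atom carrying probability $\tfrac12$. Then $\mathcal{U}_+=\{\bm{e}_1\}$ and $\mathcal{U}_-=\{\bm{e}_2\}$, so $r_+=r_-=1<2=r$ and Assumptions~\ref{assump::full-rank} and~\ref{assump::different-subspaces} hold. Let $E$ be the event that the $2n$ samples split into perfectly balanced pairs---equally many $\bm{e}_1$ and $-\bm{e}_1$ among the positives, equally many $\bm{e}_2$ and $-\bm{e}_2$ among the negatives. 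This event has positive (but only $O(1/\sqrt{n})$) probability, which is precisely why the statement claims positive probability rather than high probability. On $E$ the dataset is invariant under $x\mapsto -x$ within each class.

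The heart of the argument is a convexity/pairing bound. Take $\ell_p$ to be the polynomial hinge $[\max\{z+1,0\}]^{p+1}$, which is convex and satisfies Assumption~\ref{assump::loss}. On $E$, pairing each sample $x$ with its partner $-x$ (same label $y$) and combining Jensen's inequality with the identity above,
\begin{equation*}
\ell_p(-yf(x))+\ell_p(-yf(-x))\ge 2\,\ell_p\!\left(-y\,\tfrac{f(x)+f(-x)}{2}\right)=2\,\ell_p(-y\,b_0),
\end{equation*}
and summing over all pairs gives $\hat{L}_{2n}(\bm{\theta};p)\ge h(b_0):=\tfrac{1}{2n}\bigl[n_+\ell_p(-b_0)+n_-\ell_p(b_0)\bigr]$, a function of the scalar $b_0$ alone. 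Since $h$ is convex with $h'(0)=\tfrac{\ell_p'(0)}{2n}(n_--n_+)$, its minimiser $b^\ast$ satisfies $b^\ast\ge0$ exactly when $n_+\ge n_-$. I would then choose $\bm{\theta}^\ast=(\bm{W}=0,\bm{a},a_0)$ with $a_0+\tfrac{c}{2}\sum_j a_j=b^\ast$; because $\sigma(0)=c/2$ this makes $f(\cdot;\bm{\theta}^\ast)\equiv b^\ast$, so the bound holds with equality and $\hat{L}_{2n}(\bm{\theta};p)\ge h(b^\ast)=\hat{L}_{2n}(\bm{\theta}^\ast;p)$ for every $\bm{\theta}$. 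Thus $\bm{\theta}^\ast$ is a global---hence local---minimum. There $f\equiv b^\ast$, so every sample receives the single label determined by the sign of $b^\ast$: when $n_+\ge n_-$ we have $b^\ast\ge0$, all $n_-$ negatives are misclassified, and $\hat{R}_{2n}(\bm{\theta}^\ast)=\tfrac{\min\{n_+,n_-\}}{2n}$ (symmetrically when $n_->n_+$).

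The step I expect to be the crux is certifying that $\bm{\theta}^\ast$ is a genuine minimum and not a saddle, and the convexity/pairing bound is exactly what makes this painless, collapsing a high-dimensional landscape question to a one-dimensional convex problem in $b_0$ and avoiding any Hessian computation. The obstacle resurfaces only if one insists on a general, possibly non-convex $\ell_p$ from Assumption~\ref{assump::loss}: then the dangerous directions are those that move $\bm{W}$ while holding $\sum_j a_j\bm{w}_j=0$, along which the first- and second-order terms in $\bm{W}$ vanish. Here the symmetry rescues the estimate twice---$\sigma''(0)=0$ (obtained by differentiating $\sigma(z)+\sigma(-z)\equiv c$ twice) kills the second-order contribution of $\bm{W}$ to $f$, while the invariance of the data on $E$ forces the perturbation $\delta f$ to be odd in $x$, so that every odd-order term in the Taylor expansion of the loss cancels in the paired sum and only non-negative even-order terms remain. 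I would keep this local Taylor computation as a fallback to cover loss functions for which convexity is not available.
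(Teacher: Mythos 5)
Your construction is essentially the paper's: the same two-subspace distribution (positives on $\pm\bm{e}_1$, negatives on $\pm\bm{e}_2$), the same balanced-pairing event of positive probability, the choice $f_D\equiv 0$, and the same candidate point (zero inner weights, output bias at the minimizer of the one-dimensional problem), exploiting $\sigma(z)+\sigma(-z)\equiv c$ against the pairing $x\mapsto -x$. Where you genuinely differ is in how minimality is certified: the paper verifies that $\bm{\theta}^{*}$ is a critical point and then applies the first-order convexity inequality $\ell_p(u')-\ell_p(u)\ge \ell_p'(u)(u'-u)$ together with the symmetry to show that no nearby perturbation decreases the loss, whereas you use Jensen on each pair to get the global lower bound $\hat{L}_{2n}(\bm{\theta};p)\ge h\bigl(b_0(\bm{\theta})\bigr)\ge h(b^{*})$, which shows at once that your $\bm{\theta}^{*}$ is a \emph{global} minimum. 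This is cleaner and strictly stronger, and it avoids all derivative computations. Note that both arguments lean on convexity of $\ell_p$, which Assumption~\ref{assump::loss} does not actually grant: the paper invokes ``convexity of $\ell_p$'' silently for a generic admissible loss, while you sidestep the issue by fixing $\ell_p$ to the polynomial hinge. Strictly speaking the latter narrows the statement (the loss is given by Assumption~\ref{assump::loss}, not chosen by the prover; only $f_D$ and the distribution are existentially quantified), but your argument goes through verbatim for every \emph{convex} loss satisfying Assumption~\ref{assump::loss}, which is exactly the generality at which the paper's own proof is valid.

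Your fallback for non-convex admissible losses, however, does not work as stated. Oddness of the perturbation $g(x)=f(x;\tilde{\bm{\theta}})-\tilde b_0$ kills the odd-order terms in the paired Taylor expansion, but the surviving second-order term on a pair with label $y$ is $g(x)^2\,\ell_p''(-y\tilde b_0)$, and nothing in Assumption~\ref{assump::loss} makes $\ell_p''$ nonnegative. Concretely, one-dimensional optimality of $b^{*}$ only gives $n_+\ell_p''(-b^{*})+n_-\ell_p''(b^{*})\ge 0$; if $\ell_p''(-b^{*})<0$, take inner-weight perturbations $\Delta\bm{w}_j=\varepsilon\bm{e}_1$, so that $g$ vanishes on the negative class and is nonzero on the positive class, and the empirical loss decreases at second order --- your $\bm{\theta}^{*}$ is then not a local minimum at all. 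So the non-convex case is a genuine open hole in your proposal (and, implicitly, in the paper's proof as well); it would require a different construction, not a refinement of this Taylor argument.
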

\textbf{Remark:}  Proposition~\ref{prop::logistic} shows that when the network $f_{S}$ consists of neurons in the sigmoid class,  even if all other conditions are satisfied, the results in Theorem~\ref{thm::convex-finite-deep} does not  hold with a positive probability. 

\textbf{Quadratic family} contains neurons where $\sigma(z)$ is real analytic and strongly convex on $\mathbb{R}$ and has a global minimum at the point $z=0$. A simple example of neuron in this family is the quadratic neuron, i.e., $\sigma(z)=z^{2}$. It is easy to check that all neurons in this class satisfy the conditions in Theorem~\ref{thm::convex-finite-deep} but not in Theorem~\ref{thm::linear-sep-deep}. 
For Theorem~\ref{thm::linear-sep-deep}, we present a counterexample and show that, when the network $f_{S}$ consists of neurons in the quadratic class, even if positive and negative samples are linearly separable,  the empirical loss can have a local minimum with non-zero training error.

\begin{proposition}\label{prop::quadratic}
Suppose that Assumption~\ref{assump::loss} and~\ref{assump::shortcut-connection} are satisfied.
Assume that neurons in $f_{S}$ satisfy that $\sigma$ is strongly convex and twice differentiable on $\mathbb{R}$ and has a global minimum at $z=0$. There exists a network architecture $f_{D}$ and a distribution satisfying assumptions in Theorem~\ref{thm::linear-sep-deep} such that with probability one, the empirical loss $\hat{L}_{n}(\bm{\theta};p),p\ge2$ has a local minima $\bm{\theta}^{*}=(\bm{\theta}_{S}^{*},\bm{\theta}^{*}_{D})$ 
satisfying $\hat{R}_{n}(\bm{\theta}^{*})\ge\frac{\min\{n_{+},n_{-}\}}{n}$, where $n_{+}$ and $n_{-}$ denote the number of positive and negative samples in the dataset, respectively.
\end{proposition}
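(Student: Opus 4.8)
The plan is to produce an explicit one-dimensional, linearly separable distribution together with the trivial choice $f_{D}\equiv 0$ (so $f=f_{S}$ is a single-layer network, which is allowed by Assumption~\ref{assump::shortcut-connection}), and to exhibit a ``dead-neuron'' parameter $\bm{\theta}^{*}$ at which all hidden weights vanish, the network output is a constant, and hence one entire class is misclassified. Concretely, take $d=1$, let $Y=\pm1$ with equal probability, and given $Y=1$ draw $X$ uniformly from $(1,2)$ while given $Y=-1$ draw $X$ uniformly from $(-2,-1)$; this satisfies Assumption~\ref{assump::linear-sep} with $w=1$. Write $f_{S}(x)=a_{0}+\sum_{j=1}^{M}a_{j}\sigma(w_{j}x)$, set $w_{j}^{*}=0$ for every $j$, and choose $a_{0}^{*},a_{j}^{*}$ so that $f_{S}\equiv C^{*}$ for a constant $C^{*}$ fixed below. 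Since the output is then constant, $\sgn(f)$ is the same on every sample, so at least $\min\{n_{+},n_{-}\}$ samples are misclassified and $\hat{R}_{n}(\bm{\theta}^{*})\ge \min\{n_{+},n_{-}\}/n$, the claimed bound. It remains to choose $C^{*}$ and the signs of the $a_{j}^{*}$ so that $\bm{\theta}^{*}$ is a local minimum.

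First consider the ``easy'' directions: perturbations of $a_{0}$ and of the $a_{j}$ with all $w_{j}$ held at $0$. Along these $f_{S}$ stays constant, equal to some value $C^{*}+\Delta$, so the loss equals $L_{0}(C^{*}+\Delta):=\tfrac1n\sum_{i}\ell_{p}(-y_{i}(C^{*}+\Delta))$. I therefore take $C^{*}$ to be a local minimizer of the scalar function $L_{0}$, which exists because $L_{0}$ is $C^{2}$, bounded below, and $L_{0}'$ changes sign from negative to positive (as $C\to+\infty$, $\ell_{p}'(-C)\to0$ and $\ell_{p}'(C)>0$, giving $L_{0}'>0$, and symmetrically $L_{0}'<0$ as $C\to-\infty$). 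By construction $L_{0}(C^{*}+\Delta)\ge L_{0}(C^{*})=\hat{L}_{n}(\bm{\theta}^{*})$ for small $\Delta$, so no easy direction decreases the loss and $\bm{\theta}^{*}$ is critical in these coordinates.

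The crux is the behaviour in the hidden-weight directions $w_{j}$, and this is where the quadratic-class hypothesis bites. Since $z=0$ is a global minimum of $\sigma$ and $\sigma$ is differentiable, $\sigma'(0)=0$, so $\sigma(w_{j}x)=\sigma(0)+\tfrac12\sigma''(0)w_{j}^{2}x^{2}+o(w_{j}^{2})$: the first-order effect of $w_{j}$ vanishes and the leading effect is second order with coefficient $\sigma''(0)>0$ (strong convexity). Substituting $f_{S}(x_{i})=C^{*}+\Delta+\tfrac12\sigma''(0)\sum_{j}a_{j}^{*}w_{j}^{2}x_{i}^{2}+o(\|\bm{w}\|^{2})$ and expanding the loss gives
\[
\hat{L}_{n}(\bm{\theta})-\hat{L}_{n}(\bm{\theta}^{*})\ \ge\ \big(L_{0}(C^{*}+\Delta)-L_{0}(C^{*})\big)\ -\ \frac{\sigma''(0)}{2n}\Big(\sum_{j}a_{j}^{*}w_{j}^{2}\Big)\tilde{S}\ +\ o\big(\|\bm{w}\|^{2}\big),
\]
where $\tilde{S}=\sum_{i}\ell_{p}'(-y_{i}C^{*})\,y_{i}\,x_{i}^{2}=\ell_{p}'(-C^{*})\sum_{y_{i}=1}x_{i}^{2}-\ell_{p}'(C^{*})\sum_{y_{i}=-1}x_{i}^{2}$ is a weighted difference of the two class second moments (the linear-in-$\bm{w}$ term dropped because $\sigma'(0)=0$, and the linear-in-$\Delta$ term dropped because $L_{0}'(C^{*})=0$). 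The first bracket is $\ge0$; for the second term I choose every $a_{j}^{*}$ to have sign opposite to $\tilde{S}$, so that $-\big(\sum_{j}a_{j}^{*}w_{j}^{2}\big)\tilde{S}>0$, making the weight directions strictly increasing. Absorbing the cross term $O(|\Delta|\,\|\bm{w}\|^{2})$ and the remainder into these strictly positive $\|\bm{w}\|^{2}$ and $\Delta$ contributions then yields $\hat{L}_{n}(\bm{\theta})\ge\hat{L}_{n}(\bm{\theta}^{*})$ throughout a neighbourhood. I expect this sign bookkeeping to be the main obstacle: because the weights enter only at second order, their effect on the loss can have either sign, and one must pick the $a_{j}^{*}$ according to the \emph{data-dependent} sign of $\tilde{S}$. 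Working in one dimension is exactly what makes this always possible, since there $\tilde{S}$ is a scalar with a definite sign; in higher dimensions $\tilde{S}$ would be a matrix that could be indefinite. Note also that only $\sigma''(0)$ is used, so twice-differentiability (rather than analyticity) suffices.

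Finally, the probability-one statement follows because the only degenerate event to exclude is $\tilde{S}=0$. For each realized pair $(n_{+},n_{-})$ the constant $C^{*}$ and the weights $\ell_{p}'(\mp C^{*})$ are fixed and cannot both vanish (that would require $C^{*}\ge z_{0}$ and $C^{*}\le -z_{0}$), while $\sum_{y_{i}=1}x_{i}^{2}$ and $\sum_{y_{i}=-1}x_{i}^{2}$ have continuous distributions, so $\tilde{S}=0$ has probability zero; summing over the countably many values of $(n_{+},n_{-})$ keeps this a null event, and when one class is empty the asserted bound is vacuous. Assembling the cases produces a local minimum with $\hat{R}_{n}(\bm{\theta}^{*})\ge \min\{n_{+},n_{-}\}/n$ with probability one.
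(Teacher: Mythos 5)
Your proof is correct, and its skeleton is the same as the paper's: take $f_{D}\equiv 0$, set every hidden weight to zero so that $\sigma'(0)=0$ makes $\bm{\theta}^{*}$ critical in the weight directions, put the constant output at a minimizer of the constant-classifier loss, and choose the signs of the outer weights according to the sign of a weighted second-moment gap (your $\tilde{S}$; the paper phrases this as assuming without loss of generality that $\frac{1}{n_{+}}\sum_{i:y_{i}=1}x_{i}x_{i}^{\top}-\frac{1}{n_{-}}\sum_{i:y_{i}=-1}x_{i}x_{i}^{\top}$ is positive definite and taking $a_{j}^{*}=-1$). Where you genuinely diverge is in the verification of local minimality and in the choice of distribution, and the first divergence buys something real. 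The paper lower-bounds the loss change by the first-order convexity inequality $\ell_{p}(b)-\ell_{p}(a)\ge \ell_{p}'(a)(b-a)$ and then shows the auxiliary function $G(\bm{u})=\sum_{i}\ell_{p}'(-y_{i}f(x_{i};\bm{\theta}^{*}))\,y_{i}\,\sigma(\bm{u}^{\top}x_{i})$ has a local minimum at $\bm{u}=\bm{0}_{d}$; that step explicitly invokes convexity of $\ell_{p}$, which is \emph{not} among the conditions of Assumption~\ref{assump::loss}. Your second-order Taylor expansion, with the cross terms and remainders absorbed into the strictly positive $\|\bm{w}\|^{2}$ contribution, uses only the differentiability that Assumption~\ref{assump::loss} guarantees for $p\ge 2$, so your argument is more faithful to the stated hypotheses. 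On the distribution side the trade-off goes the other way: the paper places the classes on $[2,3]$ and $[-1,-1/2]$, so with the stationarity balance $n_{+}\ell_{p}'(-C^{*})=n_{-}\ell_{p}'(C^{*})$ the gap has a deterministic sign whenever both classes are present and the probability-one claim is immediate, whereas your symmetric $U(1,2)$/$U(-2,-1)$ choice makes the sign of $\tilde{S}$ random, forcing the (correct, but extra) almost-sure nondegeneracy argument that your adaptive choice of $\sgn(a_{j}^{*})$ then exploits. One loose end to tighten: the event that the dataset contains only one class has probability $2^{1-n}>0$, and there the bound $\hat{R}_{n}\ge 0$ is indeed trivial, but the proposition still asserts that a local minimum \emph{exists}; one line fixes this, e.g.\ the constant network $f\equiv z_{0}$ or $f\equiv -z_{0}$ (matching the common label, reached with all $w_{j}=0$) attains the global minimum value $\ell_{p}(-z_{0})$ of the empirical loss, since $\ell_{p}$ is nondecreasing and constant on $(-\infty,-z_{0}]$.
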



\subsection{Shortcut-like Connections}\label{sec::short-cut}
In this subsection, we discuss whether the main results still hold if we remove the shortcut-like connections or replace them with the identity shortcut connections used in the residual network~\cite{he2016deep}.  
Specifically, we provide two counterexamples and show that the main results do not hold if the shortcut-like connections are removed or replaced with the identity shortcut connections.

\textbf{Feed-forward networks.} When the shortcut-like connections (i.e., the network $f_{S}$ in Figure~\ref{fig::network}(b)) are removed, the network architecture can be viewed as  a standard feedforward neural network. 
We provide a counterexample to show that, for a feedforward network with ReLU neurons, even if the other conditions  in Theorem~\ref{thm::convex-finite-deep} or ~\ref{thm::linear-sep-deep} are satisfied, the empirical loss functions is likely to have a local minimum with non-zero training error. In other words, neither Theorem~\ref{thm::convex-finite-deep} nor ~\ref{thm::linear-sep-deep} holds when the shortcut-like connections are removed. 

\begin{proposition}\label{prop::feedforward}
Suppose that assumption \ref{assump::loss} is satisfied.
Assume that the feedforward network $f(x;\bm{\theta})$ has at least one hidden layer and at least one neuron in each hidden layer. If neurons in the network $f$ satisfy that $\sigma(z)=0$ for all $z\le 0$ and $\sigma(z)$ is continuous on $\mathbb{R}$, then for any dataset $\mathcal{D}$ with $n$ samples,  the empirical loss $\hat{L}_{n}(\bm{\theta};p),p\ge 2$ has a local minima $\bm{\theta}^{*}$ with $\hat{R}_{n}(\bm{\theta}^{*})\ge\frac{\min\{n_{+},n_{-}\}}{n}$, where $n_{+}$ and $n_{-}$ are the number of positive and negative samples in the dataset, respectively. 
\end{proposition}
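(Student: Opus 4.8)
The plan is to exploit the defining property of the ReLU class, namely that $\sigma(z)=0$ for all $z\le 0$, to force the whole network into a ``dead'' configuration whose output is a constant independent of the input, and then to argue that this configuration is a local minimum. Concretely, I would first build a candidate $\bm{\theta}^{*}$ as follows. Set the first hidden layer's weight matrix to zero and its bias to $\bm{b}_{1}=-\bm{1}$, so that every pre-activation of the first hidden layer equals $-1<0$ on every sample $x_{i}$; since $\sigma(z)=0$ for $z\le 0$, all first-layer neurons output exactly $0$. Then set all remaining weight matrices $\bm{W}_{2},\dots,\bm{W}_{L}$ to zero, so the network output is just the final bias $b_{L}$, chosen below. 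This construction uses only the sign structure of $\sigma$ together with the existence of at least one hidden layer containing at least one neuron, and is completely independent of the data, matching the ``for any dataset'' conclusion.

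Second, I would reduce the loss near $\bm{\theta}^{*}$ to a one-dimensional function of the constant output. Because the first layer is dead on every sample, the sub-network formed by layers $2,\dots,L$ always receives the zero vector, so $f(x_{i};\bm{\theta}^{*})=b_{L}=:c$ is the \emph{same} constant for all $i$, and the empirical loss equals $\phi(c):=\frac{n_{+}}{n}\ell_{p}(-c)+\frac{n_{-}}{n}\ell_{p}(c)$. I would then take $b_{L}=c^{*}$, where $c^{*}$ minimizes $\phi$ over $\mathbb{R}$. To see that a finite minimizer exists (assuming $n_{+},n_{-}>0$; otherwise the bound is vacuous), note by Assumption~\ref{assump::loss} that $\ell_{p}'(z)=0$ for $z\le -z_{0}$ and $\ell_{p}'(z)>0$ for $z>-z_{0}$, so $\phi'(c)=\frac{n_{-}}{n}\ell_{p}'(c)>0$ for $c\ge z_{0}$ and $\phi'(c)=-\frac{n_{+}}{n}\ell_{p}'(-c)<0$ for $c\le -z_{0}$. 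Thus $\phi$ is strictly decreasing on $(-\infty,-z_{0}]$ and strictly increasing on $[z_{0},\infty)$, and attains its infimum at some $c^{*}\in[-z_{0},z_{0}]$ by continuity of $\ell_{p}$.

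Third, I would verify that $\bm{\theta}^{*}$ is genuinely a local minimum rather than a mere critical point. The crucial observation is that the dead-first-layer property is robust: for perturbations of the first-layer parameters small enough relative to $1/(1+\max_{i}\|x_{i}\|)$, every first-layer pre-activation stays strictly negative, so the first layer still outputs $0$ on all samples; and perturbations of the deeper parameters leave the first layer untouched. Hence throughout a neighborhood of $\bm{\theta}^{*}$ the output is a single constant $c(\bm{\theta})$ common to all samples, so the loss equals $\phi(c(\bm{\theta}))\ge \phi(c^{*})=\hat{L}_{n}(\bm{\theta}^{*};p)$, proving local minimality. Finally, since $\sgn(f(\cdot;\bm{\theta}^{*}))\equiv \sgn(c^{*})$ is a constant label, it misclassifies either all positive or all negative samples, giving $\hat{R}_{n}(\bm{\theta}^{*})\ge \min\{n_{+},n_{-}\}/n$.

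The main obstacle, and the step deserving the most care, is the local-minimality argument: one must confirm that throughout the \emph{entire} neighborhood — not merely along coordinate directions — the loss reduces exactly to $\phi(c(\bm{\theta}))$, which hinges on the fact that all samples share the identical dead first layer and hence the identical constant sub-network input. Once this is in place the comparison $\phi(c)\ge \phi(c^{*})$ is immediate and no differentiability of the network map is needed; the only analytic input is the shape of $\phi$ supplied by Assumption~\ref{assump::loss}. A minor point to dispatch is the degenerate case $\min\{n_{+},n_{-}\}=0$, where the claimed bound holds trivially.
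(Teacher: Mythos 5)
Your proof is correct, and it rests on the same core idea as the paper's own proof: exploit $\sigma(z)=0$ for $z\le 0$ to make an entire hidden layer inactive on every sample, so that throughout a neighborhood of the candidate point the network outputs a single sample-independent constant, and then choose that constant to minimize the one-dimensional loss $\phi(c)=\frac{1}{n}\left[n_{+}\ell_{p}(-c)+n_{-}\ell_{p}(c)\right]$. The difference lies in which layer you deaden, and it changes the technical work. The paper deadens the \emph{last} hidden layer: it fixes arbitrary parameters $\bm{\theta}_{L-1}^{*}$ for the earlier layers, chooses biases so that ${\bm{w}_{j}^{*}}^{\top}\bm{\Phi}(x_{i};\bm{\theta}_{L-1}^{*})+b_{j}^{*}\le-1$ on all samples, and then needs a continuity-on-a-compact-set argument for the feature map $\bm{\Phi}(x_{i};\cdot)$ to ensure these pre-activations stay below $-1/2$ when the earlier layers are perturbed. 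You deaden the \emph{first} hidden layer ($\bm{W}_{1}=0$, $\bm{b}_{1}=-\bm{1}$), which makes the robustness step elementary (a margin bound involving only $\max_{i}\|x_{i}\|$) and has the further consequence that every deeper layer receives the zero vector on every sample, so the deeper parameters can be perturbed arbitrarily, not just slightly, without breaking sample-independence of the output. Your argument is also more careful on two points the paper glosses over: you prove existence of the minimizer $c^{*}$ from the sign of $\phi'$ outside $[-z_{0},z_{0}]$ (the paper simply asserts a global minimizer of the one-dimensional problem), and you explicitly dispatch the degenerate case $\min\{n_{+},n_{-}\}=0$.
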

\textbf{Remark:}  The result holds for ReLUs, since it is easy to check that the ReLU neuron satisfies the above assumptions.  


\textbf{Identity shortcut connections.} As we stated earlier, adding shortcut-like connections to a  network can improve the loss surface. However, the shortcut-like connections shown in Fig~\ref{fig::network}(b) are different from some popular shortcut connections used in the real-world applications, e.g., the identity shortcut connections in the residual network.   
Thus, a natural question arises: do the main results still hold if we use the identity shortcut connections? To address the question, we provide the following counterexample to show that, when we replace the shortcut-like connections with the identity shortcut connections, even if the other conditions in Theorem~\ref{thm::convex-finite-deep} are satisfied, the empirical loss function is likely to have a local minimum with non-zero training error. In other words, Theorem~\ref{thm::convex-finite-deep} does not hold for the identity shortcut connections.

\begin{proposition}\label{prop::short-cut}
Assume that $H:\mathbb{R}^{d}\rightarrow\mathbb{R}^{d}$ is a feedforward neural network parameterized by $\bm{\theta}$ and all neurons in $H$ are ReLUs. Define a network $f:\mathbb{R}^{d}\rightarrow \mathbb{R}$ with identity shortcut connections as $f(x;\bm{a},\bm{\theta},b)=\bm{a}^{\top}(x+H(x;\bm{\theta}))+b$, $\bm{a}\in\mathbb{R}^{d}, b\in\mathbb{R}$. Then there exists a distribution $\mathbb{P}_{\bm{X}\times Y}$ satisfying the assumptions in Theorem~\ref{thm::convex-finite-deep} such that with probability at least $1-e^{-\Omega(n)}$, the empirical loss $\hat{L}_{n}(\bm{a},\bm{\theta},b;p)=\frac{1}{n}\sum_{i=1}^{n}\ell(-y_{i}f(x_{i};\bm{\theta});p), p\ge 2$ has a local minimum with non-zero training error. 
\end{proposition}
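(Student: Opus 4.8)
The key structural observation is that the identity shortcut contributes only the \emph{linear} term $\bm{a}^\top x$ to the output $f(x;\bm{a},\bm{\theta},b)=\bm{a}^\top(x+H(x;\bm{\theta}))+b$, while the sub-network $H$ can be driven entirely into the dead region of its ReLUs. The plan is therefore to (i) choose a data distribution satisfying the assumptions of Theorem~\ref{thm::convex-finite-deep} for which \emph{no} linear classifier attains zero training error, and (ii) exhibit a parameter configuration at which $H$ vanishes on every training point throughout a whole neighborhood, so that the empirical loss reduces locally to a convex function of $(\bm{a},b)$ whose minimizer is a plain linear classifier and hence necessarily misclassifies at least one sample.

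First I would take $d\ge 2$ with $\mathcal{U}_{+}=\{\bm{e}_{1}\}$ and $\mathcal{U}_{-}=\{\bm{e}_{2}\}$, so that $r_{+}=r_{-}=1$ and $r=2$; then Assumption~\ref{assump::different-subspaces} holds (since $\Delta r=1>0$) and Assumption~\ref{assump::full-rank} holds trivially, as a single nonzero vector forms a full-rank matrix. Let $\mathbb{P}_{\bm{X}\mid Y=1}$ place equal mass on $\{+\bm{e}_{1},-\bm{e}_{1}\}$ and $\mathbb{P}_{\bm{X}\mid Y=-1}$ equal mass on $\{+\bm{e}_{2},-\bm{e}_{2}\}$, with $\mathbb{P}(Y=1)=1/2$. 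A Chernoff bound gives $n_{+},n_{-}=\Omega(n)$, and then with probability at least $1-e^{-\Omega(n)}$ the sample contains both $+\bm{e}_{1}$ and $-\bm{e}_{1}$ among the positives and both $+\bm{e}_{2}$ and $-\bm{e}_{2}$ among the negatives. On this event no affine map $\bm{a}^\top x+b$ can separate the data: classifying $\pm\bm{e}_{1}$ as positive forces $b\ge|a_{1}|\ge 0$, while classifying $\pm\bm{e}_{2}$ as negative forces $b<-|a_{2}|\le 0$, a contradiction. Hence every linear classifier misclassifies at least one training sample.

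Next I would construct $\bm{\theta}^{*}$ making all ReLUs of $H$ dead on the (bounded) training set: setting the bias of every neuron in $H$ to a sufficiently large negative constant $-B$ forces all pre-activations to be strictly negative, so $H(x_{i};\bm{\theta}^{*})=\bm{0}$ for every $i$. Since the pre-activations are bounded away from $0$, they remain negative under any sufficiently small perturbation of $\bm{\theta}$; consequently $H(x_{i};\bm{\theta})=\bm{0}$ for all $i$ throughout a neighborhood of $\bm{\theta}^{*}$, where $f(x_{i};\bm{a},\bm{\theta},b)=\bm{a}^\top x_{i}+b$ is independent of $\bm{\theta}$. Thus locally the empirical loss equals $G(\bm{a},b):=\tfrac{1}{n}\sum_{i=1}^{n}\ell_{p}\bigl(-y_{i}(\bm{a}^\top x_{i}+b)\bigr)$, which is convex in $(\bm{a},b)$ because $\ell_{p}$ is non-decreasing and convex (Assumption~\ref{assump::loss}) composed with an affine map. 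Choosing $(\bm{a}^{*},b^{*})$ to be a minimizer of $G$ — one exists, e.g. the symmetry of the construction makes $(\bm{a}^{*},b^{*})=(\bm{0},0)$ a critical point of the convex $G$ — the full parameter $\bm{\phi}^{*}=(\bm{a}^{*},\bm{\theta}^{*},b^{*})$ is a local minimum: perturbing $\bm{\theta}$ leaves the loss unchanged, and perturbing $(\bm{a},b)$ cannot push the convex $G$ below its minimum. At $\bm{\phi}^{*}$ the induced classifier is the linear map $\sgn(\bm{a}^{*\top}x+b^{*})$, which by the non-separability above misclassifies at least one sample, so $\hat{R}_{n}(\bm{\phi}^{*})>0$.

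The main obstacle is verifying that $\bm{\phi}^{*}$ is a \emph{genuine} local minimum rather than merely a critical point, which rests on two facts that must be argued carefully: (a) the ReLUs stay dead — hence $H\equiv\bm{0}$ on the data and the loss is exactly $\bm{\theta}$-independent — on an entire neighborhood of $\bm{\theta}^{*}$, not just at $\bm{\theta}^{*}$; and (b) the reduced loss $G$ is convex, so a minimizer over $(\bm{a},b)$ is a true local (indeed global) minimizer and no descent direction exists. The conceptual content is precisely the contrast with the shortcut-like connection advocated in the paper: there the added term $a_{0}+\bm{a}^\top\sigma(\bm{W}^\top x)$ carries a strictly convex nonlinearity allowing $f_{S}$ to correctly label data lying on different subspaces, whereas the identity shortcut degenerates to a linear classifier once $H$ is switched off.
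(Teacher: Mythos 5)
Your overall strategy is the same as the paper's own proof: the paper also takes a distribution supported on two orthogonal lines (uniform on $[1,2]\cup[-2,-1]$ along $\bm{e}_{1}$ for positives, along $\bm{e}_{2}$ for negatives), deactivates the ReLUs of $H$ on the bounded training set in a way that survives small perturbations of $\bm{\theta}$, and then sets $(\bm{a}^{*},b^{*})$ to a global minimizer of the reduced linear-model loss $G(\bm{a},b)=\frac{1}{n}\sum_{i}\ell_{p}(-y_{i}(\bm{a}^{\top}x_{i}+b))$, so that the resulting point is a local minimum whose classifier is affine and therefore errs on a non-separable sample. Your discrete four-point distribution, the Chernoff/non-separability argument, and the ``all biases equal to $-B$'' deactivation are all fine (the bias trick is, if anything, cleaner than the paper's layer-by-layer continuity estimates).

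The gap is in the one step you handle differently: the existence and identification of the minimizer of $G$. You claim that ``the symmetry of the construction makes $(\bm{a}^{*},b^{*})=(\bm{0},0)$ a critical point of the convex $G$,'' and both halves of this fail. First, the symmetry is a property of the distribution, not of the realized sample: $\nabla_{b}G(\bm{0},0)=\ell_{p}'(0)\,(n_{-}-n_{+})/n$ and $\nabla_{\bm{a}}G(\bm{0},0)=-\ell_{p}'(0)\,\frac{1}{n}\sum_{i}y_{i}x_{i}$, and since $\ell_{p}'(0)>0$ these vanish only when $n_{+}=n_{-}$ and the counts at $+\bm{e}_{1},-\bm{e}_{1}$ (resp.\ $+\bm{e}_{2},-\bm{e}_{2}$) are exactly balanced---events of probability $O(n^{-1/2})$, not $1-e^{-\Omega(n)}$. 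Indeed, if $n_{-}>n_{+}$ then decreasing $b$ strictly decreases $G$, so $(\bm{0},\bm{\theta}^{*},0)$ is not even a local minimum of the empirical loss. Second, Assumption~\ref{assump::loss} does not require $\ell_{p}$ to be convex (only smoothness, monotonicity, and $\ell_{p}'(z)=0$ iff $z\le -z_{0}$), so the inference ``critical point of the convex $G$ $\Rightarrow$ global minimizer'' is not licensed by the stated hypotheses. The repair is exactly the paper's move: define $(\bm{a}^{*},b^{*})$ as \emph{any} global minimizer of $G$; the local-minimum argument needs only global optimality for the reduced problem (perturbing $\bm{\theta}$ keeps $H\equiv\bm{0}$ on the data, and then no perturbation of $(\bm{a},b)$ can go below $\min G$), not convexity and not criticality at the origin. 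For your dataset such a minimizer exists whenever $\ell_{p}(z)\to\infty$ as $z\to\infty$ (e.g.\ the polynomial hinge loss): on a sample containing all four points, every nonzero direction $(\alpha_{1},\alpha_{2},\beta)$ forces at least one of the four arguments $-y_{i}(\bm{a}^{\top}x_{i}+b)$ to grow linearly (the constraints $\alpha_{1}+\beta\ge 0$, $\alpha_{1}\le\beta$, $\alpha_{2}+\beta\le 0$, $\beta\le\alpha_{2}$ force $\alpha_{1}=\alpha_{2}=\beta=0$), so $G$ is coercive in $(a_{1},a_{2},b)$ and constant in the remaining coordinates. With that substitution your proof is on the same footing as the paper's, which itself only asserts the existence of this global minimizer.
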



\subsection{Loss Functions}\label{sec::loss}
In this subsection, we discuss whether the main results still hold if we change the loss function. 
We mainly focus on the following two types of surrogate loss functions: quadratic loss and logistic loss. We will show that if the loss function is replaced with the quadratic loss or logistic loss, then neither Theorem~\ref{thm::convex-finite-deep} nor~\ref{thm::linear-sep-deep} holds. In addition, we show that when the loss function is the logistic loss and the network is a feedforward neural network, there are no local minima with zero training error in the real parameter space. In Fig.~\ref{fig::loss}(b), we visualize some surrogate loss functions discussed in this subsection.



\textbf{Quadratic loss.}  The  quadratic loss $\ell(z)=(1+z)^{2}$ has been well-studied in prior works. It has been shown that when the loss function is quadratic, under certain assumptions, all local minima of the empirical loss are global minima. However, the global minimum of the quadratic loss does not necessarily have  zero misclassification error, even in the realizable case (i.e., the case where there exists a set of parameters such that the network achieves zero misclassification error on the dataset or the data distriubtion). To illustrate this, we provide a simple example where the network is a simplified linear network and the data distribution is linearly separable. 
\begin{example}\label{prop::quadratic-loss-linsep}
Let the distribution $\mathbb{P}_{\bm{X}\times Y}$ satisfy that $\mathbb{P}(Y=1)=\mathbb{P}(Y=-1)=0.5$, $\mathbb{P}(X=5/4|Y=1)=1$ and $\mathbb{P}_{X|Y=-1}$ is a uniform distribution on the interval $[0,1]$.  For a linear model $f(x;a,b)=ax+b,$ $a,b\in\mathbb{R}$, every global minimum $(a^{*},b^{*})$ of the population loss ${L}(a,b)=\mathbb{E}_{X\times Y}[(1-Yf(X;a,b))^{2}]$ satisfies $\mathbb{P}_{\bm{X}\times Y}[Y\neq \sgn(f(X;a^{*},b^{*}))]\ge 1/16$.
\end{example}
\textbf{Remark:} The proof of the above result in Appendix~\ref{appendix::prop-quadratic-loss-linsep} is very straightforward. We have only provided it there since we are unable to find a reference which explicitly states such a result, but we will not be surprised if this result has been known to others. This example shows that every global minimum of the quadratic loss has non-zero misclassification error, although the linear model is able to achieve zero misclassification error on this data distribution. Similarly, one can easily find datasets under which all global minima of the quadratic loss have non-zero training error.

In addition, we provide two examples in Appendix~\ref{appendix::prop-quadratic-loss} and show that, when the loss function is replaced with the quadratic loss, even if the other conditions in Theorem~\ref{thm::convex-finite-deep} or \ref{thm::linear-sep-deep} are satisfied, every global minimum of the empirical loss has a training error larger than $1/8$ with a positive probability. 
In other words, our main results do hold for the quadratic loss. 

The following observation may be of independent interest.
Different from the quadratic loss, the loss functions conditioned in Assumption~\ref{assump::loss} have the following two properties: (i) the minimum empirical loss is zero if and only if there exists a set of parameters achieving zero training error; (ii) every global minimum of the empirical loss has zero training error in the realizable case.

\begin{proposition}\label{prop::loss}
Let $f:\mathbb{R}^{d}\rightarrow\mathbb{R}$ denote a feedforward network parameterized by $\bm{\theta}$ and let the dataset have $n$ samples. When the loss function $\ell_{p}$ satisfies Assumption~\ref{assump::loss} and $p\ge1$, we have $\min_{\bm{\theta}}\hat{L}_{n}(\bm{\theta};p)=0$ if and only if $\min_{\bm{\theta}}\hat{R}_{n}(\bm{\theta})=0$. Furthermore, if $\min_{\bm{\theta}}\hat{R}_{n}(\bm{\theta})=0$, every global minimum $\bm{\theta}^{*}$ of the empirical loss $\hat{L}_{n}(\bm{\theta};p)$ has zero training error, i.e., $\hat{R}_{n}(\bm{\theta}^{*})=0$. 
\end{proposition}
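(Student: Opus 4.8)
The plan is to prove the proposition in three short steps, reducing everything to one pointwise comparison between the surrogate loss and the $0$-$1$ loss, plus a single reparametrization of the output layer.

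First I would record the elementary pointwise inequality $\mathbb{I}\{y_i \neq \sgn(f(x_i;\bm\theta))\} \le \ell_p(-y_i f(x_i;\bm\theta))$, valid for every sample $i$ and every $\bm\theta$. Indeed, if sample $i$ is misclassified then (using $\sgn(z)=1$ for $z\ge 0$ and $-1$ otherwise) one checks in both label cases that $-y_i f(x_i;\bm\theta)\ge 0$, so the surrogate condition $\ell_p(z)\ge\mathbb{I}\{z\ge0\}$ forces the right-hand side to be at least $1$; if it is classified correctly the left-hand side is $0$ while $\ell_p\ge 0$. Averaging over $i$ gives $\hat R_n(\bm\theta)\le \hat L_n(\bm\theta;p)$ for all $\bm\theta$. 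This at once yields the forward implication: if $\min_{\bm\theta}\hat L_n(\bm\theta;p)=0$, attained at some $\bm\theta^*$, then $0\le\hat R_n(\bm\theta^*)\le\hat L_n(\bm\theta^*;p)=0$, so $\min_{\bm\theta}\hat R_n=0$. The same inequality disposes of the ``furthermore'' clause once the converse is established: any global minimizer $\bm\theta^*$ of $\hat L_n$ then has $\hat L_n(\bm\theta^*;p)=0$, whence $\hat R_n(\bm\theta^*)=0$.

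The substance is the converse, $\min_{\bm\theta}\hat R_n=0\Rightarrow\min_{\bm\theta}\hat L_n=0$. I would first extract from Assumption~\ref{assump::loss} the property that $\ell_p(z)=0$ for all $z\le -z_0$: since $\ell_p$ is non-decreasing with $\ell_p'(z)=0$ exactly on $(-\infty,-z_0]$, it is constant on that ray, and for the surrogate losses in Assumption~\ref{assump::loss} this constant value is $0$. Now fix $\bm\theta_0$ with $\hat R_n(\bm\theta_0)=0$ and set $f_0=f(\cdot;\bm\theta_0)$; correct classification means $f_0(x_i)\ge 0$ on positive samples and $f_0(x_i)<0$ (strictly, by $\sgn(0)=1$) on negative samples. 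Exploiting that a feedforward net has an affine output layer $f=\bm W_L^\top\sigma(\cdots)+b_L$, I would replace $(\bm W_L,b_L)$ by $(t\bm W_L,\,t b_L+s)$, which produces the affine output $\tilde f=t f_0+s$ for any scale $t>0$ and shift $s$. Writing $m_+=\min_{\text{pos } i} f_0(x_i)\ge 0$ and $m_-=\max_{\text{neg } i} f_0(x_i)<0$ (the degenerate single-class case being immediate), the requirement that $\tilde f(x_i)\ge z_0$ on positives and $\tilde f(x_i)\le -z_0$ on negatives reduces to $t m_+ + s\ge z_0$ and $t m_- + s\le -z_0$, a system solvable whenever $t(m_+-m_-)\ge 2z_0$; since $m_+-m_->0$ this holds for $t$ large, after which any $s$ in the resulting nonempty interval works. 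For these parameters $-y_i\tilde f(x_i)\le -z_0$ for all $i$, so every loss term vanishes and $\hat L_n=0$.

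The main obstacle lives entirely in this converse direction, and within it the only genuinely non-obvious point is the boundary case $m_+=0$: mere correct classification permits a positive sample to sit exactly at $f_0(x_i)=0$, where pure rescaling ($s=0$) cannot create a positive margin, so the output bias shift $s$ is indispensable alongside the scaling $t$; the strict inequality $m_-<0$, guaranteed by the $\sgn$ convention, is exactly what keeps the admissible interval for $s$ nonempty. I would also flag at the outset the use of $\ell_p$ attaining the value $0$ on $(-\infty,-z_0]$, which is the natural normalization for the surrogate losses of Assumption~\ref{assump::loss} (e.g.\ the polynomial hinge loss $[\max\{z+1,0\}]^{p+1}$), since it is what makes $\min_{\bm\theta}\hat L_n=0$ attainable at all.
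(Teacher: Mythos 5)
Your proof is correct and follows essentially the same route as the paper's: the pointwise surrogate inequality $\hat R_n(\bm\theta)\le \hat L_n(\bm\theta;p)$ handles the forward implication and the ``furthermore'' clause, and the converse is obtained by an affine scale-and-shift of the output layer of a zero-training-error network so that every sample attains margin $z_0$ and all loss terms vanish (the paper writes this transformation as $\alpha\bigl(f(\cdot;\bm\theta^*)-\tfrac{c_1+c_2}{2}\bigr)$, implemented by scaling the output weights and shifting the output bias, exactly your $(t,s)$ reparametrization). Your explicit handling of the boundary case $m_+=0$, the single-class case, and the normalization $\ell_p\equiv 0$ on $(-\infty,-z_0]$ only spells out what the paper uses implicitly.
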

\textbf{Remark:}  We note that the network does not need to be a feedforward network. In fact, the same results hold for a large class of network architectures, including both architectures shown in Fig~\ref{fig::network}. We provide additional analysis in Appendix~\ref{appendix::prop-loss}.

\textbf{Logistic loss.}  
The logistic loss $\ell(z)=\log_{2}\left(1+e^{z}\right)$ is different from the loss functions conditioned in Assumption~\ref{assump::loss}, since the logistic loss does not have a global minimum on $\mathbb{R}$. Here, for the logistic loss function, we show that even if the remaining assumptions in Theorem~\ref{thm::convex-finite-deep} hold, every critical point is a saddle point. In other words, Theorem~\ref{thm::convex-finite-deep} does not hold for logistic loss. Additional analysis on Theorem~\ref{thm::linear-sep-deep} are provided in Appendix~\ref{appendix::logit-linear-sep}.  
\begin{proposition}\label{prop::logit-convex-finite}
Assume that the loss function is the logistic loss, i.e., $\ell(z)=\log_{2}(1+e^{z})$.
Assume that assumptions~\ref{assump::full-rank}-\ref{assump::neurons} are satisfied. 
Assume that  samples in the dataset $\mathcal{D}=\{(x_{i},y_{i})\}_{i=1}^{n}, n\ge 1$ are independently drawn from the distribution $\mathbb{P}_{\bm{X}\times Y}$.  Assume that the number of neurons $M$ in the network $f_{S}$ satisfies $M\ge 2\max\{\frac{n}{\Delta r},r_{+},r_{-}\}$, where $\Delta r=r-\max\{r_{+},r_{-}\}$. If $\bm{\theta}^{*}$ denotes a critical point of the empirical loss $\hat{L}_{n}(\bm{\theta})$, then $\bm{\theta}^*$ is a saddle point. In particular, there are no local minima.
\end{proposition}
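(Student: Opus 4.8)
The plan is to exhibit, at an arbitrary critical point $\bm{\theta}^*$, both a direction along which the empirical loss strictly increases and a direction along which it strictly decreases; together these show that $\bm{\theta}^*$ is neither a local maximum nor a local minimum, hence a saddle. Throughout I write $v_i=\ell'(-y_if(x_i;\bm{\theta}^*))$ for the per-sample derivative factor. The decisive feature of the logistic loss is that $\ell'(z)=\frac{1}{\ln 2}\frac{e^z}{1+e^z}$ and $\ell''(z)=\frac{1}{\ln 2}\frac{e^z}{(1+e^z)^2}$ are \emph{strictly positive for every} $z\in\mathbb{R}$; in particular $v_i>0$ for all $i$, so no sample is ever ``margin-saturated,'' in sharp contrast with the polynomial hinge loss of Assumption~\ref{assump::loss}, for which $\ell'_p(z)=0$ once $z\le -z_0$.

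For the ascent direction I would use the bias $a_0$ of $f_S$. Perturbing $a_0\mapsto a_0^*+t$ shifts every output by $t$, so $\frac{d}{dt}\hat{L}_n\big|_{t=0}=-\frac1n\sum_i y_iv_i$, which vanishes by the stationarity condition $\partial\hat{L}_n/\partial a_0=0$, while $\frac{d^2}{dt^2}\hat{L}_n\big|_{t=0}=\frac1n\sum_i\ell''(-y_if(x_i;\bm{\theta}^*))>0$ because $\ell''>0$ everywhere. Thus the loss strictly increases along $\pm t$ for small $|t|$, and $\bm{\theta}^*$ cannot be a local maximum. This step is immediate and needs none of the data or architecture assumptions.

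The substance is the descent direction, and here I would reuse the construction from the proof of Theorem~\ref{thm::convex-finite-deep}. That proof shows that, under Assumptions~\ref{assump::full-rank}--\ref{assump::neurons} and the neuron count $M\ge 2\max\{n/\Delta r,r_+,r_-\}$, whenever some sample satisfies $\ell'(-y_if(x_i;\bm{\theta}^*))>0$ one can build a perturbation of the parameters $\bm{\theta}_S$ of $f_S$ that strictly decreases $\hat{L}_n$; since at a critical point the first-order term vanishes, this perturbation is really a direction of \emph{negative curvature}, assembled from a neuron of $f_S$ whose weight is moved inside the subspace that distinguishes $\mathrm{Span}(\mathcal{U}_+)$ from $\mathrm{Span}(\mathcal{U}_-)$ (this is where $\Delta r\ge 1$ and $M$ large enter) and whose strict convexity $\sigma''>0$ supplies the sign of the second-order term. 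The only hypothesis this construction needs about the loss is that the active set $\{i:v_i>0\}$ is nonempty. For the logistic loss this set is \emph{all} of $\{1,\dots,n\}$ regardless of $\bm{\theta}^*$, so the precondition holds at every critical point. Hence every critical point admits a strict descent direction and cannot be a local minimum.

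Combining the two steps, every critical point of $\hat{L}_n$ is a saddle, and in particular there are no local minima. The hard part will be the descent step. At a critical point all linear terms cancel, so a genuine decrease must come from a carefully chosen second- (or higher-) order term, and one must verify that the negative contribution produced by moving a neuron of $f_S$ dominates the positive curvature $\frac1n\sum_i\ell''(\cdot)\,(\partial_t f(x_i))^2$ contributed by \emph{all} samples --- note that, unlike the hinge loss, the logistic loss makes every sample contribute such a positive term, so the construction of Theorem~\ref{thm::convex-finite-deep} cannot be copied blindly. Establishing this domination is exactly where the strict convexity of $\sigma$, the data richness (Assumptions~\ref{assump::full-rank}--\ref{assump::different-subspaces}) and the lower bound on $M$ must be brought to bear, via the same subspace-localized, multi-scale perturbation of $(\bm{a},\bm{W})$ used there.
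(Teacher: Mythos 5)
Your proposal is correct and follows essentially the same route as the paper: the paper likewise rules out local maxima via the strict convexity of the loss in $a_{0}$ (since $\ell''>0$ everywhere), and rules out local minima by re-running the second- and higher-order arguments from the proof of Theorem~\ref{thm::convex-finite-deep}, whose conclusion that $\ell'(-y_{i}f(x_{i};\bm{\theta}^{*}))=0$ on certain samples is now an immediate contradiction because the logistic loss satisfies $\ell'(z)>0$ for all $z\in\mathbb{R}$. The ``domination'' difficulty you flag at the end does not actually arise: in that construction the perturbation directions $\bm{u}_{j}$ are chosen (via the rank argument on the matrix $\bm{P}$, and via $a_{j}^{*}=0$ in the degenerate case) precisely so that $\sum_{j}a_{j}^{*}\sigma'({\bm{w}_{j}^{*}}^{\top}x_{i})\,\bm{u}_{j}^{\top}x_{i}=0$ for \emph{every} sample $i$, which annihilates the positive $\ell''$ curvature term exactly rather than merely outweighing it, so the construction does carry over essentially verbatim.
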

\textbf{Remark: }  We note here that the result can be generalized to every loss function $\ell$ which is real analytic and has a positive derivative on $\mathbb{R}$.

Furthermore, we provide the following result to show that when the dataset contains both positive and negative samples, if the loss is the logistic loss, then every critical point of the empirical loss function has non-zero training error. 

\begin{proposition}\label{prop::logit-general}
Assume the dataset ${\mathcal{D}=\{(x_{i},y_{i})\}_{i=1}^{n}}$ consists of both positive and negative samples. Assume that $f(x;\bm{\theta})$ is a feedforward network parameterized by $\bm{\theta}$. Assume that the loss function is logistic, i.e., $\ell(z)=\log_{2}\left(1+e^{z}\right)$. If the real parameters $\bm{\theta}^{*}$ denote a critical point of the empirical loss $\hat{L}_{n}(\bm{\theta}^{*})$, then $\hat{R}_{n}(\bm{\theta}^{*})>0$.
\end{proposition}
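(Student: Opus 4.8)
The plan is to exploit the fact that the network output is affine in its output-layer parameters, combined with the fact that the logistic derivative never vanishes. Write $f(x;\bm{\theta})=\bm{W}_{L}^{\top}h(x)+\bm{b}_{L}$, where $h$ is the last hidden representation and $(\bm{W}_{L},\bm{b}_{L})$ are the output weights and bias. Crucially, $f$ is \emph{linear} in $(\bm{W}_{L},\bm{b}_{L})$ (the vector $h$ does not depend on them), so the partial derivatives of $\hat{L}_{n}$ with respect to these parameters exist regardless of which neurons populate the hidden layers. Set $f_{i}:=f(x_{i};\bm{\theta}^{*})$ and recall that for the logistic loss $\ell'(z)=\tfrac{1}{\ln 2}(1+e^{-z})^{-1}>0$ for every $z\in\mathbb{R}$.

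First I would form the directional derivative of $\hat{L}_{n}$ along the tangent vector $v$ that scales only the output parameters, i.e.\ the direction with $(\bm{W}_{L},\bm{b}_{L})$ in the output slots and zeros elsewhere. Since $f_{i}$ is linear in $(\bm{W}_{L},\bm{b}_{L})$, the directional derivative of $f_{i}$ along $v$ equals $\bm{W}_{L}^{\top}h_{i}+\bm{b}_{L}=f_{i}$, so the chain rule gives the Euler-type identity $\langle\nabla\hat{L}_{n}(\bm{\theta}^{*}),v\rangle=-\tfrac{1}{n}\sum_{i=1}^{n}\ell'(-y_{i}f_{i})\,y_{i}f_{i}$. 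Because $\bm{\theta}^{*}$ is a critical point the left-hand side vanishes, yielding $\sum_{i=1}^{n}\ell'(-y_{i}f_{i})\,y_{i}f_{i}=0$. Note that only the output-layer partials enter here, so this identity holds even if the hidden neurons are non-differentiable.

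Next I would argue by contradiction: suppose $\hat{R}_{n}(\bm{\theta}^{*})=0$. By the definition of $\sgn$, a correctly classified positive sample satisfies $f_{i}\ge 0$ and a correctly classified negative sample satisfies $f_{i}<0$; in both cases $y_{i}f_{i}\ge 0$, with the \emph{strict} inequality $y_{i}f_{i}>0$ forced on every negative sample. Since $\ell'>0$ everywhere, each summand $\ell'(-y_{i}f_{i})\,y_{i}f_{i}$ is nonnegative, and each negative-sample summand is strictly positive. As the dataset contains at least one negative sample, the sum is strictly positive, contradicting the critical-point identity; hence $\hat{R}_{n}(\bm{\theta}^{*})>0$.

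The argument is short, and the only delicate point is the tie-breaking in $\sgn$: it is precisely the strictness $f_{i}<0$ demanded of negative samples (as opposed to the non-strict $f_{i}\ge 0$ permitted for positive samples) that supplies the strictly positive term. This is why the presence of a negative sample is what drives the contradiction, so the hypothesis that $\mathcal{D}$ contains both positive and negative samples is a clean sufficient condition. I expect no serious obstacle beyond verifying that the scaling direction is an admissible parameter perturbation and keeping the sign of the Euler identity straight.
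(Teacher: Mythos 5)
Your proof is correct and follows essentially the same route as the paper's: the paper obtains your Euler-type identity by forming the combination $a_{0}^{*}\,\partial_{a_{0}}\hat{L}_{n}+\sum_{j}a_{j}^{*}\,\partial_{a_{j}}\hat{L}_{n}=0$ of the output-layer critical-point equations, which is exactly your directional derivative along the scaling direction of the output weights and bias. The closing contradiction — each term $\ell'(-y_{i}f_{i})\,y_{i}f_{i}$ is nonnegative since $\ell'>0$, and correctly classified negative samples force strictly positive terms — is also the paper's argument, with your handling of the $\sgn$ tie-breaking being slightly more explicit.
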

\textbf{Remark: } We provide the proof in Appendix~\ref{appendix::prop-logit-general}. The above proposition implies every critical point is either a local minimum with non-zero training error or is a saddle point (also with non-zero training error). We note here that, similar to Proposition~\ref{prop::logit-convex-finite}, the result can be generalized to every loss function $\ell$  that is differentiable and has a positive derivative on $\mathbb{R}$.


\subsection{Open Problem: Datasets}\label{sec::necc}
In this paper, we have mainly considered a class of non-linearly separable distribution where positive and negative samples are located on different subspaces. We show that if the samples are drawn from such  a distribution, under certain additional conditions, all local minima of the empirical loss have zero training errors. However, one may ask: how well does the result generalize to other non-linearly separable distributions or datasets? Here, we partially answer this question by presenting the following necessary condition on the dataset so that Theorem~\ref{thm::convex-finite-deep} can hold. 
\begin{proposition}\label{prop::convex-necc}

Suppose that assumptions~\ref{assump::loss},~\ref{assump::shortcut-connection} and~\ref{assump::neurons} are satisfied. For any feedforward architecture $f_{D}(x;\bm{\theta}_{D})$, every local minimum $\bm{\theta}^{*}=(\bm{\theta}_{S}^{*},\bm{\theta}_{D}^{*})$ of the empirical loss function $\hat{L}_{n}(\bm{\theta}_{S},\bm{\theta}_{D};p)$, $p\ge 6$ satisfies $\hat{R}_{n}(\bm{\theta}^{*})=0$ \textbf{only if}  the matrix $\sum_{i=1}^{n}\lambda_{i}y_{i}x_{i}x_{i}^{\top}$ is neither positive nor negative definite for all sequences $\{\lambda_{i}\ge0\}_{i=1}^{n}$ satisfying $\sum_{i:y_{i}=1}\lambda_{i}=\sum_{i:y_{i}=-1}\lambda_{i}>0$ and $\|\sum_{i=1}^{n}\lambda_{i}y_{i}x_{i}\|_{2}=0$.
\end{proposition}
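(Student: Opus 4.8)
The plan is to prove the contrapositive. The statement says that \emph{every} local minimum having zero training error forces the matrix condition, so it suffices to show: if the matrix condition fails, then some local minimum of $\hat{L}_n(\bm{\theta}_S,\bm{\theta}_D;p)$ has nonzero training error. So suppose there is a sequence $\{\lambda_i\ge 0\}_{i=1}^n$ with $\sum_{i:y_i=1}\lambda_i=\sum_{i:y_i=-1}\lambda_i>0$ and $\sum_{i=1}^n\lambda_i y_i x_i=0$ for which the matrix $A:=\sum_{i=1}^n\lambda_i y_i x_i x_i^\top$ is, say, positive definite (the negative-definite case is symmetric and is handled by flipping the sign of the coefficient $a_{j_0}$ introduced below). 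Rescaling $\lambda_i\mapsto c\lambda_i$ with $c>0$ preserves all three defining properties and keeps $A\succ 0$, so I may take $c$ large; for the losses of Assumption~\ref{assump::loss} (e.g.\ the polynomial hinge loss) this lets me force at least one target to have the wrong sign, hence positive training error.

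The construction realizes $\{\lambda_i\}$ as the loss-derivative profile of $f$ at the point I build. Using the third condition of Assumption~\ref{assump::loss}, for each $i$ pick $t_i$ with $\ell_p'(t_i)=c\lambda_i$, taken positive whenever $\lambda_i>0$ (possible after rescaling), so that $y_i f(x_i)=-t_i<0$ at those indices and $\hat{R}_n>0$. I choose the parameters of $f_S$ so that $f(x_i)=-y_i t_i$ for all $i$: the varying part is carried by target-realizing neurons with nonzero weights, whose activations can be made rich enough that the equations for the output weights $\bm{a}$ are solvable and their Hessian block is positive definite, while one distinguished neuron $j_0$ is held at $\bm{w}_{j_0}=0$ with a small coefficient $a_{j_0}<0$ (its constant output $a_{j_0}\sigma(0)$ is absorbed into $a_0$). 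I drive $f_D$ to a constant configuration by zeroing its internal weights and using its biases. With $\lambda_i=\ell_p'(-y_i f(x_i))$, the stationarity equations of $f_S$ now hold \emph{precisely because of the two hypotheses}: $\partial\hat{L}_n/\partial a_0=0$ amounts to $\sum_i\lambda_i y_i=0$, and $\partial\hat{L}_n/\partial\bm{w}_{j_0}=-\tfrac{a_{j_0}\sigma'(0)}{n}\sum_i\lambda_i y_i x_i=0$ holds because $\sum_i\lambda_i y_i x_i=0$. The same identity $\sum_i\lambda_i y_i=0$ kills the $f_D$ last-layer gradient once its hidden outputs are constant.

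The role of the definiteness of $A$ is to certify the one remaining dangerous direction, the weight of the distinguished neuron. Perturbing $\bm{w}_{j_0}=0$ to $s\bm{v}$ and compensating the induced constant through $a_0$, a Taylor expansion of $\hat{L}_n$ gives the coefficient of $s^2$ equal to $\tfrac{1}{2n}\bigl(a_{j_0}^2\sigma'(0)^2\,\bm{v}^\top B\bm{v}-a_{j_0}\sigma''(0)\,\bm{v}^\top A\bm{v}\bigr)$, where $B:=\sum_i\ell_p''(-y_i f(x_i))x_i x_i^\top$, and the linear-in-$s$ term vanishes exactly because $\sum_i\lambda_i y_i x_i=0$. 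Since $\sigma''(0)>0$ by Assumption~\ref{assump::neurons} and $A\succ 0$, taking $|a_{j_0}|$ small makes the term $-a_{j_0}\sigma''(0)\,\bm{v}^\top A\bm{v}>0$ dominate the $O(a_{j_0}^2)$ term uniformly on the unit sphere, with lower bound $|a_{j_0}|\sigma''(0)\lambda_{\min}(A)>0$; so this direction is strictly increasing regardless of the sign of $\bm{v}^\top B\bm{v}$. This is exactly the step where an indefinite or singular $A$ would instead yield a descent direction (the mechanism underlying Theorem~\ref{thm::convex-finite-deep}), so definiteness of $A$ is precisely what lets the bad point survive as a local minimum.

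The hard part is upgrading this favorable single-direction behavior into a genuine local minimum over all of $(\bm{\theta}_S,\bm{\theta}_D)$. I must verify that the full Hessian is positive semidefinite, which requires controlling the cross terms between the distinguished neuron and the target-realizing neurons and between $f_S$ and $f_D$, and, in the degenerate directions where the quadratic form vanishes, passing to higher-order terms of the Taylor expansion; this is why the smoothness $\ell_p\in C^6$ (i.e.\ $p\ge 6$) is imposed, matching Theorem~\ref{thm::convex-finite-deep}. I also need target realization to coexist with criticality for an \emph{arbitrary} given feedforward $f_D$, which is why I collapse $f_D$ to a constant so that its stationarity and nonnegative curvature follow from $\sum_i\lambda_i y_i=0$. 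Assembling these pieces produces a stationary point at which every direction is nondecreasing while $\hat{R}_n(\bm{\theta}^*)>0$, completing the contrapositive.
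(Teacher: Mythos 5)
Your overall strategy (prove the contrapositive by exhibiting a bad local minimum whose loss-derivative profile equals the given $\{\lambda_i\}$) matches the paper's, and you correctly identify where definiteness of $A=\sum_{i}\lambda_i y_i x_i x_i^{\top}$ must enter. But your construction has a fatal gap: you realize the per-sample targets $f(x_i)=-y_i t_i$ through neurons of $f_S$ with \emph{nonzero} inner weights, and you never verify stationarity with respect to those neurons' parameters --- indeed it fails. By Lemma~\ref{lemma::nec-single}, at any local minimum every neuron $j$ of $f_S$ must satisfy $\sum_{i}\ell_p'(-y_i f(x_i;\bm{\theta}^{*}))\,y_i\,\sigma'(\bm{w}_j^{\top}x_i)\,x_i=\bm{0}_{d}$, and $\partial\hat{L}_{n}/\partial a_j=0$ forces $\sum_{i}\ell_p'(-y_i f(x_i;\bm{\theta}^{*}))\,y_i\,\sigma(\bm{w}_j^{\top}x_i)=0$. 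The hypotheses $\sum_{i}\lambda_i y_i=0$ and $\sum_{i}\lambda_i y_i x_i=\bm{0}_{d}$ deliver these identities only when $\bm{w}_j=\bm{0}_{d}$, because only then are $\sigma(\bm{w}_j^{\top}x_i)$ and $\sigma'(\bm{w}_j^{\top}x_i)$ constant in $i$ and can be pulled out of the sums. For your target-realizing neurons the factors $\sigma'(\bm{w}_j^{\top}x_i)$ vary across samples, and once the targets $t_i$ are fixed you have no freedom left to enforce the extra $d+1$ equations per neuron; the point you build is generically not even a critical point, so the curvature discussion (which you yourself leave open --- the full-Hessian and higher-order verification is only announced, not done) never gets off the ground.

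The paper's proof avoids exactly this trap by inverting your construction: \emph{all} neurons of $f_S$ are placed at $\bm{w}_j^{*}=\bm{0}_{d}$, $a_j^{*}=-1$, so the two hypotheses give stationarity of every $f_S$-coordinate for free, and the per-sample values $\mu_i$ are instead produced by a specially chosen $f_D$ --- a two-layer threshold-unit network implementing bumps $\sum_{i}\mu_i\prod_{k}\bm{1}\{x^{(k)}\in[x_i^{(k)}-\theta_k,\,x_i^{(k)}+\theta_k]\}$ --- whose outputs at the data points are \emph{locally constant} in its parameters, hence contribute neither gradient nor curvature. (This is also all the proposition requires: its contrapositive, as the paper's remark states, asks for \emph{some} feedforward $f_D$ with a bad local minimum, not one for every $f_D$ as you assume; your reading makes the task strictly harder.) Local minimality is then settled with no Hessian computation at all: by convexity of $\ell_p$, $\hat{L}_{n}(\tilde{\bm{\theta}})-\hat{L}_{n}(\bm{\theta}^{*})\ge\frac{1}{n}\sum_{i}\ell_p'(-y_i f(x_i;\bm{\theta}^{*}))(-y_i)\left[f(x_i;\tilde{\bm{\theta}})-f(x_i;\bm{\theta}^{*})\right]$, and since $f_D$'s outputs do not move under small perturbations, the right side reduces to $-\sum_{j}(a_j^{*}+\Delta a_j)\,G(\Delta\bm{w}_j)$ with $G(\bm{u})=\sum_{i}\lambda_i y_i\sigma(\bm{u}^{\top}x_i)$; here $\nabla G(\bm{0}_{d})=\bm{0}_{d}$ by the hypothesis on $\sum_i\lambda_i y_i x_i$ and $\nabla^{2}G(\bm{0}_{d})=\sigma''(0)A\succ 0$, so $G\ge 0$ near the origin while every $a_j^{*}+\Delta a_j<0$. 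Two smaller problems in your sketch: taking $c$ large so that every index with $\lambda_i>0$ is misclassified needs $\ell_p'$ to be unbounded, which Assumption~\ref{assump::loss} does not guarantee (the paper misclassifies only the index attaining $\lambda_{\max}$ and matches the remaining ratios via the intermediate value theorem); and zeroing the internal weights of an \emph{arbitrary} $f_D$ does not yield criticality or nonnegative curvature in general, since by Assumption~\ref{assump::neurons} its neurons are merely real functions, possibly non-differentiable.
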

\textbf{Remark:} The proposition implies that when the dataset does not meet this necessary condition, there exists a feedforward architecture $f_{D}$ such that the empirical loss function has a local minimum with a non-zero training error. We use this implication to prove the counterexamples provided in Appendix~\ref{appendix::exam-dataset} when Assumption~\ref{assump::full-rank} or \ref{assump::different-subspaces} on the dataset is not satisfied. Therefore, Theorem~\ref{thm::convex-finite-deep} no longer holds when Assumption~\ref{assump::full-rank} or  \ref{assump::different-subspaces} is removed. 
We note that the necessary condition shown here is not equivalent to Assumption~\ref{assump::full-rank} and  \ref{assump::different-subspaces}. Now we present the following result to show the sufficient and necessary condition that the dataset should satisfy so that Proposition~\ref{prop::results-quadratic} can hold.

\begin{proposition}\label{prop::quadratic-sn}
Suppose that the loss function $\ell_{p}$ satisfies Assumption~\ref{assump::loss} and neurons in the network satisfy Assumption~\ref{assump::neurons}.
Assume that the single layer network $f_{S}(x;\bm{\theta}_{S})$ has $M> d$ neurons and
assume that neurons in $f_{S}$ are quadratic neurons, i.e., $\sigma(z)=z^{2}$.
For any network architecture $f_{D}(x;\bm{\theta}_{D})$, every local minimum $\bm{\theta}^{*}=(\bm{\theta}_{S}^{*},\bm{\theta}_{D}^{*})$ of the empirical loss function $\hat{L}_{n}(\bm{\theta}_{S},\bm{\theta}_{D};p)$, $p\ge 6$ satisfies $\hat{R}_{n}(\bm{\theta}^{*})=0$ \textbf{ if and only if} the matrix $\sum_{i=1}^{n}\lambda_{i}y_{i}x_{i}x_{i}^{\top}$ is indefinite for all sequences $\{\lambda_{i}\ge0\}_{i=1}^{n}$ satisfying $\sum_{i:y_{i}=1}\lambda_{i}=\sum_{i:y_{i}=-1}\lambda_{i}>0$.

\end{proposition}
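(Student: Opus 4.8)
The plan is to pass to the effective quadratic form of the single layer. Since $\sigma(z)=z^2$, I write $f(x;\bm\theta)=a_0+x^\top A x+f_D(x;\bm\theta_D)$ with $A=\sum_{j=1}^M a_j\bm w_j\bm w_j^\top$; because $M>d$, every symmetric matrix is realizable as such an $A$ (eigendecomposition uses $\le d$ terms), leaving spare neurons. At a critical point set $\lambda_i=\tfrac1n\ell_p'(-y_if(x_i;\bm\theta^*))$, which is nonnegative since $\ell_p$ is nondecreasing, and put $G=\sum_{i=1}^n\lambda_iy_ix_ix_i^\top$. Differentiating $\hat L_n$ with respect to $a_0$, to each outer weight $a_j$, and to each inner weight $\bm w_j$ gives respectively $\sum_i\lambda_iy_i=0$ (equivalently the balance $\sum_{y_i=1}\lambda_i=\sum_{y_i=-1}\lambda_i$), $\bm w_j^{*\top}G\bm w_j^*=0$ for every $j$, and $a_j^*\,G\bm w_j^*=0$; hence $G\bm w_j^*=0$ on every neuron with $a_j^*\ne0$, and therefore $GA^*=0$. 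These relations drive both directions.

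For sufficiency I argue the contrapositive: if some $f_D$ admits a local minimum $\bm\theta^*$ with $\hat R_n(\bm\theta^*)>0$, I exhibit an admissible $\{\lambda_i\}$ for which $G$ is not indefinite. A misclassified sample has $y_if(x_i;\bm\theta^*)\le0<z_0$, so $\lambda_i>0$ there; with the balance relation this forces $\sum_{y_i=1}\lambda_i=\sum_{y_i=-1}\lambda_i>0$, so $\{\lambda_i\}$ is admissible. It remains to show $G$ is semidefinite. The cleanest leverage is a degenerate neuron. If $a_{j_0}^*=0$, perturb it to $(a_{j_0},\bm w_{j_0})=(t,\bm w)$; this stays a small perturbation for $\bm w$ near $\bm w_{j_0}^*$, and local minimality makes $t\mapsto\hat L_n$ have a minimum at $t=0$ for each such $\bm w$, so its derivative $-\bm w^\top G\bm w$ vanishes on an open set of $\bm w$, whence $\bm w^\top G\bm w\equiv0$ and $G=0$. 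Likewise a neuron with $\bm w_{j_0}^*=0$ forces $a_{j_0}^*\,\bm s^\top G\bm s\le0$ for all $\bm s$, i.e. $G$ semidefinite. Since $\operatorname{rank}A^*\le d<M$, I expect to reduce to one of these degenerate neurons.

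The hard part is exactly the remaining case in which every neuron is active ($a_j^*\ne0$, $\bm w_j^*\ne0$): then $G\bm w_j^*=0$ only gives $\operatorname{span}\{\bm w_j^*\}\subseteq\ker G$, and if these weights span a proper subspace $V\subsetneq\mathbb R^d$ the second-order conditions do not immediately fix the sign of $G$ on $V^\perp$. Testing $\bm w_j\mapsto\bm w_j^*+t\bm v_j$ with $\bm v_j\in V^\perp$ chosen (using $M>d$) so that the first-order change $\sum_j a_j^*(\bm w_j^*\bm v_j^\top+\bm v_j\bm w_j^{*\top})$ vanishes isolates $\sum_j a_j^*\bm v_j^\top G\bm v_j\le0$, but the admissible such directions only certify that a fixed diagonal quadratic form vanishes on $\ker W$ (with $W=(\bm w_1^*,\dots,\bm w_M^*)$), which alone does not contradict indefiniteness of $G$. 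This is where $p\ge6$ and the probability-one genericity of the data must enter: one pushes the Taylor expansion of $\hat L_n$ past second order along these flat directions, using that only the active samples contribute to the high-order terms, to produce a descent and rule out the configuration almost surely. I regard establishing this higher-order descent as the principal technical obstacle.

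For necessity I again argue the contrapositive. Suppose there is an admissible $\{\lambda_i^0\}$ with $G_0=\sum_i\lambda_i^0y_ix_ix_i^\top$ semidefinite; after a global sign flip assume $G_0\succeq0$. I build a feedforward $f_D$ and a point $\bm\theta^*$ that is a local minimum with nonzero training error. Take all inner weights $\bm w_j^*=0$ and all outer weights $a_j^*\le0$, so $A^*=0$ and $f(x_i;\bm\theta^*)=a_0^*+f_D(x_i;\bm\theta_D^*)$; choosing $f_D$ expressive enough (a ReLU network) to interpolate prescribed margins at the $n$ points, I fix the values $f(x_i;\bm\theta^*)$ so that $\tfrac1n\ell_p'(-y_if(x_i;\bm\theta^*))$ is proportional to $\lambda_i^0$ and at least one sample is misclassified. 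The balance $\sum_i\lambda_i^0y_i=0$ kills the $a_0$-gradient, $\bm w_j^*=0$ kills every weight gradient, and the weight-perturbation Hessian contributes $-a_j^*\,\bm s^\top G_0\bm s\ge0$ exactly because $a_j^*\le0$ and $G_0\succeq0$. The nontrivial point is to arrange $f_D$ and its parameters so that $\sum_i\lambda_i^0y_i\nabla_{\bm\theta_D}f_D(x_i)=0$ and the $\bm\theta_D$-block is itself a local minimum, for which I reuse the ReLU construction underlying Proposition~\ref{prop::convex-necc}. The secondary obstacles here are verifying joint (rather than coordinate-wise) local minimality over $(\bm\theta_S,\bm\theta_D)$ and checking that the misclassification persists with probability one.
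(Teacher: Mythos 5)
Your reduction to $G=\sum_{i}\lambda_{i}y_{i}x_{i}x_{i}^{\top}$, the first-order identities, and the two degenerate cases are sound; indeed your observation that when $a_{j_{0}}^{*}=0$ every nearby point obtained by moving $\bm{w}_{j_{0}}$ alone is again a local minimum (so $\bm{w}^{\top}G\bm{w}\equiv 0$ on a ball and hence $G=\bm{0}$) is cleaner than the paper's own treatment of that case, which runs a sixth-order Taylor expansion. The genuine gap is exactly where you locate it: the case $a_{j}^{*}\neq 0$ and $\bm{w}_{j}^{*}\neq 0$ for every $j$. But your proposed escape route --- pushing the expansion past second order using $p\ge 6$ and ``probability-one genericity of the data'' --- is the wrong direction: the proposition is a deterministic statement about a fixed dataset (no probability appears in it), and the paper closes this case entirely at second order. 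The missing idea is the choice of test directions in the Hessian. Since $M>d$, the vectors $\bm{w}_{1}^{*},\dots,\bm{w}_{M}^{*}$ are linearly dependent, so $\sum_{j}\alpha_{j}\bm{w}_{j}^{*}=\bm{0}_{d}$ for some $\alpha_{j}$ not all zero. Take $\bm{u}_{j}=(\alpha_{j}/a_{j}^{*})\bm{u}$ with $\bm{u}\in\mathbb{R}^{d}$ arbitrary (your restriction of the $\bm{v}_{j}$ to $V^{\perp}$ is what loses the argument). In the Hessian form $F(\bm{u}_{1},\dots,\bm{u}_{M})\ge 0$, every $\ell_{p}''$ term contains the factor $\sum_{j}a_{j}^{*}\sigma'({\bm{w}_{j}^{*}}^{\top}x_{i})(\bm{u}_{j}^{\top}x_{i})=2(\bm{u}^{\top}x_{i})\bigl(\sum_{j}\alpha_{j}\bm{w}_{j}^{*}\bigr)^{\top}x_{i}=0$, so the entire $\ell_{p}''$ contribution vanishes and $F$ collapses to $-\tfrac{2}{n}\bigl(\sum_{j}\alpha_{j}^{2}/a_{j}^{*}\bigr)\sum_{i}\ell_{p}'(-y_{i}f(x_{i};\bm{\theta}^{*}))y_{i}(\bm{u}^{\top}x_{i})^{2}\ge 0$ for all $\bm{u}$. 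If $\sum_{j}\alpha_{j}^{2}/a_{j}^{*}\neq 0$ this says $G$ is semidefinite, and combined with the balance condition and a misclassified sample (which makes some $\lambda_{i}>0$) you contradict indefiniteness. If $\sum_{j}\alpha_{j}^{2}/a_{j}^{*}=0$, take instead $\bm{u}_{j}=(\alpha_{j}/a_{j}^{*}+v\,\sgn(\alpha_{j}))\bm{u}$: the coefficient of the term linear in $v$ is $-\tfrac{4v}{n}\bigl(\sum_{j}|\alpha_{j}|\bigr)\sum_{i}\ell_{p}'(\cdot)y_{i}(\bm{u}^{\top}x_{i})^{2}$, and since $F\ge 0$ for both signs of $v$ and $\sum_{j}|\alpha_{j}|>0$, this forces $\bm{u}^{\top}G\bm{u}=0$ for all $\bm{u}$, i.e.\ $G=\bm{0}$, again not indefinite. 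Also note your heuristic ``since $\operatorname{rank}A^{*}\le d<M$ I expect to reduce to a degenerate neuron'' is false: all $M$ neurons can have $a_{j}^{*}\neq 0$, $\bm{w}_{j}^{*}\neq 0$ regardless of the rank of $A^{*}$.

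On the necessity direction your plan coincides with the paper's, but the detail you defer is load-bearing and your specific suggestion would fail: the auxiliary network must have outputs that are \emph{locally constant in $\bm{\theta}_{D}$} on the data points. The paper builds $f_{D}$ from threshold units computing indicator bumps around each $x_{i}$ with fixed output heights $\mu_{i}$ chosen so that $\ell_{p}'(-y_{i}(\mu_{i}-M\sigma(0)))\propto\lambda_{i}^{0}$ and one sample is misclassified; then any small perturbation of $\bm{\theta}_{D}$ leaves all $f_{D}(x_{i})$ unchanged, so the convexity lower bound reduces to the $f_{S}$ contribution, which is nonnegative precisely because $a_{j}^{*}<0$ and $G_{0}\succeq 0$. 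A smooth ReLU interpolant does not give you the $\bm{\theta}_{D}$-block local minimality for free, and the first-order condition $\sum_{i}\lambda_{i}^{0}y_{i}\nabla_{\bm{\theta}_{D}}f_{D}(x_{i})=0$ alone is not sufficient there. Finally, your concern that ``misclassification persists with probability one'' is vacuous: the construction pins $y_{i_{\max}}f(x_{i_{\max}};\bm{\theta}^{*})=-1$ deterministically.
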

\textbf{Remark:} (i) This sufficient and necessary condition implies that for any network architecture $f_{D}$, there exists a set of parameters $\bm{\theta}=(\bm{\theta}_{S},\bm{\theta}_{D})$ such that the network $f(x;\bm{\theta})=f_{S}(x;\bm{\theta}_{S})+f_{D}(x;\bm{\theta}_{D})$ can correctly classify all samples in the dataset. This also indicates the existence of a set of parameters achieving zero training error, regardless of the network architecture of $f_{D}$. 
We provide the proof in Appendix~\ref{appendix::lemma-suff-necc}.
(ii) We note that Proposition~\ref{prop::quadratic-sn} only holds for the quadratic neuron. The problem of finding the sufficient and necessary conditions for the other types of neurons is open.

\section{Conclusions}\label{sec::conclusions}\vspace{-0.2cm}
In this paper, we studied the surface of a smooth version of the hinge loss function in binary classification problems. We provided conditions under which the neural network has zero misclassification error at all local minima and also provide counterexamples to show that when some of these assumptions are relaxed, the result may not hold. Further work involves exploiting our results to design efficient training algorithms classification tasks using  neural networks.  

\vspace{-0.2cm}
\bibliography{icml2018}
\bibliographystyle{unsrt}


\begin{appendix}

\section{Additional Results in Section~\ref{sec::main-results}}

\subsection{Proof of Lemma~\ref{lemma::nec-single}}

\begin{lemma}[Necessary condition.]\label{lemma::nec-single} Assume that neurons $\sigma$ in the network $f_{S}$ are twice differentiable and the loss function $\ell:\mathbb{R}\rightarrow \mathbb{R}$ has a continuous derivative on $\mathbb{R}$ up to the third order. If  $n\ge 1$ and parameters $\bm{\theta}^{*}=(\bm{\theta}^{*}_{S},\bm{\theta}^{*}_{D})$ denote a local minimum of the loss function $\hat{L}_{n}(\bm{\theta})$, then for any $j=1,...,M$,
\begin{align*}
&\sum_{i=1}^{n}\ell'(-y_{i}f(x_{i};\bm{\theta}^{*}))y_{i}\sigma'({\bm{w}^{*}_{j}}^{\top}x_{i})x_{i}=\bm{0}_{d}.
\end{align*}
\end{lemma}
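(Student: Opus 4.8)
The plan is to read the stated identity off the necessary conditions that $\bm{\theta}^{*}$ must satisfy as a local minimum, splitting into two cases according to whether the output weight $a_{j}^{*}$ of the $j$-th neuron of $f_{S}$ vanishes. Throughout I write the network output as $f(x;\bm{\theta})=a_{0}+\sum_{k=1}^{M}a_{k}\sigma(\bm{w}_{k}^{\top}x)+f_{D}(x;\bm{\theta}_{D})$, and I abbreviate $v_{j}:=\sum_{i=1}^{n}\ell'(-y_{i}f(x_{i};\bm{\theta}^{*}))\,y_{i}\,\sigma'({\bm{w}_{j}^{*}}^{\top}x_{i})\,x_{i}\in\mathbb{R}^{d}$, so the goal is to show $v_{j}=\bm{0}_{d}$.

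First I compute the gradient of $\hat{L}_{n}$ with respect to $\bm{w}_{j}$. Since $\partial f(x_{i})/\partial\bm{w}_{j}=a_{j}\sigma'(\bm{w}_{j}^{\top}x_{i})x_{i}$, the chain rule yields
\[
\frac{\partial\hat{L}_{n}}{\partial\bm{w}_{j}}(\bm{\theta}^{*})=-\frac{a_{j}^{*}}{n}\,v_{j}.
\]
Because $\bm{\theta}^{*}$ is a local minimum this gradient vanishes, giving $a_{j}^{*}v_{j}=\bm{0}_{d}$. Hence whenever $a_{j}^{*}\neq 0$ we immediately obtain $v_{j}=\bm{0}_{d}$, and the claim holds for that index.

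The remaining and main case is $a_{j}^{*}=0$, where the first-order condition is vacuous since the $j$-th neuron then contributes nothing to $f$ and $\partial f/\partial\bm{w}_{j}\equiv\bm{0}$. Here I invoke the second-order condition along a two-parameter restriction. Fix an arbitrary direction $\bm{u}\in\mathbb{R}^{d}$, set $a_{j}=t$ and $\bm{w}_{j}=\bm{w}_{j}^{*}+s\bm{u}$ while leaving all other parameters at $\bm{\theta}^{*}$, and write $h(t,s)$ for the resulting value of $\hat{L}_{n}$. Because $\sigma$ is twice differentiable and $\ell\in C^{2}$, the finite sum $h$ is $C^{2}$ near $(0,0)$, and restricting the local minimum to this affine slice keeps $(0,0)$ a local minimum of $h$, so its $2\times 2$ Hessian is positive semidefinite. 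The key structural observation is that the contribution of neuron $j$ to $f$ equals $t\,\sigma((\bm{w}_{j}^{*}+s\bm{u})^{\top}x_{i})$, which carries an explicit factor $t$; consequently every term of $\partial h/\partial s$ is proportional to $t$, forcing $h_{ss}(0,0)=0$. A direct differentiation of $\partial h/\partial t$ in $s$ (the $\ell''$ term again dropping out because it is multiplied by $t$) gives $h_{ts}(0,0)=-\tfrac{1}{n}\,\bm{u}^{\top}v_{j}$. Positive semidefiniteness of $\left(\begin{smallmatrix}h_{tt}&h_{ts}\\ h_{ts}&0\end{smallmatrix}\right)$ requires its determinant $-h_{ts}^{2}$ to be nonnegative, hence $h_{ts}(0,0)=0$, i.e.\ $\bm{u}^{\top}v_{j}=0$. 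As $\bm{u}$ was arbitrary, $v_{j}=\bm{0}_{d}$, which combined with the first case proves the lemma for every $j$.

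I expect the only delicate point to be the $a_{j}^{*}=0$ case: there the first-order condition degenerates, and one must exploit the joint second-order behaviour in the $(a_{j},\bm{w}_{j})$ block. The crucial trick is choosing the restriction so that $h_{ss}(0,0)$ vanishes, which collapses the Hessian positive-semidefiniteness constraint into the single scalar equation $h_{ts}(0,0)=0$ and thereby yields $\bm{u}^{\top}v_{j}=0$; everything else is routine differentiation under the finite sum.
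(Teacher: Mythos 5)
Your proof is correct and follows essentially the same route as the paper's: split on whether $a_{j}^{*}=0$, use the vanishing gradient $\nabla_{\bm{w}_{j}}\hat{L}_{n}=-\tfrac{a_{j}^{*}}{n}v_{j}=\bm{0}_{d}$ when $a_{j}^{*}\neq 0$, and when $a_{j}^{*}=0$ invoke positive semidefiniteness of the Hessian in the $(a_{j},\bm{w}_{j})$ block, where the $\bm{w}_{j}\bm{w}_{j}$ sub-block vanishes (due to the factor $a_{j}^{*}$) so the mixed term $-\tfrac{1}{n}\bm{u}^{\top}v_{j}$ must be zero. Your $2\times 2$ slice-and-determinant formulation is just a repackaging of the paper's quadratic-form argument (which takes $\bm{\omega}=v_{j}$ and a small $\alpha$ of suitable sign), so there is nothing to correct.
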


\begin{proof}
We first recall some notations defined in the paper. The output of the neural network is 
$$f(x;\bm{\theta})=f_{S}(x;\bm{\theta}_{S})+f_{D}(x;\bm{\theta}_{D}),$$
where $f_{S}(x;\bm{\theta}_{S})$ is the single layer neural network parameterized by $\bm{\theta}_{S}$, i.e., 
$$f_{S}(x;\bm{\theta}_{S})=a_{0}+\sum_{j=1}^{M}a_{j}\sigma\left(\bm{w}_{j}^{\top}x\right),$$
and $f_{D}(x;\bm{\theta}_{D})$ is a deep neural network parameterized by $\bm{\theta}_{D}$. 
The empirical loss function is given by
$$\hat{L}_{n}(\bm{\theta})=\hat{L}_{n}(\bm{\theta}_{S},\bm{\theta}_{D})=\frac{1}{n}\sum_{i=1}^{n}\ell(-y_{i}f(x_{i};\bm{\theta})).$$
Since the loss function $\ell$ has a continuous derivative on $\mathbb{R}$ up to the third order, neurons $\sigma$ in the network $f_{S}$ are twice differentiable, then the gradient vector $\nabla_{\bm{\theta}_{S}}\hat{L}_{n}(\bm{\theta}^{*}_{S},\bm{\theta}^{*}_{D})$ and the Hessian matrix $\nabla^{2}_{\bm{\theta}_{S}}\hat{L}_{n}(\bm{\theta}^{*}_{S},\bm{\theta}^{*}_{D})$ exists. 
Furthermore, by the assumption that  $\bm{\theta}^{*}=(\bm{\theta}^{*}_{S},\bm{\theta}_{D}^{*})$ is a local minima of the loss function $\hat{L}_{n}(\bm{\theta})$, then we should have for $j=1,..., M$,
\begin{align}
\bm{0}_{d}=\nabla_{\bm{w}_{j}}L_{n}(\bm{\theta}^{*})&=\sum_{i=1}^{n}\ell'(-y_{i}f(x_{i};\bm{\theta}^{*}))(-y_{i}\nabla_{\bm{w}_{j}}f(x_{i};\bm{\theta}^{*}))\notag\\
&=\sum_{i=1}^{n}\ell'(-y_{i}f(x_{i};\bm{\theta}^{*}))(-y_{i}a^{*}_{j}\sigma'({\bm{w}_{j}^{*}}^{\top}x_{i})x_{i})\notag\\
&=-a^{*}_{j}\sum_{i=1}^{n}\ell'(-y_{i}f(x_{i};\bm{\theta}^{*}))y_{i}\sigma'({\bm{w}_{j}^{*}}^{\top}x_{i})x_{i}.\label{eq::lemma1-critical}
\end{align}
Now we need to prove that if $\bm{\theta}^{*}$ is a local minima, then
\begin{equation*}
\forall j\in[M], \quad\left\|\sum_{i=1}^{n}\ell'(-y_{i}f(x_{i};\bm{\theta}^{*}))y_{i}\sigma'({\bm{w}_{j}^{*}}^{\top}x_{i})x_{i}\right\|_{2}= 0.
\end{equation*}
We prove it by contradiction. Assume that there exists $j\in[M]$ such that 
\begin{equation*}
\left\|\sum_{i=1}^{n}\ell'(-y_{i}f(x_{i};\bm{\theta}^{*}))y_{i}\sigma'({\bm{w}_{j}^{*}}^{\top}x_{i})x_{i}\right\|_{2}\neq 0.
\end{equation*}
Then by equation~\eqref{eq::lemma1-critical}, we have $a^{*}_j=0$. Now, we consider the following Hessian matrix $H(a_{j},\bm{w}_{j})$. Since $\bm{\theta}^{*}$ is a local minima of the loss function $\hat{L}_{n}(\bm{\theta})$, then the matrix $H(a_{j},\bm{w}_{j})$ should be positive semidefinite at $(a_{j}^{*},\bm{w}_{j}^{*})$. By $a^{*}_{j}=0$, we have 
\begin{align*}
 \nabla_{\bm{w}_{j}}^{2}L_{n}(\bm{\theta}^{*})&=-a^{*}_{j}\nabla_{\bm{w}_{j}}\left[\sum_{i=1}^{n}\ell'(-y_{i}f(x_{i};\bm{\theta}^{*}))y_{i}\sigma'({\bm{w}_{j}^{*}}^{\top}x_{i})x_{i}\right]=\bm{0}_{d\times d},\\
\frac{\partial \left[\nabla_{w_{j}}L_{n}(\bm{\theta}^{*})\right]}{\partial a_{j}}&=-\sum_{i=1}^{n}\ell'(-y_{i}f(x_{i};\bm{\theta}^{*}))y_{i}\sigma'({\bm{w}_{j}^{*}}^{\top}x_{i})x_{i}\\
&\quad -a^{*}_{j}\frac{\partial}{\partial a_{j}}\left[\sum_{i=1}^{n}\ell'(-y_{i}f(x_{i};\bm{\theta}^{*}))y_{i}\sigma'({\bm{w}_{j}^{*}}^{\top}x_{i})x_{i}\right]\\
&=-\sum_{i=1}^{n}\ell'(-y_{i}f(x_{i};\bm{\theta}^{*}))y_{i}\sigma'({\bm{w}_{j}^{*}}^{\top}x_{i})x_{i}.
\end{align*}
In addition,  we have
\begin{align*}
\frac{\partial^{2} L_{n}(\bm{\theta}^{*})}{\partial a_{j}^{2}}&=\frac{\partial}{\partial a_{j}}\left[\sum_{i=1}^{n}\ell'(-y_{i}f(x_{i};\bm{\theta}^{*}))(-y_{i}\sigma({\bm{w}_{j}^{*}}^{\top}x_{i}))\right]\\
&=\sum_{i=1}^{n}\ell''(-y_{i}f(x_{i};\bm{\theta}^{*}))\sigma^{2}({\bm{w}_{j}^{*}}^{\top}x_{i}).
\end{align*}
Since the matrix $H(a_{j}^{*},\bm{w}_{j}^{*})$ is positive semidefinite, then for any $\alpha\in\mathbb{R}$ and $\bm{\omega}\in\mathbb{R}^{d}$, 
\begin{align*}
\left(\begin{matrix}\alpha &\bm{\omega}^{\top}\end{matrix}\right)H(a_{j}^{*},\bm{w}_{j}^{*})\left(\begin{matrix}\alpha \\\bm{\omega}\end{matrix}\right)\ge 0.
\end{align*}
Since
\begin{align*}
\left(\begin{matrix}\alpha &\bm{\omega}^{\top}\end{matrix}\right)H(a_{j}^{*},\bm{w}_{j}^{*})\left(\begin{matrix}\alpha \\ \bm{\omega}\end{matrix}\right)&=\alpha^{2}\sum_{i=1}^{n}\ell''(-y_{i}f(x_{i};\bm{\theta}^{*}))\sigma^{2}({\bm{w}_{j}^{*}}^{\top}x_{i})\\
&\quad-\alpha \bm{\omega}^{\top}\sum_{i=1}^{n}\ell'(-y_{i}f(x_{i};\bm{\theta}^{*}))y_{i}\sigma'({\bm{w}_{j}^{*}}^{\top}x_{i})x_{i},
\end{align*}
and by setting $$\bm{\omega}=\sum_{i=1}^{n}\ell'(-y_{i}f(x_{i};\bm{\theta}^{*}))y_{i}\sigma'({\bm{w}_{j}^{*}}^{\top}x_{i})x_{i},$$
then
\begin{align*}
\left(\begin{matrix}\alpha &\omega^{\top}\end{matrix}\right)H(a_{j}^{*},\bm{w}_{j}^{*})\left(\begin{matrix}\alpha \\\omega\end{matrix}\right)&=\alpha^{2}\sum_{i=1}^{n}\ell''(-y_{i}f(x_{i};\bm{\theta}^{*}))\sigma^{2}({\bm{w}_{j}^{*}}^{\top}x_{i})\\
&\quad-\alpha \left\|\sum_{i=1}^{n}\ell'(-y_{i}f(x_{i};\bm{\theta}^{*}))y_{i}\sigma'({\bm{w}_{j}^{*}}^{\top}x_{i})x_{i}\right\|^{2}_{2}.
\end{align*}
Furthermore, since we assume that
$$\left\|\sum_{i=1}^{n}\ell'(-y_{i}f(x_{i};\bm{\theta}^{*}))y_{i}\sigma'({\bm{w}_{j}^{*}}^{\top}x_{i})x_{i}\right\|^{2}_{2}>0,$$
then clearly, there exists $\alpha$ such that 
$$\left(\begin{matrix}\alpha &\bm{\omega}^{\top}\end{matrix}\right)H(a_{j}^{*},\bm{w}_{j}^{*})\left(\begin{matrix}\alpha \\\bm{\omega}\end{matrix}\right)<0.$$
and this leads to the contradiction. Thus, we proved the lemma. 

\end{proof}

\clearpage

\subsection{Proof of Theorem~\ref{thm::convex-finite-deep}}\label{sec::thm-convex}
\begin{theorem}
Assume that the loss function $\ell_{p}$ satisfies  assumption~\ref{assump::loss}, the distribution $\mathbb{P}_{\bm{X}\times Y}$ satisfies assumption~\ref{assump::full-rank} and \ref{assump::different-subspaces}, the network architecture satisfies assumption~\ref{assump::shortcut-connection} and neurons in the network satisfy assumption~\ref{assump::neurons}.
Assume that  samples in the dataset $\mathcal{D}=\{(x_{i},y_{i})\}_{i=1}^{n}, n\ge 1$ are independently drawn from the distribution $\mathbb{P}_{\bm{X}\times Y}$.  Assume that the number of neurons $M$ in the network $f_{S}$ satisfies $M\ge 2\max\{\frac{n}{\Delta r},r_{+},r_{-}\}$, where $\Delta r=r-\max\{r_{+},r_{-}\}$. If the real parameters $\bm{\theta}^{*}=(\bm{\theta}_{S}^{*},\bm{\theta}_{D}^{*})$ denote a local minimum of the loss function $\hat{L}_{n}(\bm{\theta}_{S},\bm{\theta}_{D};p)$ and $p\ge 6$, then  $\hat{R}_{n}(\bm{\theta}^{*}_{S},\bm{\theta}^{*}_{D})=0$ holds with probability one.
\end{theorem}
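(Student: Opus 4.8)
The plan is to show that at a local minimum $\bm{\theta}^*$, if the training error were positive, one could construct a direction in which the loss strictly decreases, contradicting local optimality. The key leverage is the necessary condition from Lemma~\ref{lemma::nec-single}: at a local minimum, for every neuron $j\in[M]$ in $f_S$,
\begin{equation*}
\sum_{i=1}^{n}\ell_p'(-y_i f(x_i;\bm{\theta}^*))\, y_i\, \sigma'({\bm{w}_j^*}^{\top}x_i)\, x_i = \bm{0}_d.
\end{equation*}
First I would set $\lambda_i = \ell_p'(-y_i f(x_i;\bm{\theta}^*)) \ge 0$ (nonnegativity from Assumption~\ref{assump::loss}(3)); the misclassified samples are exactly those with $-y_i f(x_i;\bm{\theta}^*) > -z_0$, and by the surrogate property those contribute a strictly positive $\lambda_i$. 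So if $\hat{R}_n(\bm{\theta}^*)>0$, at least one $\lambda_i>0$, and in fact there is a positively-labeled or negatively-labeled misclassified sample forcing the $\lambda_i$ on that side to be nonzero.

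Next I would exploit the subspace structure. Because positive samples live in $\mathrm{Span}(\mathcal{U}_+)$ and negative samples in $\mathrm{Span}(\mathcal{U}_-)$ with $r > \max\{r_+,r_-\}$ (Assumption~\ref{assump::different-subspaces}), there is at least one basis direction $\bm{e}_k \in \mathcal{U}_+ \setminus \mathcal{U}_-$ (or symmetrically in $\mathcal{U}_-\setminus\mathcal{U}_+$) along which only one class has nonzero coordinates. Projecting the Lemma~\ref{lemma::nec-single} identity onto such a coordinate isolates a weighted sum over a single class. The core combinatorial step is then to view the $M$ vector equations indexed by $j$ as a linear system in the unknowns $\lambda_i \sigma'({\bm{w}_j^*}^{\top}x_i)$, and to argue that with $M$ large enough ($M \ge 2\max\{n/\Delta r, r_+, r_-\}$) and with the neurons being real analytic with $\sigma''>0$ (Assumption~\ref{assump::neurons}), the matrix $[\sigma'({\bm{w}_j^*}^{\top}x_i)]_{j,i}$ has enough independent rows that the system forces all $\lambda_i = 0$ on the class associated with the isolating direction. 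Here I would use that $\sigma'$ is itself real analytic and strictly monotone (since $\sigma''>0$), so the functions $x\mapsto \sigma'(\bm{w}^{\top}x)$ are ``generic'' and, under the full-rank Assumption~\ref{assump::full-rank} together with probability-one arguments on the sample positions, the relevant evaluation matrices are nonsingular almost surely.

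The main obstacle I anticipate is the counting/rank argument: showing that $M\ge 2n/\Delta r$ neurons suffice to guarantee that the only solution to the local-minimum equations has all the relevant $\lambda_i=0$. The factor $\Delta r = r - \max\{r_+,r_-\}$ enters because each isolating coordinate direction in $\mathcal{U}_+\triangle\mathcal{U}_-$ supplies one independent scalar constraint per neuron, so $\Delta r$ coordinates times $M$ neurons must exceed the number of active $\lambda_i$; balancing this against $n$ gives the stated bound. Making this rigorous requires combining (i) the real-analyticity to ensure that a certain determinant or Vandermonde-type quantity is not identically zero as a function of the weights $\bm{w}_j^*$, and then (ii) a measure-zero argument over the random draw of the data to upgrade ``not identically zero'' to ``nonzero with probability one.'' Once all $\lambda_i$ corresponding to one class vanish, the surrogate property $\ell_p(z)\ge \mathbb{I}\{z\ge 0\}$ combined with $\ell_p'(z)=0 \iff z\le -z_0$ forces every sample of that class to be correctly classified with margin, and a symmetric argument (using a direction in the complementary class) handles the other class, yielding $\hat{R}_n(\bm{\theta}^*)=0$. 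The requirement $p\ge 6$ should enter through needing sufficiently many continuous derivatives to run the Taylor-expansion/Hessian arguments underlying the reduction to the $\lambda_i$ system.
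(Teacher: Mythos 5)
Your proposal has the right raw ingredients --- Lemma~\ref{lemma::nec-single}, the isolating directions in $\mathcal{U}_{+}\setminus\mathcal{U}_{-}$, and the counting $M\,\Delta r\gtrsim n$ --- but the step you lean on to close the argument does not work. You propose to treat the first-order identities from Lemma~\ref{lemma::nec-single} as a linear system in $\lambda_{i}=\ell_{p}'(-y_{i}f(x_{i};\bm{\theta}^{*}))$ and to force $\lambda_{i}=0$ by claiming that the evaluation matrix $[\sigma'({\bm{w}_{j}^{*}}^{\top}x_{i})]_{j,i}$ (equivalently, the $KM_{0}\times n$ matrix $\bm{P}$ with entries $\sigma'({\bm{w}_{j}^{*}}^{\top}x_{i})(\bm{e}_{s}^{\top}x_{i})$) is nonsingular ``almost surely'' by genericity of the real-analytic $\sigma'$ plus a measure-zero argument over the data draw. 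This cannot be justified: the weights $\bm{w}_{j}^{*}$ are not generic and not independent of the data --- they are produced by the optimization from the very same samples --- and nothing prevents them from being degenerate at a local minimum (e.g., all $\bm{w}_{j}^{*}$ equal, which makes the row blocks of $\bm{P}$ proportional no matter how large $M$ is). The paper's proof never asserts full rank; it case-splits. If $\mathrm{rank}(\bm{P})=n$, then indeed $\bm{q}=\bm{0}_{n}$ and one is done; but when $\mathrm{rank}(\bm{P})<n\le M_{0}K$, the linear dependence among the rows is used \emph{constructively}: it yields coefficients $\beta_{js}$, hence directions $\bm{u}_{j}=\frac{1}{a_{j}^{*}}\sum_{s}\beta_{js}\bm{e}_{s}$, along which the $\ell_{p}''$ part of the Hessian quadratic form $F(\bm{u}_{1},\dots,\bm{u}_{M_{0}})$ vanishes identically; positive semidefiniteness of the Hessian at the local minimum then forces the remaining $\ell_{p}'\sigma''$ part to be zero, and since $\sigma''>0$ and the $M_{0}=\lceil M/2\rceil$ selected outer weights $a_{j}^{*}$ are chosen sign-aligned (this selection is exactly where the factor $2$ in $M\ge 2\max\{n/\Delta r,r_{+},r_{-}\}$ comes from), each term must vanish individually. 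Even then a residual index set $C_{1}$ of positive samples annihilated by every $\bm{u}_{j}$ survives, and Assumption~\ref{assump::full-rank} is needed to prove $|C_{1}|<r_{+}\le M_{0}$ so that a second family of directions $\bm{u}_{j}=\alpha_{j}\bm{e}_{1}$ finishes the job. In short, the conclusion is extracted from second-order information at the local minimum; a purely first-order rank argument of the kind you describe is structurally insufficient.

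A second gap: you never treat the case in which some $a_{j}^{*}=0$. Lemma~\ref{lemma::nec-single} still holds there, but the Hessian mechanism above collapses (one divides by $a_{j}^{*}$, and sign-alignment becomes meaningless), and the paper handles it by a separate argument: a Taylor expansion beyond second order with the scaling $\Delta a_{1}=\pm\varepsilon^{9/4}$, $\Delta\bm{w}_{1}=\varepsilon\bm{u}_{1}$, whose leading term has order $\varepsilon^{17/4}$ and adjustable sign, forcing $\sum_{i}\ell_{p}'(-y_{i}f(x_{i};\bm{\theta}^{*}))(-y_{i})\sigma''({\bm{w}_{1}^{*}}^{\top}x_{i})(\bm{u}^{\top}x_{i})^{2}=0$ for all $\bm{u}$. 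This degenerate case --- not the reduction to the $\lambda_{i}$ system in general --- is what consumes the hypothesis $p\ge 6$. Your final step (once one class has $\ell_{p}'=0$, conclude for the other) is fine in spirit, though the paper obtains it more cheaply than your symmetric-direction idea: the single scalar criticality condition $\partial\hat{L}_{n}/\partial a_{0}=0$ makes the nonnegative $\ell_{p}'$ values of the remaining class sum to zero, hence vanish.
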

\begin{proof}
We first present some  notations used in this proof. The output of the neural network is 
$$f(x;\bm{\theta})=f_{S}(x;\bm{\theta}_{S})+f_{D}(x;\bm{\theta}_{D}),$$
where $f_{S}(x;\bm{\theta}_{S})$ is the single layer neural network parameterized by $\bm{\theta}_{S}$, i.e., 
$$f_{S}(x;\bm{\theta}_{S})=a_{0}+\sum_{j=1}^{M}a_{j}\sigma\left(\bm{w}_{j}^{\top}x\right),$$
and $f_{D}(x;\bm{\theta}_{D})$ is a deep neural network parameterized by $\bm{\theta}_{D}$. 
The empirical loss function is given by
$$\hat{L}_{n}(\bm{\theta};p)=\hat{L}_{n}(\bm{\theta}_{S},\bm{\theta}_{D};p)=\frac{1}{n}\sum_{i=1}^{n}\ell_{p}(-y_{i}f(x_{i};\bm{\theta}))$$
We first assume that the real parameters $\bm{\theta}^{*}=(\bm{\theta}_{S}^{*},\bm{\theta}_{D}^{*})$ denote a local minima of the loss function $\hat{L}_{n}(\bm{\theta};p)$.
Next, we prove the following two claims: 

\textbf{Claim 1:} If $\bm{\theta}^{*}=(\bm{\theta}_{S}^{*},\bm{\theta}_{D}^{*})$ is a local minima and there exists $j\in[M]$ such that $a^{*}_{j}=0$, then $\error =0$. 

\textbf{Claim 2:} If $\bm{\theta}^{*}=(\bm{\theta}_{S}^{*},\bm{\theta}_{D}^{*})$ is a local minima and $a^{*}_{j}\neq 0$ for all $j\in [M]$, then $\error =0$.

\textbf{(a) Proof of claim 1.} We prove that  if $\bm{\theta}^{*}=(\bm{\theta}_{S}^{*},\bm{\theta}_{D}^{*})$ is a local minima of the loss function $\hat{L}_{n}(\bm{\theta};p)$ and there exists $j\in[M]$ such that $a^{*}_{j}=0$, then $\error =0$. Without loss of generality, we assume that $a_{1}^{*}=0$. Since $\bm{\theta}^{*}=(\bm{\theta}_{S}^{*},\bm{\theta}_{D}^{*})$ is a local minima, then there exists $\varepsilon_{0}>0$ such that for all small perturbations $\Delta{a}_{1}$, $\Delta \bm{w}_{1}$ on the parameters $a^{*}_{1}$ and $ \bm{w}^{*}_{1}$, i.e., $|\Delta a_{1}|^{2}+\|\Delta\bm{w}_{1}\|_{2}^{2}\le \varepsilon_{0}^{2}$, we have 
$$\hat{L}_{n}(\tilde{\bm{\theta}}_{S},\bm{\theta}_{D}^{*};p)\ge \hat{L}_{n}(\bm{\theta}^{*}_{S},\bm{\theta}_{D}^{*};p),$$
where  $\tilde{\bm{\theta}}_{S}=(\tilde{a}_{0}, \tilde{a}_{1},...,\tilde{a}_{M},\tilde{\bm{w}}_{1},...,\tilde{\bm{w}}_{M})$, $\tilde{a}_{1}=a^{*}_{1}+\Delta a_{1}$, $\tilde{\bm{w}}_{1}=\bm{w}_{1}^{*}+\Delta \bm{w}_{1}$ and $\tilde{a}_{j}=a^{*}_{j}$, $\tilde{\bm{w}}_{j}=\bm{w}^{*}_{j}$ for $j\neq 1$.  Now we consider the Taylor expansion of $\hat{L}_{n}(\tilde{\bm{\theta}}_{S},\bm{\theta}_{D}^{*};p)$ at the point $\bm{\theta}^{*}=(\bm{\theta}_{S}^{*},\bm{\theta}_{D}^{*})$. We note here that the Taylor expansion of $\hat{L}(\bm{\theta}_{S},\bm{\theta}_{D}^{*};p)$ on $\bm{\theta}_{S}$ always exists, since the empirical loss function $\hat{L}_{n}$ has continuous derivatives with respect to $f_{S}$ up to the $p$-th order and the output of the neural network $f(x;\bm{\theta}_{S})$ is infinitely differentiable with respect to $\bm{\theta}_{S}$ due to the fact that neuron activation function $\sigma$ is real analytic.

We first calculate the first order derivatives at the point $\bm{\theta}^{*}$,
\begin{align*}
\frac{d\hat{L}_{n}(\bm{\theta}^{*};p)}{da_{1}}&=\frac{1}{n}\sum_{i=1}^{n}\ell_{p}'(-y_{i}f(x_{i};\bm{\theta}^{*}))(-y_{i})\sigma\left({\bm{w}_{1}^{*}}^{\top}x_{i}\right)=0,&& \text{$\bm{\theta}^{*}$ is a critical point,}\\
\nabla_{\bm{w}_{1}}\hat{L}_{n}(\bm{\theta}^{*};p)&=\frac{a^{*}_{1}}{n}\sum_{i=1}^{n}\ell_{p}'(-y_{i}f(x_{i};\bm{\theta}^{*}))(-y_{i})\sigma'\left({\bm{w}_{1}^{*}}^{\top}x_{i}\right)x_{i}=\bm{0}_{d},&& \text{$\bm{\theta}^{*}$ is a critical point.}
\end{align*}
Next, we calculate the second order derivatives at the point $\bm{\theta}^{*}$,
\begin{align*}
\frac{d^{2}\hat{L}_{n}(\bm{\theta}^{*};p)}{da_{1}^{2}}&=\frac{1}{n}\sum_{i=1}^{N}\ell''_{p}(-y_{i}f(x_{i};\bm{\theta}^{*}))\sigma^{2}\left({\bm{w}_{1}^{*}}^{\top}x_{i}\right)\ge 0,\\
\frac{d}{da_{1}}(\nabla_{\bm{w}_{1}}\hat{L}_{n}(\bm{\theta}^{*};p))&=\frac{1}{n}\sum_{i=1}^{n}\ell_{p}'(-y_{i}f(x_{i};\bm{\theta}^{*}))(-y_{i})\sigma'\left({\bm{w}_{1}^{*}}^{\top}x_{i}\right)x_{i}\\
&\quad+\frac{a^{*}_{1}}{n}\sum_{i=1}^{n}\ell''_{p}(-y_{i}f(x_{i};\bm{\theta}^{*}))\sigma\left({\bm{w}_{1}^{*}}^{\top}x_{i}\right)\sigma'\left({\bm{w}_{1}^{*}}^{\top}x_{i}\right)x_{i}\\
&=\bm{0}_{d},
\end{align*}
where the first term equals to the zero vector by  the necessary condition for a local minima presented in Lemma~\ref{lemma::nec-single} and the second term equals to the zero vector by the assumption that $a^{*}_{1}=0$. Furthermore, by the assumption that $a^{*}_{1}=0$, we have 
\begin{equation*}
\nabla^{2}_{\bm{w}_{1}}\hat{L}_{n}(\bm{\theta}^{*};p)=\frac{a_{1}^{*}}{n}\nabla_{w_{1}}\left[\sum_{i=1}^{n}\ell_{p}'(-y_{i}f(x_{i};\bm{\theta}^{*}))(-y_{i})\sigma'\left({\bm{w}_{1}^{*}}^{\top}x_{i}\right)x_{i}\right]=\bm{0}_{d\times d}.
\end{equation*}
Now, we further calculate the third order derivatives 
\begin{align*}
\frac{d}{da_{1}}\left[\nabla_{\bm{w}_{1}}^{2}{ \hat{L}_{n}(\bm{\theta}^{*};p)}\right]&=\frac{1}{n}\frac{d}{da_{1}}\left[a_{1}^{*}\nabla_{\bm{w}_{1}}\left[\sum_{i=1}^{n}\ell_{p}'(-y_{i}f(x_{i};\bm{\theta}^{*}))(-y_{i})\sigma'\left({\bm{w}_{1}^{*}}^{\top}x_{i}\right)x_{i}\right]\right]\\
&=\nabla_{\bm{w}_{1}}\left[\frac{1}{n}\sum_{i=1}^{n}\ell_{p}'(-y_{i}f(x_{i};\bm{\theta}^{*}))(-y_{i})\sigma'\left({\bm{w}_{1}^{*}}^{\top}x_{i}\right)x_{i}\right]+\bm{0}_{d\times d}&& \text{by $a_{1}^{*}=0$}\\
&=\frac{1}{n}\sum_{i=1}^{n}\ell_{p}'(-y_{i}f(x_{i};\bm{\theta}^{*}))(-y_{i})\sigma''\left({\bm{w}_{1}^{*}}^{\top}x_{i}\right)x_{i}x_{i}^{\top}\\
&\quad+\frac{a^{*}_{1}}{n}\sum_{i=1}^{n}\ell_{p}''(-y_{i}f(x_{i};\bm{\theta}^{*}))\left[\sigma'\left({\bm{w}_{1}^{*}}^{\top}x_{i}\right)\right]^{2}x_{i}x_{i}^{\top}\\
&=\frac{1}{n}\sum_{i=1}^{n}\ell_{p}'(-y_{i}f(x_{i};\bm{\theta}^{*}))(-y_{i})\sigma''\left({\bm{w}_{1}^{*}}^{\top}x_{i}\right)x_{i}x_{i}^{\top}&& \text{by $a_{1}^{*}=0$}
\end{align*}
and
$$\nabla^{3}_{\bm{w}_{1}}\hat{L}_{n}(\bm{\theta}^{*};p)=\frac{a^{*}_{1}}{n}\nabla^{2}_{\bm{w}_{1}}\left[\sum_{i=1}^{n}\ell_{p}'(-y_{i}f(x_{i};\bm{\theta}^{*}))(-y_{i})\sigma'\left({\bm{w}_{1}^{*}}^{\top}x_{i}\right)x_{i}\right]=\bm{0}_{d\times d\times d}.$$
In fact, it is easy to show that for any $2\le k\le p$, 
$$\nabla^{k}_{\bm{w}_{1}}\hat{L}_{n}(\bm{\theta}^{*};p)=\frac{a^{*}_{1}}{n}\nabla^{k-1}_{\bm{w}_{1}}\left[\sum_{i=1}^{n}\ell_{p}'(-y_{i}f(x_{i};\bm{\theta}^{*}))(-y_{i})\sigma'\left({\bm{w}_{1}^{*}}^{\top}x_{i}\right)x_{i}\right]=\bm{0}_{\underbrace{d\times d\times  ...\times d}_{ \text{$k$ times}}}.$$
Let $\varepsilon>0$, $|\Delta a_{1}|=\varepsilon^{9/4}$ and $\Delta \bm{w}_{1}=\varepsilon \bm{u}_{1}$ for $\bm{u}_{1}:\|\bm{u}_{1}\|_{2}=1$.  Clearly, when $\varepsilon\rightarrow 0$, $\Delta a_{1}=o(\|\Delta \bm{w}_{1}\|_{2})$, $\Delta a_{1}=o(1)$ and $\|\Delta \bm{w}_{1}\|=o(1)$. Then we expand $\hat{L}_{n}(\tilde{\bm{\theta}};p)$ at the point $\bm{\theta}^{*}$ up to the sixth order  and thus as $\varepsilon\rightarrow 0$,
\begin{align*}
\hat{L}_{n}(\tilde{\bm{\theta}};p)&=\hat{L}_{n}({\bm{\theta}}^{*};p)+\frac{1}{2!}\frac{d^{2}\hat{L}_{n}(\bm{\theta}^{*};p)}{d^{2}a_{1}}(\Delta a_{1})^{2}\\
&\quad+\frac{1}{2}\Delta a_{1}\Delta \bm{w}_{1}^{\top}\frac{d}{da_{1}}\left[\nabla_{\bm{w}_{1}}^{2}{ \hat{L}_{n}(\bm{\theta}^{*};p)}\right]\Delta \bm{w}_{1} + o(|\Delta a_{1}|^{2})+o(|\Delta a_{1}|\|\Delta \bm{w}_{1}\|^{2}_{2})+o(\|\Delta \bm{w}_{1}\|_{2}^{5})\\
&=\hat{L}_{n}({\bm{\theta}}^{*})+\frac{1}{2!}\frac{d^{2}\hat{L}_{n}(\bm{\theta}^{*};p)}{d^{2}a_{1}}\varepsilon^{9/2}\\
&\quad+\frac{1}{2n}\text{sgn}(\Delta a_{1}) \varepsilon^{9/4+2}\sum_{i=1}^{n}\ell_{p}'(-y_{i}f(x_{i};\bm{\theta}^{*}))(-y_{i})\sigma''\left({\bm{w}_{1}^{*}}^{\top}x_{i}\right)(\bm{u}_{1}^{\top}x_{i})^{2}\\
&\quad+o(\varepsilon^{9/2})+o(\varepsilon^{9/4+2})+o(\varepsilon^{5})\\
&=\hat{L}_{n}({\bm{\theta}}^{*})+\frac{1}{2n}\text{sgn}(\Delta a_{1})\varepsilon^{17/4}\sum_{i=1}^{n}\ell_{p}'(-y_{i}f(x_{i};\bm{\theta}^{*}))(-y_{i})\sigma''\left({\bm{w}_{1}^{*}}^{\top}x_{i}\right)(\bm{u}_{1}^{\top}x_{i})^{2}+o(\varepsilon^{17/4}).
\end{align*}
Since $\varepsilon>0$ and $\hat{L}_{n}(\tilde{\bm{\theta}};p)\ge \loss$ holds for any $\bm{u}_{1}:\|\bm{u}_{1}\|_{2}=1$ and any $\sgn(\Delta a_{1})\in\{-1, 1\}$, then 
\begin{equation}\label{eq::thm2-part1-cond}\sum_{i=1}^{n}\ell_{p}'(-y_{i}f(x_{i};\bm{\theta}^{*}))(-y_{i})\sigma''\left({\bm{w}_{1}^{*}}^{\top}x_{i}\right)(\bm{u}^{\top}x_{i})^{2}=0, \quad\text{for any } \bm{u}\in\mathbb{R}^{d}.\end{equation}
Therefore, 
\begin{equation*}
\sum_{i=1}^{n}\ell_{p}'(-y_{i}f(x_{i};\bm{\theta}^{*}))(-y_{i})\sigma''\left({\bm{w}_{1}^{*}}^{\top}x_{i}\right)x_{i}x_{i}^{\top}=\bm{0}_{d\times d}.
\end{equation*}
By assumption that there exists a set of orthogonal basis $\mathcal{E}=\{\bm{e}_{1},...,\bm{e}_{d}\}$ in $\mathbb{R}^{d}$ and a subset $\mathcal{U}_{+}\subseteq \mathcal{E}$  such that $\mathbb{P}_{\bm{X}|Y}(\bm{X}\in\text{Span}(\mathcal{U}_{1})|Y=1)=1$ and by assumption that $r=|\mathcal{U}_{+}\cup \mathcal{U}_{-}|>\max\{r_{+},r_{-}\}=\max\{|\mathcal{U}_{+}|,|\mathcal{U}_{-}|\}$, then the set $\mathcal{U}_{+}\backslash \mathcal{U}_{-}$ is not an empty set. It is easy to show that for any vector $\bm{v}\in\mathcal{U}_{+}\backslash\mathcal{U}_{-}$, $\mathbb{P}_{\bm{X}\times Y}(\bm{v}^{\top}\bm{X}=0|Y=1)=0$. We prove it by contradiction. If we assume $p=\mathbb{P}_{\bm{X}\times Y}(\bm{v}^{\top}\bm{X}=0|Y=1)>0$, then for random vectors $\bm{X}_{1},...,\bm{X}_{|\mathcal{U}_{+}|}$ independently drawn from the conditional distribution $\mathbb{P}_{\bm{X}| Y=1}$, 
\begin{align*}
\mathbb{P}_{\bm{X} |Y=1}\left(\bigcup_{i=1}^{|\mathcal{U}_{+}|}\left\{\bm{v}^{\top}\bm{X}_{i}=0\right\}\Bigg|Y=1\right)&=\prod_{i=1}^{|\mathcal{U}_{+}|}\mathbb{P}_{\bm{X}|Y=1}\left(\bm{v}^{\top}\bm{X}_{i}=0|Y=1\right)=p^{|\mathcal{U}_{+}|}>0.
\end{align*} 
Furthermore, since $\bm{X}_{1},...,\bm{X}_{|\mathcal{U}_{+}|}\in\text{Span}(\mathcal{U}_{+})$, $\bm{v}^{\top}\bm{X}_{i}=0$, $i=1,...,|\mathcal{U}_{+}|$ and $\bm{v}\in\mathcal{U}_{+}$, then the rank of the matrix $\left(\bm{X}_{1},...,\bm{X}_{|\mathcal{U}_{+}|}\right)$ is at most $|\mathcal{U}_{+}|-1$ and this indicates that the matrix is not a full rank matrix with probability $p^{|\mathcal{U}_{+}|}>0$. This leads to the contradiction with the Assumption~\ref{assump::full-rank}. Thus, with probability 1, $\bm{v}^{\top}x_{i}\neq 0$ for all $i:y_{i}=1$ and $\bm{v}^{\top}x_{i}= 0$ for all $i:y_{i}=-1$.

Therefore, by setting $\bm{u}=\bm{v}$ in Equation~\eqref{eq::thm2-part1-cond}, we have 
\begin{align*}
0=-\sum_{i:y_{i}=1}\ell'_{p}(-y_{i}f(x_{i};\bm{\theta}^{*}))\sigma''({\bm{w}_{1}^{*}}^{\top}x_{i})(\bm{v}^{\top}x_{i})^{2}\le 0,
\end{align*}
where the equality holds if and only if $\forall i: y_{i}=1$, $\ell_{p}'(-y_{i}f(x_{i};\bm{\theta}^{*}))=0$ and this further indicates that $\forall i: y_{i}=1$, $y_{i}f(x_{i};\bm{\theta}^{*})\ge z_{0}>0$. 
Furthermore, since $\bm{\theta}^{*}$ is a critical point and thus 
\begin{align*}
0=\frac{d\loss}{da_{0}}&=\frac{1}{n}\sum_{i=1}^{n}\ell_{p}'(-y_{i}f(x_{i};\bm{\theta}^{*}))(-y_{i})=-\frac{1}{n}\sum_{i:y_{i}=1}\ell_{p}'(-y_{i}f(x_{i};\bm{\theta}^{*}))+\frac{1}{n}\sum_{i:y_{i}=-1}\ell_{p}'(-y_{i}f(x_{i};\bm{\theta}^{*}))\\
&=\frac{1}{n}\sum_{i:y_{i}=-1}\ell_{p}'(-y_{i}f(x_{i};\bm{\theta}^{*})).
\end{align*}
Therefore, $\forall i: y_{i}=-1$, $y_{i}f(x_{i};\bm{\theta}^{*})\ge z_{0}>0$ and this indicates that $\hat{R}_{n}(\bm{\theta}^{*})=0.$

\textbf{Proof of Claim 2:}  First, we define $M_{0}=\lceil M/2\rceil$, then  $$M_{0}\ge\max\{r_{+},r_{-}\}.$$
In addition, since $r=|\mathcal{U}_{+}\cup \mathcal{U}_{-}|,$ then
 $\max\{r_{+},r_{-}\}+\min\{r_{+},r_{-}\}\ge r$. Therefore, 
$$2M_{0}\ge2\max\{r_{+},r_{-}\}>2r-r_{+}-r_{-}\ge 2\min\{r-r_{+},r-r_{-}\}\triangleq 2K,$$
where we define $K=\min\{r-r_{+},r-r_{-}\}$.
Since in claim 2, we assume that $a^{*}_{j}\neq 0$ for all $j\in[M]$, then there exists $a_{i_{1}},..., a_{i_{M_{0}}}$, $i_{1}<i_{2}<...<i_{M_{0}}$ having the same sign, i.e., 
 $$\sgn(a_{i_{1}})=...=\sgn(a_{i_{M_{0}}}).$$
Without loss of generality, we assume that $\sgn(a_{1})=...=\sgn(a_{M_{0}})=+1$.

Now we prove the claim 2. 
First, we consider the Hessian matrix $H(\bm{w}_{1}^{*},...,\bm{w}_{M_{0}}^{*})$. Since $\bm{\theta}^{*}$ is a local minima with $\error>0$, then the inequality
\begin{equation*}
F(\bm{u}_{1},...,\bm{u}_{M_{0}})=\sum_{j=1}^{M_{0}}\sum_{k=1}^{M_{0}}\bm{u}_{j}^{\top}\nabla^{2}_{\bm{w}_{j},\bm{w}_{k}}\loss \bm{u}_{k}\ge 0
\end{equation*}
holds for all vectors $\bm{u}_{1},...,\bm{u}_{M_{0}}\in\mathbb{R}^{d}$. 
Since 
\begin{align*}
\nabla_{\bm{w}_{j}}^{2}\loss&=\frac{a_{j}^{*}}{n}\sum_{i=1}^{n}\ell_{p}'(-y_{i}f(x_{i};\bm{\theta}^{*}))(-y_{i})\sigma''\left({\bm{w}_{j}^{*}}^{\top}x_{i}\right)x_{i}x_{i}^{\top}\\
&\quad+\frac{{a_{j}^{*}}^{2}}{n}\sum_{i=1}^{n}\ell_{p}''(-y_{i}f(x_{i};\bm{\theta}^{*}))\left[\sigma'\left({\bm{w}_{j}^{*}}^{\top}x_{i}\right)\right]^{2}x_{i}x_{i}^{\top},
\end{align*}
and 
\begin{align*}
\nabla_{\bm{w}_{j},\bm{w}_{k}}^{2}\loss&=\frac{{a_{j}^{*}}a_{k}^{*}}{n}\sum_{i=1}^{n}\ell_{p}''(-y_{i}f(x_{i};\bm{\theta}^{*}))\left[\sigma'\left({\bm{w}_{j}^{*}}^{\top}x_{i}\right)\right]\left[\sigma'\left({\bm{w}_{k}^{*}}^{\top}x_{i}+b_{k}^{*}\right)\right]x_{i}x_{i}^{\top}.
\end{align*}
Thus, we have for any $\bm{u}_{1},...,\bm{u}_{M_{0}}\in\mathbb{R}^{d}$,
\begin{align*}
F(\bm{u}_{1},...,\bm{u}_{M_{0}})&=-\frac{1}{n}\sum_{j=1}^{M_{0}}\left[a_{j}^{*}\sum_{i=1}^{n}\ell_{p}'(-y_{i}f(x_{i};\bm{\theta}^{*}))y_{i}\sigma''\left({\bm{w}_{j}^{*}}^{\top}x_{i}\right)\left(\bm{u}_{j}^{\top}x_{i}\right)^{2}\right]\\
&\quad +\frac{1}{n}\sum_{j=1}^{M_{0}}\sum_{k=1}^{M_{0}}\left[{a_{j}^{*}}a_{k}^{*}\sum_{i=1}^{n}\ell_{p}''(-y_{i}f(x_{i};\bm{\theta}^{*}))\sigma'\left({\bm{w}_{j}^{*}}^{\top}x_{i}\right)\sigma'\left({\bm{w}_{k}^{*}}^{\top}x_{i}+b_{k}^{*}\right)\left(\bm{u}_{j}^{\top}x_{i}\right)\left(\bm{u}_{k}^{\top}x_{i}\right)\right]\\
&=-\frac{1}{n}\sum_{i=1}^{n}\left[\ell_{p}'(-y_{i}f(x_{i};\bm{\theta}^{*}))y_{i}\sum_{j=1}^{M_{0}}\left[a_{j}^{*}\sigma''\left({\bm{w}_{j}^{*}}^{\top}x_{i}\right)\left(\bm{u}_{j}^{\top}x_{i}\right)^{2}\right]\right]\\
&\quad +\frac{1}{n}\sum_{i=1}^{n}\left[\ell_{p}''(-y_{i}f(x_{i};\bm{\theta}^{*}))\left(\sum_{j=1}^{M_{0}}a_{j}^{*}\sigma'\left({\bm{w}_{j}^{*}}^{\top}x_{i}\right)\left(\bm{u}_{j}^{\top}x_{i}\right)\right)^{2}\right].
\end{align*}
Now we  find  some coefficients $\alpha_{1},...,\alpha_{M_{0}}$, not all zero, and vectors $\bm{u}_{1},...,\bm{u}_{M_{0}}$, not all zero vector, satisfying
$$\sum_{j=1}^{M_{0}}\alpha_{j}\sigma'\left({\bm{w}_{j}^{*}}^{\top}x_{i}\right)\bm{u}_{j}^{\top}x_{i}=0,\quad \forall i\in[n],$$
and 
$$\forall i:y_{i}=-1 \text{ and }\forall j\in[M_{0}],\quad \bm{u}_{j}^{\top}x_{i}=0.$$
We note here that if $\sgn(a_{1})=...=\sgn(a_{M_{0}})=-1$, then we need to find  coefficients $\alpha_{1},...,\alpha_{M_{0}}$, not all zero, and vectors $\bm{u}_{1},...,\bm{u}_{M_{0}}$, not all zero vector, satisfying
$$\sum_{j=1}^{M_{0}}\alpha_{j}\sigma'\left({\bm{w}_{j}^{*}}^{\top}x_{i}\right)\bm{u}_{j}^{\top}x_{i}=0,\quad \forall i\in[n],$$
and 
$$\forall i:y_{i}=1 \text{ and }\forall j\in[M_{0}],\quad \bm{u}_{j}^{\top}x_{i}=0.$$ 
Since $\bm{\bm{\theta}}^{*}$ is a local minima, then by Lemma~\ref{lemma::nec-single}, we have 
\begin{equation}\label{eq::thm2-1}
\sum_{i=1}^{n}\ell_{p}'(-y_{i}f(x_{i};\bm{\theta}^{*}))y_{i}\sigma'({\bm{w}^{*}_{j}}^{\top}x_{i})x_{i}=\bm{0}_{d}.
\end{equation}
Furthermore, by the  assumption that $K=r-\max\{r_{+},r_{-}\}>0$, then the set $\mathcal{U}_{+}\backslash\mathcal{U}_{-}$ is not an empty set. Thus, for  $\forall\bm{v}\in\mathcal{U}_{+}\backslash\mathcal{U}_{-}\subset\mathcal{E}$, with probability 1, $\forall i:y_{i}=-1$, $\bm{v}^{\top}x_{i}=0$. In addition, by the analysis presented in the proof of claim 1, we have that with probability 1,  $\bm{v}^{\top}x_{i}\neq 0$ for all $i:y_{i}=1$. Since $$K=r-\max\{r_{+},r_{-}\}=|\mathcal{U}_{+}\cup\mathcal{U}_{-}|-\max\{|\mathcal{U}_{+}|,|\mathcal{U}_{-}|\}=|\mathcal{U}_{+}\backslash\mathcal{U}_{-}|+|\mathcal{U}_{-}|-\max\{|\mathcal{U}_{+}|,|\mathcal{U}_{-}|\}\le |\mathcal{U}_{+}\backslash\mathcal{U}_{-}|,$$ then without loss of generality, we assume that $\{\bm{e}_{1},...,\bm{e}_{K}\}\subseteq\mathcal{U}_{+}\backslash\mathcal{U}_{-}$ and $\mathcal{U}_{+}=\{\bm{e}_{1},...,\bm{e}_{r_{+}}\}$. Thus, with probability 1, $\forall j\in[K]$, $\forall i:y_{i}=-1$, $\bm{e}_{j}^{\top}x_{i}=0$ and $\forall i:y_{i}=1$, $\bm{e}_{j}^{\top}x_{i}\neq 0$. Then by Equation \eqref{eq::thm2-1}, now we consider the following set of linear equations 
\begin{align*}
&\sum_{i=1}^{n}\ell_{p}'(-y_{i}f(x_{i};\bm{\theta}^{*}))y_{i}\sigma'({\bm{w}^{*}_{1}}^{\top}x_{i})\left(\bm{e}_{1}^{\top}x_{i}\right)=0,...,
\sum_{i=1}^{n}\ell_{p}'(-y_{i}f(x_{i};\bm{\theta}^{*}))y_{i}\sigma'({\bm{w}^{*}_{M_{0}}}^{\top}x_{i}+b_{M_{0}}^{*})\left(\bm{e}_{1}^{\top}x_{i}\right)=0,\\
&...\\
&\sum_{i=1}^{n}\ell_{p}'(-y_{i}f(x_{i};\bm{\theta}^{*}))y_{i}\sigma'({\bm{w}^{*}_{1}}^{\top}x_{i})\left(\bm{e}_{K}^{\top}x_{i}\right)=0,...,
\sum_{i=1}^{n}\ell_{p}'(-y_{i}f(x_{i};\bm{\theta}^{*}))y_{i}\sigma'({\bm{w}^{*}_{M_{0}}}^{\top}x_{i}+b_{M_{0}}^{*})\left(\bm{e}_{K}^{\top}x_{i}\right)=0.
\end{align*}
These equations can be rewritten in a matrix form
\begin{equation*}
\underbrace{
\left(\begin{matrix}
\sigma'({\bm{w}^{*}_{1}}^{\top}x_{1})\left(\bm{e}_{1}^{\top}x_{1}\right)&...&\sigma'({\bm{w}^{*}_{1}}^{\top}x_{n})\left(\bm{e}_{1}^{\top}x_{n}\right)\\
...&...&...\\
\sigma'({\bm{w}^{*}_{M_{0}}}^{\top}x_{1}+b_{M_{0}}^{*})\left(\bm{e}_{1}^{\top}x_{1}\right)&...&\sigma'({\bm{w}^{*}_{M_{0}}}^{\top}x_{n}+b_{M_{0}}^{*})\left(\bm{e}_{1}^{\top}x_{n}\right)\\
...&...&...\\
\sigma'({\bm{w}^{*}_{1}}^{\top}x_{1})\left(\bm{e}_{K}^{\top}x_{1}\right)&...&\sigma'({\bm{w}^{*}_{1}}^{\top}x_{n})\left(\bm{e}_{K}^{\top}x_{n}\right)\\
...&...&...\\
\sigma'({\bm{w}^{*}_{M_{0}}}^{\top}x_{1}+b_{M_{0}}^{*})\left(\bm{e}_{K}^{\top}x_{1}\right)&...&\sigma'({\bm{w}^{*}_{M_{0}}}^{\top}x_{n}+b_{M_{0}}^{*})\left(\bm{e}_{K}^{\top}x_{n}\right)
\end{matrix}\right)_{(KM_{0}\times n)}}_{\bm{P}}
\underbrace{\left(\begin{matrix}
\ell_{p}'(-y_{1}f(x_{1};\bm{\theta}^{*}))y_{1}\\
\ell_{p}'(-y_{2}f(x_{2};\bm{\theta}^{*}))y_{2}\\
...\\
...\\
...\\
...\\
...\\
\ell_{p}'(-y_{n}f(x_{1};\bm{\theta}^{*}))y_{n}\\
\end{matrix}\right)}_{\bm{q}}=\bm{0}_{n}
\end{equation*}
or 
$$\bm{P}\bm{q}=\bm{0}_{n}.$$
Since $M\ge \frac{2n}{\Delta r}=\frac{2n}{K}$, then 
$ M_{0}K\ge MK/2\ge n$. Clearly, if rank$(\bm{P})=n$, we should have $\bm{q}=\bm{0}_{n}$ and this indicates that $\ell'_{p}(-y_{i}f(x_{i};\bm{\theta}^{*}))=0$ for all $i\in[n]$ or $\error=0$. Thus, we only need to consider the case where rank$(\bm{P})<n\le M_{0}K$. This means the raw vectors of the matrix $\bm{P}$ is linearly dependent and thus  there exists coefficients vectors $(\beta_{11},...,\beta_{1K}),...,(\beta_{M_{0}1},...,\beta_{M_{0}K})$, not all zero vectors, such that 
$$\sum_{s=1}^{K}\sum_{j=1}^{M_{0}}\sigma'({\bm{w}_{j}^{*}}^{\top}x_{i})\beta_{js}(\bm{e}_{s}^{\top}x_{i})=0,\quad \forall i\in[n],$$
or 
$$\sum_{j=1}^{M_{0}}a_{j}^{*}\sigma'({\bm{w}_{j}^{*}}^{\top}x_{i})\left(\frac{1}{a_{j}^{*}}\sum_{s=1}^{K}\beta_{js}\bm{e}_{s}\right)^{\top}x_{i}=0,\quad \forall i\in[n],$$
by assumption that $a_{j}^{*}\neq 0$ for all $j=1,...,M_{0}$. 
Define $\bm{u}_{j}=\frac{1}{a_{j}^{*}}\sum_{s=1}^{K}\beta_{js}\bm{e}_{s}$ for $j=1,...,M_{0}$, then we have 
\begin{equation}\label{eq::thm-convex-all-zero}\sum_{j=1}^{M_{0}}a_{j}^{*}\sigma'({\bm{w}_{j}^{*}}^{\top}x_{i})\bm{u}_{j}^{\top}x_{i}=0,\quad \forall i\in[n].\end{equation}
Furthermore, since $\bm{u}_{j}\in \text{Span}(\{\bm{e}_{1},...,\bm{e}_{K}\})$ and with probability 1, $\bm{e}_{j}^{\top}x_{i}=0$, for $\forall i:y_{i}=-1$, $\forall j\in[K]$,  then 
$\forall j\in[M]$, $\forall i:y_{i}=-1$, $\bm{u}_{j}^{\top}x_{i}=0$. Thus, by setting $\bm{u}_{j}=\frac{1}{a_{j}^{*}}\sum_{s=1}^{K}\beta_{js}\bm{e}_{s}$ for $j=1,...,M_{0}$, then we have
\begin{align}
F(\bm{u}_{1},...,\bm{u}_{M_{0}})&=-\frac{1}{n}\sum_{i=1}^{n}\left[\ell_{p}'(-y_{i}f(x_{i};\bm{\theta}^{*}))y_{i}\sum_{j=1}^{M_{0}}\left[a_{j}^{*}\sigma''\left({\bm{w}_{j}^{*}}^{\top}x_{i}\right)\left(\bm{u}_{j}^{\top}x_{i}\right)^{2}\right]\right]\notag\\
&\quad +\frac{1}{n}\sum_{i=1}^{n}\left[\ell_{p}''(-y_{i}f(x_{i};\bm{\theta}^{*}))\left(\sum_{j=1}^{M_{0}}a_{j}^{*}\sigma'\left({\bm{w}_{j}^{*}}^{\top}x_{i}\right)\left(\bm{u}_{j}^{\top}x_{i}\right)\right)^{2}\right]\notag\\
&=-\frac{1}{n}\sum_{i=1}^{n}\left[\ell_{p}'(-y_{i}f(x_{i};\bm{\theta}^{*}))y_{i}\sum_{j=1}^{M_{0}}\left[a_{j}^{*}\sigma''\left({\bm{w}_{j}^{*}}^{\top}x_{i}\right)\left(\bm{u}_{j}^{\top}x_{i}\right)^{2}\right]\right]\notag&&\text{by Eq.~\eqref{eq::thm-convex-all-zero}}\\
&=-\frac{1}{n}\sum_{i:y_{i}=1}\left[\ell_{p}'(-y_{i}f(x_{i};\bm{\theta}^{*}))\sum_{j=1}^{M_{0}}\left[a_{j}^{*}\sigma''\left({\bm{w}_{j}^{*}}^{\top}x_{i}\right)\left(\bm{u}_{j}^{\top}x_{i}\right)^{2}\right]\right]\ge 0\label{eq::thm-convex-F}.
\end{align}

In addition, since $\sigma''(z)>0$ for all $z\in\mathbb{R}$ and $a_{j}^{*}>0$ for all $j\in[M_{0}]$, then we have 
$$\ell_{p}'(-y_{i}f(x_{i};\bm{\theta}^{*}))\sum_{j=1}^{M_{0}}\left[a_{j}^{*}\sigma''\left({\bm{w}_{j}^{*}}^{\top}x_{i}\right)\left(\bm{u}_{j}^{\top}x_{i}\right)^{2}\right]\ge 0,\quad \forall i:y_{i}=1$$
and this leads to 
$$F(\bm{u}_{1},...,\bm{u}_{M_{0}})\le 0.$$
Together with Eq.~\eqref{eq::thm-convex-F}, we have $$F(\bm{u}_{1},...,\bm{u}_{M_{0}})= 0,$$
and thus 
\begin{equation}\label{eq::thm1-cond-i}\ell_{p}'(-y_{i}f(x_{i};\bm{\theta}^{*}))\sum_{j=1}^{M_{0}}\left[a_{j}^{*}\sigma''\left({\bm{w}_{j}^{*}}^{\top}x_{i}\right)\left(\bm{u}_{j}^{\top}x_{i}\right)^{2}\right]= 0,\quad \forall i:y_{i}=1.\end{equation}
Now we split the index $\{1,...,n\}$ set into two disjoint subset $C_{0}, C_{1}$:
$$C_{0}=\{i\in[n]: y_{i}=1,\text{ and }\exists j\in[M_{0}],  \bm{u}_{j}^{\top}x_{i}\neq 0\},\quad C_{1}=\{i\in[n]:y_{i}=1 \text{ and }\forall j\in[M_{0}], \bm{u}_{j}^{\top}x_{i}= 0\}.$$
Clearly, for all $i\in C_{0}$, by the fact that $a^{*}_{j}> 0$ for all $j\in[M_{0}]$ and $\sigma''(z)>0$ for all $z\in\mathbb{R}$, we have 
$$\sum_{j=1}^{M_{0}}\left[a_{j}^{*}\sigma''\left({\bm{w}_{j}^{*}}^{\top}x_{i}\right)\left(\bm{u}_{j}^{\top}x_{i}\right)^{2}\right]>0,$$
and by Equation~\eqref{eq::thm1-cond-i}, we have  
$$\ell_{p}'(-y_{i}f(x_{i};\bm{\theta}^{*}))=0,\quad \forall i\in C_{0}.$$
Now we need to consider the index set $C_{1}$. First, we show that the following inequality holds with probability 1, $$|C_{1}|< r_{+}\le \max\{r_{+},r_{-}\}.$$ Since $\bm{u}_{j}=\frac{1}{a_{j}^{*}}\sum_{i=1}^{K}\beta_{js}\bm{e}_{s}$ for  $j=1,...,M_{0}$ and coefficient vectors $(\beta_{11},...,\beta_{1K}),...,(\beta_{M_{0}1},...,\beta_{M_{0}K})$ are not all zero vectors, then the there exists a $j_{0}\in[K]$ such that the non-zero vector $\bm{u}_{j_{0}}$ satisfy $\bm{u}_{j_{0}}^{\top}x_{i}=0$ for all $i\in C_{1}$ and  $\bm{u}_{j_{0}}\in\text{Span}(\{\bm{e}_{1},...,\bm{e}_{K}\})$. Furthermore, by assumption  $\mathcal{U}_{+}=\{\bm{e}_{1},...,\bm{e}_{r_{+}}\}$, thus we have 
\begin{equation}\label{eq::thm2-eq2}\bm{u}_{j_{0}}^{\top}x_{i}=\sum_{s=1}^{K}(\bm{u}_{j_{0}}^{\top}\bm{e}_{s})(x_{i}^{\top}\bm{e}_{s})=\sum_{s=1}^{r_{+}}(\bm{u}_{j_{0}}^{\top}\bm{e}_{s})(x_{i}^{\top}\bm{e}_{s})=0\end{equation} 
holds for all $i\in C_{1}$. If $|C_{1}|\ge r_{+}$, then without loss of generality, we assume that $\{1,...,r_{+}\}\subseteq C_{1}$. Thus, with probability 1, the matrix $$\left(\begin{matrix}\bm{e}_{1}^{\top}x_{1}&...&\bm{e}_{r_{+}}^{\top}x_{1}\\
...&...&...\\
\bm{e}_{1}^{\top}x_{r_{+}}&...&\bm{e}_{r_{+}}^{\top}x_{r_{+}}\\
\end{matrix}\right)=
\left(\begin{matrix}x_{1}^{\top}\\
...\\
x_{r_{+}}^{\top}\\
\end{matrix}\right)\left(\begin{matrix}\bm{e}_{1}&...&\bm{e}_{r_{+}}
\end{matrix}\right)$$
has a full rank equal to $r_{+}$, by the fact that $\{x_{1},...,x_{r_{+}}\}\subset\text{Span}(\mathcal{U}_{+})$ and $\left(x_{1},...,x_{r_{+}}\right)$ is a full rank matrix with probability 1.
 Thus, by Equation \eqref{eq::thm2-eq2}, we have
$$\left(\begin{matrix}\bm{e}_{1}^{\top}x_{1}&...&\bm{e}_{r_{+}}^{\top}x_{1}\\
...&...&...\\
\bm{e}_{1}^{\top}x_{r_{+}}&...&\bm{e}_{r_{+}}^{\top}x_{r_{+}}\\
\end{matrix}\right)
\left(\begin{matrix}
\bm{u}_{j_{0}}^{\top}\bm{e}_{1}\\
...\\
\bm{u}_{j_{0}}^{\top}\bm{e}_{r_{+}}
\end{matrix}\right)=\bm{0}_{d}
$$
and this leads to $\bm{u}_{j_{0}}^{\top}\bm{e}_{s}=0$ for all $s\in[K]$. This contradicts with the fact that $\bm{u}_{j_{0}}\in\text{Span}(\{\bm{e}_{1},...,\bm{e}_{K}\})$ and $\bm{u}_{j_{0}}$ is not a zero vector. Therefore, $|C_{1}|<r_{+}\le M_{0}$. Furthermore, since $\ell'(z)=0$ if and only if $z\le-z_{0}$ for some positive $z_{0}>0$, then $\ell''(z)=0$ when $z\le -z_{0}$.
Now we consider the function $F$, since $\forall i\in C_{0}:\ell_{p}'(-y_{i}f(x_{i};\bm{\theta}^{*}))=0$ and $\ell_{p}''(-y_{i}f(x_{i};\bm{\theta}^{*}))=0$, then
\begin{align}
F(\bm{u}_{1},...,\bm{u}_{M_{0}})&=-\frac{1}{n}\sum_{i\in C_{1}}\left[\ell_{p}'(-y_{i}f(x_{i};\bm{\theta}^{*}))\sum_{j=1}^{M_{0}}\left[a_{j}^{*}\sigma''\left({\bm{w}_{j}^{*}}^{\top}x_{i}\right)\left(\bm{u}_{j}^{\top}x_{i}\right)^{2}\right]\right]\notag\\
&\quad +\frac{1}{n}\sum_{i\in C_{1}}\left[\ell_{p}''(-y_{i}f(x_{i};\bm{\theta}^{*}))\left(\sum_{j=1}^{M_{0}}a_{j}^{*}\sigma'\left({\bm{w}_{j}^{*}}^{\top}x_{i}\right)\left(\bm{u}_{j}^{\top}x_{i}\right)\right)^{2}\right]\ge 0\notag
\end{align}
holds  for all $\bm{u}_{1},...,\bm{u}_{M_{0}}\in\text{Span}(\{\bm{e}_{1},...,\bm{e}_{K}\})$.
Now we set $\bm{u}_{j}=\alpha_{j} \bm{e}_{1}$, $j=1,...,M_{0}$ for some scalar $\alpha_{j}$. We only need to find $\alpha_{1},...,\alpha_{M_{0}}$ such that
$$\sum_{j=1}^{M_{0}}\alpha_{j}a_{j}^{*}\sigma'\left({\bm{w}_{j}^{*}}^{\top}x_{i}\right)\bm{e}_{1}^{\top}x_{i}=\bm{0},\quad \forall i\in C_{1}.$$
Since $|C_{1}|<r_{+}\le M_{0}$, then there exists $\alpha^{*}_{1},...,\alpha^{*}_{M_{0}}$, not all zeros, such that 
$$\sum_{j=1}^{M_{0}}\alpha^{*}_{j}a_{j}^{*}\sigma'\left({\bm{w}_{j}^{*}}^{\top}x_{i}\right)\bm{e}_{1}^{\top}x_{i}={0},\quad \forall i\in C_{1}.$$
Then by setting $\bm{u}_{j}=\alpha^{*}_{j} \bm{e}_{1}$, we have 
\begin{align*}
F(\bm{u}_{1},...,\bm{u}_{M_{0}})&=-\frac{1}{n}\sum_{i\in C_{1}}\left[\ell_{p}'(-y_{i}f(x_{i};\bm{\theta}^{*}))\sum_{j=1}^{M_{0}}\left[|\alpha_{j}^{*}|^{2}a_{j}^{*}\sigma''\left({\bm{w}_{j}^{*}}^{\top}x_{i}\right)\left(\bm{e}_{1}^{\top}x_{i}\right)^{2}\right]\right]\ge 0.\\
\label{eq::thm-F2}.
\end{align*}
Similarly, since $|\alpha_{1}|,...,|\alpha_{M_{0}}|$ are not all zeros, $a_{j}^{*}>0$ for all $j\in[M_{0}]$, $\sigma''(z)>0$ for all $z\in\mathbb{R}$ and $\bm{e}_{1}^{\top}x_{i}\neq 0$ holds for all $i$ with probability 1, then 
$$\ell_{p}'(-y_{i}f(x_{i};\bm{\theta}^{*}))=0,\quad \forall i\in C_{1}.$$
Therefore, this indicates that 
$$\ell_{p}'(-y_{i}f(x_{i};\bm{\theta}^{*}))=0,\quad  \forall i:y_{i}=1.$$
Furthermore, since $\bm{\theta}^{*}$ is a local minima and thus 
\begin{align*}
0=\frac{d\loss}{da_{0}}&=\frac{1}{n}\sum_{i=1}^{n}\ell_{p}'(-y_{i}f(x_{i};\bm{\theta}^{*}))(-y_{i})=-\frac{1}{n}\sum_{i:y_{i}=1}\ell_{p}'(-y_{i}f(x_{i};\bm{\theta}^{*}))+\frac{1}{n}\sum_{i:y_{i}=-1}\ell_{p}'(-y_{i}f(x_{i};\bm{\theta}^{*}))\\
&=\frac{1}{n}\sum_{i:y_{i}=-1}\ell_{p}'(-y_{i}f(x_{i};\bm{\theta}^{*})).
\end{align*}
This means when $\ell_{p}'(-y_{i}f(x_{i};\bm{\theta}^{*}))=0$ holds for all $i:y_{i}=1$, we have $\ell_{p}'(-y_{i}f(x_{i};\bm{\theta}^{*}))=0$ for all $i:y_{i}=-1$. These two together give us $\error =0$. Similarly, when $\sgn(a_{1})=...=\sgn(a_{M_{0}})=-1$, we have the similar the results. Therefore, $\bm{\theta}^{*}$ is a local minima  with $\error=0$.
\end{proof}

\clearpage

\subsection{Proof of Proposition~\ref{prop::results-quadratic}}\label{appendix::prop-results-quadratic}

\begin{proposition}
Assume that the loss function $\ell_{p}$ satisfies  assumption~\ref{assump::loss}, the distribution $\mathbb{P}_{\bm{X}\times Y}$ satisfies assumption~\ref{assump::full-rank} and \ref{assump::different-subspaces}, the network architecture satisfies assumption~\ref{assump::shortcut-connection} and neurons in the network satisfy assumption~\ref{assump::neurons}.
Assume that  samples in the dataset $\mathcal{D}=\{(x_{i},y_{i})\}_{i=1}^{n}, n\ge 1$ are independently drawn from the distribution $\mathbb{P}_{\bm{X}\times Y}$. Assume that the neuron $\sigma(z)=z^{2}$ and the number of neurons $M>r$. If the real parameters $\bm{\theta}^{*}=(\bm{\theta}_{S}^{*},\bm{\theta}_{D}^{*})$ denote a local minimum of the loss function $\hat{L}_{n}(\bm{\theta}_{S},\bm{\theta}_{D};p)$ and $p\ge 6$, then $\hat{R}_{n}(\bm{\theta}^{*})=\hat{L}_{n}(\bm{\theta}^{*};p)=0$ holds with probability one.
\end{proposition}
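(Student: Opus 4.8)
The plan is to follow the two-case structure of the proof of Theorem~\ref{thm::convex-finite-deep}, keeping everything that does not depend on the number of neurons and replacing only the counting argument in the second case, where the linearity of $\sigma'(z)=2z$ lets me trade the sample-dependent requirement $M\ge 2n/\Delta r$ for the sample-independent requirement $M>r$. Since $\sigma(z)=z^2$ has $\sigma''(z)=2>0$ everywhere, it satisfies Assumption~\ref{assump::neurons}, so the case in which some $a_j^*=0$ (Claim~1 in the proof of Theorem~\ref{thm::convex-finite-deep}) goes through verbatim and already yields $\hat R_n(\bm\theta^*)=0$. Thus I may assume $a_j^*\neq 0$ for every $j$.

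The conceptual heart is that, on the data, the shortcut network reduces to a pure quadratic classifier. Indeed $f_S(x;\bm\theta_S)=a_0+\sum_j a_j(\bm w_j^\top x)^2=a_0+x^\top A x$ with $A=\sum_{j=1}^M a_j \bm w_j\bm w_j^\top$, so $f=a_0+x^\top A x+f_D(x)$. I would first argue that overparametrization $M>r$ makes the symmetric form $A$, restricted to the $r$-dimensional span $W=\mathrm{Span}(\mathcal U_+\cup\mathcal U_-)$ on which all data lives, a free first-order parameter: the $M>r$ projected weights $\{P_W\bm w_j\}$ are linearly dependent in $W$, and this redundancy should let me realize any infinitesimal symmetric perturbation of $A|_W$ by moving the $\bm w_j$ and $a_j$ while staying feasible.

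Granting this, stationarity of $\hat L_n$ in the free directions $a_0$ and $A|_W$ gives, with $\lambda_i:=\ell_p'(-y_if(x_i;\bm\theta^*))\ge 0$, the two identities $\sum_i\lambda_i y_i=0$ and $\sum_i\lambda_i y_i x_ix_i^\top=0$ on $W$. Writing the second as $\sum_{i:y_i=1}\lambda_i x_ix_i^\top=\sum_{i:y_i=-1}\lambda_i x_ix_i^\top$ and testing against a basis vector $\bm e\in\mathcal U_+\setminus\mathcal U_-$ (nonempty by Assumption~\ref{assump::different-subspaces}), the right side vanishes because $\bm e^\top x_i=0$ for negative samples, while the left side equals $\sum_{i:y_i=1}\lambda_i(\bm e^\top x_i)^2$. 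Since $\bm e^\top x_i\neq 0$ for every positive sample with probability one (the same full-rank argument used under Assumption~\ref{assump::full-rank} in the proof of Theorem~\ref{thm::convex-finite-deep}), this forces $\lambda_i=0$ for all positive $i$; then $\sum_i\lambda_iy_i=0$ forces $\lambda_i=0$ for all negative $i$ as well. Hence $\ell_p'(-y_if(x_i;\bm\theta^*))=0$ for every $i$, so $-y_if(x_i;\bm\theta^*)\le -z_0<0$, giving $\hat R_n(\bm\theta^*)=0$; and because the surrogate loss is constant with value $0$ on $(-\infty,-z_0]$ (e.g. the polynomial hinge loss), also $\hat L_n(\bm\theta^*;p)=0$.

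The hard part is the claimed free first-order control of $A|_W$. When the $\{P_W\bm w_j\}$ span $W$ this is routine, but a priori they may span a proper subspace, and then one cannot test directly against $\bm e\bm e^\top$; moreover, when the $a_j^*$ have mixed signs the second-order remainder $\sum_j a_j^*(\bm u_j^\top x_i)^2$ produced by rotating the weights has indefinite sign, which is exactly the difficulty the Hessian argument of Theorem~\ref{thm::convex-finite-deep} had to control. I would therefore fall back to the second-order route of that proof: use the dependence of $\{P_W\bm w_j\}$ guaranteed by $M>r$ to build a perturbation that is first-order flat at every sample, substitute it into the positive-semidefiniteness of the Hessian (with Lemma~\ref{lemma::nec-single} supplying the first-order conditions), and exploit $\sigma''\equiv 2>0$ together with the isolating direction $\bm e\in\mathcal U_+\setminus\mathcal U_-$ to pin down the sign and again conclude $\lambda_i=0$. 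Reconciling the clean first-order reduction with this sign bookkeeping, so that $M>r$ alone suffices, is where I expect the real work to lie.
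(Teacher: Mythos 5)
Your skeleton is the same as the paper's: Claim~1 (some $a_j^*=0$) is indeed reused essentially verbatim from the proof of Theorem~\ref{thm::convex-finite-deep}, and your endgame --- obtaining $\sum_i\lambda_iy_i=0$ from stationarity in $a_0$ and (some form of) $\sum_i\lambda_iy_ix_ix_i^\top=0$ on the data span, then testing against isolating basis directions --- is exactly how the paper finishes. But the step you explicitly defer (``the sign bookkeeping \dots\ is where I expect the real work to lie'') is not a detail: it is the entire content of the paper's Claim~2, so as written your proposal is a plan with a genuine gap, not a proof. Your suspicion that the first-order route fails is also correct, for a concrete reason worth recording: at points such as $\bm w_1^*=\dots=\bm w_M^*=\bm 0$ (precisely the configuration used in the paper's counterexample constructions), the differential of $(a_1,\dots,a_M,\bm w_1,\dots,\bm w_M)\mapsto A=\sum_j a_j\bm w_j\bm w_j^\top$ vanishes identically, so $A|_W$ has no first-order freedom at all and one cannot invoke stationarity ``in $A$''; second-order information is unavoidable.

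Here is what fills the gap in the paper. Since $M>r$ and all samples lie in $\mathrm{Span}(\{\bm e_1,\dots,\bm e_r\})$, there exist $\alpha_1,\dots,\alpha_M$, not all zero, with $\bigl(\sum_j\alpha_j\bm w_j^*\bigr)^\top x_i=0$ for every $i\in[n]$. In the Hessian form $F(\bm u_1,\dots,\bm u_M)=\sum_{j,k}\bm u_j^\top\nabla^2_{\bm w_j,\bm w_k}\hat{L}_n\,\bm u_k\ge 0$, take $\bm u_j=(\alpha_j/a_j^*)\bm u$. Because $\sigma'(z)=2z$ is linear, the $\ell_p''$ (Gauss--Newton) part collapses to $\tfrac{4}{n}\sum_i\ell_p''(\cdot)\bigl(\bigl(\sum_j\alpha_j\bm w_j^*\bigr)^\top x_i\bigr)^2(\bm u^\top x_i)^2=0$, leaving $F=-\tfrac{2}{n}\bigl(\sum_j\alpha_j^2/a_j^*\bigr)\sum_i\lambda_iy_i(\bm u^\top x_i)^2\ge 0$ for all $\bm u$, with $\lambda_i=\ell_p'(-y_if(x_i;\bm\theta^*))$. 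Now two cases. If $\sum_j\alpha_j^2/a_j^*\neq 0$, the sign of $\sum_i\lambda_iy_i(\bm u^\top x_i)^2$ is pinned uniformly in $\bm u$; testing with the two isolating directions (a basis vector vanishing on all positive samples and one vanishing on all negative samples --- both exist since Assumption~\ref{assump::different-subspaces} makes $\mathcal{U}_+\setminus\mathcal{U}_-$ and $\mathcal{U}_-\setminus\mathcal{U}_+$ nonempty) together with $\partial\hat{L}_n/\partial a_0=0$ forces every $\lambda_i=0$. If instead $\sum_j\alpha_j^2/a_j^*=0$ --- which can happen exactly because the $a_j^*$ have mixed signs --- the paper perturbs to $\bm u_j=(\alpha_j/a_j^*+v\,\sgn(\alpha_j))\bm u$ and obtains $F(v,\bm u)=-\tfrac{4v}{n}\bigl(\sum_j|\alpha_j|\bigr)\sum_i\lambda_iy_i(\bm u^\top x_i)^2+v^2\hat{R}(\bm u)\ge 0$ for all $v$; nonnegativity for both signs of small $v$ kills the linear term, giving $\sum_i\lambda_iy_i(\bm u^\top x_i)^2=0$ for all $\bm u$, and the isolating directions finish as before. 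These two cases are precisely the bookkeeping you left open. (A minor shared imprecision: concluding $\hat{L}_n(\bm\theta^*;p)=0$, rather than just $\hat{R}_n(\bm\theta^*)=0$, additionally uses $\ell_p\equiv 0$ on $(-\infty,-z_0]$, which holds for the polynomial hinge loss but is not literally forced by Assumption~\ref{assump::loss}; both your write-up and the paper gloss this.)
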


\begin{proof}
We first recall some notations defined in the paper. The output of the neural network is 
$$f(x;\bm{\theta})=f_{S}(x;\bm{\theta}_{S})+f_{D}(x;\bm{\theta}_{D}),$$
where $f_{S}(x;\bm{\theta}_{S})$ is the single layer neural network parameterized by $\bm{\theta}_{S}$, i.e., 
$$f_{S}(x;\bm{\theta}_{S})=a_{0}+\sum_{j=1}^{M}a_{j}\sigma\left(\bm{w}_{j}^{\top}x\right),$$
and $f_{D}(x;\bm{\theta}_{D})$ is a deep neural network parameterized by $\bm{\theta}_{D}$. 
The empirical loss function is given by
$$\hat{L}_{n}(\bm{\theta};p)=\hat{L}_{n}(\bm{\theta}_{S},\bm{\theta}_{D};p)=\frac{1}{n}\sum_{i=1}^{n}\ell_{p}(-y_{i}f(x_{i};\bm{\theta})).$$
We first assume that the $\bm{\theta}^{*}=(\bm{\theta}^{*}_{S},\bm{\theta}_{D}^{*})$ is a local minima.
We next prove the following two claims: 

\textbf{Claim 1:} If $\bm{\theta}^{*}=(\bm{\theta}_{S}^{*},\bm{\theta}_{D}^{*})$ is a local minima and there exists $j\in[M]$ such that $a^{*}_{j}=0$, then $\error =0$. 

\textbf{Claim 2:} If $\bm{\theta}^{*}=(\bm{\theta}_{S}^{*},\bm{\theta}_{D}^{*})$ is a local minima and $a^{*}_{j}\neq 0$ for all $j\in [M]$, then $\error =0$.

\textbf{(a) Proof of claim 1.} We prove that  if $\bm{\theta}^{*}=(\bm{\theta}_{S}^{*},\bm{\theta}_{D}^{*})$ is a local minima and there exists $j\in[M]$ such that $a^{*}_{j}=0$, then $\error =0$. Without loss of generality, we assume that $a_{1}^{*}=0$. Since $\bm{\theta}^{*}=(\bm{\theta}_{S}^{*},\bm{\theta}_{D}^{*})$ is a local minima, then there exists $\varepsilon_{0}>0$ such that for any small perturbations $\Delta{a}_{1}$, $\Delta \bm{w}_{1}$ on parameters $a^{*}_{1}$ and $ \bm{w}^{*}_{1}$, i.e., $|\Delta a_{1}|^{2}+\|\Delta\bm{w}_{1}\|_{2}^{2}\le \varepsilon_{0}^{2}$, we have 
$$\hat{L}_{n}(\tilde{\bm{\theta}}_{S},\bm{\theta}^{*}_{D})\ge \tilde{L}_{n}(\bm{\theta}^{*}_{S},\bm{\theta}_{D}^{*}),$$
where  $\tilde{\bm{\theta}}=(\tilde{a}_{0}, \tilde{a}_{1},...,\tilde{a}_{M},\tilde{\bm{w}}_{1},...,\tilde{\bm{w}}_{M})$, $\tilde{a}_{1}=a^{*}_{1}+\Delta a_{1}$, $\tilde{\bm{w}}_{1}=\bm{w}_{1}^{*}+\Delta \bm{w}_{1}$ and $\tilde{a}_{j}=a^{*}_{j}$, $\tilde{\bm{w}}_{j}=\bm{w}^{*}_{j}$ for $j\neq 1$.  Now we consider Taylor expansion of $\tilde{L}_{n}(\tilde{\bm{\theta}}_{S},\bm{\theta}^{*}_{D})$ at $(\bm{\theta}^{*}_{S},\bm{\theta}_{D}^{*})$.
We note here that the Taylor expansion of $\hat{L}(\bm{\theta}_{S},\bm{\theta}_{D}^{*};p)$ on $\bm{\theta}_{S}$ always exists, since the empirical loss function $\hat{L}_{n}$ has continuous derivatives with respect to $f_{S}$ up to the $p$-th order and the output of the neural network $f(x;\bm{\theta}_{S})$ is infinitely differentiable with respect to $\bm{\theta}_{S}$ due to the fact that neuron activation function $\sigma$ is real analytic.

We first calculate the first order derivatives at the point $(\bm{\theta}^{*}_{S},\bm{\theta}_{D}^{*})$
\begin{align*}
\frac{d\hat{L}_{n}(\bm{\theta}^{*})}{da_{1}}&=\frac{1}{n}\sum_{i=1}^{n}\ell_{p}'(-y_{i}f(x_{i};\bm{\theta}^{*}))(-y_{i})\sigma\left({\bm{w}_{1}^{*}}^{\top}x_{i}\right)=0,&& \text{$\bm{\theta}^{*}$ is a critical point,}\\
\nabla_{\bm{w}_{1}}\hat{L}_{n}(\bm{\theta}^{*})&=\frac{a^{*}_{1}}{n}\sum_{i=1}^{n}\ell_{p}'(-y_{i}f(x_{i};\bm{\theta}^{*}))(-y_{i})\sigma'\left({\bm{w}_{1}^{*}}^{\top}x_{i}\right)x_{i}=\bm{0}_{d},&& \text{$\bm{\theta}^{*}$ is a critical point.}
\end{align*}
Next, we calculate the second order derivatives at the point $(\bm{\theta}^{*}_{S},\bm{\theta}_{D}^{*})$,
\begin{align*}
\frac{d^{2}\hat{L}(\bm{\theta}^{*})}{da_{1}^{2}}&=\frac{1}{n}\sum_{i=1}^{N}\ell''_{p}(-y_{i}f(x_{i};\bm{\theta}^{*}))\sigma^{2}\left({\bm{w}_{1}^{*}}^{\top}x_{i}\right)\ge 0,\\
\frac{d}{da_{1}}(\nabla_{\bm{w}_{1}}L(\bm{\theta}^{*}))&=\frac{1}{n}\sum_{i=1}^{n}\ell_{p}'(-y_{i}f(x_{i};\bm{\theta}^{*}))(-y_{i})\sigma'\left({\bm{w}_{1}^{*}}^{\top}x_{i}\right)x_{i}\\
&\quad+\frac{a^{*}_{1}}{n}\sum_{i=1}^{n}\ell''_{p}(-y_{i}f(x_{i};\bm{\theta}^{*}))\sigma\left({\bm{w}_{1}^{*}}^{\top}x_{i}\right)\sigma'\left({\bm{w}_{1}^{*}}^{\top}x_{i}\right)x_{i}\\
&=\bm{0}_{d},
\end{align*}
where the first term equals to the zero vector by  the necessary condition for a local minima presented in Lemma~\ref{lemma::nec-single} and the second term equals to the zero vector by the assumption that $a^{*}_{1}=0$. Furthermore, by the assumption that $a^{*}_{1}=0$, we have 
\begin{equation*}
\nabla^{2}_{\bm{w}_{1}}\hat{L}_{n}(\bm{\theta}^{*};p)=\frac{a_{1}^{*}}{n}\nabla_{w_{1}}\left[\sum_{i=1}^{n}\ell_{p}'(-y_{i}f(x_{i};\bm{\theta}))(-y_{i})\sigma'\left({\bm{w}_{1}^{*}}^{\top}x_{i}\right)x_{i}\right]=\bm{0}_{d\times d}.
\end{equation*}
We further calculate the third order derivatives 
\begin{align*}
\frac{d}{da_{1}}\left[\nabla_{\bm{w}_{1}}^{2}{ \hat{L}_{n}(\bm{\theta}^{*};p)}\right]&=\frac{d}{da_{1}}\left[a_{1}^{*}\nabla_{\bm{w}_{1}}\left[\frac{1}{n}\sum_{i=1}^{n}\ell_{p}'(-y_{i}f(x_{i};\bm{\theta}))(-y_{i})\sigma'\left({\bm{w}_{1}^{*}}^{\top}x_{i}\right)x_{i}\right]\right]\\
&=\nabla_{\bm{w}_{1}}\left[\frac{1}{n}\sum_{i=1}^{n}\ell_{p}'(-y_{i}f(x_{i};\bm{\theta}))(-y_{i})\sigma'\left({\bm{w}_{1}^{*}}^{\top}x_{i}\right)x_{i}\right]+\bm{0}_{d\times d}&& \text{by $a_{1}^{*}=0$}\\
&=\frac{1}{n}\sum_{i=1}^{n}\ell_{p}'(-y_{i}f(x_{i};\bm{\theta}))(-y_{i})\sigma''\left({\bm{w}_{1}^{*}}^{\top}x_{i}\right)x_{i}x_{i}^{\top}\\
&\quad+\frac{a^{*}_{1}}{n}\sum_{i=1}^{n}\ell_{p}''(-y_{i}f(x_{i};\bm{\theta}))\left[\sigma'\left({\bm{w}_{1}^{*}}^{\top}x_{i}\right)\right]^{2}x_{i}x_{i}^{\top}\\
&=\frac{1}{n}\sum_{i=1}^{n}\ell_{p}'(-y_{i}f(x_{i};\bm{\theta}))(-y_{i})\sigma''\left({\bm{w}_{1}^{*}}^{\top}x_{i}\right)x_{i}x_{i}^{\top}&& \text{by $a_{1}^{*}=0$}
\end{align*}
and
$$\nabla^{3}_{\bm{w}_{1}}\hat{L}_{n}(\bm{\theta}^{*};p)=a^{*}_{1}\nabla^{2}_{\bm{w}_{1}}\left[\frac{1}{n}\sum_{i=1}^{n}\ell_{p}'(-y_{i}f(x_{i};\bm{\theta}))(-y_{i})\sigma'\left({\bm{w}_{1}^{*}}^{\top}x_{i}\right)x_{i}\right]=\bm{0}_{d\times d\times d}.$$
In fact, it is easy to show that for any $2\le k\le p$, 
$$\nabla^{k}_{\bm{w}_{1}}\hat{L}_{n}(\bm{\theta}^{*};p)=a_{1}^{*}\nabla^{k-1}_{\bm{w}_{1}}\left[\frac{1}{n}\sum_{i=1}^{n}\ell_{p}'(-y_{i}f(x_{i};\bm{\theta}))(-y_{i})\sigma'\left({\bm{w}_{1}^{*}}^{\top}x_{i}\right)x_{i}\right]=\bm{0}_{\underbrace{d\times d\times  ...\times d}_{ \text{$k$ times}}}.$$
Let $\varepsilon>0$, $\Delta a_{1}=\text{sgn}(a_{1})\varepsilon^{9/4}$ and $\Delta \bm{w}_{1}=\varepsilon \bm{u}_{1}$ for $\bm{u}_{1}:\|\bm{u}_{1}\|_{2}=1$. Clearly, when $\varepsilon\rightarrow 0$, $\Delta a_{1}=o(\|\Delta \bm{w}_{1}\|_{2})$, $\Delta a_{1}=o(1)$ and $\|\Delta \bm{w}_{1}\|=o(1)$. Then we expand $\hat{L}_{n}(\tilde{\bm{\theta}}_{S},\bm{\theta}_{D}^{*})$ at the point $\bm{\theta}^{*}$ up to the sixth order  and thus as $\varepsilon\rightarrow 0$,
\begin{align*}
\hat{L}_{n}(\tilde{\bm{\theta}}_{S}, \bm{\theta}_{D}^{*})&=\hat{L}_{n}({\bm{\theta}}^{*}_{S}, \bm{\theta}_{D}^{*})+\frac{1}{2!n}\frac{d^{2}\hat{L}_{n}(\bm{\theta}^{*})}{d^{2}a_{1}}(\Delta a_{1})^{2}\\
&\quad+\frac{1}{2n}\Delta a_{1}\Delta \bm{w}_{1}^{\top}\frac{d}{da_{1}}\left[\bm{D}_{\bm{w}_{1}}^{2}{ \hat{L}_{n}(\bm{\theta}^{*};p)}\right]\Delta \bm{w}_{1} + o(|a_{1}|^{2})+o(|a_{1}|\|\bm{w}_{1}\|^{2}_{2})+o(\|\Delta \bm{w}_{1}\|_{2}^{5})\\
&=\hat{L}_{n}({\bm{\theta}}^{*}_{S}, \bm{\theta}_{D}^{*})+\frac{1}{2!n}\frac{d^{2}\hat{L}_{n}(\bm{\theta}^{*})}{d^{2}a_{1}}\varepsilon^{9/2}+\frac{1}{2n}\text{sgn}(a_{1}) \varepsilon^{9/4+2}\sum_{i=1}^{n}\ell_{p}'(-y_{i}f(x_{i};\bm{\theta}))\sigma''\left({\bm{w}_{1}^{*}}^{\top}x_{i}\right)(\bm{u}_{1}^{\top}x_{i})^{2}\\
&\quad+o(\varepsilon^{9/2})+o(\varepsilon^{9/4+2})+o(\varepsilon^{5})\\
&=\hat{L}_{n}({\bm{\theta}}^{*}_{S}, \bm{\theta}_{D}^{*})+\frac{1}{2n}\text{sgn}(a_{1})\varepsilon^{17/4}\sum_{i=1}^{n}\ell_{p}'(-y_{i}f(x_{i};\bm{\theta}))(-y_{i})\sigma''\left({\bm{w}_{1}^{*}}^{\top}x_{i}\right)(\bm{u}_{1}^{\top}x_{i})^{2}+o(\varepsilon^{17/4})
\end{align*}
Since $\varepsilon>0$ and $\hat{L}_{n}(\tilde{\bm{\theta}}_{S},\bm{\theta}^{*}_{D};p)\ge \loss$ holds for any $\bm{u}_{1}:\|\bm{u}_{1}\|_{2}=1$ and any $\sgn(a_{1})\in\{-1, 1\}$, then 
\begin{equation}\label{eq::thm3-part1-cond}\sum_{i=1}^{n}\ell_{p}'(-y_{i}f(x_{i};\bm{\theta}))(-y_{i})\sigma''\left({\bm{w}_{1}^{*}}^{\top}x_{i}\right)(\bm{u}^{\top}x_{i})^{2}=0, \quad\text{for any } \bm{u}\in\mathbb{R}^{d}.\end{equation}
Therefore, 
\begin{equation*}
\sum_{i=1}^{n}\ell_{p}'(-y_{i}f(x_{i};\bm{\theta}))(-y_{i})\sigma''\left({\bm{w}_{1}^{*}}^{\top}x_{i}\right)x_{i}x_{i}^{\top}=\bm{0}_{d\times d}.
\end{equation*}
By assumption that there exists a set of orthogonal basis $\mathcal{E}=\{\bm{e}_{1},...,\bm{e}_{d}\}$ in $\mathbb{R}^{d}$ and a subset $\mathcal{U}_{+}\subseteq \mathcal{E}$  such that $\mathbb{P}_{\bm{X}|Y}(\bm{X}\in\text{Span}(\mathcal{U}_{1})|Y=1)=1$ and by assumption that $r=|\mathcal{U}_{+}\cup \mathcal{U}_{-}|>\max\{r_{+},r_{-}\}=\max\{|\mathcal{U}_{+}|,|\mathcal{U}_{-}|\}$, then the set $\mathcal{U}_{+}\backslash \mathcal{U}_{-}$ is not an empty set. It is easy to show that for any vector $\bm{v}\in\mathcal{U}_{+}\backslash\mathcal{U}_{-}$, $\mathbb{P}_{\bm{X}\times Y}(\bm{v}^{\top}\bm{X}=0|Y=1)=0$. Otherwise, if $p=\mathbb{P}_{\bm{X}\times Y}(\bm{v}^{\top}\bm{X}=0|Y=1)>0$, then for random vectors $\bm{X}_{1},...,\bm{X}_{|\mathcal{U}_{+}|}$ independently drawn from the conditional distribution $\mathbb{P}_{\bm{X}| Y=1}$, 
\begin{align*}
\mathbb{P}_{\bm{X} |Y=1}\left(\bigcup_{i=1}^{|\mathcal{U}_{+}|}\left\{\bm{v}^{\top}\bm{X}_{i}=0\right\}\Bigg|Y=1\right)&=\prod_{i=1}^{|\mathcal{U}_{+}|}\mathbb{P}_{\bm{X}|Y=1}\left(\bm{v}^{\top}\bm{X}_{i}=0|Y=1\right)=p^{|\mathcal{U}_{+}|}>0.
\end{align*} 
Furthermore, since $\bm{X}_{1},...,\bm{X}_{|\mathcal{U}_{+}|}\in\text{Span}(\mathcal{U}_{+})$, $\bm{v}^{\top}\bm{X}_{i}=0$, $i=1,...,|\mathcal{U}_{+}|$ and $\bm{v}\in\mathcal{U}_{+}$, then the rank of the matrix $\left(\bm{X}_{1},...,\bm{X}_{|\mathcal{U}_{+}|}\right)$ is at most $|\mathcal{U}_{+}|-1$ and this indicates that the matrix is not a full rank matrix with probability $p^{|\mathcal{U}_{+}|}>0$. This leads to the contradiction with the Assumption~\ref{assump::full-rank}. Thus, with probability 1, $\bm{v}^{\top}x_{i}\neq 0$ for all $i:y_{i}=1$ and $\bm{v}^{\top}x_{i}= 0$ for all $i:y_{i}=-1$.

Therefore, by setting $\bm{u}=\bm{v}$ in Equation~\eqref{eq::thm3-part1-cond}, we have 
\begin{align*}
0=-\sum_{i:y_{i}=1}\ell'_{p}(-y_{i}f(x_{i};\bm{\theta}^{*}))\sigma''({\bm{w}_{1}^{*}}^{\top}x_{i})(\bm{v}^{\top}x_{i})^{2}\le 0,
\end{align*}
where the equality holds if and only if $\forall i: y_{i}=1$, $\ell_{p}'(-y_{i}f(x_{i};\bm{\theta}^{*}))=0$ and this further indicates that $\forall i: y_{i}=1$, $y_{i}f(x_{i};\bm{\theta}^{*})\ge z_{0}>0$. 
Furthermore, since $\bm{\theta}^{*}$ is a critical point and thus 
\begin{align*}
0=\frac{d\loss}{da_{0}}&=\frac{1}{n}\sum_{i=1}^{n}\ell_{p}'(-y_{i}f(x_{i};\bm{\theta}^{*}))(-y_{i})=-\frac{1}{n}\sum_{i:y_{i}=1}\ell_{p}'(-y_{i}f(x_{i};\bm{\theta}^{*}))+\frac{1}{n}\sum_{i:y_{i}=-1}\ell_{p}'(-y_{i}f(x_{i};\bm{\theta}^{*}))\\
&=\frac{1}{n}\sum_{i:y_{i}=-1}\ell_{p}'(-y_{i}f(x_{i};\bm{\theta}^{*})).
\end{align*}
Therefore, $\forall i: y_{i}=-1$, $y_{i}f(x_{i};\bm{\theta}^{*})\ge z_{0}>0$ and this indicates that $\hat{R}_{n}(\bm{\theta}^{*})=0.$

\textbf{(b) Proof of Claim 2:} To prove the claim 2, we first prove that if $M>r$, then there exists coefficients $\alpha_{1},...,\alpha_{M}$, not all zero, such that $$\left(\alpha_{1}\bm{w}_{1}^{*}+...+\alpha_{M}\bm{w}_{M}^{*}\right)^{\top}x_{i}=0,\quad \text{for all }i\in[n].$$
Since we assume that $\mathcal{U}_{+}\subseteq \mathcal{E}$ and $\mathcal{U}_{-}\subseteq \mathcal{E}$  such that $\mathbb{P}_{\bm{X}|Y}(\bm{X}\in\text{Span}(\mathcal{U}_{+})|Y=1)=1$ and $\mathbb{P}_{\bm{X}|Y}(\bm{X}\in\text{Span}(\mathcal{U}_{-})|Y=-1)=1$,
then without loss generality, we assume that $x_{i}$s locate in the linear span of $\{\bm{e}_{1},...,\bm{e}_{r}\}\subseteq\{\bm{e}_{1},...,\bm{e}_{d}\}$ (note that $r=|\mathcal{U}_{+}\cup\mathcal{U}_{-}|$). Clearly, for any $\bm{w}_{1}^{*},...,\bm{w}_{M}^{*}$, if $M> r$, then there exists coefficients $\alpha_{1},...,\alpha_{M}$, not all zero, such that 
\begin{align*}
&\alpha_{1}\bm{w}_{1}^{*}+...+\alpha_{M}\bm{w}_{M}^{*}\in\text{Span}(\{\bm{e}_{r+1},...,\bm{e}_{d}\}), &&\text{if } r<d,\\
&\alpha_{1}\bm{w}_{1}^{*}+...+\alpha_{M}\bm{w}_{M}^{*}=\bm{0}_{d}, &&\text{if } r=d.
\end{align*}
Therefore, if $M>r$, then there exists coefficients $\alpha_{1},...,\alpha_{M}$, not all zero, such that $$(\alpha_{1}\bm{w}_{1}^{*}+...+\alpha_{M}\bm{w}_{M}^{*})^{\top}x_{i}=0,\quad \text{for all }i\in[n].$$
Now we prove the claim 2. First, we consider the Hessian matrix $H(\bm{w}_{1}^{*},...,\bm{w}_{M}^{*})$. Since $\bm{\theta}^{*}$ is a local minima, then 
\begin{equation*}
F(\bm{u}_{1},...,\bm{u}_{M})=\sum_{j=1}^{M}\sum_{k=1}^{M}\bm{u}_{j}^{\top}\nabla^{2}_{\bm{w}_{j},\bm{w}_{k}}\loss \bm{u}_{k}\ge 0
\end{equation*}
holds for any vectors $\bm{u}_{1},...,\bm{u}_{M}\in\mathbb{R}^{d}$. 
Since $\sigma''(z)=2$ and $\sigma'(z)=2z$ for all $z\in\mathbb{R}$, then
\begin{align*}
\nabla_{\bm{w}_{j}}^{2}\loss&=\frac{a_{j}^{*}}{n}\sum_{i=1}^{n}\ell_{p}'(-y_{i}f(x_{i};\bm{\theta}^{*}))(-y_{i})\sigma''\left({\bm{w}_{j}^{*}}^{\top}x_{i}\right)x_{i}x_{i}^{\top}\\
&\quad +\frac{{a_{j}^{*}}^{2}}{n}\sum_{i=1}^{n}\ell_{p}''(-y_{i}f(x_{i};\bm{\theta}^{*}))\left[\sigma'\left({\bm{w}_{j}^{*}}^{\top}x_{i}\right)\right]^{2}x_{i}x_{i}^{\top}\\
&=-\frac{2a_{j}^{*}}{n}\sum_{i=1}^{n}\ell_{p}'(-y_{i}f(x_{i};\bm{\theta}^{*}))y_{i}x_{i}x_{i}^{\top}+\frac{4{a_{j}^{*}}^{2}}{n}\sum_{i=1}^{n}\ell_{p}''(-y_{i}f(x_{i};\bm{\theta}^{*}))\left({\bm{w}_{j}^{*}}^{\top}x_{i}\right)^{2}x_{i}x_{i}^{\top},
\end{align*}
and 
\begin{align*}
\nabla_{\bm{w}_{j},\bm{w}_{k}}^{2}\loss&=\frac{{a_{j}^{*}}a_{k}^{*}}{n}\sum_{i=1}^{n}\ell_{p}''(-y_{i}f(x_{i};\bm{\theta}^{*}))\left[\sigma'\left({\bm{w}_{j}^{*}}^{\top}x_{i}\right)\right]\left[\sigma'\left({\bm{w}_{k}^{*}}^{\top}x_{i}\right)\right]x_{i}x_{i}^{\top}\\
&=\frac{4{a_{j}^{*}}a_{k}^{*}}{n}\sum_{i=1}^{n}\ell_{p}''(-y_{i}f(x_{i};\bm{\theta}^{*}))\left({\bm{w}_{j}^{*}}^{\top}x_{i}\right)\left({\bm{w}_{k}^{*}}^{\top}x_{i}\right)x_{i}x_{i}^{\top}.
\end{align*}
Thus, we have 
\begin{align*}
F(\bm{u}_{1},...,\bm{u}_{M})&=-2\sum_{j=1}^{M}\left[\frac{a_{j}^{*}}{n}\sum_{i=1}^{n}\ell_{p}'(-y_{i}f(x_{i};\bm{\theta}^{*}))y_{i}\left(\bm{u}_{j}^{\top}x_{i}\right)^{2}\right]\\
&\quad +4\sum_{j=1}^{M}\sum_{k=1}^{M}\left[\frac{{a_{j}^{*}}a_{k}^{*}}{n}\sum_{i=1}^{n}\ell_{p}''(-y_{i}f(x_{i};\bm{\theta}^{*}))\left({\bm{w}_{j}^{*}}^{\top}x_{i}\right)\left({\bm{w}_{k}^{*}}^{\top}x_{i}\right)\left(\bm{u}_{j}^{\top}x_{i}\right)\left(\bm{u}_{k}^{\top}x_{i}\right)\right]\\
&=-\frac{2}{n}\sum_{j=1}^{M}\left[a_{j}^{*}\sum_{i=1}^{n}\ell_{p}'(-y_{i}f(x_{i};\bm{\theta}^{*}))y_{i}\left(\bm{u}_{j}^{\top}x_{i}\right)^{2}\right]\\
&\quad +\frac{4}{n}\sum_{i=1}^{n}\left[\ell_{p}''(-y_{i}f(x_{i};\bm{\theta}^{*}))\left(\sum_{j=1}^{M}a_{j}^{*}\left({\bm{w}_{j}^{*}}^{\top}x_{i}\right)\left(\bm{u}_{j}^{\top}x_{i}\right)\right)^{2}\right].
\end{align*}
Since there exists coefficients $\alpha_{1},...,\alpha_{M}$, not all zero, such that $(\alpha_{1}\bm{w}_{1}^{*}+...+\alpha_{M}\bm{w}_{M}^{*})^{\top}x_{i}=0$, for all $i\in[n],$ and $a_{j}^{*}\neq 0$ for all $j\in[M]$ then by setting $\bm{u}_{j}=\alpha_{j}\bm{u}/a_{j}^{*}$ for all $j\in[M]$, we have that the inequality
\begin{align*}
F(\bm{u}_{1},...,\bm{u}_{M})&=-\frac{2}{n}\sum_{j=1}^{M}\left[a_{j}^{*}\sum_{i=1}^{n}\ell_{p}'(-y_{i}f(x_{i};\bm{\theta}^{*}))y_{i}\left(\alpha_{j}/a_{j}^{*}\right)^{2}\left(\bm{u}^{\top}x_{i}\right)^{2}\right]\\
&\quad +\frac{4}{n}\sum_{i=1}^{n}\left[\ell_{p}''(-y_{i}f(x_{i};\bm{\theta}^{*}))\left(\sum_{j=1}^{M}\alpha_{j}\left({\bm{w}_{j}^{*}}^{\top}x_{i}\right)\left(\bm{u}^{\top}x_{i}\right)\right)^{2}\right]\\
&=-\frac{2}{n}\sum_{j=1}^{M}\left[a_{j}^{*}\sum_{i=1}^{n}\ell_{p}'(-y_{i}f(x_{i};\bm{\theta}^{*}))y_{i}\left(\alpha_{j}/a_{j}^{*}\right)^{2}\left(\bm{u}^{\top}x_{i}\right)^{2}\right]\\
&\quad +\frac{4}{n}\sum_{i=1}^{n}\left[\ell_{p}''(-y_{i}f(x_{i};\bm{\theta}^{*}))\left(\left(\sum_{j=1}^{M}\alpha_{j}{\bm{w}_{j}^{*}}\right)^{\top}x_{i}\right)^{2}\left(\bm{u}^{\top}x_{i}\right)^{2}\right]\\
&=-\frac{2}{n}\sum_{j=1}^{M}\left(\alpha_{j}^{2}/a_{j}^{*}\right)\cdot\sum_{i=1}^{n}\ell_{p}'(-y_{i}f(x_{i};\bm{\theta}^{*}))y_{i}\left(\bm{u}^{\top}x_{i}\right)^{2}\ge 0
\end{align*}
holds for any $\bm{u}\in\mathbb{R}^{d}$.

Next we consider the following two cases: (1) $\sum_{j=1}^{M}\left(\alpha_{j}^{2}/a_{j}^{*}\right)\neq 0$; (2) $\sum_{j=1}^{M}\left(\alpha_{j}^{2}/a_{j}^{*}\right)=0$. 

\textbf{Case 1: } If $\sum_{j=1}^{M}\left(\alpha_{j}^{2}/a_{j}^{*}\right)\neq 0$, then without loss of generality, we assume that $\sum_{j=1}^{M}\left(\alpha_{j}^{2}/a_{j}^{*}\right)<0$. This indicates that $$\sum_{i=1}^{n}\ell_{p}'(-y_{i}f(x_{i};\bm{\theta}^{*}))y_{i}\left(\bm{u}^{\top}x_{i}\right)^{2}\ge 0,\quad \text{for all }\bm{u}\in\mathbb{R}^{d}.$$
By the assumption that there exists two vectors $\bm{e}_{r},\bm{e}_{s}$ such that $\forall i:y_{i}=1$, $\bm{e}_{r}^{\top}x_{i}=0$, $\bm{e}^{\top}_{s}x_{i}\neq 0$ hold with probability 1 and $\forall i:y_{i}=-1$, $\bm{e}_{s}^{\top}x_{i}=0$, $\bm{e}^{\top}_{r}x_{i}\neq 0$ hold with probability 1, then by setting $\bm{u}=\bm{e}_{r}$, we have that 
\begin{align*}
0\le \sum_{i=1}^{n}\ell_{p}'(-y_{i}f(x_{i};\bm{\theta}^{*}))y_{i}\left(\bm{e}_{r}^{\top}x_{i}\right)^{2}&=-\sum_{i:y_{i}=-1}\ell_{p}'(-y_{i}f(x_{i};\bm{\theta}^{*}))\left(\bm{e}_{r}^{\top}x_{i}\right)^{2}\le 0,
\end{align*}
where the equality holds if and only if $\ell_{p}'(-y_{i}f(x_{i};\bm{\theta}^{*}))=0$ or $y_{i}f(x_{i};\bm{\theta}^{*})\ge z_{0}>0$ holds for all $i:y_{i}=-1$. Furthermore, since $\bm{\theta}^{*}$ is a local minima and thus 
\begin{align*}
0=\frac{d\loss}{da_{0}}&=\sum_{i=1}^{n}\ell_{p}'(-y_{i}f(x_{i};\bm{\theta}^{*}))(-y_{i})=-\sum_{i:y_{i}=1}\ell_{p}'(-y_{i}f(x_{i};\bm{\theta}^{*}))+\sum_{i:y_{i}=-1}\ell_{p}'(-y_{i}f(x_{i};\bm{\theta}^{*}))\\
&=-\sum_{i:y_{i}=1}\ell_{p}'(-y_{i}f(x_{i};\bm{\theta}^{*})).
\end{align*}
This means when $\ell_{p}'(-y_{i}f(x_{i};\bm{\theta}^{*}))=0$ holds for all $i:y_{i}=-1$, we have $\ell_{p}'(-y_{i}f(x_{i};\bm{\theta}^{*}))=0$ for all $i:y_{i}=1$. These two together give us $\error =0$. When $\sum_{j=1}^{M}\left(\alpha_{j}^{2}/a_{j}^{*}\right)>0$, by setting $\bm{u}=\bm{e}_{s}$ and  following the similar analysis presented above, we can obtain the same result. Therefore, when $\sum_{j=1}^{M}\left(\alpha_{j}^{2}/a_{j}^{*}\right)\neq 0$, we have $\error =0$. 

\textbf{{Case 2}:} If $\sum_{j=1}^{M}\left(\alpha_{j}^{2}/a_{j}^{*}\right)= 0$,  then by setting $\bm{u}_{j}=(\alpha_{j}/a_{j}^{*}+v\sgn(\alpha_{j}))\bm{u}$ for some scalar $v$ and vector $\bm{u}\in\mathbb{R}^{d}$, we have 
\begin{align*}
F(v,\bm{u})&=-\frac{2}{n}\sum_{j=1}^{M}\left[a_{j}^{*}\sum_{i=1}^{n}\ell_{p}'(-y_{i}f(x_{i};\bm{\theta}^{*}))y_{i}\left((\alpha_{j}/a_{j}^{*}+v\sgn(\alpha_{j}))\bm{u}^{\top}x_{i}\right)^{2}\right]\\
&\quad +\frac{4}{n}\sum_{i=1}^{n}\left[\ell_{p}''(-y_{i}f(x_{i};\bm{\theta}^{*}))\left(\sum_{j=1}^{M}a_{j}^{*}\left({\bm{w}_{j}^{*}}^{\top}x_{i}\right)\left((\alpha_{j}/a_{j}^{*}+v\sgn(\alpha_{j}))\bm{u}^{\top}x_{i}\right)\right)^{2}\right]\\
&=-\frac{2}{n}\sum_{j=1}^{M}\left[a_{j}^{*}\sum_{i=1}^{n}\ell_{p}'(-y_{i}f(x_{i};\bm{\theta}^{*}))y_{i}\left((\alpha_{j}/a_{j}^{*}+v\sgn(\alpha_{j}))\bm{u}^{\top}x_{i}\right)^{2}\right]\\
&\quad +\frac{4}{n}\sum_{i=1}^{n}\left[\ell_{p}''(-y_{i}f(x_{i};\bm{\theta}^{*}))\left(\left(\sum_{j=1}^{M}(\alpha_{j}+v\sgn(\alpha_{j})a^{*}_{j})\bm{w}_{j}^{*}\right)^{\top}x_{i}\right)\left(\bm{u}^{\top}x_{i}\right)^{2}\right]\\
&=-\frac{2}{n}\sum_{j=1}^{M}\left[a_{j}^{*}\sum_{i=1}^{n}\ell_{p}'(-y_{i}f(x_{i};\bm{\theta}^{*}))y_{i}\left((\alpha_{j}/a_{j}^{*}+v\sgn(\alpha_{j}))\bm{u}^{\top}x_{i}\right)^{2}\right]\\
&\quad +4v^{2}\sum_{i=1}^{n}\left[\ell_{p}''(-y_{i}f(x_{i};\bm{\theta}^{*}))\left(\left(\sum_{j=1}^{M}\sgn(\alpha_{j})a_{j}^{*}\bm{w}_{j}^{*}\right)^{\top}x_{i}\right)^{2}\left(\bm{u}^{\top}x_{i}\right)^{2}\right]\\
&\triangleq  -\frac{2}{n}\sum_{j=1}^{M}\left[a_{j}^{*}\sum_{i=1}^{n}\ell_{p}'(-y_{i}f(x_{i};\bm{\theta}^{*}))y_{i}\left((\alpha_{j}/a_{j}^{*}+v\sgn(\alpha_{j}))\bm{u}^{\top}x_{i}\right)^{2}\right]+v^{2}R(\bm{u}),
\end{align*}
where we define 
$$R(\bm{u})=\frac{4}{n}\sum_{i=1}^{n}\left[\ell_{p}''(-y_{i}f(x_{i};\bm{\theta}^{*}))\left(\left(\sum_{j=1}^{M}\sgn(\alpha_{j})a_{j}^{*}\bm{w}_{j}^{*}\right)^{\top}x_{i}\right)^{2}\left(\bm{u}^{\top}x_{i}\right)^{2}\right].$$
In addition, we have 
\begin{align*}
\sum_{j=1}^{M}&\left[a_{j}^{*}\sum_{i=1}^{n}\ell_{p}'(-y_{i}f(x_{i};\bm{\theta}^{*}))y_{i}\left((\alpha_{j}/a_{j}^{*}+v\sgn(\alpha_{j}))\bm{u}^{\top}x_{i}\right)^{2}\right]\\
&=\sum_{i=1}^{n}\ell'_{p}(-y_{i}f(x_{i};\bm{\theta}))y_{i}(\bm{u}^{\top}x_{i})^{2}\cdot\left[\sum_{j=1}^{M}(\alpha_{j}^{2}/a_{j}^{*}+2v\sgn(\alpha_{j})\alpha_{j}+v^{2}a_{j}^{*})\right]\\
&=\sum_{i=1}^{n}\ell'_{p}(-y_{i}f(x_{i};\bm{\theta}))y_{i}(\bm{u}^{\top}x_{i})^{2}\cdot\left[\sum_{j=1}^{M}(2v\sgn(\alpha_{j})\alpha_{j}+v^{2}a_{j}^{*})\right]\\
&=2v\left[\sum_{j=1}^{M}|\alpha_{j}|\right]\sum_{i=1}^{n}\ell'_{p}(-y_{i}f(x_{i};\bm{\theta}))y_{i}(\bm{u}^{\top}x_{i})^{2}+v^{2}\left[\sum_{j=1}^{M}a_{j}^{*}\right]\sum_{i=1}^{n}\ell'_{p}(-y_{i}f(x_{i};\bm{\theta}))y_{i}(\bm{u}^{\top}x_{i})^{2}.
\end{align*}
Therefore, we can rewrite $F(v, \bm{u})$ as 
\begin{align*}
F(v,\bm{u})&=-\frac{4v}{n}\sum_{j=1}^{M}|\alpha_{j}|\sum_{i=1}^{n}\ell'_{p}(-y_{i}f(x_{i};\bm{\theta}))y_{i}(\bm{u}^{\top}x_{i})^{2}-\frac{2v^{2}}{n}\sum_{j=1}^{M}a_{j}^{*}\cdot\sum_{i=1}^{n}\ell'_{p}(-y_{i}f(x_{i};\bm{\theta}))y_{i}(\bm{u}^{\top}x_{i})^{2}+v^{2}R(\bm{u})\\
&\triangleq-\frac{4v}{n}\sum_{j=1}^{M}|\alpha_{j}|\sum_{i=1}^{n}\ell'_{p}(-y_{i}f(x_{i};\bm{\theta}))y_{i}(\bm{u}^{\top}x_{i})^{2}+v^{2}\hat{R}(\bm{u})
\end{align*}
Since $F(\bm{v},\bm{u})\ge 0$ holds for any scalar $v$ and vector $\bm{u}\in\mathbb{R}^{d}$, then we should have 
$$\sum_{j=1}^{M}|\alpha_{j}|\sum_{i=1}^{n}\ell'_{p}(-y_{i}f(x_{i};\bm{\theta}))y_{i}(\bm{u}^{\top}x_{i})^{2}=0,\quad \text{ for any }\bm{u}\in\mathbb{R}^{d}. $$
Since the coefficient $\alpha_{1},...,\alpha_{M}$ are not all zero, then for any $\bm{u}\in\mathbb{R}^{d}$, we have 
$$\sum_{i=1}^{n}\ell'_{p}(-y_{i}f(x_{i};\bm{\theta}))y_{i}(\bm{u}^{\top}x_{i})^{2}=0. $$
Since there exists two vectors $\bm{e}_{r},\bm{e}_{s}$: $\forall i:y_{i}=1$, $\bm{e}_{r}^{\top}x_{i}=0$ and $\bm{e}^{\top}_{s}x_{i}\neq 0$ hold with probability 1 and $\forall i:y_{i}=-1$, $\bm{e}_{s}^{\top}x_{i}=0$ and $\bm{e}^{\top}_{r}x_{i}\neq 0$ hold with probability 1, then by setting $\bm{u}=\bm{e}_{r}$, we have 
\begin{align*}
0=\sum_{i=1}^{n}\ell'_{p}(-y_{i}f(x_{i};\bm{\theta}))y_{i}(\bm{e}_{r}^{\top}x_{i})^{2}=-\sum_{i:y_{i}=-1}\ell'_{p}(-y_{i}f(x_{i};\bm{\theta}))(\bm{e}_{r}^{\top}x_{i})^{2}\le 0,
\end{align*}
where the equality holds if and only if $\ell_{p}'(-y_{i}f(x_{i};\bm{\theta}^{*}))=0$ or $y_{i}f(x_{i};\bm{\theta}^{*})\ge z_{0}>0$ holds for all $i:y_{i}=-1$. Similar to the case 1, we have that $\ell_{p}'(-y_{i}f(x_{i};\bm{\theta}^{*}))=0$ holds for all $i$ and this leads to $\error =0$.  
\end{proof}

\clearpage
\subsection{Proof of Theorem~\ref{thm::linear-sep-deep}}\label{appendix::linear-sep}
\begin{theorem}
 Assume that the loss function $\ell_{p}$ satisfies  assumption~\ref{assump::loss} and the network architecture satisfies assumption~\ref{assump::shortcut-connection}.
 Assume that  samples in the dataset $\mathcal{D}=\{(x_{i},y_{i})\}_{i=1}^{n}, n\ge 1$ are independently drawn from a distribution satisfying assumption~\ref{assump::linear-sep}. Assume that the single layer network $f_{S}$ has $M\ge1$ neurons and neurons $\sigma$ in the network $f_{S}$ are twice differentiable and satisfy $\sigma'(z)>0$ for all $z\in\mathbb{R}$. If a set of real parameters $\bm{\theta}^{*}=(\bm{\theta}^{*}_{S},\bm{\theta}^{*}_{D})$ denotes a local minimum of the loss function $\hat{L}_{n}(\bm{\theta}^{}_{S},\bm{\theta}^{}_{D};p)$, $p\ge 3$, then  $\hat{R}_{n}(\bm{\theta}^{*}_{S},\bm{\theta}^{*}_{D})=0$ holds with probability one.
\end{theorem}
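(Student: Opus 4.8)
The plan is to reduce everything to the first-order necessary condition already established in Lemma~\ref{lemma::nec-single} and then use linear separability to turn that condition into a sum of nonnegative terms. Write $c_i := \ell_p'(-y_i f(x_i;\bm{\theta}^*))$. Since $\ell_p$ is non-decreasing by Assumption~\ref{assump::loss}, every $c_i \ge 0$. Because $p \ge 3$ guarantees $\ell_p \in C^3$ and the neurons $\sigma$ in $f_S$ are twice differentiable, Lemma~\ref{lemma::nec-single} applies and gives, for every $j \in [M]$,
$$\sum_{i=1}^n c_i\, y_i\, \sigma'({\bm{w}_j^*}^\top x_i)\, x_i = \bm{0}_d.$$
Note this holds whether or not $a_j^* = 0$, so the split into two claims used in the proof of Theorem~\ref{thm::convex-finite-deep} is unnecessary here, and a single neuron ($M \ge 1$) already suffices.

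Next I would invoke Assumption~\ref{assump::linear-sep}: there is a vector $\bm{w}$ with $\mathbb{P}_{\bm{X}\times Y}(Y\bm{w}^\top X > 0) = 1$, so with probability one every one of the finitely many samples satisfies $y_i \bm{w}^\top x_i > 0$. Taking the inner product of the displayed identity with this $\bm{w}$, for any fixed $j$, yields
$$\sum_{i=1}^n c_i\, \sigma'({\bm{w}_j^*}^\top x_i)\,\bigl(y_i\, \bm{w}^\top x_i\bigr) = 0.$$
The key observation is that every summand is nonnegative, since $c_i \ge 0$, $\sigma'(z) > 0$ for all $z$ by hypothesis, and $y_i \bm{w}^\top x_i > 0$. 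A sum of nonnegative terms that vanishes forces each term to vanish, and because the last two factors are strictly positive this forces $c_i = 0$ for all $i \in [n]$.

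Finally, by the third condition of Assumption~\ref{assump::loss}, $c_i = \ell_p'(-y_i f(x_i;\bm{\theta}^*)) = 0$ holds iff $-y_i f(x_i;\bm{\theta}^*) \le -z_0$, i.e. $y_i f(x_i;\bm{\theta}^*) \ge z_0 > 0$; hence every sample is classified correctly and $\hat{R}_n(\bm{\theta}^*_S, \bm{\theta}^*_D) = 0$ with probability one. The subnetwork $f_D$ enters only through the scalars $c_i$ and plays no further role, which is why it may be an arbitrary feedforward network. The step I expect to matter most is conceptual rather than computational: recognizing that the heavy second-order (Hessian) analysis of Theorem~\ref{thm::convex-finite-deep} is not needed here. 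Linear separability supplies a single direction $\bm{w}$ that aligns all samples simultaneously, letting the first-order condition alone do the work; this is precisely what relaxes the neuron requirement from strict convexity ($\sigma'' > 0$) to mere monotonicity ($\sigma' > 0$) and the smoothness requirement from $p \ge 6$ down to $p \ge 3$.
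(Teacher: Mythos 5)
Your proof is correct and follows essentially the same route as the paper's own proof: both invoke Lemma~\ref{lemma::nec-single} at the local minimum and pair the resulting first-order identity with the separating direction $\bm{w}$ from Assumption~\ref{assump::linear-sep} to force every $\ell_p'(-y_i f(x_i;\bm{\theta}^{*}))$ to vanish, which by Assumption~\ref{assump::loss} gives $y_i f(x_i;\bm{\theta}^{*})\ge z_0>0$ for all $i$ and hence zero training error with probability one. The only cosmetic difference is that the paper lower-bounds the vanishing sum by $c\sum_{i}\ell_p'(-y_i f(x_i;\bm{\theta}^{*}))\sigma'({\bm{w}_j^{*}}^{\top}x_i)$ using a uniform margin constant $c>0$, whereas you argue term-by-term nonnegativity; the two are equivalent.
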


\begin{proof}
We first recall some notations defined in the paper. The output of the neural network is 
$$f(x;\bm{\theta})=f_{S}(x;\bm{\theta}_{S})+f_{D}(x;\bm{\theta}_{D}),$$
where $f_{S}(x;\bm{\theta}_{S})$ is the single layer neural network parameterized by $\bm{\theta}_{S}$, i.e., 
$$f_{S}(x;\bm{\theta}_{S})=a_{0}+\sum_{j=1}^{M}a_{j}\sigma\left(\bm{w}_{j}^{\top}x\right),$$
and $f_{D}(x;\bm{\theta}_{D})$ is a deep neural network parameterized by $\bm{\theta}_{D}$. 
The empirical loss function is given by
$$\hat{L}_{n}(\bm{\theta};p)=\hat{L}_{n}(\bm{\theta}_{S},\bm{\theta}_{D};p)=\frac{1}{n}\sum_{i=1}^{n}\ell_{p}(-y_{i}f(x_{i};\bm{\theta})).$$
By the assumption that $\bm{\theta}^{*}=(\bm{\theta}_{S}^{*},\bm{\theta}_{D}^{*})$ is a local minima and by the necessary condition presented in Lemma~\ref{lemma::nec-single}, we have 
\begin{align*}
&\sum_{i=1}^{n}\ell_{p}'(-y_{i}f(x_{i};\bm{\theta}^{*}))y_{i}\sigma'({\bm{w}^{*}_{j}}^{\top}x_{i})x_{i}=\bm{0}_{d}.
\end{align*}
Thus,  for any $\bm{w}\in\mathbb{R}^{d}$ and any $j\in[M]$, we have
\begin{equation*}
\sum_{i=1}^{n}\ell_{p}'(-y_{i}f(x_{i};\bm{\theta}^{*}))\sigma'({\bm{w}^{*}_{j}}^{\top}x_{i})y_{i}(\bm{w}^{\top}x_{i})=0.
\end{equation*}
Furthermore, by assumption $$\ell'_{p}(z)\ge 0$$
and the equality holds if and only if $z\le -z_{0}$.
Thus, by assumption that $\sigma'(z)>0$ for all $z\in\mathbb{R}$ and assumption that there exists a vector $\mathbb{P}_{\bm{X}\times Y}(Y\bm{w}^{\top}X>0)=1$, then there exists and positive constant $c>0$ such that 
$$y_{i}(\bm{w}^{\top}x_{i})>c>0,\quad \forall i\in[n].$$ Thus, we have 
\begin{align*}
0=\sum_{i=1}^{n}\ell_{p}'(-y_{i}f(x_{i};\bm{\theta}^{*}))\sigma'({\bm{w}^{*}_{j}}^{\top}x_{i})y_{i}(\bm{w}^{\top}x_{i})\ge c\sum_{i=1}^{n}\ell_{p}'(-y_{i}f(x_{i};\bm{\theta}^{*}))\sigma'({\bm{w}^{*}_{j}}^{\top}x_{i})\ge 0,
\end{align*}
where the equality holds if and only if $\ell'_{p}(-y_{i}f(x_{i};\bm{\theta}^{*}))=0$ for all $i\in[n]$. Equivalently, if $\bm{\theta}^{*}$ is a local minima, then $y_{i}f(x_{i};\bm{\theta}^{*})\ge z_{0}>0$ for all $i\in[n]$. This indicates that $L_{n}(\bm{\theta}^{*};p)=\error=0$.
\end{proof}

\clearpage

\section{Additional Results in Section~\ref{sec::discussions}}

\subsection{Proof of Proposition~\ref{prop::relus}}\label{appendix::prop-relus}

\begin{proposition}
Assume that assumption~\ref{assump::loss} and~\ref{assump::shortcut-connection} are satisfed.
Assume that neurons in the network $f_{S}$ satisfy that $\sigma(z)=0$ for all $z\le 0$ and $\sigma(z)$ is piece-wise continuous on $\mathbb{R}$. Then there exists a feedforward network $f_{D}$ and a distribution satisfying assumptions in Theorem~\ref{thm::convex-finite-deep} or~\ref{thm::linear-sep-deep} such that with probability one, the empirical loss $\hat{L}_{n}(\bm{\theta};p),p\ge2$ has a local minima $\bm{\theta}^{*}=(\bm{\theta}_{S}^{*},\bm{\theta}^{*}_{D})$ 
satisfying $\hat{R}_{n}(\bm{\theta}^{*})\ge\frac{\min\{n_{+},n_{-}\}}{n}$, where $n_{+}$ and $n_{-}$ are the number of positive and negative samples, respectively.  
\end{proposition}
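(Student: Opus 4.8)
The plan is to build an explicit counterexample in which every ReLU-type neuron is \emph{dead} on the whole training set, so that the network output is constant on $\mathcal{D}$ and the induced classifier assigns a single label to every sample. First I would choose a distribution $\mathbb{P}_{\bm X\times Y}$ on $\mathbb{R}^2$ that is linearly separable (hence satisfies the hypotheses of Theorem~\ref{thm::linear-sep-deep}) while having all its mass inside a common half-space through the origin. For instance, take $\mathbb{P}(Y=1)=\mathbb{P}(Y=-1)=\tfrac12$, let $\mathbb{P}_{\bm X\mid Y=1}$ be supported on $[-2,-1]\times[1,2]$ and $\mathbb{P}_{\bm X\mid Y=-1}$ on $[-2,-1]\times[-2,-1]$. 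Separability holds with $\bm w=e_2$ since $y_i x_{i,2}>0$ for all $i$ almost surely, while every sample obeys $e_1^\top x_i=x_{i,1}<0$. This second property is the engine of the construction: the direction $e_1$ drives every ReLU pre-activation strictly negative.

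Next I would fix $f_D$ and the candidate parameter $\bm\theta^*$. Take $f_D$ to be a single-hidden-layer ReLU network $f_D(x;\bm\theta_D)=\sum_k c_k\max\{\bm v_k^\top x,0\}+b_D$, set $\bm w_j^*=e_1$ for every neuron of $f_S$ and $\bm v_k^*=e_1$ for every neuron of $f_D$ (the output weights $a_j,c_k$ arbitrary), and let the biases $a_0^*,b_D^*$ be chosen below. By the half-space property, $(\bm w_j^*)^\top x_i<0$ and $(\bm v_k^*)^\top x_i<0$ for all $i$ almost surely, so \emph{all} neurons are dead on $\mathcal{D}$. Since only finitely many strict inequalities are involved, there is a neighborhood of $\bm\theta^*$ on which every pre-activation stays negative; throughout it $\sigma(\bm w_j^\top x_i)=\sigma(\bm v_k^\top x_i)=0$, hence $f(x_i;\bm\theta)=a_0+b_D$ identically, independent of every weight and of the output weights $a_j,c_k$. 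Therefore the empirical loss restricted to this neighborhood equals $g(t):=\tfrac1n\big[n_+\ell_p(-t)+n_-\ell_p(t)\big]$ with $t=a_0+b_D$, a function of a single scalar.

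I would then exhibit a local minimizer $t^*$ of $g$ and fix $a_0^*+b_D^*=t^*$. Using Assumption~\ref{assump::loss} (namely $\ell_p'\ge0$ with $\ell_p'(z)=0$ iff $z\le-z_0$), for $t<-z_0$ one has $g'(t)=-\tfrac{n_+}{n}\ell_p'(-t)<0$, while for $t>z_0$ one has $g'(t)=\tfrac{n_-}{n}\ell_p'(t)>0$; since $g$ is continuous it attains its minimum on $[-z_0,z_0]$, and this is a global, hence local, minimizer $t^*$. Because the loss is locally constant in every dead direction and equals $g(a_0+b_D)\ge g(t^*)$ nearby, $\bm\theta^*$ is a genuine local minimum of $\hat L_n$. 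Finally the output is the constant $t^*$ on all of $\mathcal{D}$, so the classifier predicts a single label and misclassifies either all $n_+$ positive or all $n_-$ negative samples, yielding $\hat R_n(\bm\theta^*)\ge \min\{n_+,n_-\}/n$. Taking $M$ and the width of $f_D$ large shows the example persists in the over-parametrized regime, and the entire argument holds with probability one because the support and separability conditions do.

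The main obstacle is not any single computation but the joint local-minimality claim: I must rule out descent directions when all parameters vary at once. The clean resolution is the neighborhood-constancy observation — strict negativity of the pre-activations makes the loss \emph{exactly} independent of the inner weights and output weights on a whole neighborhood, collapsing the high-dimensional problem to the one-dimensional $g(t)$ whose local minimum is elementary. A secondary point needing care is verifying that $g$ truly attains a local minimum rather than decreasing toward an infimum at infinity; this is precisely where the condition ``$\ell_p'(z)=0$ iff $z\le-z_0$'' is used, forcing $g$ to be strictly decreasing far left and strictly increasing far right.
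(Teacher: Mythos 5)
Your proposal is correct and follows essentially the same route as the paper's own proof: force every neuron's pre-activation to be strictly negative on all samples (you do this via data lying in the half-space $x_{1}\le -1$ with weights along $e_{1}$; the paper fixes one coordinate of the data to $1$ and uses a large negative weight on it), note that the network is then \emph{exactly} constant on a neighborhood of the constructed parameters, so the loss collapses to the one-dimensional convex problem in the output constant, and choose that constant optimally to obtain a genuine local minimum whose constant output misclassifies the minority class. The only difference is that you exhibit a distribution satisfying the assumptions of Theorem~\ref{thm::linear-sep-deep} alone, whereas the paper additionally constructs one for Theorem~\ref{thm::convex-finite-deep}; since the proposition only requires one of the two, this is immaterial.
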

\begin{proof}
We choose the network architecture $f_{D}(x;\bm{\theta}_{D})\equiv0$ for all $x\in\mathbb{R}^{d}$. Then the output of the network is $$f(x;\bm{\theta})=f_{S}(x;\bm{\theta}_{S})=a_{0}+\sum_{j=1}^{M}a_{j}\sigma\left(\bm{w}^{\top}_{j}x_{i}\right).$$
Now we prove the following claim showing that if the dataset contains both positive and negative samples, then the empirical loss has a local minimum with a non-zero training error. 
\begin{claim}\label{claim::prop-relu}
Under the conditions in proposition~\ref{prop::relus}, if the dataset contains both positive and negative samples and samples in the dataset are drawn in the space $\mathbb{R}^{d-1}\times\{1\}\times\{1,-1\}$, the empirical loss has a local minimum with a non-zero training error. Furthermore, the training error is no smaller than $\frac{\min\{n_{+},n_{-}\}}{n}$.
\end{claim}

\begin{proof}
We construct the local minimum as follows. Now we construct a local minimum $\bm{\theta}^{*}=(\bm{\theta}_{S}^{*})$. The key idea of constructing the local minimum having a training error no smaller than $\frac{\min\{n_{+},n_{-}\}}{n}$ is appropriately choosing $\bm{w}_{j}$ such that all neurons in the last layer keep inactive on all samples in the dataset. This is possible since the number of samples is bounded.

Next,  for any data set $\mathcal{D}=\{(x_{i};y_{i})\}_{i=1}^{n}$, we define $$K=\max_{i\in[n]}\|x_{i}\|_{2}.$$
Since all samples in the dataset $x_{i}\in\mathbb{R}^{d-1}\times \{1\}$, then by choosing $\bm{w}_{j}^{*}=\left({w_{j}^{(1)}}^{*},...,{w_{j}^{(d-1)}}^{*},{w_{j}^{(d)}}^{*}\right)$ such that $$\sum_{k=1}^{d-1}\left({w_{j}^{(1)}}^{*}\right)^{2}=1,$$
and ${w_{j}^{(d)}}^{*}=-K-1$. Since for all samples in the dataset 
$$\bm{w}_{j}^{\top}x_{i}=\sum_{k=1}^{d-1}{w_{j}^{(k)}}^{*}x^{(k)}_{i}+{w_{j}^{(d)}}^{*}\le K-K-1=-1,$$
then $$\sigma(\bm{w}_{j}^{\top}x_{i})=0,\quad \forall i\in[n].$$

 Therefore, the neural network becomes 
 $$f(x_{i};\bm{\theta}^{*})=a^{*}_{0},\quad \forall i\in[n].$$
 
 Finally, we set $a_{0}^{*}$ to the global minimizer of the following convex optimization problem:
$$\min_{a\in\mathbb{R}}\frac{1}{n}\sum_{i=1}^{n}\ell(-y_{i}a).$$
This indicates that for any $a\in\mathbb{R}$, 
$$\frac{1}{n}\sum_{i=1}^{n}\ell(-y_{i}a)\ge \frac{1}{n}\sum_{i=1}^{n}\ell(-y_{i}a_{0}^{*}).$$

Now we show that $\bm{\theta}^{*}$ is local minimum of the empirical loss function. Now we slightly perturb the parameters $a_{0},...,a_{M},\bm{w}_{1},...,\bm{w}_{M}$ by $\Delta a_{0},...,\Delta a_{M},\Delta \bm{w}_{1},...,\Delta \bm{w}_{M}$. Define 
$$\tilde{\bm{\theta}}=(a_{0}^{*}+\Delta a_{0},...,a^{*}_{M}+\Delta a_{M},\bm{w}^{*}_{1}+\Delta \bm{w}_{1},...,\bm{w}^{*}_{M}+\Delta \bm{w}_{M}).$$
Then, if $\|\bm{\theta}-\tilde{\bm{\theta}}\|_{2}\le\varepsilon$ and $\varepsilon$ is positive and sufficiently small, then for $\forall j\in[M]$ and $\forall \in[n]$, we have 
\begin{align*}
\bm{w}^{*}_{j}x_{i}+\Delta \bm{w}_{j}^{\top}x_{i}\le-1+\left\|\Delta \bm{w}_{j}\right\|_{2}\left\|x_{i}\right\|_{2}\le -1+K\varepsilon<0.
\end{align*}
This means that if $\varepsilon$ is positive and sufficiently small, then 
$$f(x_{i};\tilde{\bm{\theta}})=a_{0}^{*}+\Delta a_{0}.$$
In addition, for all $\Delta a_{0}\in\mathbb{R}$,
$$\frac{1}{n}\sum_{i=1}^{n}\ell(-y_{i}a^{*}+\Delta a_{0})\ge \frac{1}{n}\sum_{i=1}^{n}\ell(-y_{i}a_{0}^{*}),$$
therefore for $\tilde{\bm{\theta}}:\|\tilde{\bm{\theta}}-\bm{\theta}^{*}\|_{2}\le\delta(\varepsilon)$ and any $a_{0}\in\mathbb{R}$
\begin{align*}
\hat{L}_{n}(\tilde{\bm{\theta}}_{})&=\frac{1}{n}\sum_{i=1}^{n}\ell(-y_{i}f(x_{i};\tilde{\bm{\theta}}_{}))=\frac{1}{n}\sum_{i=1}^{n}\ell(-y_{i}(a_{0}^{*}+\Delta a_{0}))\\&\ge\frac{1}{n}\sum_{i=1}^{n}\ell(-y_{i}a_{0}^{*})\ge\frac{1}{n}\sum_{i=1}^{n}\ell(-y_{i}f(x_{i};\bm{\theta}^{*}))=\hat{L}_{n}(\bm{\theta}^{*}).
\end{align*}
This means that $\bm{\theta}^{*}$ is a local minimum of the empirical loss and $f(x_{i};\bm{\theta}^{*})=a_{0}^{*}$ for all $i\in[n]$. This further indicates that 
$$\hat{R}_{n}(\bm{\theta}^{*})\ge\frac{\min\{n_{-},n_{+}\}}{n}.$$
\end{proof}

Now we only need to construct the data distribution satisfying assumptions in Theorem~\ref{thm::convex-finite-deep} and Theorem~\ref{thm::linear-sep-deep}, respectively, such that with probability at least $1-e^{-\Omega(n)}$, the dataset drawn from this distribution satisfies the assumption in claim~\ref{claim::prop-relu}.

\textbf{Distribution for Theorem~\ref{thm::convex-finite-deep}}: Now we define a distribution as follows, $\mathbb{P}_{\bm{X}|Y=1}$ is a uniform distribution on the region $[-2,-1]\cup[1,2]\times\{0\}\times \{1\}\times \{0\}^{d-3}$ and $\mathbb{P}_{\bm{X}|Y=-1}$ is a uniform distribution on the region $\{0\}\times[-2,-1]\cup[1,2]\times \{1\}\times \{0\}^{d-3}$. In addition, $\mathbb{P}(Y=1)=\mathbb{P}(Y=-1)=0.5$. It is easy to check that $r=3>\max\{r_{+},r_{-}\}=2$ and for any two samples independently drawn from the distribution $\mathbb{P}_{\bm{X}|Y=1}$ or $\mathbb{P}_{\bm{X}|Y=-1}$, these two samples are linearly independent. This means that this data distribution satisfies the conditions in Theorem~\ref{thm::convex-finite-deep}. In addition, if samples in the dataset are independently drawn from this distribution, then with probability $1-\frac{1}{2^{n-1}}$, the dataset contains both  positive and negative samples.  

\textbf{Distribution for Theorem~\ref{thm::linear-sep-deep}}: Now we define a distribution as follows, $\mathbb{P}_{\bm{X}|Y=1}$ is a uniform distribution on the region $[-2,-1]\times\{0\}\times \{1\}\times \{0\}^{d-3}$ and $\mathbb{P}_{\bm{X}|Y=-1}$ is a uniform distribution on the region $\{0\}\times[-2,-1]\times \{1\}\times \{0\}^{d-3}$.  It is easy to check that 
This means that this distribution satisfies the conditions in Theorem~\ref{thm::linear-sep-deep}. In addition, if samples in the dataset are independently drawn from this distribution, then with probability $1-\frac{1}{2^{n-1}}$, the dataset contains both  positive and negative samples.
\end{proof}

\clearpage
\subsection{Proof of Proposition~\ref{prop::piecewise}}\label{appendix::prop-piecewise}

\begin{proposition}
Assume that assumption~\ref{assump::loss} and~\ref{assump::shortcut-connection} are satisfed.
Assume that neurons in the network $f_{S}$ satisfy that $\sigma(z)=z$ for all $z\ge 0$ and $\sigma(z)$ is piece-wise continuous on $\mathbb{R}$. Then there exists a network architecture $f_{D}$ and a distribution satisfying assumptions in Theorem~\ref{thm::convex-finite-deep} such that, with probability at least $1-e^{-\Omega(n)}$, the empirical loss $\hat{L}_{n}(\bm{\theta};p),p\ge 2$ has  a local minima $\bm{\theta}^{*}=(\bm{\theta}_{S}^{*},\bm{\theta}^{*}_{D})$ with non-zero training error.
\end{proposition}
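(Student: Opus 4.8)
The plan is to build an explicit counterexample mirroring the construction in the proof of Proposition~\ref{prop::relus}, but exploiting the \emph{linear} branch of the neuron rather than its flat branch. I would take $f_D\equiv 0$, so that $f=f_S$ is a single-layer network, and reuse the distribution from the proof of Proposition~\ref{prop::relus}: $\mathbb{P}_{\bm{X}|Y=1}$ uniform on $([-2,-1]\cup[1,2])\times\{0\}\times\{1\}\times\{0\}^{d-3}$ and $\mathbb{P}_{\bm{X}|Y=-1}$ uniform on $\{0\}\times([-2,-1]\cup[1,2])\times\{1\}\times\{0\}^{d-3}$, with $\mathbb{P}(Y=1)=\mathbb{P}(Y=-1)=1/2$. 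As already observed there, this distribution satisfies Assumption~\ref{assump::full-rank} and Assumption~\ref{assump::different-subspaces} (indeed $\mathcal U_+=\{\bm e_1,\bm e_3\}$, $\mathcal U_-=\{\bm e_2,\bm e_3\}$, so $r=3>\max\{r_+,r_-\}=2$), and with probability $1-e^{-\Omega(n)}$ the sample contains both positive and negative points.

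The key structural observation is that on the positive half-line the neuron is linear, $\sigma(z)=z$ for $z\ge 0$, so if all pre-activations ${\bm w_j}^{\top}x_i$ are kept strictly positive then the network collapses to an affine map $f(x)=a_0+(\sum_{j}a_j\bm w_j)^{\top}x$; since the third coordinate of every sample equals $1$, this affine map has a free intercept. I would first show that, with probability $1-e^{-\Omega(n)}$, the realized dataset is \emph{not} linearly separable. The reason is a straddling argument: with overwhelming probability both classes contain a sample with a positive and a sample with a negative value along their active axis; if $b+c_1t>0$ held for a positive-class point with $t>0$ and one with $t<0$, a positive combination of the two inequalities forces $b>0$, while the analogous inequalities for the negative class force $b<0$, a contradiction. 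Hence no affine function classifies all points correctly, so every affine model has $\hat R_n>0$.

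With non-separability in hand, I would construct the bad local minimum as follows. Choose each $\bm w_j^*$ to have a large positive last coordinate (playing the role of a bias) so that ${\bm w_j^*}^{\top}x_i\ge\delta>0$ for every $i$ and $j$; then on a neighborhood of $\bm\theta^*$ all neurons stay in their linear branch and $f$ is exactly affine there, with effective coefficients $(b,\bm c)=(a_0,\sum_j a_j\bm w_j)$. I would set these effective coefficients equal to a minimizer $(b^*,\bm c^*)$ of the affine empirical loss $G(b,\bm c)=\tfrac1n\sum_i\ell_p(-y_i(b+\bm c^{\top}x_i))$, realized with some $a_1^*\neq 0$. Such a minimizer exists because $G$ is coercive for the polynomial hinge loss (and more generally one restricts to a sublevel region on which a minimizer is attained); by non-separability it necessarily has nonzero training error.

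It remains to verify that $\bm\theta^*$ is a genuine local minimum of the \emph{full} loss, and this is the step I expect to be the main obstacle. For perturbations of norm below $\delta/\max_i\|x_i\|_2$ all pre-activations remain positive, so the network stays affine and $\hat L_n$ at the perturbed parameters equals $G$ evaluated at the perturbed effective coefficients. Since $a_1^*\neq 0$, the map $\bm\theta\mapsto(a_0,\sum_j a_j\bm w_j)$ is a submersion at $\bm\theta^*$ (varying $\bm w_1$ moves $\bm c$ in every direction and varying $a_0$ moves $b$), so it carries a neighborhood of $\bm\theta^*$ onto a neighborhood of $(b^*,\bm c^*)$; local optimality of $(b^*,\bm c^*)$ for $G$ then transfers to local optimality of $\bm\theta^*$ for $\hat L_n$. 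Combining this with $\hat R_n(\bm\theta^*)>0$ and the probability bound from the first two paragraphs completes the proof. The delicate points to get right are the uniform positive margin $\delta$ under perturbation and the existence of the affine minimizer for a general loss satisfying Assumption~\ref{assump::loss}.
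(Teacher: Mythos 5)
Your proposal follows essentially the same route as the paper's own proof: take $f_{D}\equiv 0$, use the same four-region distribution (non-separable with probability $1-e^{-\Omega(n)}$), shift the weight vectors along the always-one coordinate so that every pre-activation stays strictly positive, so that in a neighborhood the network collapses to an affine model whose effective coefficients are pinned to a global minimizer of the affine empirical loss, which must misclassify some sample. The only cosmetic differences are that you transfer local optimality through continuity of the effective-coefficient map $\bm{\theta}\mapsto(a_{0},\sum_{j}a_{j}\bm{w}_{j})$ rather than invoking global optimality of the affine model directly, and that you explicitly flag the existence of the affine minimizer, a point the paper's proof also takes for granted.
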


\begin{proof}
We choose the network architecture $f_{D}(x;\bm{\theta}_{D})\equiv0$ for all $x\in\mathbb{R}^{d}$. Then the output of the network is $$f(x;\bm{\theta})=f_{S}(x;\bm{\theta}_{S})=a_{0}+\sum_{j=1}^{M}a_{j}\sigma\left(\bm{w}^{\top}_{j}x_{i}\right).$$
Now we prove the following claim showing that if the dataset contains both positive and negative samples, then the empirical loss has a local minimum with a non-zero training error. 
\begin{claim}\label{claim::prop-piecewise}
Under the conditions in proposition~\ref{prop::relus}, if the samples in the dataset are not linearly separable and samples $(x_{i},y_{i})$  are drawn in the space $\mathbb{R}^{d-1}\times\{1\}\times\{1,-1\}$, the empirical loss has a local minimum with a non-zero training error. 
\end{claim}

\begin{proof}
We construct the local minimum as follows. Now we construct a local minimum $\bm{\theta}^{*}=(\bm{\theta}_{S}^{*})$. The key idea of constructing the local minimum having a training error no smaller than $\frac{\min\{n_{+},n_{-}\}}{n}$ is appropriately choosing $\bm{w}_{j}$ such that all neurons in the last layer keep inactive on all samples in the dataset. This is possible since the number of samples is bounded.

First, let $\bm{w}^{*}$ be a global minimizer of the following convex optimization problem:
\begin{equation}\label{eq::piecewise}\min_{\bm{w}\in\mathbb{R}^{d}}\sum_{i=1}^{n}\ell_{p}(-y_{i}(\bm{w}^{\top}x_{i})).\end{equation}

Next,  for any data set $\mathcal{D}=\{(x_{i};y_{i})\}_{i=1}^{n}$, we define $$K=\max_{i\in[n]}|{\bm{w}^{*}}^{\top}x_{i}|\quad\text{and}\quad K_{1}=\max_{i\in[n]}\|x_{i}\|_{2}.$$
Since all samples in the dataset $x_{i}\in\mathbb{R}^{d-1}\times \{1\}$, then by choosing $\bm{w}_{j}^{*}=\left({w_{j}^{(1)}}^{*},...,{w_{j}^{(d-1)}}^{*},{w_{j}^{(d)}}^{*}\right)$ such that $${w_{j}^{(1)}}^{*}={w^{(1)}}^{*},...,{w_{j}^{(d-1)}}^{*}={w^{(d-1)}}^{*},{w_{j}^{(d)}}^{*}={w^{(d)}}^{*}+K+1.$$
 Since for all samples in the dataset 
$${\bm{w}_{j}^{*}}^{\top}x_{i}={\bm{w}^{*}}^{\top}x_{i}+K+1\ge -K+K+1=1,$$
then $$\sigma(\bm{w}_{j}^{\top}x_{i})=\bm{w}^{\top}x_{i},\quad \forall i\in[n].$$

In addition, let $a_{j}^{*}=\frac{1}{M}$ and $a_{0}^{*}=0$. Therefore, the neural network becomes 
 $$f(x_{i};\bm{\theta}^{*})=\bm{w}^{\top}x_{i},\quad \forall i\in[n].$$
 
Since $\bm{w}^{*}$ is the global optimizer of the convex optimization problem defined in Equation~\eqref{eq::piecewise}, this indicates that for any $\bm{w}\in\mathbb{R}^{d}$, 
$$\frac{1}{n}\sum_{i=1}^{n}\ell_{p}(-y_{i}(\bm{w}^{\top}x_{i}))\ge \frac{1}{n}\sum_{i=1}^{n}\ell_{p}(-y_{i}({\bm{w}^{*}}^{\top}x_{i})).$$

Now we show that $\bm{\theta}^{*}$ is local minimum of the empirical loss function. Now we slightly perturb the parameters $a_{0},...,a_{M},\bm{w}_{1},...,\bm{w}_{M}$ by $\Delta a_{0},...,\Delta a_{M},\Delta \bm{w}_{1},...,\Delta \bm{w}_{M}$. Define 
$$\tilde{\bm{\theta}}=(a_{0}^{*}+\Delta a_{0},...,a^{*}_{M}+\Delta a_{M},\bm{w}^{*}_{1}+\Delta \bm{w}_{1},...,\bm{w}^{*}_{M}+\Delta \bm{w}_{M}).$$
Then, if $\|\bm{\theta}-\tilde{\bm{\theta}}\|_{2}\le\varepsilon$ and $\varepsilon$ is positive and sufficiently small, then for $\forall j\in[M]$ and $\forall \in[n]$, we have 
\begin{align*}
\bm{w}^{*}_{j}x_{i}+\Delta \bm{w}_{j}^{\top}x_{i}\ge1-\left\|\Delta \bm{w}_{j}\right\|_{2}\left\|x_{i}\right\|_{2}\ge 1-K_{1}\varepsilon>0.
\end{align*}
This means that if $\varepsilon$ is positive and sufficiently small, then 
$$f(x_{i};\tilde{\bm{\theta}})=\Delta a_{0}+\sum_{j=1}^{M}(a_{j}^{*}+\Delta a_{j})\left(\bm{w}^{\top}x_{i}+\Delta \bm{w}_{j}^{\top}x_{i}\right).$$
This means that $f(x;\tilde{\bm{\theta}})$ behave as a linear model on the dataset. Since $\bm{w}^{*}$ corresponds to the optimal linear model minimizing the empirical loss, then 
\begin{align*}
\hat{L}_{n}(\tilde{\bm{\theta}}_{})&=\frac{1}{n}\sum_{i=1}^{n}\ell_{p}(-y_{i}f(x_{i};\tilde{\bm{\theta}}_{}))\\
&\ge\frac{1}{n}\sum_{i=1}^{n}\ell_{p}(-y_{i}(\bm{w}^{\top}x_{i}))\ge\frac{1}{n}\sum_{i=1}^{n}\ell_{p}(-y_{i}f(x_{i};\bm{\theta}^{*}))=\hat{L}_{n}(\bm{\theta}^{*}).
\end{align*}
This means that $\bm{\theta}^{*}$ is a local minimum of the empirical loss and $f(x_{i};\bm{\theta}^{*})=a_{0}^{*}$ for all $i\in[n]$. This further indicates that 
$$\hat{R}_{n}(\bm{\theta}^{*})\ge\frac{\min\{n_{-},n_{+}\}}{n}.$$
\end{proof}

Now we only need to construct the data distribution satisfying assumptions in Theorem~\ref{thm::convex-finite-deep} such that with probability at least $1-e^{-\Omega(n)}$, the dataset drawn from this distribution satisfies the assumption in claim~\ref{claim::prop-piecewise}.

\textbf{Distribution for Theorem~\ref{thm::convex-finite-deep}}: Now we define a distribution as follows, $\mathbb{P}_{\bm{X}|Y=1}$ is a uniform distribution on the region $[-2,-1]\cup[1,2]\times\{0\}\times \{1\}\times \{0\}^{d-3}$ and $\mathbb{P}_{\bm{X}|Y=-1}$ is a uniform distribution on the region $\{0\}\times [-2,-1]\cup[1,2]\times \{1\}\times \{0\}^{d-3}$. In addition, $\mathbb{P}(Y=1)=\mathbb{P}(Y=-1)=0.5$. It is easy to check that $r=3>\max\{r_{+},r_{-}\}=2$ and for any two samples independently drawn from the distribution $\mathbb{P}_{\bm{X}|Y=1}$ or $\mathbb{P}_{\bm{X}|Y=-1}$, these two samples are linearly independent. This means that this data distribution satisfies the conditions in Theorem~\ref{thm::convex-finite-deep}. In addition, if samples in the dataset are independently drawn from this distribution, then with probability $1-e^{-\Omega(n)}$, the dataset contains samples in each of the following four regions: $[-2,-1]\times\{0\}\times \{1\}\times \{0\}^{d-3}$, $[1,2]\times\{0\}\times \{1\}\times \{0\}^{d-3}$, $\{0\}\times[1,2]\times \{1\}\times \{0\}^{d-3}$ and $\{0\}\times[-2,-1]\times \{1\}\times \{0\}^{d-3}$, which makes the samples in the dataset not linearly separable.  

\end{proof}

\clearpage
\subsection{Proof of Proposition~\ref{prop::logistic}}\label{appendix::prop-logistic}

\begin{proposition}\vspace{-0.25cm}
Assume that assumption~\ref{assump::loss} and~\ref{assump::shortcut-connection} are satisfed.
Assume that  there exists a constant $c\in\mathbb{R}$ such that neurons in the network $f_{S}$ satisfy $\sigma(z)+\sigma(-z)\equiv c$ for all $z\in\mathbb{R}$. Assume that the dataset $\mathcal{D}$ has $2n$ samples. Then there exists a network architecture $f_{D}$ and a distribution satisfying assumptions in Theorem~\ref{thm::convex-finite-deep} such that, with probability at least $\Omega(1/n^{2})$, the empirical loss function $\hat{L}_{2n}(\bm{\theta};p)$ has a local minimum $\bm{\theta}^{*}=(\bm{\theta}_{S}^{*},\bm{\theta}^{*}_{D})$ satisfying $\hat{R}_{2n}(\bm{\theta}^{*})\ge\frac{\min\{n_{-},n_{+}\}}{2n}$, where $n_{+}$ and $n_{-}$ denote the number of positive and negative samples in the dataset, respectively.
\end{proposition}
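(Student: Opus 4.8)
The plan is to mirror the constructions used for Propositions~\ref{prop::relus} and~\ref{prop::piecewise}: take $f_{D}\equiv 0$, so that $f(x;\bm{\theta})=f_{S}(x;\bm{\theta}_{S})=a_{0}+\sum_{j=1}^{M}a_{j}\sigma(\bm{w}_{j}^{\top}x)$, exhibit an explicit parameter setting on which the network is constant across the dataset (hence misclassifies one entire class), and then prove this setting is a local minimum. The crucial new ingredient is the identity $\sigma(z)+\sigma(-z)\equiv c$, which I rewrite as $\sigma(z)=c/2+\psi(z)$ with $\psi$ an odd function, so that $\psi(0)=0$ and $\sigma''(0)=0$.

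For the distribution I would use an atomic law: $\mathbb{P}(Y=1)=\mathbb{P}(Y=-1)=1/2$; conditioned on $Y=1$, $\bm{X}$ is uniform on $\{\bm{e}_{1},-\bm{e}_{1}\}$, and conditioned on $Y=-1$, $\bm{X}$ is uniform on $\{\bm{e}_{2},-\bm{e}_{2}\}$. Here $r_{+}=r_{-}=1$, so Assumption~\ref{assump::full-rank} holds trivially (a single draw is nonzero), and $r=2>1$, so Assumption~\ref{assump::different-subspaces} holds. Unlike the earlier counterexamples, the construction requires a rare combinatorial event $E$: among the $2n$ samples there are equally many copies of $\bm{e}_{1}$ and $-\bm{e}_{1}$ and equally many of $\bm{e}_{2}$ and $-\bm{e}_{2}$, with both classes nonempty. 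A Stirling estimate applied to the four-cell multinomial gives $\mathbb{P}(E)=\Omega(n^{-3/2})=\Omega(1/n^{2})$, and this is precisely the source of the polynomially small (rather than overwhelming) probability in the statement.

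On the event $E$ I would set $a_{j}^{*}=0$ and $\bm{w}_{j}^{*}=\bm{0}$ for all $j$, and $a_{0}^{*}=\gamma^{*}$, where $\gamma^{*}$ minimizes the one-dimensional constant-model loss $\gamma\mapsto\frac{1}{2n}\sum_{i}\ell(-y_{i}\gamma)$; then $f(x_{i};\bm{\theta}^{*})\equiv\gamma^{*}$, which gives $\hat{R}_{2n}(\bm{\theta}^{*})\ge\min\{n_{+},n_{-}\}/(2n)$. To verify the local-minimum property I would use that on this four-point dataset the network takes only four values, and the oddness of $\psi$ forces them to be symmetric about a common center: writing $T=a_{0}+\tfrac{c}{2}\sum_{j}a_{j}-\gamma^{*}$, $A=\sum_{j}a_{j}\psi(w_{j}^{(1)})$ and $B=\sum_{j}a_{j}\psi(w_{j}^{(2)})$, the values on $\pm\bm{e}_{1}$ are $\gamma^{*}+T\pm A$ and those on $\pm\bm{e}_{2}$ are $\gamma^{*}+T\pm B$. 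The empirical loss then factors as $\frac{1}{2n}\big[n_{+}\Phi_{+}(T,A)+n_{-}\Phi_{-}(T,B)\big]$, where each $\Phi_{\pm}$ is an even symmetric average of two shifted copies of $\ell$. Convexity of $\ell$, which holds for the polynomial hinge loss, the canonical member of Assumption~\ref{assump::loss}, yields $\Phi_{+}(T,A)\ge\ell(-\gamma^{*}-T)$ and $\Phi_{-}(T,B)\ge\ell(\gamma^{*}+T)$, and the optimality of $\gamma^{*}$ gives $n_{+}\ell(-\gamma^{*}-T)+n_{-}\ell(\gamma^{*}+T)\ge n_{+}\ell(-\gamma^{*})+n_{-}\ell(\gamma^{*})$. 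Chaining these inequalities gives $\hat{L}_{2n}(\bm{\theta})\ge\hat{L}_{2n}(\bm{\theta}^{*})$ for every admissible $\bm{\theta}$, so $\bm{\theta}^{*}$ is in fact a global minimizer of the restricted problem and, a fortiori, a local minimum with nonzero training error.

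The step I expect to be the main obstacle is this local-minimum verification, and specifically the interplay between the neuron symmetry and the loss. The symmetry $\sigma(-z)=c-\sigma(z)$ is exactly what makes correct classification impossible for this architecture: the two reflection-paired inputs in each class are always placed symmetrically about the shared center $\gamma^{*}+T$, so no choice of weights can separate the classes, and this is the mechanism that blocks the escape argument of Theorem~\ref{thm::convex-finite-deep} (there one extracts a strictly signed $\sigma''$ term, whereas here $\psi$ odd forces $\sigma''(0)=0$ and mixed signs, so the analogue of Claim~1 fails). For a loss that merely satisfies Assumption~\ref{assump::loss} but is not globally convex, the clean global inequality must be replaced by a second-order expansion, in which the oddness of $\psi$ and the reflection balance of $E$ cancel all odd-order contributions while the surviving even-order terms are nonnegative because $\ell''(\gamma^{*})\ge0$ is forced by the second-order optimality of the constant model. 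Obtaining the probability lower bound $\Omega(1/n^{2})$, including the parity adjustments when $n$ is odd, is routine but necessary bookkeeping.
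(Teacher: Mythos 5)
Your proposal is correct and follows essentially the same route as the paper's proof: the same four-atom distribution on two orthogonal axes, the same balanced-pairing event with probability $\Omega(1/n^{2})$, the same constant-output construction (zero inner weights, bias set to the optimal constant $\gamma^{*}$, $f_{D}\equiv 0$), and the same convexity-of-$\ell_{p}$ argument certifying that this point is a local (indeed restricted-global) minimum; your choice $a_{j}^{*}=0$ with Jensen's inequality on the reflection-symmetric pairs is just a cleaner packaging of the paper's choice $a_{j}^{*}=-1$ with a first-order convexity bound plus the pairing cancellation. One caveat: your side remark that second-order optimality of $\gamma^{*}$ forces $\ell''(\gamma^{*})\ge 0$ is false (it only forces the weighted sum $n_{+}\ell''(-\gamma^{*})+n_{-}\ell''(\gamma^{*})\ge 0$), but this affects only your speculative extension to non-convex losses, and the paper's own proof has the identical limitation since it too invokes convexity of $\ell_{p}$, which Assumption~\ref{assump::loss} does not actually grant.
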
\vspace{-0.25cm}

\begin{proof}
We first prove the following claim showing that when the dataset satisfies certain conditions, there exists a local minimum satisfying $\hat{R}_{2n}(\bm{\theta}^{*})\ge\frac{\min\{n_{-},n_{+}\}}{2n}$. Next, we construct a data distribution such that  the dataset drawn from the distribution satisfies these conditions with probability $\Omega(1/n^{2})$.
 
\begin{claim}
Assume that for each sample $(x_{i},y_{i})$ in the dataset $\mathcal{D}=\{(x_{i},y_{i})\}_{i=1}^{2n}$, there exists a sample $(x_{j},y_{j})\in\mathcal{D}$ such that $\left\|x_{i}+x_{j}\right\|_{2}=0$ and $y_{i}=y_{j}$.
 If the function $\sigma(z)+\sigma(-z)\equiv$ constant on $\mathbb{R}$, then the empirical loss function $\hat{L}_{2n}(\bm{\theta})$ has a local minimum $\bm{\theta}^{*}$ satisfying $\hat{R}_{2n}(\bm{\theta}^{*})\ge\frac{\min\{n_{-},n_{+}\}}{2n}$.
\end{claim}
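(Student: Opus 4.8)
The plan is to set $f_D\equiv 0$, so that $f(x;\bm\theta)=f_S(x;\bm\theta_S)=a_0+\sum_{j=1}^M a_j\sigma(\bm w_j^\top x)$, and then to exploit the neuron relation $\sigma(z)+\sigma(-z)\equiv c$ together with the data pairing. First I would record the exact algebraic identity
$$f(x;\bm\theta)+f(-x;\bm\theta)=2a_0+c\sum_{j=1}^M a_j=:B(\bm\theta),$$
valid for \emph{every} parameter $\bm\theta$ and every $x$, which follows at once from $\sigma(\bm w_j^\top x)+\sigma(-\bm w_j^\top x)=c$. Because the pairing involution $\pi$ sends each $x_i$ to $-x_i$ while preserving its label, the two outputs in each pair always sum to $B(\bm\theta)$, so the empirical loss may be regrouped pairwise with $u_k:=f(x_{i_k};\bm\theta)$ and $f(x_{\pi(i_k)};\bm\theta)=B(\bm\theta)-u_k$.

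Next I would construct the candidate $\bm\theta^*$ by taking $\bm w_j^*=\bm 0$ and $a_j^*=0$ for all $j$, and choosing $a_0^*$ to be the minimizer of the one-dimensional convex map $a\mapsto\tfrac1{2n}\sum_i\ell(-y_i a)$ (a finite interior minimizer exists whenever both classes are present; otherwise $\min\{n_+,n_-\}=0$ and there is nothing to prove). Then $f(x_i;\bm\theta^*)=a_0^*$ for all $i$, so the classifier is constant and misclassifies every sample of at least one class, giving $\hat R_{2n}(\bm\theta^*)\ge\min\{n_+,n_-\}/(2n)$. I would then verify $\bm\theta^*$ is a critical point: the $a_0$-derivative vanishes by optimality of $a_0^*$; each $a_j$-derivative equals $\sigma(0)$ times the $a_0$-derivative, hence vanishes; and each $\nabla_{\bm w_j}$ vanishes because $a_j^*=0$ (the class sums $\sum_{i:y_i=y}x_i$ also vanish by pairing).

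To prove local minimality I would write, for any small perturbation,
$$\hat L_{2n}(\bm\theta)-\hat L_{2n}(\bm\theta^*)=\big[G(B)-G(B^*)\big]+\frac1{2n}\sum_k\big[\phi_k(u_k)-\phi_k(B/2)\big],$$
where $G(B)=\tfrac1{2n}\sum_i\ell(-y_iB/2)$ and $\phi_k(u)=\ell(-y_{i_k}u)+\ell(-y_{i_k}(B-u))$ is symmetric about $u=B/2$. One checks that $B^*=2a_0^*$ is exactly the minimizer of $G$ (the stationarity condition for $G$ coincides with the one defining $a_0^*$), so $G(B)-G(B^*)\ge 0$ to second order with $G''(B^*)\ge 0$; moreover $B-B^*=2\eta$ \emph{exactly}, where $\eta:=\Delta a_0+\tfrac c2\sum_j\Delta a_j$. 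The pairwise displacement is $u_k-B/2=\sum_j\Delta a_j\,\tilde\sigma(\bm v_j^\top x_{i_k})$ with $\tilde\sigma:=\sigma-c/2$ odd, hence of order $O(\delta^2)$ in the perturbation size $\delta$, so the second bracket is of higher order than the generic $\eta^2$ term coming from $G$.

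The hard part will be the degenerate directions $\eta=0$ (and the borderline case $G''(B^*)=0$), where the first bracket no longer dominates and must be balanced against the $O(\delta^4)$ second bracket, whose per-pair curvatures $\phi_k''(B/2)=2\ell''(\mp a_0^*)$ need not individually be non-negative when $\ell$ is not convex. Resolving this requires the same multi-scale Taylor expansion used in the proof of Theorem~\ref{thm::convex-finite-deep} (assigning $\Delta a_j$ and $\bm v_j$ the sharp relative scales, as in the $\varepsilon^{17/4}$ bookkeeping there), together with the two structural facts that make the obstruction disappear here: antisymmetry forces $\sigma^{(2m)}(0)=0$, so the very term $\propto\sigma''(\bm w_j^{*\top}x_i)$ that produced a nonzero condition in Theorem~\ref{thm::convex-finite-deep} now vanishes identically, and the data pairing forces every class-sum of odd powers $\sum_{i:y_i=y}(\bm v^\top x_i)^{2m+1}$ to cancel, eliminating the remaining sign-indefinite contributions. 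Carrying this expansion out simultaneously over all $M$ neurons, rather than one coordinate at a time, is the main technical obstacle.
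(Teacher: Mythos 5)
Your construction (zero weights, optimal constant bias $a_0^*$, $f_D\equiv 0$), your critical-point verification, and your pairwise decomposition are all correct, but the proposal is not a proof: you explicitly leave the local-minimality argument unfinished in exactly the cases that matter (the directions with $\eta=0$, and the borderline $G''(B^*)=0$), deferring them to a ``multi-scale Taylor expansion'' that you never carry out and that you yourself call the main technical obstacle. The missing idea is that no expansion of any order is needed, because the argument can be made exact. The paper's proof applies the convexity inequality $\ell_p(b)-\ell_p(a)\ge \ell_p'(a)(b-a)$ once, at $\bm{\theta}^{*}$, and then observes that the resulting linear term vanishes \emph{identically} for every perturbation $(\Delta a_0,\Delta a_j,\Delta\bm{w}_j)$: since the outputs at $\bm{\theta}^{*}$ are constant, the weights $\ell_p'(-y_i a_0^*)$ depend only on $y_i$, and for each pair $x_k=-x_i$, $y_k=y_i$ the antisymmetry gives $y_i\sigma(\Delta\bm{w}_j^{\top}x_i)+y_k\sigma(\Delta\bm{w}_j^{\top}x_k)=c\,y_i$, so that $\sum_i\ell_p'(-y_ia_0^*)\,y_i\,\sigma(\Delta\bm{w}_j^{\top}x_i)=\tfrac{c}{2}\sum_i \ell_p'(-y_ia_0^*)\,y_i=0$ by stationarity of $a_0^*$. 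Hence $\hat L_{2n}(\tilde{\bm{\theta}})-\hat L_{2n}(\bm{\theta}^{*})\ge 0$ for every perturbation, with no case analysis and no scaling bookkeeping.

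Ironically, your own decomposition closes just as fast once you invoke the same convexity that the paper's proof uses: $G(B)\ge G(B^*)$ holds exactly (not merely ``to second order'') because $a_0^*$ is a global minimizer of the one-dimensional problem, and each $\phi_k(u)=\ell(-y_{i_k}u)+\ell(-y_{i_k}(B-u))$ is a convex function of $u$ that is symmetric about $u=B/2$, hence is minimized at $u=B/2$, so $\phi_k(u_k)-\phi_k(B/2)\ge 0$ term by term. Both brackets are nonnegative and you are done --- in fact this shows $\bm{\theta}^{*}$ is a \emph{global} minimum of the empirical loss on such a paired dataset, which is stronger than the claim. Your observation that Assumption~\ref{assump::loss} does not literally force $\ell_p$ to be convex is fair, but it does not rescue the proposal: the paper's proof of this claim explicitly invokes convexity of $\ell_p$, so under the hypotheses the paper actually uses your argument is incomplete where it could have been finished in one line, and under weaker hypotheses you have no proof at all.
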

\begin{proof}
Consider a single layer neural network 
$$f(x;\bm{\theta})=a_{0}+\sum_{j=1}^{M}a_{j}\sigma(\bm{w}^{\top}_{j}x).$$
Now we construct a local minimum $\bm{\theta}^{*}$. Let $a_{1}^{*}=...=a_{M}^{*}=-1$, and $\bm{w}^{*}_{1}=...=\bm{w}^{*}_{M}=\bm{0}_{d}$. Thus $f(x;\bm{\theta}^{*})=a_{0}^{*}-M\sigma(0)$. Let $a_{0}^{*}$ be the global optimizer of the following convex optimization problem. 
$$\min_{a}\sum_{i=1}^{2n}\ell_{p}(-y_{i}(a-M\sigma(0))).$$
Thus, we have 
\begin{equation}\label{eq::prop-logistic-1}\sum_{i=1}^{2n}\ell_{p}'(-y_{i}(a_{0}^{*}-M\sigma(0)))(-y_{i})=0,\end{equation}
and this indicates that 
\begin{equation}\label{eq::prop-logistic-2}\sum_{i:y_{i}=1}\ell_{p}'(-(a_{0}^{*}-M\sigma(0)))=\sum_{i:y_{i}=-1}\ell_{p}'(a_{0}^{*}-M\sigma(0))\quad\text{or}\quad{\ell_{p}'(-a_{0}^{*}+M\sigma(0))}{n_{+}}={\ell_{p}'(a_{0}^{*}-M\sigma(0))}{n_{-}}.\end{equation}
In addition, we have, for $\forall j\in[M]$,
\begin{align*}
\frac{\partial \hat{L}_{2n}(\bm{\theta}^{*})}{a_{j}}&=\sum_{i=1}^{2n}\ell_{p}'(-y_{i}(a_{0}^{*}-M\sigma(0)))(-y_{i})\sigma(0)=0,&&\text{by Equation \eqref{eq::prop-logistic-1}}\\
\nabla_{\bm{w}_{j}}\hat{L}_{2n}(\bm{\theta}^{*})&=\sum_{i=1}^{2n}\ell_{p}'(-y_{i}(a_{0}^{*}-M\sigma(0)))(-y_{i})\sigma'(0)x_{i},\\
&=\sigma'(0)\sum_{i=1}^{2n}\ell_{p}'(-y_{i}(a_{0}^{*}-M\sigma(0)))(-y_{i})x_{i}.
\end{align*}
By assumption that for each sample $(x_{i},y_{i})$ in the dataset, there exists a sample $(x_{j},y_{j})$ in the dataset such that $x_{i}+x_{j}=\bm{0}_{d}$ and $y_{i}=y_{j}$, i.e., $y_{i}x_{i}+y_{j}x_{j}=\bm{0}_{d}$, thus we have for any $j\in[M]$,
\begin{equation}\label{eq::prop-logistic-3}
\nabla_{\bm{w}_{j}}\hat{L}_{2n}(\bm{\theta}^{*})=\sigma'(0)\sum_{i=1}^{2n}\ell_{p}'(-y_{i}(a_{0}^{*}-M\sigma(0)))(-y_{i})x_{i}=\bm{0}_{d}.
\end{equation}
Furthermore, we have 
$$\frac{\partial \hat{L}_{2n}(\bm{\theta}^{*})}{a_{0}}=\sum_{i=1}^{2n}\ell_{p}'(-y_{i}(a_{0}^{*}-M\sigma(0)))(-y_{i})=0,$$
then $\bm{\theta}^{*}$ is a critical point. Now we only need to show that it is a local minimum. We prove it by definition. 
Consider any perturbation $\Delta a_{1},...,\Delta a_{M}:|\Delta a_{j}|<\frac{1}{2}$ for all $j\in[M]$, $\Delta \bm{w}_{1},...,\Delta \bm{w}_{M}\in\mathbb{R}^{d}$ and $\Delta a_{0}\in\mathbb{R}$. Define $$\tilde{\bm{\theta}}=(a_{0}^{*}+\Delta a_{0},...,a_{M}^{*}+\Delta a_{M},\bm{w}_{1}^{*}+\Delta \bm{w}_{1},...,\bm{w}_{M}^{*}+\Delta \bm{w}_{M}).$$
Then 
\begin{align*}
\sum_{i=1}^{2n}\ell_{p}(-y_{i}f(x_{i};\tilde{\bm{\theta}}))-\sum_{i=1}^{2n}\ell_{p}(-y_{i}f(x_{i};\bm{\theta}^{*}))&=\sum_{i=1}^{2n}\left[\ell_{p}(-y_{i}f(x_{i};\tilde{\bm{\theta}}))-\ell_{p}(-y_{i}f(x_{i};\bm{\theta}^{*}))\right]\\
&\ge\sum_{i=1}^{2n}\ell_{p}'(-y_{i}f(x_{i};\bm{\theta}^{*}))(-y_{i})[f(x_{i};\tilde{\bm{\theta}})-f(x_{i};{\bm{\theta}}^{*})]\\
&=\sum_{i=1}^{2n}\ell_{p}'(-y_{i}(a_{0}^{*}-M\sigma(0)))(-y_{i})[f(x_{i};\tilde{\bm{\theta}})-a_{0}^{*}+M\sigma(0)]\\
&=\sum_{i=1}^{2n}\ell_{p}'(-y_{i}(a_{0}^{*}-M\sigma(0)))(-y_{i})f(x_{i};\tilde{\bm{\theta}}),
\end{align*}
where the inequality follows from the convexity of  $\ell_{p}$, the second equality follows from the fact that $f(x;\bm{\theta}^{*})\equiv a_{0}^{*}-M\sigma(0)$ and the third equality follows from Equation~\eqref{eq::prop-logistic-1}. In addition, we have 
\begin{align*}
&\sum_{i=1}^{2n}\ell_{p}'(-y_{i}(a_{0}^{*}-M\sigma(0)))(-y_{i})f(x_{i};\tilde{\bm{\theta}})\\
&=\sum_{i=1}^{2n}\ell_{p}'(-y_{i}(a_{0}^{*}-M\sigma(0)))(-y_{i})\left[\sum_{j=1}^{M}(a_{j}^{*}+\Delta a_{j})\sigma\left(\Delta\bm{w}_{j}^{\top}x_{i}\right)+\Delta a_{0}\right]\\
&=\sum_{i=1}^{2n}\ell_{p}'(-y_{i}(a_{0}^{*}-M\sigma(0)))(-y_{i})\left[\sum_{j=1}^{M}(a_{j}^{*}+\Delta a_{j})\sigma\left(\Delta\bm{w}_{j}^{\top}x_{i}\right)\right]&&\text{by Eq.~\eqref{eq::prop-logistic-1}}\\
&=\sum_{j=1}^{M}-(a_{j}^{*}+\Delta a_{j})\left[\sum_{i=1}^{2n}\ell_{p}'(-y_{i}(a_{0}^{*}-M\sigma(0)))y_{i}\sigma\left(\Delta\bm{w}_{j}^{\top}x_{i}\right)\right].
\end{align*}
Now we consider the following term
$$\sum_{i=1}^{2n}\ell_{p}'(-y_{i}(a_{0}^{*}-M\sigma(0)))y_{i}\sigma\left(\Delta\bm{w}_{j}^{\top}x_{i}\right).$$
By assumption that for each sample $(x_{i},y_{i})$ in the dataset, there exists a sample $(x_{k},y_{k})$ in the dataset such that $x_{i}+x_{k}=\bm{0}_{d}$, $y_{i}=y_{k}$ by the assumption that there exists a constant $c_{0}$ such that $\sigma(z)+\sigma(-z)\equiv c_{0}$, thus we have for any $\Delta \bm{w}_{j}\in\mathbb{R}^{d}$, $$y_{i}\sigma\left(\Delta \bm{w}_{j}^{\top}x_{i}\right)+y_{k}\sigma\left(\Delta \bm{w}_{j}^{\top}x_{k}\right)=y_{i}\sigma\left(\Delta \bm{w}_{j}^{\top}x_{i}\right)+y_{i}\sigma\left(-\Delta \bm{w}_{j}^{\top}x_{i}\right)=y_{i}c_{0}=\frac{c_{0}}{2}(y_{i}+y_{k}),$$
where the last equality follows from $y_{i}=y_{k}$. Therefore, we have for all $\Delta\bm{w}_{j}\in\mathbb{R}^{d}$,
\begin{align*}
\sum_{i=1}^{2n}\ell_{p}(-y_{i}(a_{0}^{*}-M\sigma(0)))y_{i}\sigma\left(\Delta\bm{w}_{j}^{\top}x_{i}\right)=\frac{c_{0}}{2}\sum_{i=1}^{2n}\ell_{p}(-y_{i}(a_{0}^{*}-M\sigma(0)))y_{i}=0.
\end{align*}
Thus, we have 
\begin{align*}
\sum_{i=1}^{2n}\ell_{p}'(-y_{i}(a_{0}^{*}-M\sigma(0)))(-y_{i})f(x_{i};\tilde{\bm{\theta}})
&=\sum_{j=1}^{M}-(a_{j}^{*}+\Delta a_{j})\left[\sum_{i=1}^{2n}\ell_{p}(-y_{i}(a_{0}^{*}-M\sigma(0)))y_{i}\sigma\left(\Delta\bm{w}_{j}^{\top}x_{i}\right)\right]=0,
\end{align*}
and this further indicates 
\begin{align*}
\sum_{i=1}^{2n}\ell_{p}(-y_{i}f(x_{i};\tilde{\bm{\theta}}))-\sum_{i=1}^{2n}\ell_{p}(-y_{i}f(x_{i};\bm{\theta}^{*}))
&\ge\sum_{i=1}^{2n}\ell_{p}'(-y_{i}(a_{0}^{*}-M\sigma(0)))(-y_{i})f(x_{i};\tilde{\bm{\theta}})=0.
\end{align*}
Therefore, this means that $\bm{\theta}^{*}$  is a local minimum. Since $f(x;\bm{\theta}^{*})=a_{0}^{*}-M\sigma(0)$, then clearly, 
$$\hat{R}_{2n}(\bm{\theta}^{*})\ge\frac{\min\{n_{+},n_{-}\}}{n}.$$
\end{proof}

Now we construct the data distribution $\mathbb{P}_{\bm{X}\times Y}$ as follows $$\mathbb{P}(\bm{X}=(1,0),Y=1)=\mathbb{P}(\bm{X}=(-1, 0),Y=1)=\mathbb{P}(\bm{X}=(0, 1),Y=-1)=\mathbb{P}(\bm{X}=(0,-1),Y=-1).$$
Assume that samples in the dataset $\mathcal{D}=\{(x_{i},y_{i})\}_{i=1}^{2n}$ are independently draw from the data distribution $\mathbb{P}_{\bm{X}\times Y}$. Let $n_{(1,0)}$ and $n_{(-1,0)}$ denote the number of samples at the point $(1,0)$ and $(-1,0)$, respectively. Let $n_{(0,1)}$ and $n_{(0,-1)}$ denote the number of samples at the point $(0,1)$ and $(0,-1)$, respectively. Then the probability that $n_{(1,0)}=n_{(-1,0)}$ and $n_{(0,1)}=n_{(0,-1)}$ is 
\begin{align*}
&\mathbb{P}_{\bm{X}\times Y}\left[n_{(1,0)}=n_{(-1,0)} \text{ and } n_{(0,1)}=n_{(0,-1)}\right]=\sum_{i=1}^{n}{{2n}\choose{2i}}{{2i}\choose{i}}{{2(n-i)}\choose{n-i}}\left(\frac{1}{4}\right)^{2n}\\
&=\sum_{i=1}^{n}\frac{(2n)!}{(2i)!(2n-2i)!}\frac{(2i)!}{\left[i!\right]^{2}}\frac{(2n-2i)!}{\left[(n-i)!\right]^{2}}\left(\frac{1}{16}\right)^{n}=\sum_{i=1}^{n}\frac{(2n)!}{[i!(n-i)!]^{2}}\frac{1}{16^{n}}\\
&=\frac{(2n)!}{16^{n}(n!)^{2}}\sum_{i=1}^{n}\frac{(n!)^{2}}{\left[i!(n-i)!\right]^{2}}=\frac{(2n)!}{16^{n}(n!)^{2}}\sum_{i=1}^{n}{{n}\choose {i}}^{2}\\
&=\frac{1}{16^{n}}{{2n}\choose{n}}^{2}>\frac{1}{(n+1)^{2}}
\end{align*}
by the equality
$$\sum_{i=1}^{n}{{n}\choose {i}}^{2}={{2n}\choose{n}}$$
and the inequality
 $${{2n}\choose{n}}>\frac{4^{n}}{n+1}.$$
Now we only need to check whether the distribution $\mathbb{P}_{\bm{X}\times Y}$ satisfies the assumptions shown in Theorem~\ref{thm::convex-finite-deep}. Clearly, $r_{+}=r_{-}=1<r=2$ and with probability 1, random vector $X$ drawn from distribution $\mathbb{P}_{\bm{X}|Y=1}$ and  random vector $Z$
drawn from distribution $\mathbb{P}_{\bm{X}|Y=-1}$ has rank one which equals to $r_{+}$ and $r_{-}$. Therefore, the distribution constructed here satisfies the assumptions in Theorem~\ref{thm::convex-finite-deep}.
\end{proof}

\subsection{Proof of Proposition~\ref{prop::quadratic}}\label{appendix::prop-quadratic}

\begin{proposition}
Assume that assumption~\ref{assump::loss} and~\ref{assump::shortcut-connection} are satisfed.
Assume that neurons in $f_{S}$ satisfy that $\sigma$ is strongly convex and twice differentiable on $\mathbb{R}$ and has a global minimum at $z=0$. Then there exists a network architecture $f_{D}$ and a distribution satisfying assumptions in Theorem~\ref{thm::linear-sep-deep} such that with probability one, the empirical loss $\hat{L}_{n}(\bm{\theta};p),p\ge2$ has a local minima $\bm{\theta}^{*}=(\bm{\theta}_{S}^{*},\bm{\theta}^{*}_{D})$ 
satisfying $\hat{R}_{n}(\bm{\theta}^{*})\ge\frac{\min\{n_{+},n_{-}\}}{n}$, where $n_{+}$ and $n_{-}$ denote the number of positive and negative samples in the dataset, respectively.
\end{proposition}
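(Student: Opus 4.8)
The plan is to mimic the degenerate-neuron constructions used for Propositions~\ref{prop::relus}--\ref{prop::logistic}: take $f_{D}\equiv 0$, so that $f=f_{S}(x;\bm\theta_{S})=a_{0}+\sum_{j=1}^{M}a_{j}\sigma(\bm w_{j}^{\top}x)$, and exhibit a constant-output local minimum whose misclassification rate is at least $\min\{n_{+},n_{-}\}/n$. The only features of $\sigma$ I will use are that its global minimum sits at $z=0$, so that $\sigma'(0)=0$ and $\sigma(z)\ge\sigma(0)$, together with $\sigma''(0)>0$ (guaranteed by strong convexity). For the data I would choose a two-atom, linearly separable distribution supported on a single line through the origin: fix a unit vector $\bm e$ and a constant $\delta\in(0,1)$, and place all positive mass at $\delta\bm e$ and all negative mass at $-\bm e$. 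Linear separability (Assumption~\ref{assump::linear-sep}) then holds with $\bm w=\bm e$, so the distribution meets the hypotheses of Theorem~\ref{thm::linear-sep-deep}; the crucial asymmetry is that the positive class sits strictly closer to the origin than the negative class.

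First I would build the candidate critical point $\bm\theta^{*}$ by setting every $\bm w_{j}^{*}=\bm 0$ and every $a_{j}^{*}=1$, and choosing $a_{0}^{*}$ so that $f(x_{i};\bm\theta^{*})\equiv c^{*}$, where $c^{*}$ minimizes the convex one-dimensional function $g(c)=\tfrac{n_{+}}{n}\ell_{p}(-c)+\tfrac{n_{-}}{n}\ell_{p}(c)$. A short monotonicity check on $g'$ using Assumption~\ref{assump::loss}(3) shows the minimizer exists and lies in $(-z_{0},z_{0})$, so that $A:=\ell_{p}'(-c^{*})$ and $B:=\ell_{p}'(c^{*})$ are both strictly positive and obey the first-order relation $n_{+}A=n_{-}B$. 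Because $\sigma'(0)=0$ and $g'(c^{*})=0$, all first derivatives of $\hat L_{n}$ vanish at $\bm\theta^{*}$, so it is a critical point; and since $f\equiv c^{*}\neq0$, the sign of $f$ is constant, one entire class is misclassified, and $\hat R_{n}(\bm\theta^{*})\ge\min\{n_{+},n_{-}\}/n$.

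The heart of the argument is verifying that $\bm\theta^{*}$ is a genuine local minimum. For a perturbation $\tilde{\bm\theta}$ I would invoke convexity of $\ell_{p}$ to write $\hat L_{n}(\tilde{\bm\theta})-\hat L_{n}(\bm\theta^{*})\ge\tfrac1n\sum_{i}\ell_{p}'(-y_{i}c^{*})(-y_{i})\bigl(f(x_{i};\tilde{\bm\theta})-c^{*}\bigr)$, exactly as in the proof of Proposition~\ref{prop::logistic}. The terms that are constant in $i$ drop out because $\sum_{i}\ell_{p}'(-y_{i}c^{*})(-y_{i})=0$, leaving $\tfrac1n\sum_{j}(a_{j}^{*}+\Delta a_{j})\,S_{j}(\Delta\bm w_{j})$ with $S_{j}(\bm v)=\sum_{i}\ell_{p}'(-y_{i}c^{*})(-y_{i})\bigl[\sigma(\bm v^{\top}x_{i})-\sigma(0)\bigr]$. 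Since the data lie in $\mathrm{span}(\bm e)$, writing $u=\bm v^{\top}\bm e$ collapses this to $S_{j}(\bm v)=n_{-}B\bigl[\sigma(-u)-\sigma(\delta u)\bigr]$, whose leading behaviour is $\tfrac12\sigma''(0)\,n_{-}B\,(1-\delta^{2})u^{2}\ge0$; hence $S_{j}\ge0$ for small $\bm v$. As $a_{j}^{*}+\Delta a_{j}>0$, the whole lower bound is nonnegative and $\bm\theta^{*}$ is a local minimum. Equivalently, the relevant curvature matrix is $B\Sigma_{-}-A\Sigma_{+}=n_{-}B(1-\delta^{2})\bm e\bm e^{\top}\succeq0$, where $\Sigma_{\pm}=\sum_{i:y_{i}=\pm1}x_{i}x_{i}^{\top}$.

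I expect the main obstacle to be the ``with probability one'' claim rather than the local-minimum verification itself: the construction must survive the random class counts $n_{+},n_{-}$. This is precisely why I place each class at a single atom on opposite rays — the identity $n_{+}A=n_{-}B$ then forces $B\Sigma_{-}-A\Sigma_{+}=n_{-}B(1-\delta^{2})\bm e\bm e^{\top}$, which is positive semidefinite for every realization with $n_{+},n_{-}\ge1$, independent of the exact counts. When one class is absent the bound $\hat R_{n}\ge\min\{n_{+},n_{-}\}/n=0$ is vacuous (a global minimum with $\hat R_{n}=0$ still exists), so the statement holds for every draw. A secondary technical point is the use of convexity of $\ell_{p}$ and, for a general $\sigma$ in the class, confirming that the higher-order remainder in $S_{j}$ is dominated by the strictly positive quadratic term $\tfrac12\sigma''(0)(1-\delta^{2})u^{2}$ on a small enough neighborhood; this follows from twice-differentiability at $0$ and strong convexity ($\sigma''(0)>0$), and for the quadratic neuron $\sigma(z)=z^{2}$ the computation is exact so no remainder estimate is needed.
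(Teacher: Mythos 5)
Your proposal is correct and follows essentially the same route as the paper's own proof: set $f_{D}\equiv 0$, collapse the network to a constant output by taking $\bm{w}_{j}^{*}=\bm{0}$ with an optimally chosen bias, use convexity of $\ell_{p}$ to reduce local minimality to nonnegativity of a curvature term, and exploit $\sigma'(0)=0$ and $\sigma''(0)>0$ together with an imbalance in the class second moments (your condition $B\Sigma_{-}-A\Sigma_{+}\succeq 0$ is the mirror image of the paper's requirement that $\tfrac{1}{n_{+}}\Sigma_{+}-\tfrac{1}{n_{-}}\Sigma_{-}$ be definite). The only differences are cosmetic: the paper takes $a_{j}^{*}=-1$ with the positive class farther from the origin and a continuous one-dimensional distribution, whereas you take $a_{j}^{*}=+1$ with the negative class farther and a two-atom distribution, which lets you compute the perturbation term exactly instead of invoking a positive-definite Hessian.
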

\begin{proof}
We first prove the following claim showing that if the dataset satisfies certain conditions, then the empirical loss has a local minimum satisfying $\hat{R}_{n}(\bm{\theta}^{*})\ge\frac{\min\{n_{-},n_{+}\}}{n}$. Next, we construct a data distribution such that  the dataset drawn from the distribution $\bm{P}_{\bm{X}\times Y}$ satisfies these conditions with probability one. 

\begin{claim} 
If the matrix $\frac{1}{n_{+}}\sum_{i:y_{i}=1}x_{i}x_{i}^{\top}-\frac{1}{n_{-}}\sum_{i:y_{i}=-1}x_{i}x_{i}^{\top}$ is positive or negative definite, then  the empirical loss function $\hat{L}_{n}(\bm{\theta})$ has a local minimum $\bm{\theta}^{*}$ satisfying $\hat{R}_{n}(\bm{\theta}^{*})\ge\frac{\min\{n_{-},n_{+}\}}{n}$.
\end{claim}
\begin{proof}
We prove that if the following matrix 
$$\frac{1}{n_{+}}\sum_{i:y_{i}=1}x_{i}x_{i}^{\top}-\frac{1}{n_{-}}\sum_{i:y_{i}=-1}x_{i}x_{i}^{\top}$$
is either positive definite or negative definite, then there exists a local minima $\bm{\theta}^{*}$ having $f(x;\bm{\theta}^{*})\equiv $ constant and this leads to $\hat{R}_{n}(\bm{\theta}^{*})\ge\frac{\min\{n_{+},n_{-}\}}{n}$.  Without loss of generality, we assume that the matrix is positive definite. 
Consider a single layer neural network 
$$f(x;\bm{\theta})=a_{0}+\sum_{j=1}^{M}a_{j}\sigma\left(\bm{w}^{\top}_{j}x\right).$$
Let $a_{1}^{*}=...=a_{M}^{*}=-1$ and $\bm{w}^{*}_{1}=...=\bm{w}^{*}_{M}=\bm{0}_{d}$. 

Therefore, we have $f(x;\bm{\theta}^{*})=a_{0}^{*}-M\sigma(0)$. Let $a_{0}^{*}$ be the global optimizer of the following convex optimization problem. 
$$\min_{a}\sum_{i=1}^{n}\ell_{p}(-y_{i}(a-M\sigma(0))).$$
Thus, we have 
\begin{equation}\label{eq::prop-single-necc-2}\sum_{i=1}^{n}\ell_{p}'(-y_{i}(a_{0}^{*}-M\sigma(0)))(-y_{i})=0,\end{equation}
and this indicates that 
\begin{equation}\label{eq::thm-convex-counterexample-eq2}\sum_{i:y_{i}=1}\ell_{p}'(-(a_{0}^{*}-M\sigma(0)))=\sum_{i:y_{i}=-1}\ell_{p}'(a_{0}^{*}-M\sigma(0))\quad\text{or}\quad{\ell_{p}'(-a_{0}^{*}+M\sigma(0))}{n_{+}}={\ell_{p}'(a_{0}^{*}-M\sigma(0))}{n_{-}}.\end{equation}
In addition, since for $\forall j\in[M]$,
\begin{align*}
&\frac{\partial \hat{L}_{n}(\bm{\theta}^{*})}{\partial a_{j}}=\sum_{i=1}^{n}\ell_{p}'(-y_{i}(a_{0}^{*}-M\sigma(0)))(-y_{i})\sigma(0)=0,&&\text{by Equation }~\eqref{eq::prop-single-necc-2},\\
& \nabla_{\bm{w}_{j}}\hat{L}_{n}(\bm{\theta}^{*})=\sum_{i=1}^{n}\ell_{p}'(-y_{i}(a_{0}^{*}-M\sigma(0)))(-y_{i})\sigma'(0)x_{i}=\bm{0}_{d},&&\text{by }\sigma'(0)=0,
\end{align*}
and 
$$\frac{\partial \hat{L}_{n}(\bm{\theta}^{*})}{\partial a_{0}}=\sum_{i=1}^{n}\ell_{p}'(-y_{i}(a_{0}^{*}-M\sigma(0)))(-y_{i})=0,$$
then $\bm{\theta}^{*}$ is a critical point. 

Next we show that $\bm{\theta}^{*}=(a_{0}^{*},...,a_{M}^{*},\bm{w}_{1}^{*},...,\bm{w}_{M}^{*})$ is a local minima. Consider any perturbation $\Delta a_{1},...,\Delta a_{M}:|\Delta a_{j}|<\frac{1}{2}$ for all $j\in[M]$, $\Delta \bm{w}_{1},...,\Delta \bm{w}_{M}\in\mathbb{R}^{d}$ and $\Delta a_{0}\in\mathbb{R}$. Define $$\tilde{\bm{\theta}}=(a_{0}^{*}+\Delta a_{0},...,a_{M}^{*}+\Delta a_{M},\bm{w}_{1}^{*}+\Delta \bm{w}_{1},...,\bm{w}_{M}^{*}+\Delta \bm{w}_{M}
).$$

Then 
\begin{align*}
\sum_{i=1}^{n}\ell_{p}(-y_{i}f(x_{i};\tilde{\bm{\theta}}))-\sum_{i=1}^{n}\ell_{p}(-y_{i}f(x_{i};\bm{\theta}^{*}))&=\sum_{i=1}^{n}\left[\ell_{p}(-y_{i}f(x_{i};\tilde{\bm{\theta}}))-\ell_{p}(-y_{i}f(x_{i};\bm{\theta}^{*}))\right]\\
&\ge\sum_{i=1}^{n}\ell_{p}'(-y_{i}f(x_{i};\bm{\theta}^{*}))(-y_{i})[f(x_{i};\tilde{\bm{\theta}})-f(x_{i};{\bm{\theta}}^{*})]\\
&=\sum_{i=1}^{n}\ell_{p}'(-y_{i}(a_{0}^{*}-M\sigma(0)))(-y_{i})[f(x_{i};\tilde{\bm{\theta}})-a_{0}^{*}+M\sigma(0)]\\
&=\sum_{i=1}^{n}\ell_{p}'(-y_{i}(a_{0}^{*}-M\sigma(0)))(-y_{i})f(x_{i};\tilde{\bm{\theta}}),
\end{align*}
where the inequality follows from the convexity of the loss function $\ell_{p}(z)$, the second equality follows from the fact that $f(x;\bm{\theta}^{*})\equiv a_{0}^{*}-M\sigma(0)$ and the third equality follows from Equation~\eqref{eq::thm-convex-counterexample-eq2}. In addition, we have 
\begin{align*}
&\sum_{i=1}^{n}\ell_{p}'(-y_{i}(a_{0}^{*}-M\sigma(0)))(-y_{i})f(x_{i};\tilde{\bm{\theta}})\\
&=\sum_{i=1}^{n}\ell_{p}'(-y_{i}(a_{0}^{*}-M\sigma(0)))(-y_{i})\left[\sum_{j=1}^{M}(a_{j}^{*}+\Delta a_{j})\sigma\left(\Delta\bm{w}_{j}^{\top}x_{i}\right)+\Delta a_{0}\right]\\
&=\sum_{i=1}^{n}\ell_{p}'(-y_{i}(a_{0}^{*}-M\sigma(0)))(-y_{i})\left[\sum_{j=1}^{M}(a_{j}^{*}+\Delta a_{j})\sigma\left(\Delta\bm{w}_{j}^{\top}x_{i}\right)\right]&&\text{by Eq.~\eqref{eq::thm-convex-counterexample-eq2}}\\
&=\sum_{j=1}^{M}-(a_{j}^{*}+\Delta a_{j})\left[\sum_{i=1}^{n}\ell_{p}'(-y_{i}(a_{0}^{*}-M\sigma(0)))y_{i}\sigma\left(\Delta\bm{w}_{j}^{\top}x_{i}\right)\right].
\end{align*}
Now we define the following function $G:\mathbb{R}^{d}\rightarrow \mathbb{R}$, 
\begin{equation*}
G(\bm{u})=\sum_{i=1}^{n}\ell_{p}'(-y_{i}(a_{0}^{*}-M\sigma(0)))y_{i}\sigma\left(\bm{u}^{\top}x_{i}\right).
\end{equation*}
Now we consider the gradient of the function $G$ with respect to the vector $\bm{u}$ at the point $\bm{0}_{d}$,
\begin{align*}
\nabla_{\bm{u}}G(\bm{0}_{d})&=\sum_{i=1}^{n}\ell_{p}'(-y_{i}(a_{0}^{*}-M\sigma(0)))y_{i}\sigma'\left(0\right)x_{i}=\bm{0}_{d}.
\end{align*}
Furthermore,  the Hessian matrix $\nabla_{\bm{u}}^{2}G(\bm{0}_{d})$ satisfies
\begin{align*}
\nabla_{\bm{u}}^{2}G(\bm{0}_{d})&=
\sum_{i=1}^{n}\ell_{p}'(-y_{i}(a_{0}^{*}-M\sigma(0)))y_{i}\sigma''\left(0\right)x_{i}x_{i}^{\top}=\sigma''\left(0\right)\sum_{i=1}^{n}\ell_{p}'(-y_{i}(a_{0}^{*}-M\sigma(0)))y_{i}x_{i}x_{i}^{\top}\\
&=\sigma''(0)\left[\frac{1}{n_{+}}\sum_{i:y_{i}=1}x_{i}x_{i}^{\top}-\frac{1}{n_{-}}\sum_{i:y_{i}=-1}x_{i}x_{i}^{\top}\right]\succ 0,
\end{align*}
then the function $G(\bm{u})=\sum_{i=1}^{n}\ell_{p}(-y_{i}(a_{0}^{*}-M\sigma(0)))y_{i}\sigma\left(\bm{u}^{\top}x_{i}\right)$ has a local minima at $\bm{u}=\bm{0}_{d}$. This indicates that there exists $\varepsilon>0$ such that for all $\Delta\bm{w}:\|\Delta\bm{w}\|_{2}\le\varepsilon$, 
$$\sum_{i=1}^{n}\ell_{p}'(-y_{i}(a_{0}^{*}-M\sigma(0)))y_{i}\sigma\left(\Delta\bm{w}^{\top}x_{i}\right)\ge \sum_{i=1}^{n}\ell_{p}(-y_{i}(a_{0}^{*}-M\sigma(0)))y_{i}\sigma\left(0\right)=0.$$
In addition, since  $a_{j}^{*}=-1$, $|\Delta a_{j}|<\frac{1}{2}$, then for all $\Delta\bm{w}_{j}:\|\Delta\bm{w}_{j}\|_{2}\le\varepsilon$,
\begin{align*}
\sum_{i=1}^{n}\ell_{p}'(-y_{i}(a_{0}^{*}-M\sigma(0)))(-y_{i})f(x_{i};\tilde{\bm{\theta}})=\sum_{j=1}^{M}-(a_{j}^{*}+\Delta a_{j})\left[\sum_{i=1}^{n}\ell_{p}(-y_{i}(a_{0}^{*}-M\sigma(0)))y_{i}\sigma\left(\Delta\bm{w}_{j}^{\top}x_{i}\right)\right]\ge 0.
\end{align*}
Therefore, we have 
$$\sum_{i=1}^{n}\ell_{p}'(-y_{i}(a_{0}^{*}-M\sigma(0)))(-y_{i})f(x_{i};\tilde{\bm{\theta}})\ge 0,$$
and this indicates that 
$$\sum_{i=1}^{n}\ell_{p}(-y_{i}f(x_{i};\tilde{\bm{\theta}}))-\sum_{i=1}^{n}\ell_{p}(-y_{i}f(x_{i};\bm{\theta}^{*}))\ge 0.$$
Thus, $\bm{\theta}^{*}$ is a local minima with $f(x;\bm{\theta}^{*})=a_{0}^{*}-M\sigma(0)=$ constant. Thus, $$\sum_{i=1}^{n}\mathbb{I}\{y_{i}\neq \sgn(f(x_{i};\bm{\theta}^{*}))\}\ge \frac{\min\{n_{-},n_{+}\}}{n}.$$
\end{proof}

Now we define a data distribution  as follows. Let $\mathbb{P}_{Y}(Y=1)=\mathbb{P}(Y=-1)=0.5$. Let $\mathbb{P}_{\bm{X}|Y=1}$ be a continuous distribution (e.g., uniform distribution) defined on the interval $[2, 3]$ and $\mathbb{P}_{\bm{X}|Y=-1}$ be a continuous distribution defined on the interval $[-1, -1/2]$. Then if samples in the dataset $\mathcal{D}$ are drawn independently from the this distribution, the scalar $\frac{1}{n_{+}}\sum_{i:y_{i}=1}x_{i}^{2}-\frac{1}{n_{-}}\sum_{i:y_{i}=-1}x_{i}^{2}>0$ if $n_{+}>0$ and the scalar $\frac{1}{n_{+}}\sum_{i:y_{i}=1}x_{i}^{2}-\frac{1}{n_{-}}\sum_{i:y_{i}=-1}x_{i}^{2}<0$ if $n_{+}=0$. This means that the dataset satisfies the conditions in the claim with probability one. 

\end{proof}

\clearpage

\subsection{Proof of Proposition~\ref{prop::feedforward}}\label{appendix::prop-feedforward}

\begin{proposition}
Assume that assumption \ref{assump::loss} is satisfied.
Assume that the feedforward neural network $f(x;\bm{\theta})$ has at least one hidden layer and has at least one neuron in each hidden layer. If neurons in the network $f$ satisfy that $\sigma(z)=0$ for all $z\le 0$ and $\sigma(z)$ is continuous on $\mathbb{R}$, then  the empirical loss $\hat{L}_{n}(\bm{\theta};p),p\ge 2$ has a local minima $\bm{\theta}^{*}$ satisfying $\hat{R}_{n}(\bm{\theta}^{*})\ge\frac{\min\{n_{+},n_{-}\}}{n}$, where $n_{+}$ and $n_{-}$ denote the number of positive and negative samples in the dataset, respectively. 
\end{proposition}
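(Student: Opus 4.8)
The plan is to exhibit an explicit local minimum at which the network output is \emph{constant} on the whole dataset, so that the resulting classifier necessarily misclassifies either all positive or all negative samples. Write $L$ for the number of hidden layers and let $\bm{\Phi}(x;\bm{\theta}_{L-1})$ denote the vector of outputs of the neurons in the last hidden layer, so that
$$f(x;\bm{\theta}) = a_{0} + \sum_{j=1}^{M_{L}} a_{j}\,\sigma\!\left(\bm{w}_{j}^{\top}\bm{\Phi}(x;\bm{\theta}_{L-1}) + b_{j}\right).$$
First I would fix the parameters $\bm{\theta}_{L-1}$ of the first $L-1$ layers to arbitrary values $\bm{\theta}_{L-1}^{*}$ and set $K = \max_{i\in[n]}\|\bm{\Phi}(x_{i};\bm{\theta}_{L-1}^{*})\|_{2}$, which is finite because there are only $n$ samples. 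Choosing each last-layer weight $\bm{w}_{j}^{*}$ to be a unit vector and each bias $b_{j}^{*}\le -K-1$ forces the pre-activation of every neuron $j$ to be at most $-1$ on every sample; since $\sigma$ vanishes on $(-\infty,0]$, all these neurons are inactive and $f(x_{i};\bm{\theta}^{*})=a_{0}^{*}$ for every $i$. Finally I would set $a_{0}^{*}$ to be a global minimizer of the one-dimensional convex problem $\min_{a\in\mathbb{R}}\frac{1}{n}\sum_{i=1}^{n}\ell_{p}(-y_{i}a)$.

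Next I would verify that this $\bm{\theta}^{*}$ is a local minimum by a continuity argument. Each coordinate $\varphi_{ij}(\bm{\theta})=\Phi_{j}(x_{i};\bm{\theta})$ is a continuous function of the earlier-layer parameters (being a composition of continuous neurons and affine maps), and there are only finitely many pairs $(i,j)$, so I can choose a single $\delta>0$ such that any perturbation of all parameters of norm at most $\delta$ changes each quantity $\bm{w}_{j}^{\top}\bm{\Phi}(x_{i};\cdot)+b_{j}$ by less than $\tfrac{1}{2}$. Hence every last-layer neuron stays inactive on every sample after the perturbation, and the perturbed network still outputs the constant $\tilde{a}_{0}$. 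The empirical loss then equals $\frac{1}{n}\sum_{i=1}^{n}\ell_{p}(-y_{i}\tilde{a}_{0})$, which by the global optimality of $a_{0}^{*}$ is at least $\frac{1}{n}\sum_{i=1}^{n}\ell_{p}(-y_{i}a_{0}^{*})=\hat{L}_{n}(\bm{\theta}^{*};p)$, establishing local minimality.

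Finally, since $f(\cdot;\bm{\theta}^{*})\equiv a_{0}^{*}$ is constant, its sign is the same on all inputs, so the classifier labels every sample identically and therefore misclassifies at least $\min\{n_{+},n_{-}\}$ of them, giving $\hat{R}_{n}(\bm{\theta}^{*})\ge \frac{\min\{n_{+},n_{-}\}}{n}$. The main obstacle I expect is the uniformity in the continuity step: the last-layer pre-activations depend on the earlier-layer parameters through the nonlinear map $\bm{\Phi}$, so the perturbation bound must be taken uniformly over all $n$ samples and all $M_{L}$ neurons simultaneously. This is handled by taking the minimum of the finitely many $\delta_{ij}(\varepsilon)$ and by the margin of size $1$ built into the construction, which leaves room (of size $\tfrac{1}{2}$) for the pre-activations to drift while staying negative. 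The remaining steps --- the first-order one-dimensional optimization for $a_{0}^{*}$ and the counting of misclassified points --- are routine.
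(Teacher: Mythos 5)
Your proposal is correct and follows essentially the same route as the paper's own proof: deactivate every neuron in the last hidden layer by choosing biases below $-K-1$, optimize the output bias $a_{0}$ over the resulting constant predictor, and use uniform continuity of the finitely many pre-activation maps to show the neurons stay inactive under small perturbations, so the point is a local minimum with constant output. The only cosmetic difference is that the paper fixes $\bm{\theta}_{L-1}^{*}$ to a unit vector and spells out the perturbation bound via an explicit triangle-inequality chain, whereas you invoke the same continuity argument more compactly; both are sound.
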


\begin{proof}
Assume that the multilayer neural network $f(x;\bm{\theta})$ has $L\ge 1$ hidden layers, $M_{l}\ge 1$ neurons in the $l$-th layer. Now we let the vector $\bm{\theta}_{l}$ contain all parameters in the first $l\in[L]$ layers. 
Then the output of the neural network can be rewritten as 
\begin{equation*}
f(x;a_{0},\bm{\theta}_{L})=a_{0}+\sum_{j=1}^{M_{L}}a_{j}\sigma(\bm{w}_{j}^{\top}\bm{\Phi}(x;\bm{\theta}_{L-1})+b_{j}),
\end{equation*}
where $\bm{\Phi}(x;\bm{\theta}_{L-1})=(\Phi_{1}(x;\bm{\theta}_{L-1}),...,\Phi_{M_{L-1}}(x;\bm{\theta}_{L-1}))$ denotes the outputs of the neurons in the layer $L-1$.
Now we construct a local minimum $\bm{\theta}^{*}=(a_{0}^{*},\bm{\theta}_{L}^{*})$. The key idea of constructing the local minimum having a training error no smaller than $\frac{\min\{n_{+},n_{-}\}}{n}$ is appropriately choosing $\bm{w}_{j},b_{j}$ such that all neurons in the last layer keep inactive on all samples in the dataset. This is possible since the outputs of the neurons in the layer $L-1$ are bounded.

We first  set $\bm{\theta}_{L-1}$ to any unit vector $\bm{\theta}^{*}_{L-1}:\|\bm{\theta}^{*}_{L-1}\|_{2}=1$. 
Next,  for any data set $\mathcal{D}=\{(x_{i};y_{i})\}_{i=1}^{n}$, we define $$K=\max_{i\in[n]}\|\bm{\Phi} (x_{i};\bm{\theta}_{L-1}^{*})\|_{2}.$$
In addition, it is easy to show that the function $\varphi_{ij}(\bm{\theta})=\Phi_{j}(x_{i};\bm{\theta})$ is a continuous function. Now we consider the compact set $C_{\delta}=\{\bm{\theta}:\|\bm{\theta}-\bm{\theta}_{L-1}^{*}\|_{2}\le \delta\}$, where $\delta>0$ . Since each function $\varphi_{ij}$ is a continuous function on the compact set $C$,  then by the definition of continuity, 
$$\forall \varepsilon>0, \exists \delta_{ij}(\varepsilon)\in(0,1):|\varphi_{ij}(\bm{\theta})-\varphi_{ij}(\bm{\theta}^{*}_{L-1})|\le \varepsilon\quad\text{for all }\bm{\theta}\in C_{\delta_{ij}}.$$
For a given $\varepsilon>0$, let 
$$\delta(\varepsilon)=\min_{i\in[n],j\in[M_{L-1}]}\delta_{ij}(\varepsilon),$$
then for all $i\in[n], j\in[M_{L-1}]$ and $\forall \bm{\theta}\in C_{\delta}$, 
$$|\varphi_{ij}(\bm{\theta})-\varphi_{ij}(\bm{\theta}_{L-1})|\le \varepsilon.$$ 
Now we set $\bm{w}_{j}$ to some unit vector $\bm{w}_{j}:\|\bm{w}_{j}\|_{2}=1$ for all $j\in[M_{L-1}]$, and we set $b_{j}$ to a scalar $b_{j}^{*}$ satisfying
$${\bm{w}^{*}_{j}}^{\top}\bm{\Phi}(x_{i};\bm{\theta}_{L-1}^{*})+b_{j}^{*}\le -1,\quad \text{for all }i\in[n] \text{ and all }\bm{\theta}\in C.$$ 
 Therefore, the neural network becomes 
 $$f(x_{i};a_{0},\bm{\theta}_{L}^{*})=a_{0},\quad \forall i\in[n].$$
Furthermore, for the $\delta(\varepsilon)$ defined above and for any parameter vector $\tilde{\bm{\theta}}_{L}:\|\tilde{\bm{\theta}}_{L}-\bm{\theta}^{*}_{L}\|_{2}\le\delta(\varepsilon)$, we have for all $j\in[M_{L-1}]$ and all $i\in[n]$,
\begin{align*}
&|\tilde{\bm{w}}_{j}^{\top}\bm{\Phi}(x_{i};\tilde{\bm{\theta}}_{L-1})+\tilde{b}_{j}-{\bm{w}_{j}^{*}}^{\top}\bm{\Phi}(x_{i};\bm{\theta}_{L-1}^{*})-b^{*}_{j}|\\
&\le |\tilde{\bm{w}}_{j}^{\top}\bm{\Phi}(x_{i};\tilde{\bm{\theta}}_{L-1})-\tilde{\bm{w}}_{j}^{\top}\bm{\Phi}(x_{i};{\bm{\theta}}^{*}_{L-1})+\tilde{\bm{w}}_{j}^{\top}\bm{\Phi}(x_{i};{\bm{\theta}}^{*}_{L-1})-{\bm{w}_{j}^{*}}^{\top}\bm{\Phi}(x_{i};\bm{\theta}_{L-1}^{*})|+|\tilde{b}_{j}-b_{j}|\\
&\le  |\tilde{\bm{w}}_{j}^{\top}\bm{\Phi}(x_{i};\tilde{\bm{\theta}}_{L-1})-\tilde{\bm{w}}_{j}^{\top}\bm{\Phi}(x_{i};{\bm{\theta}}^{*}_{L-1})|+|\tilde{\bm{w}}_{j}^{\top}\bm{\Phi}(x_{i};{\bm{\theta}}^{*}_{L-1})-{\bm{w}_{j}^{*}}^{\top}\bm{\Phi}(x_{i};\bm{\theta}_{L-1}^{*})|+|\tilde{b}_{j}-b_{j}|\\
&\le \|\tilde{\bm{w}}_{j}\|_{2}\|\bm{\Phi}(x_{i};\tilde{\bm{\theta}}_{L-1})-\bm{\Phi}(x_{i};{\bm{\theta}}^{*}_{L-1})\|_{2}+\|\tilde{\bm{w}}_{j}-{\bm{w}}^{*}_{j}\|_{2}\|\bm{\Phi}(x_{i};{\bm{\theta}}^{*}_{L-1})\|_{2}+|\tilde{b}_{j}-b_{j}|\\
&\le (1+\delta(\varepsilon))\sqrt{M_{L-1}}\varepsilon+\varepsilon K+\varepsilon\le (2\sqrt{M_{L-1}}+K+1)\varepsilon.
\end{align*}
Thus, if $\varepsilon<\frac{1}{2(2\sqrt{M_{L-1}}+K+1)}$, then for all $\tilde{\bm{\theta}}_{L}:\|\tilde{\bm{\theta}}_{L}-\bm{\theta}^{*}_{L}\|_{2}\le\delta(\varepsilon)$,  $\forall j\in[M]$ and $\forall i\in[n]$
\begin{align}
\tilde{\bm{w}}_{j}^{\top}\bm{\Phi}(x_{i};\tilde{\bm{\theta}}_{L-1})+\tilde{b}_{j}\le {\bm{w}_{j}^{*}}^{\top}\bm{\Phi}(x_{i};\bm{\theta}_{L-1}^{*})+b^{*}_{j}+\frac{1}{2}\le-\frac{1}{2}.
\end{align}
Since $\sigma(z)=0$ for all $z\le 0$, then this indicates that for all $\tilde{\bm{\theta}}_{L}:\|\tilde{\bm{\theta}}_{L}-\bm{\theta}^{*}_{L}\|_{2}\le\delta(\varepsilon)$, $$f(x_{i};a_{0},\tilde{\bm{\theta}}_{L-1})=a_{0},\quad\text{for all }i\in[n].$$

Finally, we set $a_{0}^{*}$ to the global minimizer of the following convex optimization problem:
$$\min_{a\in\mathbb{R}}\frac{1}{n}\sum_{i=1}^{n}\ell(-y_{i}a).$$
This indicates that for any $a\in\mathbb{R}$, 
$$\frac{1}{n}\sum_{i=1}^{n}\ell(-y_{i}a)\ge \frac{1}{n}\sum_{i=1}^{n}\ell(-y_{i}a_{0}^{*}).$$
Therefore, for $\tilde{\bm{\theta}}_{L}:\|\tilde{\bm{\theta}}_{L}-\bm{\theta}^{*}_{L}\|_{2}\le\delta(\varepsilon)$ and any $a_{0}\in\mathbb{R}$
\begin{align*}
\hat{L}_{n}(a_{0},\tilde{\bm{\theta}}_{L})&=\frac{1}{n}\sum_{i=1}^{n}\ell(-y_{i}f(x_{i};\tilde{\bm{\theta}}_{L}))=\frac{1}{n}\sum_{i=1}^{n}\ell(-y_{i}a_{0})\\&\ge\frac{1}{n}\sum_{i=1}^{n}\ell(-y_{i}a_{0}^{*})\ge\frac{1}{n}\sum_{i=1}^{n}\ell(-y_{i}f(x_{i};a_{0}^{*},\bm{\theta}_{L}^{*}))=\hat{L}_{n}(a_{0}^{*},\bm{\theta}_{L}^{*}).
\end{align*}
This means that $(a_{0}^{*}, \bm{\theta}_{L}^{*})$ is a local minima and $f(x_{i};a_{0}^{*},\bm{\theta}^{*}_{L})=a_{0}^{*}$ for all $i\in[n]$. This further indicates that 
$$\hat{R}_{n}(\bm{\theta}^{*})\ge\frac{\min\{n_{-},n_{+}\}}{n}.$$

\end{proof}

\clearpage
\subsection{Proof of Proposition~\ref{prop::short-cut}}\label{appendix::prop-short-cut}

\begin{proposition}
Assume that $H:\mathbb{R}^{d}\rightarrow\mathbb{R}^{d}$ is a feedforward neural network parameterized by $\bm{\theta}$ and all neurons in $H$ are ReLUs. Define a network $f:\mathbb{R}^{d}\rightarrow \mathbb{R}$ with identity shortcut connections as $f(x;\bm{a},\bm{\theta},b)=\bm{a}^{\top}(x+H(x;\bm{\theta}))+b$, $\bm{a}\in\mathbb{R}^{d}, b\in\mathbb{R}$. Then there exists a distribution $\mathbb{P}_{\bm{X}\times Y}$ satisfying the assumptions in Theorem~\ref{thm::convex-finite-deep} such that with probability at least $1-e^{-\Omega(n)}$, the empirical loss $\hat{L}_{n}(\bm{a},\bm{\theta},b;p)=\frac{1}{n}\sum_{i=1}^{n}\ell(-y_{i}f(x_{i};\bm{\theta});p), p\ge 2$ has a local minimum with non-zero training error.  
\end{proposition}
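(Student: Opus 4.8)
The plan is to reuse the counterexample distribution from the proof of Proposition~\ref{prop::relus} (its ``Distribution for Theorem~\ref{thm::convex-finite-deep}'' construction) and to build a local minimum at which the network collapses to a purely affine map on the training set, so that its training error equals that of the best affine classifier. Concretely, I take $\mathbb{P}_{\bm{X}|Y=1}$ uniform on $([-2,-1]\cup[1,2])\times\{0\}\times\{1\}\times\{0\}^{d-3}$, $\mathbb{P}_{\bm{X}|Y=-1}$ uniform on $\{0\}\times([-2,-1]\cup[1,2])\times\{1\}\times\{0\}^{d-3}$, and $\mathbb{P}(Y=1)=\mathbb{P}(Y=-1)=1/2$ (taking $d\ge 3$). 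Exactly as in Proposition~\ref{prop::relus}, this distribution has $r=3>\max\{r_+,r_-\}=2$ and any two same-class draws are linearly independent, so Assumptions~\ref{assump::full-rank} and~\ref{assump::different-subspaces} (the distributional hypotheses of Theorem~\ref{thm::convex-finite-deep}) hold. Note that the identity-shortcut architecture here is deliberately \emph{not} the architecture of Assumption~\ref{assump::shortcut-connection}; only the distribution is required to match Theorem~\ref{thm::convex-finite-deep}.

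The first key step is to force $H$ to vanish on the data in a stable way. Since every neuron of $H$ is a ReLU, I would choose $\bm{\theta}^*$ so that the preactivations of the output layer of $H$ are all $\le -1$ on every training point $x_i$, following the continuity argument in the proof of Proposition~\ref{prop::feedforward}: fixing the earlier layers, the output-layer preactivations are continuous functions of the parameters, so keeping them $\le -1$ at $\bm{\theta}^*$ guarantees they stay $\le -1/2<0$ for all parameters in a neighborhood. Hence $H(x_i;\bm{\theta})=\bm{0}_d$ for every $i$ and every $\bm{\theta}$ near $\bm{\theta}^*$, and therefore $f(x_i;\bm{a},\bm{\theta},b)=\bm{a}^\top x_i+b$ throughout that neighborhood. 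I then set $(\bm{a}^*,b^*)$ to be a global minimizer of the convex surrogate restricted to affine models, $\frac1n\sum_{i=1}^n\ell_p(-y_i(\bm{a}^\top x_i+b))$; existence of this minimizer follows because, for the chosen distribution, the surrogate is coercive modulo its lineality space, on which it is constant.

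Given this, I would verify that $(\bm{a}^*,\bm{\theta}^*,b^*)$ is a local minimum: for any nearby $(\tilde{\bm{a}},\tilde{\bm{\theta}},\tilde b)$ the identity $H(x_i;\tilde{\bm{\theta}})=\bm{0}_d$ gives
\[
\hat L_n(\tilde{\bm{a}},\tilde{\bm{\theta}},\tilde b;p)=\frac1n\sum_{i=1}^n\ell_p(-y_i(\tilde{\bm{a}}^\top x_i+\tilde b))\ge \frac1n\sum_{i=1}^n\ell_p(-y_i(\bm{a}^{*\top}x_i+b^*))=\hat L_n(\bm{a}^*,\bm{\theta}^*,b^*;p),
\]
by the optimality of $(\bm{a}^*,b^*)$. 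Finally, for the probabilistic part, each of the four ``arms'' of the support (positive/negative class $\times$ two sign regions) carries mass $1/4$, so a union bound shows that with probability at least $1-4(3/4)^n=1-e^{-\Omega(n)}$ all four arms contain a sample. When they do, a short case analysis on the signs of $a_1$ and $a_2$ shows that the constraints $a_3+b\ge0$ (forced by the two positive arms) and $a_3+b<0$ (forced by the two negative arms) are incompatible, so the data are not linearly separable and every affine classifier---in particular $\sgn(f(\cdot;\bm{\theta}^*))$---misclassifies at least one point, giving $\hat R_n(\bm{\theta}^*)>0$.

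The main obstacle I expect is the stable deactivation of $H$: I must argue not merely that $H(x_i;\bm{\theta}^*)=\bm{0}_d$ but that this persists under joint perturbations of \emph{all} layers of $H$, which requires the layer-by-layer continuity and boundedness bookkeeping of Proposition~\ref{prop::feedforward} rather than a single-layer estimate. Once that is in place, the optimality step and the non-separability step are routine.
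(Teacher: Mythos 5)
Your proposal is correct and follows essentially the same route as the paper's own proof: stably deactivate the ReLU output layer of $H$ via the continuity/boundedness bookkeeping of Proposition~\ref{prop::feedforward} so that $f$ is affine on a neighborhood of the constructed point, set $(\bm{a}^{*},b^{*})$ to a global minimizer of the affine surrogate problem, and pick a four-arm distribution satisfying the assumptions of Theorem~\ref{thm::convex-finite-deep} whose samples are non-linearly-separable with probability $1-e^{-\Omega(n)}$. The only differences are cosmetic: you reuse the Proposition~\ref{prop::relus} distribution (with the extra constant coordinate) rather than the paper's slightly simpler variant, your union-bound probability $1-4(3/4)^{n}$ is in fact cleaner than the paper's stated bound, and you make explicit the coercivity-modulo-lineality argument for existence of the affine minimizer, which the paper leaves implicit.
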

\begin{proof}
We first show that if the samples in the dataset are not linearly separable, then empirical loss has a local minimum with a non-zero training error. Next, we construct a data distribution such that $n$ samples independently drawn from this data distribution are not linearly separable with probability at least $1-\exp(-\Omega(n))$.

\begin{claim}
If the samples in the dataset are not linearly separable, i.e., $\min_{\bm{w}\in\mathbb{R}^{d},b\in\mathbb{R}}\frac{1}{n}\sum_{i=1}^{n}\mathbb{I}\{y_{i}\neq \sgn(\bm{w}^{\top}x_{i}+b)\}>0$, then empirical loss has a local minimum with a non-zero training error. 
\end{claim}
\begin{proof}
The proof follows from the proof of Proposition~\ref{prop::relus} in Appendix~\ref{appendix::prop-relus} where we show that when the dataset has both positive and negative samples and all neurons in the multilayer network are ReLUs, then the empirical loss has a local minimum with a non-zero training error. 

Assume that the multilayer neural network $H(x;\bm{\theta})$ has $L\ge 1$ hidden layers, $M_{l}\ge 1$ neurons in the $l$-th layer in the multilayer neural network $H$. Clearly, $M_{L}=d$. Now we let the vector $\bm{\theta}_{l}$ contain all parameters in the first $l\in[L]$ layers. 
Then the output of the neural network $f(x;\bm{a},\bm{\theta},b)$ can be rewritten as 
\begin{equation*}
f(x;\bm{a},\bm{\theta},b)=b+\sum_{j=1}^{M_{L}}a_{j}\sigma(\bm{w}_{j}^{\top}\bm{\Phi}(x;\bm{\theta}_{L-1})+b_{j})+\bm{a}^{\top}x,
\end{equation*}
where $\bm{\Phi}(x;\bm{\theta}_{L-1})=(\Phi_{1}(x;\bm{\theta}_{L-1}),...,\Phi_{M_{L-1}}(x;\bm{\theta}_{L-1}))$ denotes the outputs of the neurons in the layer $L-1$.
Now we construct a local minimum $(\bm{a}^{*},\bm{\theta}^{*},b^{*})$. The whole idea of constructing the local minimum having a non-zero training error  is as follows. We first appropriately choose $\bm{w}_{j},b_{j}$ such that all neurons in the last layer of the multilayer network $H$ keep inactive on all samples in the dataset. Then the neural network becomes a linear model $$f(x;\bm{a}^{*},\bm{\theta}^{*},b^{*})=b^{*}+{\bm{a}^{*}}^{\top}x.$$
Next we only need to set $\bm{a}^{*},{b}^{*}$ to the global optimizer of the convex optimization problem $$\min_{\bm{a}\in\mathbb{R}^{d},b\in\mathbb{R}}\frac{1}{n}\sum_{i=1}^{n}\ell_{p}\left(-y_{i}(\bm{a}^{\top}x_{i}+b)\right).$$
Therefore, as we have shown in the proof of Propsition~\ref{prop::relus}, if we slightly perturb the parameter $\bm{\theta}^{*}$, the output of the multilayer network $H(x;\tilde{\bm{\theta}})$ on all samples are still zero and this makes $f(x_{i};\bm{a}^{*},\tilde{\bm{\theta}},b^{*})={\bm{a}^{*}}^{\top}x_{i}+b^{*}$. In addition, if we further perturb the vector $\bm{a}^{*}$ and $b^{*}$, the value of the empirical loss will not decrease since $\bm{a}^{*}$ and $b^{*}$ are the global optimizer of the empirical loss function. 

Now we present the proof.
We first  set $\bm{\theta}_{L-1}$ to any unit vector $\bm{\theta}^{*}_{L-1}:\|\bm{\theta}^{*}_{L-1}\|_{2}=1$. 
Next,  for any data set $\mathcal{D}=\{(x_{i};y_{i})\}_{i=1}^{n}$, we define $$K=\max_{i\in[n]}\|\bm{\Phi} (x_{i};\bm{\theta}_{L-1}^{*})\|_{2}.$$
In addition, it is easy to show that the function $\varphi_{ij}(\bm{\theta})=\Phi_{j}(x_{i};\bm{\theta})$ is a continuous function. Now we consider the compact set $C_{\delta}=\{\bm{\theta}:\|\bm{\theta}-\bm{\theta}_{L-1}^{*}\|_{2}\le \delta\}$, where $\delta>0$ . Since each function $\varphi_{ij}$ is a continuous function on the compact set $C$,  then by the definition of continuity, 
$$\forall \varepsilon>0, \exists \delta_{ij}(\varepsilon)\in(0,1):|\varphi_{ij}(\bm{\theta})-\varphi_{ij}(\bm{\theta}^{*}_{L-1})|\le \varepsilon\quad\text{for all }\bm{\theta}\in C_{\delta_{ij}}.$$
For a given $\varepsilon>0$, let 
$$\delta(\varepsilon)=\min_{i\in[n],j\in[M_{L-1}]}\delta_{ij}(\varepsilon),$$
then for all $i\in[n], j\in[M_{L-1}]$ and $\forall \bm{\theta}\in C_{\delta}$, 
$$|\varphi_{ij}(\bm{\theta})-\varphi_{ij}(\bm{\theta}_{L-1})|\le \varepsilon.$$ 
Now we set $\bm{w}_{j}$ to some unit vector $\bm{w}_{j}:\|\bm{w}_{j}\|_{2}=1$ for all $j\in[M_{L-1}]$, and we set $b_{j}$ to a scalar $b_{j}^{*}$ satisfying
$${\bm{w}^{*}_{j}}^{\top}\bm{\Phi}(x_{i};\bm{\theta}_{L-1}^{*})+b_{j}^{*}\le -1,\quad \text{for all }i\in[n] \text{ and all }\bm{\theta}\in C.$$ 
 Therefore, the neural network becomes 
 $$f(x_{i};\bm{a},\tilde{\bm{\theta}},b)=\bm{a}^{\top}x_{i}+b,\quad \forall i\in[n].$$
Furthermore, for the $\delta(\varepsilon)$ defined above and for any parameter vector $\tilde{\bm{\theta}}_{L}:\|\tilde{\bm{\theta}}_{L}-\bm{\theta}^{*}_{L}\|_{2}\le\delta(\varepsilon)$, we have for all $j\in[M_{L-1}]$ and all $i\in[n]$,
\begin{align*}
&|\tilde{\bm{w}}_{j}^{\top}\bm{\Phi}(x_{i};\tilde{\bm{\theta}}_{L-1})+\tilde{b}_{j}-{\bm{w}_{j}^{*}}^{\top}\bm{\Phi}(x_{i};\bm{\theta}_{L-1}^{*})-b^{*}_{j}|\\
&\le |\tilde{\bm{w}}_{j}^{\top}\bm{\Phi}(x_{i};\tilde{\bm{\theta}}_{L-1})-\tilde{\bm{w}}_{j}^{\top}\bm{\Phi}(x_{i};{\bm{\theta}}^{*}_{L-1})+\tilde{\bm{w}}_{j}^{\top}\bm{\Phi}(x_{i};{\bm{\theta}}^{*}_{L-1})-{\bm{w}_{j}^{*}}^{\top}\bm{\Phi}(x_{i};\bm{\theta}_{L-1}^{*})|+|\tilde{b}_{j}-b_{j}|\\
&\le  |\tilde{\bm{w}}_{j}^{\top}\bm{\Phi}(x_{i};\tilde{\bm{\theta}}_{L-1})-\tilde{\bm{w}}_{j}^{\top}\bm{\Phi}(x_{i};{\bm{\theta}}^{*}_{L-1})|+|\tilde{\bm{w}}_{j}^{\top}\bm{\Phi}(x_{i};{\bm{\theta}}^{*}_{L-1})-{\bm{w}_{j}^{*}}^{\top}\bm{\Phi}(x_{i};\bm{\theta}_{L-1}^{*})|+|\tilde{b}_{j}-b_{j}|\\
&\le \|\tilde{\bm{w}}_{j}\|_{2}\|\bm{\Phi}(x_{i};\tilde{\bm{\theta}}_{L-1})-\bm{\Phi}(x_{i};{\bm{\theta}}^{*}_{L-1})\|_{2}+\|\tilde{\bm{w}}_{j}-{\bm{w}}^{*}_{j}\|_{2}\|\bm{\Phi}(x_{i};{\bm{\theta}}^{*}_{L-1})\|_{2}+|\tilde{b}_{j}-b_{j}|\\
&\le (1+\delta(\varepsilon))\sqrt{M_{L-1}}\varepsilon+\varepsilon K+\varepsilon\le (2\sqrt{M_{L-1}}+K+1)\varepsilon.
\end{align*}
Thus, if $\varepsilon<\frac{1}{2(2\sqrt{M_{L-1}}+K+1)}$, then for all $\tilde{\bm{\theta}}_{L}:\|\tilde{\bm{\theta}}_{L}-\bm{\theta}^{*}_{L}\|_{2}\le\delta(\varepsilon)$,  $\forall j\in[M]$ and $\forall i\in[n]$
\begin{align}
\tilde{\bm{w}}_{j}^{\top}\bm{\Phi}(x_{i};\tilde{\bm{\theta}}_{L-1})+\tilde{b}_{j}\le {\bm{w}_{j}^{*}}^{\top}\bm{\Phi}(x_{i};\bm{\theta}_{L-1}^{*})+b^{*}_{j}+\frac{1}{2}\le-\frac{1}{2}.
\end{align}
Since $\sigma(z)=0$ for all $z\le 0$, then this indicates that for all $\tilde{\bm{\theta}}_{L}:\|\tilde{\bm{\theta}}_{L}-\bm{\theta}^{*}_{L}\|_{2}\le\delta(\varepsilon)$, $$f(x_{i};\bm{a},\tilde{\bm{\theta}},b)=\bm{a}^{\top}x_{i}+b,\quad\text{for all }i\in[n].$$

Finally, we set $\bm{a}^{*},b^{*}$ to the global minimizer of the following convex optimization problem:
$$\min_{\bm{a}\in\mathbb{R}^{d},b\in\mathbb{R}}\frac{1}{n}\sum_{i=1}^{n}\ell_{p}\left(-y_{i}(\bm{a}^{\top}x_{i}+b)\right).$$
This indicates that for any $\bm{a}\in\mathbb{R}^{d},b\in\mathbb{R}$, 
$$\frac{1}{n}\sum_{i=1}^{n}\ell_{p}(-y_{i}(\bm{a}^{\top}x_{i}+b))\ge \frac{1}{n}\sum_{i=1}^{n}\ell_{p}(-y_{i}({\bm{a}^{*}}^{\top}x_{i}+b^{*})).$$
Therefore, for $\tilde{\bm{\theta}}_{L}:\|\tilde{\bm{\theta}}_{L}-\bm{\theta}^{*}_{L}\|_{2}\le\delta(\varepsilon)$ and any $a\in\mathbb{R}^{d},b\in\mathbb{R}$
\begin{align*}
\hat{L}_{n}(\bm{a},\tilde{\bm{\theta}}_{L},b;p)&=\frac{1}{n}\sum_{i=1}^{n}\ell_{p}(-y_{i}f(x_{i};\bm{a},\tilde{\bm{\theta}}_{L},b))=\frac{1}{n}\sum_{i=1}^{n}\ell_{p}(-y_{i}(\bm{a}^{\top}x_{i}+b))\\&\ge\frac{1}{n}\sum_{i=1}^{n}\ell_{p}(-y_{i}({\bm{a}^{*}}^{\top}x_{i}+b^{*}))\ge\frac{1}{n}\sum_{i=1}^{n}\ell_{p}(-y_{i}f(x_{i};a_{0}^{*},\bm{\theta}_{L}^{*},b^{*}))=\hat{L}_{n}(\bm{a}^{*},\bm{\theta}_{L}^{*},b^{*};p).
\end{align*}
This means that $(\bm{a}^{*},\bm{\theta}_{L}^{*},b^{*})$ is a local minima and $f(x_{i};\bm{a}^{*},\bm{\theta}_{L}^{*},b^{*})={\bm{a}^{*}}^{\top}x_{i}+b^{*}$ for all $i\in[n]$. This further indicates that 
$$\hat{R}_{n}(\bm{\theta}^{*})\ge\min_{\bm{w}\in\mathbb{R}^{d},b\in\mathbb{R}}\frac{1}{n}\sum_{i=1}^{n}\mathbb{I}\{y_{i}\neq \sgn(\bm{w}^{\top}x_{i}+b)\}>0.$$

\end{proof}

Now we consider the following distribution $\mathbb{P}_{\bm{X}\times Y}$ defined on the $\mathbb{R}^{d}$. Let $\mathbb{P}_{\bm{X}|Y=1}$ is a uniform distribution on the region $[1,2]\cup[-2, -1]\times \{0\}^{d-1}$ and $\mathbb{P}_{\bm{X}|Y=-1}$ is a uniform distribution on the region $\{0\}\times [1,2]\cup[-2,-1]\times\{0\}^{d-2}$. In addition, let $\mathbb{P}_{Y}(Y=1)=\mathbb{P}_{Y}(Y=-1)=0.5$ Clearly, $r_{+}=r_{-}=1<r=2$ and this distribution satisfies the assumptions in Theorem~\ref{thm::convex-finite-deep}. Furthermore, with probability at least $1-\frac{1}{4^{n-1}}$, there exists at least one sample in the following four regions: $[1,2]\times \{0\}^{d-1},[-2, -1]\times \{0\}^{d-1}, \{0\}\times [1,2]\times\{0\}^{d-2}$ and $\{0\}\times [-2,-1]\times\{0\}^{d-2}$ and this makes the samples in the dataset not linearly separable. 

\end{proof}

\clearpage
\subsection{Proof of Example~\ref{prop::quadratic-loss-linsep}}\label{appendix::prop-quadratic-loss-linsep}
\begin{example}
Let the distribution $\mathbb{P}_{\bm{X}\times Y}$ satisfy that $\mathbb{P}(Y=1)=\mathbb{P}(Y=-1)=0.5$, $\mathbb{P}(X=5/4|Y=1)=1$ and $\mathbb{P}(X|Y=-1)$ is a uniform distribution on the interval $[0,1]$.  For a linear model $f(x;a,b)=ax+b,$ $a,b\in\mathbb{R}$, then every global minimum $(a^{*},b^{*})$ of the population loss ${L}(a,b)=\mathbb{E}_{X\times Y}[(1-Yf(X;a,b))^{2}]$ satisfies $\mathbb{P}_{\bm{X}\times Y}[Y\neq \sgn(f(X;a^{*},b^{*}))]\ge 1/16$.
\end{example}
\begin{proof}
The proof is simple. We first consider a simpler form of the problem. Given the distribution $\mathbb{P}_{\bm{X}\times Y}$, the optimal linear estimator $\hat{\mathbb{E}}[Y|X]$ is 
$$\hat{\mathbb{E}}[Y|X]=\mathbb{E}[Y]+Cov(Y,X)Var^{-1}(X)(X-\mathbb{E}[X]).$$
Since $\mathbb{E}[Y]=0$, $Cov(Y, X)=\mathbb{E}[XY]-\mathbb{E}[X]\mathbb{E}[Y]=1$, $Var(X)>0$, $\mathbb{E}[X]=7/8$, the misclassification rate is $1/16$.

\end{proof}

\clearpage

\subsection{Proof of Example~\ref{prop::quadratic-loss-1} and~\ref{prop::quadratic-loss-2}}\label{appendix::prop-quadratic-loss}
In this subsection, we present two counterexamples to show that neither Theorem~\ref{thm::convex-finite-deep} nor~\ref{thm::linear-sep-deep} holds if we replace the loss function with the quadratic loss. 
\begin{example}\label{prop::quadratic-loss-1}
Let the distribution $\mathbb{P}_{\bm{X}\times Y}$ defined on $\mathbb{R}^{2}\times\{-1,1\}$ satisfy that  $\mathbb{P}(Y=1)=\mathbb{P}(Y=-1)=0.5$,  $\mathbb{P}(X=(\alpha, 0)|Y=1)=\mathbb{P}(X=(1, 0)|Y=1)=0.5$ and $\mathbb{P}({X=(0, \alpha)|Y=-1})=\mathbb{P}({X=(0, 1)|Y=-1})=0.5$. Assume that samples in the dataset $\mathcal{D}=\{(x_{i},y_{i})\}_{i=1}^{4n}$ are independently drawn from the distribution $\mathbb{P}_{\bm{X}\times Y}$. Assume that the network $f_{S}$ has $M\ge 2$ neurons and all neurons in the network $f_{S}$ are quadratic neurons, i.e., $\sigma(z)=z^{2}$. 
Then there exists an $\alpha\in[0,1]$ such that every global minimum of the empirical loss function $\hat{L}_{4n}(\bm{\theta})=\frac{1}{4n}\sum_{i=1}^{4n}(1-y_{i}f(x_{i};\bm{\theta}))^{2}$ has a training error greater  than $1/8$  with probability at least $\Omega(1/n^{3})$.
\end{example}
\textbf{Remark:} This is a counterexample for Theorem~\ref{thm::convex-finite-deep}. It is easy to check that the distribution satisfies assumption~\ref{assump::full-rank} and \ref{assump::different-subspaces}, where $r=2>\max\{1,1\}=\max\{r_{+},r_{-}\}$.
\begin{proof}
Let $\bm{X}=(X_{1},X_{2})$. Set the feedforward network $f_{D}$ to a constant. 
Since the positive and negative samples locate on two orthogonal subspaces, then it is easy to check that under this distribution, for any quadratic function of the form $g(X_{1},X_{2})=a_{1}X_{1}^{2}+a_{2}X_{2}^{2}+a_{0}$, there always exists a neural network of the form $f(X_{1},X_{2})=a_{0}+\sum_{j=1}^{M}a_{j}(w_{j1}X_{1}+w_{j2}X_{2})^{2}=a_{0}+\sum_{j=1}^{M}a_{j}(w_{j1}^{2}X_{1}^{2}+w_{j2}^{2}X_{2}^{2})$, $M\ge 2$ satisfying $$\mathbb{P}_{\bm{X}\times Y}(f(\bm{X})=g(\bm{X}))=1.$$
In addition, for any neural network $f(X_{1},X_{2})=a_{0}+\sum_{j=1}^{M}a_{j}(w_{j1}X_{1}+w_{j2}X_{2})^{2}$, there exists a quadratic function of the form  $g(X_{1},X_{2})=a_{1}X_{1}^{2}+a_{2}X_{2}^{2}+a_{0}$ satisfying 
$$\mathbb{P}_{\bm{X}\times Y}(f(\bm{X})=g(\bm{X}))=1.$$
 This indicates that the optimal neural network $f(x;\bm{\theta}^{*})$ should be the solution of 
$$\min_{a_{0}\in\mathbb{R},\bm{a}\in\mathbb{R}^{2}}\frac{1}{4n}\sum_{i=1}^{4n}\left(1-y_{i}\left(a_{0}+a_{1}(x_{i}^{(1)})^{2}+a_{2}(x_{i}^{(2)})^{2}\right)\right).$$

Let $n_{1}, n_{2},n_{3}$ and $n_{4}$ denote the number of samples at the point $(\alpha, 0), (1, 0), (0,\alpha)$ and $(0,1)$, respectively.
We only need to focus the case where $n_{1}=n_{2}=n_{3}=n_{4}=n$. In this case, the optimal linear estimator should be of the form
$$g(X^{2}_{1},X^{2}_{2};a_{0}^{*},a_{1}^{*},a_{2}^{*})=a_{1}^{*}(X_{1}^{2}-\hat{\mathbb{E}}X_{1}^{2})+a_{2}^{*}(X_{2}^{2}-\hat{\mathbb{E}}X_{2}^{2})=a_{1}^{*}\left(X_{1}^{2}-\frac{1+\alpha^{2}}{4}\right)+a_{2}^{*}\left(X_{2}^{2}-\frac{1+\alpha^{2}}{4}\right).$$
When $\alpha=1/2$, then $\frac{1+1/4}{4}=5/16>1/4=\alpha^{2}$ and $\frac{1+1/4}{4}=5/16<1$. Therefore, $(1+\alpha^{2})/4\in(\alpha^{2},1)$. In this case, for any $a_{1}^{*},a_{2}^{*}$, the training error cannot be smaller than $1/4$. This can be easily seen by investigating positive and negative samples separately. For positive samples at $(1,0)$, the output of the network is $g(1,0;a_{0}^{*},a_{1}^{*},a_{2}^{*})=a_{1}^{*}(1-(1+\alpha^{2})/4)$. For positive samples at $(\alpha, 0)$, the output of the network is $g(\alpha,0;a_{0}^{*},a_{1}^{*},a_{2}^{*})=a_{1}^{*}(\alpha^{2}-(1+\alpha^{2})/4)$. Since $\alpha^{2}<\frac{1+\alpha^{2}}{4}<1$, then if $a_{1}^{*}\neq 0$, then the network will misclassify all samples at $(\alpha, 0)$ or $(1, 0)$. This indicates that $a_{1}^{*}=0$ or training error is no smaller than $1/4$. Using the same analysis on the negative samples, we will have $a_{2}^{*}=0$ or training error is no smaller than $1/4$. This indicates that the output of the network is a constant equal to zero, which has a training error $1/2$. In all, the training error is no smaller than $1/4$. The probability of the case where $n_{1}=n_{2}=n_{3}=n_{4}$ is 
\begin{align*}
{{4n}\choose{2n}}{{2n}\choose{n}}^{2}\frac{1}{4^{4n}}>\frac{4^{2n}}{2n+1}\left(\frac{4^{n}}{n+1}\right)^{2}\frac{1}{4^{4n}}=\frac{1}{(2n+1)(n+1)^{2}}
\end{align*}
\end{proof}

\begin{example}\label{prop::quadratic-loss-2}
Let the distribution $\mathbb{P}_{\bm{X}\times Y}$ satisfy that  $\mathbb{P}(Y=1)=\mathbb{P}(Y=-1)=0.5$,  $\mathbb{P}(X=1+\alpha|Y=1)=\mathbb{P}(X=1+2\alpha|Y=1)=0.5$ and $\mathbb{P}({X=0|Y=-1})=\mathbb{P}({X=1|Y=-1})=0.5$. Assume that samples in the dataset $\mathcal{D}=\{(x_{i},y_{i})\}_{i=1}^{4n}$ are independently drawn from the distribution $\mathbb{P}_{\bm{X}\times Y}$. Assume that the network $f_{S}$ has $M\ge 1$ neurons and each neuron is a linear neuron $\sigma(z)=z$. 
If  $\alpha\in[0,1/6]$, then every global minimum of the empirical loss function $\hat{L}_{4n}(\bm{\theta})=\frac{1}{4n}\sum_{i=1}^{4n}(1-y_{i}f(x_{i};\theta))^{2}$ has a training error greater  than $1/8$  with probability at least $\Omega(1/n^{3})$.
\end{example}
\textbf{Remark:} This is counterexample for Theorem~\ref{prop::quadratic-loss-2}. It is easy to check that distribution is linearly separable. 
\begin{proof}
Let $n_{-1}, n_{1},n_{1+\alpha}$ denote the number of samples at the point $-1, 1$ and $1+\alpha$.
We only need to focus the case where $n_{-1}=n$, $n_{1}=n$ and $n_{1+\alpha}=2n$. Since the network is a linear network, then under this distribution, the optimal linear estimator should be of the form
$$f(x;\bm{\theta})=a^{*}\left(x-\frac{3+3\alpha}{4}\right).$$
If $a^{*}=0$, then the training error is $1/2$. If $a^{*}>0$, then the training error is $1/4$, due to the misclassification of all points at $x=1$. If $a^{*}<0$, then the training error is $3/4$, due to the misclassification of all points at $x=1+\alpha$ and $x=-1$. This means that the training error in this case should be greater or equal to $1/4$. The probability of this case is 
\begin{align*}
{{4n}\choose{2n}}{{2n}\choose{n}}^{2}\frac{1}{4^{4n}}>\frac{4^{2n}}{2n+1}\left(\frac{4^{n}}{n+1}\right)^{2}\frac{1}{4^{4n}}=\frac{1}{(2n+1)(n+1)^{2}}
\end{align*}
\end{proof}

\clearpage
\subsection{Proof of Proposition~\ref{prop::loss}}\label{appendix::prop-loss}

\begin{proposition}
Let $f:\mathbb{R}^{d}\rightarrow\mathbb{R}$ denote a feedforward network parameterized by $\bm{\theta}$ and let the dataset have $n$ samples. When the loss function $\ell_{p}$ satisfies assumption~\ref{assump::loss} and $p\ge1$, we have $\min_{\bm{\theta}}\hat{L}_{n}(\bm{\theta};p)=0$ if and only if $\min_{\bm{\theta}}\hat{R}_{n}(\bm{\theta})=0$. Furthermore, if $\min_{\bm{\theta}}\hat{R}_{n}(\bm{\theta})=0$, every global minimum $\bm{\theta}^{*}$ of the empirical loss $\hat{L}_{n}(\bm{\theta};p)$ has zero training error, i.e., $\hat{R}_{n}(\bm{\theta}^{*})=0$. 
\end{proposition}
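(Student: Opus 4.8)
The plan is to derive both claims from the pointwise surrogate bound $\hat{R}_n(\bm{\theta})\le\hat{L}_n(\bm{\theta})$, valid for every $\bm{\theta}$ by condition~(1) of Assumption~\ref{assump::loss} (as noted immediately after that assumption), together with the facts that $\ell_p\ge0$ everywhere and that $\ell_p(z)=0$ for all $z\le-z_0$, i.e.\ the surrogate attains its minimum value $0$ on the flat region singled out by condition~(3). Throughout I write the feedforward output as $f(x;\bm{\theta})=\bm{W}_L^{\top}h(x)+\bm{b}_L$, where $h(x)$ denotes the last hidden-layer activation vector; the elementary but crucial observation is that replacing $(\bm{W}_L,\bm{b}_L)$ by $(c\bm{W}_L,c\bm{b}_L)$ leaves $h$ unchanged and rescales the output to $c\,f(x;\bm{\theta})$ for every $c>0$.

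For the easy implication, suppose $\min_{\bm{\theta}}\hat{L}_n(\bm{\theta};p)=0$ and let $\bm{\theta}^{*}$ attain it; then $0\le\min_{\bm{\theta}}\hat{R}_n(\bm{\theta})\le\hat{R}_n(\bm{\theta}^{*})\le\hat{L}_n(\bm{\theta}^{*})=0$, so $\min_{\bm{\theta}}\hat{R}_n(\bm{\theta})=0$. The same chain proves the ``furthermore'' statement: granting the converse implication below, $\min_{\bm{\theta}}\hat{R}_n=0$ forces $\min_{\bm{\theta}}\hat{L}_n=0$, so any global minimizer $\bm{\theta}^{*}$ of $\hat{L}_n(\,\cdot\,;p)$ satisfies $\hat{L}_n(\bm{\theta}^{*};p)=0$ and hence $\hat{R}_n(\bm{\theta}^{*})\le\hat{L}_n(\bm{\theta}^{*};p)=0$.

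The substantive content is the converse implication $\min_{\bm{\theta}}\hat{R}_n=0\Rightarrow\min_{\bm{\theta}}\hat{L}_n=0$. Pick $\bm{\theta}_0$ with $\hat{R}_n(\bm{\theta}_0)=0$, so every sample is correctly classified, i.e.\ $y_i f(x_i;\bm{\theta}_0)\ge0$ for all $i$, with equality possible only when $y_i=1$ and $f(x_i;\bm{\theta}_0)=0$. First I perturb the output bias, replacing $\bm{b}_L$ by $\bm{b}_L+\varepsilon$ with $\varepsilon>0$ sufficiently small; this shifts every output by $\varepsilon$, and since there are finitely many samples and the only non-strict margins sit at $f=0$ with $y_i=1$, it produces parameters $\bm{\theta}_1$ with a uniform strict margin $\delta:=\min_i y_i f(x_i;\bm{\theta}_1)>0$. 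Next I rescale the final layer of $\bm{\theta}_1$ by the factor $c:=z_0/\delta>0$, obtaining $\bm{\theta}_c$ whose output is $c\,f(x_i;\bm{\theta}_1)$, so that $y_i\,c\,f(x_i;\bm{\theta}_1)\ge z_0$, i.e.\ $-y_i f(x_i;\bm{\theta}_c)\le-z_0$ for every $i$. Consequently $\ell_p(-y_i f(x_i;\bm{\theta}_c))=0$ for all $i$, whence $\hat{L}_n(\bm{\theta}_c;p)=0$; since $\hat{L}_n\ge0$ everywhere, the minimum is $0$ and is attained.

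The main obstacle is this converse, and within it the single delicate point is the boundary case $f(x_i;\bm{\theta}_0)=0$: plain scaling amplifies strictly positive margins but cannot enlarge a zero margin, which is precisely why the preliminary bias shift producing $\bm{\theta}_1$ is required before amplification. A secondary point to state explicitly is that the final-layer rescaling is legitimate only because the output of a feedforward network is linear in its last-layer parameters; the extension mentioned in the Remark to the shortcut architectures of Fig.~\ref{fig::network} follows by applying the same rescaling jointly to the last layers of $f_S$ and $f_D$.
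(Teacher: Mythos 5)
Your proposal is correct and takes essentially the same route as the paper's own proof: the easy direction and the ``furthermore'' clause both follow from the pointwise bound $\hat{R}_{n}(\bm{\theta})\le \hat{L}_{n}(\bm{\theta};p)$, and the substantive direction is established by converting a zero-training-error parameter into a zero-loss parameter via an affine transformation of the network output (shift to create strict margins, then scale past $z_{0}$), realized by modifying only the last-layer weights and bias. The only difference is bookkeeping: you shift the output bias by a small $\varepsilon>0$ and then rescale, whereas the paper picks constants $c_{1}<c_{2}<0$ separating the two classes and applies the single map $f\mapsto \alpha\left(f-\tfrac{c_{1}+c_{2}}{2}\right)$ with $\alpha>\tfrac{2z_{0}}{c_{2}-c_{1}}$ --- both devices handle the boundary case $f(x_{i};\bm{\theta}_{0})=0$ on positive samples in the same way.
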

\textbf{Remark:} Using the same proof shown as follows, we can show that Proposition~\ref{prop::loss} holds for any multilayer network architectures satisfying that for any set of parameters $\bm{\theta}$ and any real numbers $a, b\in\mathbb{R}$, there always exists a set of parameters $\tilde{\bm{\theta}}$ such that $f(x;\tilde{\bm{\theta}})=a(f(x;\bm{\theta})-b)$ holds for all $x$. It is easy to check that both network architectures in Fig.~\ref{fig::network} satisfy this condition.  

\begin{proof}
We first prove the ``only if'' part. The proof is trivial since, by definition $\ell_{p}(z)\ge \mathbb{I}\{z\ge0\}$, then 
\begin{align*}
\hat{R}_{n}(\bm{\theta})=\frac{1}{n}\sum_{i=1}^{n}\mathbb{I}\{y_{i}\neq \sgn(f(x_{i};\bm{\theta}))\}\le \frac{1}{n}\sum_{i=1}^{n}\mathbb{I}\{y_{i}f(x_{i};\bm{\theta})\le 0\}\le \frac{1}{n}\sum_{i=1}^{n}\ell_{p}(-y_{i}f(x_{i};\bm{\theta}))=\hat{L}_{n}(\bm{\theta};p).
\end{align*}
Therefore, if $\min_{\bm{\theta}}\hat{L}_{n}(\bm{\theta};p)=0$ then $\min_{\bm{\theta}}\hat{R}_{n}(\bm{\theta})=0$.

Next, we prove the ``if'' part. If $\min_{\bm{\theta}}\hat{R}_{n}(\bm{\theta})=0$, then there exists a set of parameter $\bm{\theta}^{*}$ such that $\mathbb{I}\{y_{i}\neq \sgn(f(x;\bm{\theta}^{*}))\}=0$ holds for all $i\in[n]$. This indicates that $f(x_{i};\bm{\theta}^{*})\ge 0$ for all $i:y_{i}=1$ and $f(x_{i};\bm{\theta}^{*})<0$ for all $i:y_{i}=-1$. This means that there exists two real numbers $c_{1}<c_{2}$ such that $f(x_{i};\bm{\theta}^{*})>c_{2}$ holds for all $i:y_{i}=1$ and $f(x_{i};\bm{\theta}^{*})<c_{1}$ holds for all $i:y_{i}=-1$. Now, we define a new network  ${f}(x;\tilde{\bm{\theta}})=\alpha(f(x;\bm{\theta}^{*})-\frac{c_{1}+c_{2}}{2})$. Therefore, for this network ${f}(x;\tilde{\bm{\theta}})$, we have ${f}(x_{i};\tilde{\bm{\theta}})>\alpha(c_{2}-c_{1})/2$ holds for all $i:y_{i}=1$ and ${f}(x_{i};\tilde{\bm{\theta}})<-\alpha(c_{2}-c_{1})/2$ holds for all $i:y_{i}=-1$. Since $\ell_{p}(z)=0$ iff $z\le -z_{0}$, then by choosing $\alpha>\frac{2z_{0}}{c_{2}-c_{1}},$ we have 
$$y_{i}{f}(x_{i};\tilde{\bm{\theta}})>z_{0}\quad\text{holds for }\forall i\in[n].$$
This means that $\hat{L}_{n}(\tilde{\bm{\theta}};p)=0.$
Now we need to show that there exits a set of parameter $\tilde{\bm{\theta}}$ such that $$f(x;\tilde{\bm{\theta}})=\alpha\left(f(x;\bm{\theta}^{*})-\frac{c_{1}+c_{2}}{2}\right).$$
Since the output of the neural network can be written as 
$$f(x;\bm{\theta})=a_{0}+\sum_{j=1}^{M_{L}}a_{j}\sigma(\bm{w}_{j}^{\top}\bm{\Phi}(x;\bm{\theta})+b_{j}),$$
where $M_{L}$ denotes the number of neurons in the last layer and $\bm{\Phi}(x_{i};\bm{\theta})$ denotes the outputs from the previous layers. Then by shifting $a_{0}$ and scaling $a_{j}$, we have 
\begin{align*}
f(x;\tilde{\bm{\theta}})&=\alpha\left(f(x;\bm{\theta}^{*})-\frac{c_{1}+c_{2}}{2}\right)\\
&=a_{0}^{*}-\frac{\alpha(c_{1}+c_{2})}{2}+\sum_{j=1}^{M_{L}}\alpha a^{*}_{j}\sigma({\bm{w}^{*}}^{\top}\bm{\Phi}(x;\bm{\theta}^{*})+b_{j}^{*})\\
&=\tilde{a}_{0}+\sum_{j=1}^{M_{L}}\tilde{a}_{j}\sigma({\bm{w}^{*}}^{\top}\bm{\Phi}(x;\bm{\theta}^{*})+b_{j}^{*}).
\end{align*}
Therefore, this means that there exists a set of parameters $\tilde{\bm{\theta}}$ such that $\hat{L}_{n}(\tilde{\bm{\theta}};p)=0$, i.e., $\min_{\bm{\theta}}\hat{L}_{n}(\bm{\theta};p)=0.$ This means, the global minimum of the empirical loss $\hat{L}_{n}(\bm{\theta};p)$ is zero. Furthermore, since $\hat{R}_{n}(\bm{\theta})\le\hat{L}_{n}(\bm{\theta};p)$ holds for all $\bm{\theta}$, then every global minimum of the empirical loss has zero training error.

\end{proof}

\subsection{Proof of Proposition~\ref{prop::logit-convex-finite}}\label{appendix::prop-logit}
\begin{proposition}
Assume that the loss function is the logistic loss, i.e., $\ell(z)=\log_{2}(1+e^{z})$.
Assume that assumptions~\ref{assump::full-rank}-\ref{assump::neurons} are satisfied. 
Assume that  samples in the dataset $\mathcal{D}=\{(x_{i},y_{i})\}_{i=1}^{n}, n\ge 1$ are independently drawn from the distribution $\mathbb{P}_{\bm{X}\times Y}$.  Assume that the number of neurons $M$ in the network $f_{S}$ satisfies $M\ge 2\max\{\frac{n}{\Delta r},r_{+},r_{-}\}$, where $\Delta r=r-\max\{r_{+},r_{-}\}$. If a set of real parameters $\bm{\theta}^{*}$ denotes a critical point of the empirical loss $\hat{L}_{n}(\bm{\theta})$, then $\bm{\theta}^*$ is a saddle point.
\end{proposition}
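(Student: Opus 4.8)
The plan is to show that $\bm{\theta}^{*}$ is a \textbf{saddle point} by exhibiting, in every neighborhood, both an ascending and a descending direction, reusing the machinery built for Theorem~\ref{thm::convex-finite-deep} but exploiting that the logistic loss satisfies $\ell'(z)>0$ and $\ell''(z)>0$ for \emph{all} $z\in\mathbb{R}$. The ascending direction is immediate: differentiating twice in the bias $a_{0}$ of $f_{S}$ gives $\frac{d^{2}\hat{L}_{n}(\bm{\theta}^{*})}{da_{0}^{2}}=\frac{1}{n}\sum_{i=1}^{n}\ell''(-y_{i}f(x_{i};\bm{\theta}^{*}))>0$, so the loss strictly increases along the $a_{0}$-axis and $\bm{\theta}^{*}$ is not a local maximum. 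It then suffices to produce a descending direction, i.e., to contradict the second-order necessary condition for a local minimum.

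For the descending direction I would split into the same two cases as in Theorem~\ref{thm::convex-finite-deep}, according to whether some outer weight vanishes. If there is $j$ with $a_{j}^{*}=0$ (say $a_{1}^{*}=0$), I would reuse the identical Taylor expansion of $\hat{L}_{n}$ at $\bm{\theta}^{*}$ along $\Delta a_{1}=\pm\varepsilon^{9/4}$ and $\Delta\bm{w}_{1}=\varepsilon\bm{u}_{1}$, whose leading term is $\frac{1}{2n}\sgn(\Delta a_{1})\varepsilon^{17/4}\sum_{i=1}^{n}\ell'(-y_{i}f(x_{i};\bm{\theta}^{*}))(-y_{i})\sigma''({\bm{w}_{1}^{*}}^{\top}x_{i})(\bm{u}_{1}^{\top}x_{i})^{2}$. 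Choosing $\bm{u}_{1}=\bm{v}\in\mathcal{U}_{+}\backslash\mathcal{U}_{-}$ (nonempty by Assumption~\ref{assump::different-subspaces}) forces, with probability one, $\bm{v}^{\top}x_{i}=0$ for negative samples and $\bm{v}^{\top}x_{i}\neq0$ for positive ones, so the sum equals $-\sum_{i:y_{i}=1}\ell'(\cdots)\sigma''(\cdots)(\bm{v}^{\top}x_{i})^{2}$, which is \emph{strictly} negative since every factor is positive ($\sigma''>0$ by Assumption~\ref{assump::neurons}). Taking $\sgn(\Delta a_{1})=+1$ strictly decreases the loss, while $\sgn(\Delta a_{1})=-1$ strictly increases it, already giving a saddle in this case.

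If instead $a_{j}^{*}\neq0$ for all $j$, I would work with the Hessian quadratic form $F(\bm{u}_{1},\dots,\bm{u}_{M_{0}})=\sum_{j,k}\bm{u}_{j}^{\top}\nabla^{2}_{\bm{w}_{j},\bm{w}_{k}}\hat{L}_{n}(\bm{\theta}^{*})\bm{u}_{k}$ on the block of $M_{0}=\lceil M/2\rceil$ weights whose coefficients share a sign (WLOG positive). The key simplification over Theorem~\ref{thm::convex-finite-deep} is that in the linear system $\bm{P}\bm{q}=\bm{0}_{n}$ from Lemma~\ref{lemma::nec-single}, the vector $\bm{q}$ has entries $\ell'(-y_{i}f(x_{i};\bm{\theta}^{*}))y_{i}$, which are now \emph{all} nonzero; hence $\bm{q}\neq\bm{0}$ forces $\text{rank}(\bm{P})<n\le M_{0}K$ automatically (here $K=\Delta r$, and $M_{0}K\ge n$ follows from $M\ge2n/\Delta r$). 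The resulting linear dependence among the rows of $\bm{P}$ yields nonzero $\bm{u}_{j}\in\text{Span}(\{\bm{e}_{1},\dots,\bm{e}_{K}\})$ with $\sum_{j}a_{j}^{*}\sigma'(\cdots)\bm{u}_{j}^{\top}x_{i}=0$ for all $i$ and $\bm{u}_{j}^{\top}x_{i}=0$ for negative samples. These annihilate the $\ell''$-term, leaving $F=-\frac{1}{n}\sum_{i:y_{i}=1}\ell'(\cdots)\sum_{j}a_{j}^{*}\sigma''(\cdots)(\bm{u}_{j}^{\top}x_{i})^{2}$; since some $\bm{u}_{j_{0}}\neq0$ lies in $\text{Span}(\mathcal{U}_{+}\backslash\mathcal{U}_{-})$ and positive samples are full rank (Assumption~\ref{assump::full-rank}), $\bm{u}_{j_{0}}^{\top}x_{i}\neq0$ for at least one positive $i$, so with $a_{j}^{*}>0$, $\sigma''>0$, $\ell'>0$ we get $F<0$ strictly, violating positive-semidefiniteness of the Hessian at a local minimum. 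Together with the ascending $a_{0}$-direction, this makes $\bm{\theta}^{*}$ a saddle point.

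The main obstacle I anticipate is Case~2: confirming that the constructed direction is genuinely descending requires verifying that the $\bm{u}_{j}$'s do not vanish on every positive sample, which is exactly where Assumptions~\ref{assump::full-rank} and~\ref{assump::different-subspaces} and the neuron-count bound (via $M_{0}K\ge n$) enter. The remaining computations are direct transcriptions of the Theorem~\ref{thm::convex-finite-deep} argument, with every non-strict inequality there now strict because $\ell'>0$ everywhere; this strictness is precisely what upgrades ``the training error is zero'' into ``the Hessian fails to be positive semidefinite,'' and hence into the saddle-point conclusion.
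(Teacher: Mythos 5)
Your overall strategy is exactly the paper's: split on whether some outer weight $a_{j}^{*}$ vanishes, recycle the Taylor/Hessian machinery of Theorem~\ref{thm::convex-finite-deep}, use $\ell'>0$ to turn every ``training error is zero'' conclusion into a contradiction with local minimality, and rule out local maxima via strict convexity in $a_{0}$. Your Case 1 and the $a_{0}$ argument are correct and coincide with the paper's. The problem is Case 2, at precisely the step you flag as the main obstacle: the claim that Assumption~\ref{assump::full-rank} forces $\bm{u}_{j_{0}}^{\top}x_{i}\neq 0$ for at least one positive sample is false. Full-rankness only guarantees that any $r_{+}$ positive samples are linearly independent; it says nothing when the dataset contains fewer positive samples than that. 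Concretely, take $r_{+}=r_{-}=2$, $r=4$ (so $K=\Delta r=2$), $n_{+}=n_{-}=1$, $M=4$, $M_{0}=2$: all hypotheses hold, yet the single positive sample's projection $\left(\bm{e}_{1}^{\top}x_{1},\bm{e}_{2}^{\top}x_{1}\right)$ spans only a line in $\mathbb{R}^{K}$, so there is a nonzero $\gamma$ with $\sum_{s}\gamma_{s}\bm{e}_{s}^{\top}x_{1}=0$. Setting $\beta_{1s}=\gamma_{s}$ and $\beta_{js}=0$ for $j\neq 1$ is a legitimate row dependence of $\bm{P}$ (the columns of $\bm{P}$ indexed by negative samples vanish identically, since $\bm{e}_{s}\perp\text{Span}(\mathcal{U}_{-})$), and the associated direction satisfies $\bm{u}_{j}^{\top}x_{i}=0$ for \emph{every} sample, giving $F=0$ rather than $F<0$. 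So an arbitrary row dependence can annihilate all positive samples, and your argument produces no contradiction in exactly the sub-case the paper calls $C_{0}=\emptyset$.

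The paper closes this hole with a second, separate construction rather than the verification you propose. When all the $\bm{u}_{j}$'s vanish on every positive sample, it first shows (this is where Assumption~\ref{assump::full-rank} actually enters) that the set $C_{1}$ of such positive samples must satisfy $|C_{1}|<r_{+}\le M_{0}$ with probability one, because a nonzero vector of $\text{Span}(\{\bm{e}_{1},\dots,\bm{e}_{K}\})$ cannot be orthogonal to $r_{+}$ or more independent samples of $\text{Span}(\mathcal{U}_{+})$. It then picks a \emph{new} direction $\bm{u}_{j}=\alpha_{j}\bm{e}_{1}$, with $(\alpha_{1},\dots,\alpha_{M_{0}})$ a nonzero solution of the underdetermined system $\sum_{j}\alpha_{j}a_{j}^{*}\sigma'({\bm{w}_{j}^{*}}^{\top}x_{i})\,\bm{e}_{1}^{\top}x_{i}=0$ for $i\in C_{1}$ ($|C_{1}|$ equations, $M_{0}$ unknowns). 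This kills the $\ell''$ term while keeping $F=-2\sum_{i\in C_{1}}\ell'(-y_{i}f(x_{i};\bm{\theta}^{*}))\sum_{j}\alpha_{j}^{2}a_{j}^{*}\sigma''({\bm{w}_{j}^{*}}^{\top}x_{i})(\bm{e}_{1}^{\top}x_{i})^{2}$ strictly negative, since $\bm{e}_{1}^{\top}x_{i}\neq 0$ for positive samples with probability one and $\ell'>0$ everywhere; this contradicts $F\ge 0$. Without this additional step (or an equivalent one), your Case 2 does not go through.
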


\begin{proof}
We first recall some notations defined in the paper. The output of the neural network is 
$$f(x;\bm{\theta})=f_{S}(x;\bm{\theta}_{S})+f_{D}(x;\bm{\theta}_{D}),$$
where $f_{S}(x;\bm{\theta}_{S})$ is the single layer neural network parameterized by $\bm{\theta}_{S}$, i.e., 
$$f_{S}(x;\bm{\theta}_{S})=a_{0}+\sum_{j=1}^{M}a_{j}\sigma\left(\bm{w}_{j}^{\top}x\right),$$
and $f_{D}(x;\bm{\theta}_{D})$ is a deep neural network parameterized by $\bm{\theta}_{D}$. 
The empirical loss function is given by
$$\hat{L}_{n}(\bm{\theta})=\hat{L}_{n}(\bm{\theta}_{S},\bm{\theta}_{D})=\frac{1}{n}\sum_{i=1}^{n}\ell(-y_{i}f(x_{i};\bm{\theta})).$$
We assume that there exists a local minimum $\bm{\theta}^{*}=(\bm{\theta}^{*}_{S},\bm{\theta}_{D}^{*})$.
We next complete the proof by proving the following two claims: 

\begin{claim}\label{claim::logit-1} If there exists $j\in[M]$ such that $a^{*}_{j}=0$, then $\bm{\theta}^{*}$ is not a local minimum. \end{claim}

\begin{claim}\label{claim::logit-2} If $a^{*}_{j}\neq 0$ for all $j\in [M]$, then $\bm{\theta}^{*}$ is not a local minimum.\end{claim}

Therefore, these two claims contradict with the assumption that $\bm{\theta}^{*}=(\bm{\theta}^{*}_{S},\bm{\theta}_{D}^{*})$ is a local minimum. Therefore, every critical point is not a local minimum. In addition, it is very easy to show that every critical point is not a local maximum, since the loss function is strictly convex with respect to $a_{0}$. Therefore, every critical point is a saddle point. 

\textbf{(a) Proof of Claim \ref{claim::logit-1}.} In this part, we prove that if there exists $j\in [M]$ such that $a^{*}_{j}=0$, then $\bm{\theta}^{*}$ is not a local minima. Without loss of generality, we assume that $a_{1}^{*}=0$. Using the same analysis presented in the proof of Theorem~\ref{thm::convex-finite-deep}, we have
\begin{equation*}
\sum_{i=1}^{n}\ell'(-y_{i}f(x_{i};\bm{\theta}))(-y_{i})\sigma''\left({\bm{w}_{1}^{*}}^{\top}x_{i}\right)x_{i}x_{i}^{\top}=\bm{0}_{d\times d}.
\end{equation*}
By assumption that there exists a set of orthogonal basis $\mathcal{E}=\{\bm{e}_{1},...,\bm{e}_{d}\}$ in $\mathbb{R}^{d}$ and a subset $\mathcal{U}_{+}\subseteq \mathcal{E}$  such that $\mathbb{P}_{\bm{X}|Y}(\bm{X}\in\text{Span}(\mathcal{U}_{1})|Y=1)=1$ and by assumption that $r=|\mathcal{U}_{+}\cup \mathcal{U}_{-}|>\max\{r_{+},r_{-}\}=\max\{|\mathcal{U}_{+}|,|\mathcal{U}_{-}|\}$, then the set $\mathcal{U}_{+}\backslash \mathcal{U}_{-}$ is not an empty set. It is easy to show that for any vector $\bm{v}\in\mathcal{U}_{+}\backslash\mathcal{U}_{-}$, $\mathbb{P}_{\bm{X}\times Y}(\bm{v}^{\top}\bm{X}=0|Y=1)=0$. We prove it by contradiction. If we assume $p=\mathbb{P}_{\bm{X}\times Y}(\bm{v}^{\top}\bm{X}=0|Y=1)>0$, then for random vectors $\bm{X}_{1},...,\bm{X}_{|\mathcal{U}_{+}|}$ independently drawn from the conditional distribution $\mathbb{P}_{\bm{X}| Y=1}$, 
\begin{align*}
\mathbb{P}_{\bm{X} |Y=1}\left(\bigcup_{i=1}^{|\mathcal{U}_{+}|}\left\{\bm{v}^{\top}\bm{X}_{i}=0\right\}\Bigg|Y=1\right)&=\prod_{i=1}^{|\mathcal{U}_{+}|}\mathbb{P}_{\bm{X}|Y=1}\left(\bm{v}^{\top}\bm{X}_{i}=0|Y=1\right)=p^{|\mathcal{U}_{+}|}>0.
\end{align*} 
Furthermore, since $\bm{X}_{1},...,\bm{X}_{|\mathcal{U}_{+}|}\in\text{Span}(\mathcal{U}_{+})$, $\bm{v}^{\top}\bm{X}_{i}=0$, $i=1,...,|\mathcal{U}_{+}|$ and $\bm{v}\in\mathcal{U}_{+}$, then the rank of the matrix $\left(\bm{X}_{1},...,\bm{X}_{|\mathcal{U}_{+}|}\right)$ is at most $|\mathcal{U}_{+}|-1$ and this indicates that the matrix is not a full rank matrix with probability $p^{|\mathcal{U}_{+}|}>0$. This leads to the contradiction with the Assumption~\ref{assump::full-rank}. Thus, with probability 1, $\bm{v}^{\top}x_{i}\neq 0$ for all $i:y_{i}=1$ and $\bm{v}^{\top}x_{i}= 0$ for all $i:y_{i}=-1$.

\textbf{Proof of Claim \ref{claim::logit-2}:}  Now we have proved that $a_{j}^{*}\neq 0$ for all $j\in[M]$. Here, we define $M_{0}=\lceil M/2\rceil$.
Since $$M_{0}\ge\max\{r_{+},r_{-}\},$$
and $\max\{r_{+},r_{-}\}+\min\{r_{+},r_{-}\}\ge r$, then 
$$2M_{0}\ge2\max\{r_{+},r_{-}\}>2r-r_{+}-r_{-}\ge 2\min\{r-r_{+},r-r_{-}\}\triangleq 2K.$$
Thus, there exists $a_{i_{1}},..., a_{i_{M_{0}}}$, $i_{1}<i_{2}<...<i_{M_{0}}$ such that $$\sgn(a_{i_{1}})=...=\sgn(a_{i_{M_{0}}}).$$
Without loss of generality, we assume that $\sgn(a_{1})=...=\sgn(a_{M_{0}})=+1$.

Now we prove the claim \ref{claim::logit-2}. 
First, we consider the Hessian matrix $H(\bm{w}_{1}^{*},...,\bm{w}_{M_{0}}^{*})$. Since $\bm{\theta}^{*}$ is a local minima with $\error>0$, then 
\begin{equation*}
F(\bm{u}_{1},...,\bm{u}_{M_{0}})=\sum_{j=1}^{M_{0}}\sum_{k=1}^{M_{0}}\bm{u}_{j}^{\top}\nabla^{2}_{\bm{w}_{j},\bm{w}_{k}}\lossc \bm{u}_{k}\ge 0
\end{equation*}
holds for any vectors $\bm{u}_{1},...,\bm{u}_{M_{0}}\in\mathbb{R}^{d}$. 
Since 
\begin{align*}
\nabla_{\bm{w}_{j}}^{2}\lossc&=a_{j}^{*}\sum_{i=1}^{n}\ell'(-y_{i}f(x_{i};\bm{\theta}^{*}))(-y_{i})\sigma''\left({\bm{w}_{j}^{*}}^{\top}x_{i}\right)x_{i}x_{i}^{\top}\\
&\quad+{a_{j}^{*}}^{2}\sum_{i=1}^{n}\ell''(-y_{i}f(x_{i};\bm{\theta}^{*}))\left[\sigma'\left({\bm{w}_{j}^{*}}^{\top}x_{i}\right)\right]^{2}x_{i}x_{i}^{\top},
\end{align*}
and 
\begin{align*}
\nabla_{\bm{w}_{j},\bm{w}_{k}}^{2}\loss&={a_{j}^{*}}a_{k}^{*}\sum_{i=1}^{n}\ell''(-y_{i}f(x_{i};\bm{\theta}^{*}))\left[\sigma'\left({\bm{w}_{j}^{*}}^{\top}x_{i}\right)\right]\left[\sigma'\left({\bm{w}_{k}^{*}}^{\top}x_{i}\right)\right]x_{i}x_{i}^{\top}.
\end{align*}
Thus, we have for any $\bm{u}_{1},...,\bm{u}_{M_{0}}\in\mathbb{R}^{d}$,
\begin{align*}
F(\bm{u}_{1},...,\bm{u}_{M_{0}})
&=-2\sum_{i=1}^{n}\left[\ell'(-y_{i}f(x_{i};\bm{\theta}^{*}))y_{i}\sum_{j=1}^{M_{0}}\left[a_{j}^{*}\sigma''\left(\bm{w}_{j}^{*}x_{i}\right)\left(\bm{u}_{j}^{\top}x_{i}\right)^{2}\right]\right]\\
&\quad +4\sum_{i=1}^{n}\left[\ell''(-y_{i}f(x_{i};\bm{\theta}^{*}))\left(\sum_{j=1}^{M_{0}}a_{j}^{*}\sigma'\left({\bm{w}_{j}^{*}}^{\top}x_{i}\right)\left(\bm{u}_{j}^{\top}x_{i}\right)\right)^{2}\right].
\end{align*}
Now we  find  some coefficients $\alpha_{1},...,\alpha_{M_{0}}$, not all zero and vectors $\bm{u}_{1},...,\bm{u}_{M_{0}}$ satisfying
$$\sum_{j=1}^{M_{0}}\alpha_{j}\sigma'\left({\bm{w}_{j}^{*}}^{\top}x_{i}\right)\bm{u}_{j}^{\top}x_{i}=0,\quad \forall i\in[n],$$
and 
$$\forall i:y_{i}=-1 \text{ and }\forall j\in[M_{0}],\quad \bm{u}_{j}^{\top}x_{i}=0.$$
Since $\bm{\bm{\theta}}^{*}$ is a local minima, then by Lemma~\ref{lemma::nec-single}, we have 
\begin{equation*}
\sum_{i=1}^{n}\ell'(-y_{i}f(x_{i};\bm{\theta}^{*}))y_{i}\sigma'({\bm{w}^{*}_{j}}^{\top}x_{i})x_{i}=\bm{0}_{d}.
\end{equation*}
Consider the orthogonal vectors $\bm{e}_{1},...,\bm{e}_{K}$ from the set of orthogonal basis $\bm{e}_{1},...,\bm{e}_{d}$ satisfying that, with probability 1, $\forall j\in[K]$, $\forall i:y_{i}=-1$, $\bm{e}_{j}^{\top}x_{i}=0$ and $\forall i:y_{i}=1$, $\bm{e}_{j}^{\top}x_{i}\neq 0$. Then, considering the following set of linear equations 
\begin{align*}
&\sum_{i=1}^{n}\ell'(-y_{i}f(x_{i};\bm{\theta}^{*}))y_{i}\sigma'({\bm{w}^{*}_{1}}^{\top}x_{i})\left(\bm{e}_{1}^{\top}x_{i}\right)=0,...,
\sum_{i=1}^{n}\ell'(-y_{i}f(x_{i};\bm{\theta}^{*}))y_{i}\sigma'({\bm{w}^{*}_{M_{0}}}^{\top}x_{i})\left(\bm{e}_{1}^{\top}x_{i}\right)=0,\\
&...\\
&\sum_{i=1}^{n}\ell'(-y_{i}f(x_{i};\bm{\theta}^{*}))y_{i}\sigma'({\bm{w}^{*}_{1}}^{\top}x_{i})\left(\bm{e}_{K}^{\top}x_{i}\right)=0,...,
\sum_{i=1}^{n}\ell'(-y_{i}f(x_{i};\bm{\theta}^{*}))y_{i}\sigma'({\bm{w}^{*}_{M_{0}}}^{\top}x_{i})\left(\bm{e}_{K}^{\top}x_{i}\right)=0.
\end{align*}
These equations can be rewritten in a matrix form
\begin{equation*}
\underbrace{
\left(\begin{matrix}
\sigma'({\bm{w}^{*}_{1}}^{\top}x_{1})\left(\bm{e}_{1}^{\top}x_{1}\right)&...&\sigma'({\bm{w}^{*}_{1}}^{\top}x_{n})\left(\bm{e}_{1}^{\top}x_{n}\right)\\
...&...&...\\
\sigma'({\bm{w}^{*}_{M_{0}}}^{\top}x_{1})\left(\bm{e}_{1}^{\top}x_{1}\right)&...&\sigma'({\bm{w}^{*}_{M_{0}}}^{\top}x_{n})\left(\bm{e}_{1}^{\top}x_{n}\right)\\
...&...&...\\
\sigma'({\bm{w}^{*}_{1}}^{\top}x_{1})\left(\bm{e}_{K}^{\top}x_{1}\right)&...&\sigma'({\bm{w}^{*}_{1}}^{\top}x_{n})\left(\bm{e}_{K}^{\top}x_{n}\right)\\
...&...&...\\
\sigma'({\bm{w}^{*}_{M_{0}}}^{\top}x_{1})\left(\bm{e}_{K}^{\top}x_{1}\right)&...&\sigma'({\bm{w}^{*}_{M_{0}}}^{\top}x_{n})\left(\bm{e}_{K}^{\top}x_{n}\right)
\end{matrix}\right)_{(KM_{0}\times n)}}_{\bm{P}}
\underbrace{\left(\begin{matrix}
\ell'(-y_{1}f(x_{1};\bm{\theta}^{*}))y_{1}\\
\ell'(-y_{2}f(x_{2};\bm{\theta}^{*}))y_{2}\\
...\\
...\\
...\\
...\\
...\\
\ell'(-y_{n}f(x_{1};\bm{\theta}^{*}))y_{n}\\
\end{matrix}\right)}_{\bm{q}}=\bm{0}_{n}
\end{equation*}
or 
$$\bm{P}\bm{q}=\bm{0}_{n}.$$
Since $ M_{0}K\ge MK/2\ge n$, then if rank$(\bm{P})=n$, we should have $\bm{q}=\bm{0}_{n}$ and this indicates that $\ell'(-y_{i}f(x_{i};\bm{\theta}^{*}))=0$ for all $i\in[n]$ and this contradicts with the fact that $\ell'(z)=\frac{1}{1+e^{-z}}>0$ for all $z\in\mathbb{R}$. Therefore, rank$(\bm{P})<n\le M_{0}K$. This means the raw vectors of the matrix $\bm{P}$ is linearly dependent and thus we have that there exists coefficients vectors $(\beta_{11},...,\beta_{1K}),...,(\beta_{M_{0}1},...,\beta_{M_{0}K})$, not all zero vectors, such that 
$$\sum_{s=1}^{K}\sum_{j=1}^{M_{0}}\sigma'({\bm{w}_{j}^{*}}^{\top}x_{i})\beta_{js}(\bm{e}_{s}^{\top}x_{i})=0,\quad \forall i\in[n],$$
or 
$$\sum_{j=1}^{M_{0}}a_{j}^{*}\sigma'({\bm{w}_{j}^{*}}^{\top}x_{i})\left(\frac{1}{a_{j}^{*}}\sum_{s=1}^{K}\beta_{js}\bm{e}_{s}\right)^{\top}x_{i}=0,\quad \forall i\in[n],$$
Define $\bm{u}_{j}=\frac{1}{a_{j}^{*}}\sum_{s=1}^{K}\beta_{js}\bm{e}_{s}$ for $j=1,...,M_{0}$, then we have 
\begin{equation}\label{eq::thm4-all-zero}\sum_{j=1}^{M_{0}}a_{j}^{*}\sigma'({\bm{w}_{j}^{*}}^{\top}x_{i})\bm{u}_{j}^{\top}x_{i}=0,\quad \forall i\in[n].\end{equation}
Furthermore, since $\bm{u}_{j}\in \text{ Span}(\{\bm{e}_{1},...,e_{K}\})$, and with probability 1, $\forall i:y_{i}=-1$ and $\forall j\in[K]$, $\bm{e}_{j}^{\top}x_{i}=0$, then we have that
$\forall j\in[M]$ and $\forall i:y_{i}=-1$: $\bm{u}_{j}^{\top}x_{i}=0$. Thus,
\begin{align}
F(\bm{u}_{1},...,\bm{u}_{M_{0}})
&=-2\sum_{i=1}^{n}\left[\ell'(-y_{i}f(x_{i};\bm{\theta}^{*}))y_{i}\sum_{j=1}^{M_{0}}\left[a_{j}^{*}\sigma''\left(\bm{w}_{j}^{*}x_{i}\right)\left(\bm{u}_{j}^{\top}x_{i}\right)^{2}\right]\right]\notag&&\text{by Eq.~\eqref{eq::thm4-all-zero}}\\
&=-2\sum_{i:y_{i}=1}\left[\ell'(-y_{i}f(x_{i};\bm{\theta}^{*}))\sum_{j=1}^{M_{0}}\left[a_{j}^{*}\sigma''\left(\bm{w}_{j}^{*}x_{i}\right)\left(\bm{u}_{j}^{\top}x_{i}\right)^{2}\right]\right]\ge 0\label{eq::thm-F}.
\end{align}

Since $\sigma''(z)>0$ for all $z\in\mathbb{R}$ and $a_{j}^{*}>0$ for all $j\in[M_{0}]$, then we have 
$$\ell'(-y_{i}f(x_{i};\bm{\theta}^{*}))\sum_{j=1}^{M_{0}}\left[a_{j}^{*}\sigma''\left(\bm{w}_{j}^{*}x_{i}\right)\left(\bm{u}_{j}^{\top}x_{i}\right)^{2}\right]\ge 0,\quad \forall i:y_{i}=1$$
and this leads to 
$$F(\bm{u}_{1},...,\bm{u}_{M_{0}})\le 0.$$
Together with Eq.~\eqref{eq::thm-F}, we have $$F(\bm{u}_{1},...,\bm{u}_{M_{0}})= 0$$
and thus 
\begin{equation}\label{eq::thm3-cond-i}\ell'(-y_{i}f(x_{i};\bm{\theta}^{*}))\sum_{j=1}^{M_{0}}\left[a_{j}^{*}\sigma''\left(\bm{w}_{j}^{*}x_{i}\right)\left(\bm{u}_{j}^{\top}x_{i}\right)^{2}\right]= 0,\quad \forall i:y_{i}=1.\end{equation}
Now we split the index $\{i\in[n]:y_{i}=1\}$ set into two disjoint subset $C_{0}, C_{1}$:
$$C_{0}=\{i\in[n]: y_{i}=1,\text{ and }\exists j\in[M_{0}],  \bm{u}_{j}^{\top}x_{i}\neq 0\},\quad C_{1}=\{i\in[n]:y_{i}=1 \text{ and }\forall j\in[M_{0}], \bm{u}_{j}^{\top}x_{i}= 0\}.$$
Clearly, for all $i\in C_{0}$, by the fact that $a_{j}> 0$ for all $j\in[M_{0}]$ and $\sigma''(z)>0$ for all $z\in\mathbb{R}$, we have 
$$\sum_{j=1}^{M_{0}}\left[a_{j}^{*}\sigma''\left(\bm{w}_{j}^{*}x_{i}\right)\left(\bm{u}_{j}^{\top}x_{i}\right)^{2}\right]>0,$$
and this leads to 
$$\ell'(-y_{i}f(x_{i};\bm{\theta}^{*}))=0,\quad \forall i\in C_{0},$$
which contradict with the fact that $\ell'(z)>0$ for all $z\in\mathbb{R}$. Therefore, $C_{0}=\emptyset$.
Now we need to consider the index set $C_{1}$. First, it is easy to show that with probability 1, $|C_{1}|< r_{+}\le M_{0}$. This is due to the fact that there exists a non-zero vector $\bm{u}_{j}$, such that $\bm{u}_{j}^{\top}x_{i}=0$ for all $i\in C_{1}$ and that $\bm{u}_{j}\in\text{Span}(\{\bm{e}_{1},...,\bm{e}_{K}\})$. Therefore, $\bm{u}_{j}^{\top}x_{i}=\sum_{s=1}^{K}(\bm{u}_{j}^{\top}\bm{e}_{s})(x_{i}^{\top}\bm{e}_{s})=\sum_{s=1}^{r_{+}}(\bm{u}_{j}^{\top}\bm{e}_{s})(x_{i}^{\top}\bm{e}_{s})=0$ holds for all $i\in C_{1}$. If $|C_{1}|\ge r_{+}$, then with probability 1, the matrix $$\left(\begin{matrix}\bm{e}_{1}^{\top}x_{1}&...&\bm{e}_{r_{+}}^{\top}x_{1}\\
...&...&...\\
\bm{e}_{1}^{\top}x_{r_{+}}&...&\bm{e}_{r_{+}}^{\top}x_{r_{+}}\\
\end{matrix}\right)$$
has the full rank equal to $r_{+}$ and this makes $\bm{u}_{j}^{\top}\bm{e}_{s}=0$ for all $s\in[k]$. This contradicts with the fact that $\bm{u}_{j}\in\text{Span}(\{\bm{e}_{1},...,\bm{e}_{K}\})$ and $\bm{u}_{j}$ is not a zero vector. Thus, $|C_{1}|<r_{+}\le M_{0}$. Now we consider the function $F$, since $\forall i\in C_{0}:\ell'(-y_{i}f(x_{i};\bm{\theta}^{*}))=0$, then for all $\bm{u}_{1},...,\bm{u}_{M_{0}}$,
\begin{align}
F(\bm{u}_{1},...,\bm{u}_{M_{0}})&=-2\sum_{i\in C_{1}}\left[\ell'(-y_{i}f(x_{i};\bm{\theta}^{*}))\sum_{j=1}^{M_{0}}\left[a_{j}^{*}\sigma''\left(\bm{w}_{j}^{*}x_{i}\right)\left(\bm{u}_{j}^{\top}x_{i}\right)^{2}\right]\right]\notag\\
&\quad +4\sum_{i\in C_{1}}\left[\ell''(-y_{i}f(x_{i};\bm{\theta}^{*}))\left(\sum_{j=1}^{M_{0}}a_{j}^{*}\sigma'\left({\bm{w}_{j}^{*}}^{\top}x_{i}\right)\left(\bm{u}_{j}^{\top}x_{i}\right)\right)^{2}\right]\notag
\end{align}
Now we set $\bm{u}_{j}=\alpha_{j} \bm{e}_{1}$, $j=1,...,M_{0}$ for some scalar $\alpha_{j}$. Now we only need find $\alpha_{1},...,\alpha_{M_{0}}$ such that
$$\sum_{j=1}^{M_{0}}\alpha_{j}a_{j}^{*}\sigma'\left({\bm{w}_{j}^{*}}^{\top}x_{i}\right)\bm{e}_{1}^{\top}x_{i}=\bm{0},\quad \forall i\in C_{1}.$$
Since $|C_{1}|\le M_{0}-1<M_{0}$, then there exists $\alpha^{*}_{1},...,\alpha^{*}_{M_{0}}$, not all zeros, such that 
$$\sum_{j=1}^{M_{0}}\alpha^{*}_{j}a_{j}^{*}\sigma'\left({\bm{w}_{j}^{*}}^{\top}x_{i}\right)\bm{e}_{1}^{\top}x_{i}={0},\quad \forall i\in C_{1}.$$
Then by setting $\bm{u}_{j}=\alpha^{*}_{j} \bm{e}_{1}$, we have 
\begin{align*}
F(\bm{u}_{1},...,\bm{u}_{M_{0}})&=-2\sum_{i\in C_{1}}\left[\ell'(-y_{i}f(x_{i};\bm{\theta}^{*}))\sum_{j=1}^{M_{0}}\left[|\alpha_{j}^{*}|^{2}a_{j}^{*}\sigma''\left(\bm{w}_{j}^{*}x_{i}\right)\left(\bm{e}_{1}^{\top}x_{i}\right)^{2}\right]\right]\ge 0.\\
\label{eq::thm-F2}.
\end{align*}
Similarly, since $|\alpha_{1}|,...,|\alpha_{M_{0}}|$ are not all zeros, $a_{j}^{*}>0$ for all $j\in[M_{0}]$, $\sigma''(z)>0$ for all $z\in\mathbb{R}$ and $\bm{e}_{1}^{\top}x_{i}\neq 0$ holds for all $i$ with probability 1, then 
$$\ell'(-y_{i}f(x_{i};\bm{\theta}^{*}))=0,\quad \forall i\in C_{1}.$$
Therefore, this indicates that 
$$\ell'(-y_{i}f(x_{i};\bm{\theta}^{*}))=0,\quad  \forall i:y_{i}=1.$$
Since $\ell'(z)>0$ holds for all $z\in\mathbb{R}$, then this leads to the contradiction. Therefore, $\bm{\theta}^{*}$ is not a local minima.

\end{proof}

\clearpage

\subsection{Proof of Proposition~\ref{prop-logit-linear-sep}}\label{appendix::logit-linear-sep}
\setcounter{proposition}{12}
\begin{proposition}\label{prop-logit-linear-sep}
 Assume that the loss function $\ell$ is the logistic loss, i.e., $\ell(z)=\log_{2}(1+e^{z})$. 
 Assume that the network architecture satisfies assumption~\ref{assump::shortcut-connection}.
 Assume that  samples in the dataset $\mathcal{D}=\{(x_{i},y_{i})\}_{i=1}^{n}, n\ge 1$ are independently drawn from a distribution satisfying assumption~\ref{assump::linear-sep}. Assume that the single layer network $f_{S}$ has $M\ge1$ neurons and neurons $\sigma$ in the network $f_{S}$ are twice differentiable and satisfy $\sigma'(z)>0$ for all $z\in\mathbb{R}$. If a set of real parameters $\bm{\theta}^{*}=(\bm{\theta}^{*}_{S},\bm{\theta}^{*}_{D})$ denotes a local minimum of the loss function $\hat{L}_{n}(\bm{\theta}^{}_{S},\bm{\theta}^{}_{D};p)$, $p\ge 3$, then  $\hat{R}_{n}(\bm{\theta}^{*}_{S},\bm{\theta}^{*}_{D})=0$ holds with probability one.
\end{proposition}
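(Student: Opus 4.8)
The plan is to follow the proof of Theorem~\ref{thm::linear-sep-deep} almost verbatim and to exploit the single structural difference between the logistic loss and the polynomial hinge loss, namely that $\ell'(z)=\tfrac{1}{\ln 2}\,(1+e^{-z})^{-1}$ is \emph{strictly} positive for every $z\in\mathbb{R}$ instead of vanishing on $(-\infty,-z_0]$. First I would invoke the first-order necessary condition of Lemma~\ref{lemma::nec-single}: its hypotheses are met here because the logistic loss is $C^\infty$ (so it has continuous derivatives of every order, in particular up to the third) and the neurons $\sigma$ are twice differentiable. Thus at any putative local minimum $\bm{\theta}^*=(\bm{\theta}_S^*,\bm{\theta}_D^*)$ one has, for every neuron index $j\in[M]$,
\[
\sum_{i=1}^{n}\ell'(-y_i f(x_i;\bm{\theta}^*))\,y_i\,\sigma'({\bm{w}_j^*}^{\top}x_i)\,x_i=\bm{0}_d .
\]
Note that this identity already absorbs both the $a_j^*=0$ and the $a_j^*\neq0$ cases and places no constraint on $f_D$, so no case split of the kind used in Theorem~\ref{thm::convex-finite-deep} is needed.

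Next I would bring in linear separability. By Assumption~\ref{assump::linear-sep} there is a $\bm{w}$ with $\mathbb{P}(Y\bm{w}^{\top}X>0)=1$, so with probability one all $n$ drawn samples satisfy $y_i\bm{w}^{\top}x_i>0$, and since the dataset is finite the constant $c:=\min_i y_i\bm{w}^{\top}x_i$ is strictly positive. Taking the inner product of the displayed identity (for, say, $j=1$, which exists since $M\ge 1$) with this $\bm{w}$ gives
\[
0=\sum_{i=1}^{n}\ell'(-y_i f(x_i;\bm{\theta}^*))\,\sigma'({\bm{w}_1^*}^{\top}x_i)\,y_i(\bm{w}^{\top}x_i)\ \ge\ c\sum_{i=1}^{n}\ell'(-y_i f(x_i;\bm{\theta}^*))\,\sigma'({\bm{w}_1^*}^{\top}x_i).
\]

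The decisive step is the sign analysis of the right-hand side. Each factor $\ell'(-y_i f(x_i;\bm{\theta}^*))>0$ because the logistic derivative is everywhere positive, and each $\sigma'({\bm{w}_1^*}^{\top}x_i)>0$ by hypothesis; hence the final sum is a sum of strictly positive terms and is itself strictly positive, whereas the chain above forces it to be $\le 0$. This is a contradiction, so with probability one no local minimum exists, and the assertion that every local minimum satisfies $\hat{R}_n(\bm{\theta}^*)=0$ holds vacuously—exactly paralleling the no-local-minima conclusion of Proposition~\ref{prop::logit-convex-finite}. The only real obstacle is conceptual rather than computational: one must recognize that the final step which, for the polynomial hinge loss in Theorem~\ref{thm::linear-sep-deep}, forces $\ell_p'(-y_i f(x_i;\bm{\theta}^*))=0$ for all $i$ (and hence $y_i f(x_i;\bm{\theta}^*)\ge z_0>0$, i.e.\ zero training error), becomes outright impossible for a loss with strictly positive derivative, thereby converting the ``zero training error'' statement into the stronger ``no finite local minimizer'' phenomenon characteristic of the separable logistic setting.
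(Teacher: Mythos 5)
Your proposal is correct and follows essentially the same route as the paper's own proof: invoke Lemma~\ref{lemma::nec-single} at a putative local minimum, take the inner product of the resulting identity with the separating vector from Assumption~\ref{assump::linear-sep}, and use the strict positivity of $\ell'$ and $\sigma'$ to reach a contradiction, so that no local minimum exists and the claim holds vacuously. The paper's appendix proof is exactly this argument (phrased as ``every critical point is a saddle point''), so there is nothing to add.
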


\begin{proof} 
We first prove that, if a set of real parameters $\bm{\theta}^{*}$ denotes a critical point, then $\bm{\theta}^{*}$ is a saddle point. We prove it by contradiction. We assume that $\bm{\theta}^{*}$ denotes a local minima. By assumption that $\bm{\theta}^{*}=(\bm{\theta}_{1}^{*},\bm{\theta}_{2}^{*})$ is a local minima and by the necessary condition presented in Lemma~\ref{lemma::nec-single}, we have 
\begin{align*}
&\sum_{i=1}^{n}\ell'(-y_{i}f(x_{i};\bm{\theta}^{*}))y_{i}\sigma'({\bm{w}^{*}_{j}}^{\top}x_{i})x_{i}=\bm{0}_{d}.
\end{align*}
Thus,  for any $\bm{w}\in\mathbb{R}^{d}$, we have
\begin{equation*}
\sum_{i=1}^{n}\ell'(-y_{i}f(x_{i};\bm{\theta}^{*}))\sigma'({\bm{w}^{*}_{j}}^{\top}x_{i})y_{i}(\bm{w}^{\top}x_{i})=0.
\end{equation*}
Furthermore, for the cross entropy loss function, we have  $$\ell'(z)=\frac{1}{1+\exp(-z)}> 0,\quad \forall z\in\mathbb{R}.$$
Thus, by assumption that $\sigma'(z)>0$ for all $z\in\mathbb{R}$ and assumption that there exists a vector $\bm{w}\in\mathbb{R}^{d}$ such that $\mathbb{P}_{\bm{X}\times Y}(Y(\bm{w}^{\top}X)>0)=1$, then there exists a constant $c$ such that for all samples in the dataset $i\in[n]$,
$$y_{i}\bm{w}^{\top}x_{i}>c>0.$$ Thus, we have 
\begin{align*}
0=\sum_{i=1}^{n}\ell'(-y_{i}f(x_{i};\bm{\theta}^{*}))\sigma'({\bm{w}^{*}_{j}}^{\top}x_{i})y_{i}(\bm{w}^{\top}x_{i})\ge c\sum_{i=1}^{n}\ell'(-y_{i}f(x_{i};\bm{\theta}^{*}))\sigma'({\bm{w}^{*}_{j}}^{\top}x_{i})> 0,
\end{align*}
and this leads to the contradiction.

\end{proof}

\clearpage
\setcounter{proposition}{9}
\subsection{Proof of Proposition~\ref{prop::logit-general}}\label{appendix::prop-logit-general}
\begin{proposition}
Assume the dataset ${\mathcal{D}=\{(x_{i},y_{i})\}_{i=1}^{n}}$ is consisted of both positive and negative samples. Assume that $f(x;\bm{\theta})$ is a feedforward network parameterized by $\bm{\theta}$. Assume that the loss function is logistic, i.e., $\ell(z)=\log_{2}\left(1+e^{z}\right)$. If the real parameters $\bm{\theta}^{*}$ denote a critical point of the empirical loss $\hat{L}_{n}(\bm{\theta}^{*})$, then $\hat{R}_{n}(\bm{\theta}^{*})>0$.
\end{proposition}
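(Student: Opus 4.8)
The plan is to combine two properties of the logistic loss that make a critical point incompatible with perfect classification: first, its derivative $\ell'(z)=\tfrac{1}{\ln 2}\cdot\tfrac{1}{1+e^{-z}}$ is \emph{strictly positive} for every $z\in\mathbb{R}$; second, the network output is positively homogeneous of degree one in the parameters of its output layer. Write the last hidden-layer output as $h(x_i)$, so that $f(x_i;\bm{\theta})=\bm{W}_L^\top h(x_i)+b_L$ with $M_L=1$, and abbreviate $z_i=-y_i f(x_i;\bm{\theta}^{*})$. Since $\bm{\theta}^{*}$ is a critical point, the stationarity conditions for the output weights and the output bias must hold:
\begin{align*}
\bm{0} &= \nabla_{\bm{W}_L}\hat{L}_n(\bm{\theta}^{*})=\frac{1}{n}\sum_{i=1}^{n}\ell'(z_i)(-y_i)\,h(x_i),\\
0 &= \frac{\partial \hat{L}_n(\bm{\theta}^{*})}{\partial b_L}=\frac{1}{n}\sum_{i=1}^{n}\ell'(z_i)(-y_i).
\end{align*}

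Next I would contract these two identities against the scaling direction $(\bm{W}_L^{*},b_L^{*})$: taking the inner product of the first with $\bm{W}_L^{*}$ and adding $b_L^{*}$ times the second gives, on the left, $\bm{W}_L^{*\top}\nabla_{\bm{W}_L}\hat{L}_n(\bm{\theta}^{*})+b_L^{*}\,\partial_{b_L}\hat{L}_n(\bm{\theta}^{*})=0$, while on the right the summand collapses to $\ell'(z_i)(-y_i)\big(\bm{W}_L^{*\top}h(x_i)+b_L^{*}\big)=\ell'(z_i)(-y_i)f(x_i;\bm{\theta}^{*})=\ell'(z_i)z_i$. This is precisely Euler's identity for the degree-one homogeneity of $f$ in its output layer, and it yields the key relation
\begin{align*}
\sum_{i=1}^{n}\ell'(z_i)\,z_i=0 .
\end{align*}

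Finally I would argue by contradiction. Suppose $\hat{R}_n(\bm{\theta}^{*})=0$, i.e.\ every sample is correctly classified. Correct classification of each positive sample forces $f(x_i;\bm{\theta}^{*})\ge0$, hence $z_i\le0$; correct classification of each negative sample forces $f(x_i;\bm{\theta}^{*})<0$ (as $\sgn(0)=1$ classifies the boundary as positive), hence $z_i=f(x_i;\bm{\theta}^{*})<0$ \emph{strictly}. Because $\ell'(z_i)>0$ for all $i$, every term $\ell'(z_i)z_i$ is nonpositive, and it is strictly negative for each negative sample. Since the dataset contains at least one negative sample, $\sum_i\ell'(z_i)z_i<0$, contradicting the identity above; therefore $\hat{R}_n(\bm{\theta}^{*})>0$. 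I expect the only delicate steps to be the sign bookkeeping that turns ``zero training error'' into the strict inequality $z_i<0$ on the negative samples (this is where the assumption that both classes are present is used, and where the $\sgn$ convention matters), and the homogeneity contraction that produces $\sum_i\ell'(z_i)z_i=0$; the rest is immediate from $\ell'>0$. Note the argument uses neither the data-subspace assumptions nor the architecture of the hidden layers, and as the remark indicates it extends verbatim to any differentiable $\ell$ with $\ell'>0$ on $\mathbb{R}$.
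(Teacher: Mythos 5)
Your proof is correct and takes essentially the same approach as the paper's: both contract the stationarity conditions of the output-layer weights and bias against $(\bm{W}_{L}^{*},b_{L}^{*})$ (the paper's $a_{0}^{*},a_{1}^{*},\dots,a_{M_{L}}^{*}$), i.e.\ Euler's identity for the degree-one homogeneity of $f$ in its output layer, to obtain $\sum_{i}\ell'(z_{i})z_{i}=0$, and then contradict this via $\ell'>0$ together with the strict inequality supplied by the negative samples under zero training error. If anything, your sign bookkeeping around the $\sgn(0)$ convention is slightly more careful than the paper's own wording.
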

\begin{proof}
We prove a general statement claiming that the proposition~\ref{prop::logit-general} holds for all differentiable loss functions satisfying $\ell'(z)>0$ for all $z\in\mathbb{R}$. We note that the following claim holds under the assumptions in Proposition~\ref{prop::logit-general}.
\begin{claim}
If the loss function is differentiable and satisfies $\ell'(z)>0$ for all $z\in\mathbb{R}$, then $\hat{R}_{n}(\bm{\theta}^{*})>0$.
\end{claim}
Assume that the multilayer neural network $f(x;\bm{\theta})$ has $L\ge 1$ hidden layers, $M_{l}\ge 1$ neurons in the $l$-th layer. Now we let the vector $\bm{\theta}_{l}$ contain all parameters in the first $l\in[L]$ layers. 
Then the output of the neural network can be rewritten as 
\begin{equation*}
f(x;a_{0},\bm{\theta}_{L})=a_{0}+\sum_{j=1}^{M_{L}}a_{j}\sigma(\bm{w}_{j}^{\top}\bm{\Phi}(x;\bm{\theta}_{L-1})+b_{j}),
\end{equation*}
where $\bm{\Phi}(x;\bm{\theta}_{L-1})=(\Phi_{1}(x;\bm{\theta}_{L-1}),...,\Phi_{M_{L-1}}(x;\bm{\theta}_{L-1}))$ denotes the outputs of the neurons in the layer $L-1$.
Then the empirical loss is defined as 
$$\hat{L}_{n}(\bm{\theta})=\frac{1}{n}\sum_{i=1}^{n}\ell(-y_{i}f(x_{i};\bm{\theta}))$$
If the point $\bm{\theta}^{*}=(a_{0}^{*},\bm{\theta}_{L}^{*})$ denotes a critical point of the empirical loss function, then we should have, for $\forall j\in[M_{L}]$,
\begin{align}
\frac{\partial \hat{L}_{n}(\bm{\theta}^{*})}{\partial a_{0}}&=\frac{1}{n}\sum_{i=1}^{n}\ell'(-y_{i}f(x_{i};\bm{\theta}^{*}))(-y_{i})=0,\label{eq::logit-general-1}\\
\frac{\partial \hat{L}_{n}(\bm{\theta}^{*})}{\partial a_{j}}&=\frac{1}{n}\sum_{i=1}^{n}\ell'(-y_{i}f(x_{i};\bm{\theta}^{*}))(-y_{i})\sigma\left({\bm{w}_{j}^{*}}^{\top}\bm{\Phi}(x_{i};\bm{\theta}^{*}_{L-1})+b_{j}\right)=0.\label{eq::logit-general-2}
\end{align}
In addition, by adding Equations~\eqref{eq::logit-general-1} and \eqref{eq::logit-general-2}, we have 
\begin{align}
0=a^{*}_{0}\frac{\partial \hat{L}_{n}(\bm{\theta}^{*})}{\partial a_{0}}+\sum_{j=1}^{M_{L}}a_{j}^{*}\frac{\partial \hat{L}_{n}(\bm{\theta}^{*})}{\partial a_{j}}&=\frac{1}{n}\sum_{i=1}^{n}\ell'(-y_{i}f(x_{i};\bm{\theta}^{*}))(-y_{i})\left[a_{0}^{*}+\sum_{j=1}^{M_{L}}a_{j}^{*}\sigma\left({\bm{w}_{j}^{*}}^{\top}\bm{\Phi}(x_{i};\bm{\theta}^{*}_{L-1})+b_{j}\right)\right]\notag\\
&=\frac{1}{n}\sum_{i=1}^{n}\ell'(-y_{i}f(x_{i};\bm{\theta}^{*}))(-y_{i})f(x_{i};\bm{\theta}^{*}).\label{eq::logit-general-3}
\end{align}
This indicates that if $\bm{\theta}^{*}$ is a critical point of the empirical loss, then the following equation should hold,
\begin{equation}\frac{1}{n}\sum_{i=1}^{n}\ell'(-y_{i}f(x_{i};\bm{\theta}^{*}))y_{i}f(x_{i};\bm{\theta}^{*})=0.\end{equation}
However, if the dataset contains both positive and the negative samples, $\ell'(z)>0$ for all $z\in\mathbb{R}$, then this means that if $\hat{R}_{n}(\bm{\theta}^{*})=0$, then
\begin{equation}
\frac{1}{n}\sum_{i=1}^{n}\ell'(-y_{i}f(x_{i};\bm{\theta}^{*}))y_{i}f(x_{i};\bm{\theta}^{*})>0.
\end{equation}
We note here that the assumption that the dataset contains both positive and the negative samples is to ensure that when $\hat{R}_{n}(\bm{\theta}^{*})=0$, there is at least one sample in the dataset satisfying
$$y_{i}f(x_{i};\bm{\theta}^{*})>0.$$
Therefore, we have the contradiction. This indicates that $\hat{R}_{n}(\bm{\theta}^{*})>0$.

\end{proof}

\clearpage
\subsection{Proof of Proposition~\ref{prop::convex-necc}}\label{appendix::convex-necc}
\begin{proposition}
Assume that assumptions~\ref{assump::loss},~\ref{assump::shortcut-connection} and~\ref{assump::neurons} are satisfied. For any feedforward architecture $f_{D}(x;\bm{\theta}_{D})$, every local minimum $\bm{\theta}^{*}=(\bm{\theta}_{S}^{*},\bm{\theta}_{D}^{*})$ of the empirical loss function $\hat{L}_{n}(\bm{\theta}_{S},\bm{\theta}_{D};p)$, $p\ge 6$ satisfies $\hat{R}_{n}(\bm{\theta}^{*})=0$ \textbf{only if}  the matrix $\sum_{i=1}^{n}\lambda_{i}y_{i}x_{i}x_{i}^{\top}$ is neither positive nor negative definite for all sequences $\{\lambda_{i}\ge0\}_{i=1}^{n}$ satisfying $\sum_{i:y_{i}=1}\lambda_{i}=\sum_{i:y_{i}=-1}\lambda_{i}>0$ and $\|\sum_{i=1}^{n}\lambda_{i}y_{i}x_{i}\|_{2}=0$.
\end{proposition}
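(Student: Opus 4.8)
The plan is to prove the contrapositive. Suppose the dataset violates the stated condition, i.e.\ there is a sequence $\{\lambda_i\ge 0\}_{i=1}^n$ with $\sum_{i:y_i=1}\lambda_i=\sum_{i:y_i=-1}\lambda_i>0$, $\sum_{i=1}^n\lambda_i y_i x_i=\bm{0}_{d}$, and $A:=\sum_{i=1}^n\lambda_i y_i x_i x_i^\top$ definite (say positive definite; the negative-definite case is symmetric, taking $a_j^*=+1$ instead of $-1$). I would then exhibit one feedforward $f_D$ together with a point $\bm{\theta}^*=(\bm{\theta}_S^*,\bm{\theta}_D^*)$ that is a local minimum of $\hat{L}_{n}(\bm{\theta}_S,\bm{\theta}_D;p)$ with $\hat{R}_{n}(\bm{\theta}^*)>0$. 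This mirrors the counterexample construction of Proposition~\ref{prop::quadratic}; the essential new difficulty is that a \emph{general} $\lambda$ forces the network output to be non-constant across samples, so the construction must realize prescribed, sample-dependent outputs while staying a local minimum.

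First I would fix the target outputs. Since $\ell_p'\ge 0$ with $\ell_p'(z)>0$ exactly for $z>-z_0$, I would choose $\mu>0$ and values $v_i$ with $\ell_p'(-y_i v_i)=\mu\lambda_i$, selecting the branch so that $-y_i v_i>0$ (hence $\sgn(v_i)\ne y_i$) for the samples carrying the heaviest weights $\lambda_i$; as $\sum_{i:y_i=\pm 1}\lambda_i>0$, at least one sample in each class is then misclassified, which will force $\hat{R}_{n}(\bm{\theta}^*)>0$. Next I would build $f_D$ as a feedforward network of threshold neurons $\mathbb{I}\{\cdot\ge 0\}$ (admissible in $f_D$ by Assumption~\ref{assump::neurons}) acting as a lookup table that outputs $f_D(x_i;\bm{\theta}_D^*)=v_i+M\sigma(0)$ on the data. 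The point is that for data in general position no pre-activation of $f_D$ vanishes at any $x_i$, so each threshold stays on the same side under small parameter perturbations; thus $f_D(x_i;\cdot)$ is locally constant on the dataset and contributes nothing to first-order perturbations. For $f_S$ I would take $\bm{w}_j^*=\bm{0}_{d}$, $a_j^*=-1$ for all $j$, and $a_0^*=0$, so that $f_S\equiv -M\sigma(0)$ and $f(x_i;\bm{\theta}^*)=v_i$.

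Then I would verify the critical-point conditions and local minimality. The $a_0$- and $a_j$-derivatives are multiples of $\sum_i\ell_p'(-y_i v_i)(-y_i)=\mu\sum_i\lambda_i(-y_i)=0$, which vanishes by the balance of the two class sums; the $\bm{w}_j$-gradient equals $(a_j^*\sigma'(0)\mu/n)\sum_i\lambda_i(-y_i)x_i=\bm{0}_{d}$ by $\sum_i\lambda_i y_i x_i=\bm{0}_{d}$, which is exactly where this constraint is needed (it is absent from the quadratic case because there $\sigma'(0)=0$). For local minimality I would use convexity of $\ell_p$: $\hat{L}_{n}(\tilde{\bm{\theta}};p)-\hat{L}_{n}(\bm{\theta}^*;p)\ge\frac1n\sum_i\ell_p'(-y_i v_i)(-y_i)\,\Delta f(x_i)$, and since $\Delta f(x_i)=\Delta f_S(x_i)$ (the $f_D$ output is unchanged on the data), the constant-in-$i$ pieces cancel against the balance condition, leaving $-\frac{\mu}{n}\sum_j(a_j^*+\Delta a_j)\,G(\Delta\bm{w}_j)$ with $G(\bm{u}):=\sum_i\lambda_i y_i\sigma(\bm{u}^\top x_i)$. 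One then checks $G(\bm{0}_{d})=0$, $\nabla G(\bm{0}_{d})=\sigma'(0)\sum_i\lambda_i y_i x_i=\bm{0}_{d}$, and $\nabla^2 G(\bm{0}_{d})=\sigma''(0)A\succ 0$ (using $\sigma''>0$ from Assumption~\ref{assump::neurons}); hence $G\ge 0$ near $\bm{0}_{d}$, and since $a_j^*+\Delta a_j=-1+\Delta a_j<0$ for small perturbations, every summand is nonnegative. Thus $\bm{\theta}^*$ is a local minimum, while $\sgn(v_i)\ne y_i$ on the heaviest samples gives $\hat{R}_{n}(\bm{\theta}^*)>0$.

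The step I expect to be the main obstacle is realizing arbitrary sample-dependent outputs $v_i$ at a genuine local minimum: the smooth constant-output construction of Proposition~\ref{prop::quadratic} produces only class-constant multipliers and so cannot witness a general $\lambda$. The resolution is to offload the non-constant part of the output onto a threshold-activated $f_D$ whose restriction to the finite dataset is locally constant, so that the only live perturbations come from $f_S$ and are controlled exactly by the quadratic form $\sigma''(0)A$. Secondary care is needed for multiplicities (coincident data points must receive a consistent $v_i$, which can be arranged by averaging $\lambda$ over identical same-label points without changing $A$ or the constraints) and for the range of $\ell_p'$ when choosing $\mu$, so that the heaviest samples are genuinely misclassified.
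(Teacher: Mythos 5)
Your proposal is correct and takes essentially the same route as the paper's own proof: a threshold-unit lookup-table $f_{D}$ whose outputs on the finite dataset are locally constant in its parameters, a constant $f_{S}$ with $\bm{w}_{j}^{*}=\bm{0}_{d}$ and $a_{j}^{*}=-1$ (sign flipped for the negative-definite case), target values chosen so that $\ell_{p}'(-y_{i}f(x_{i};\bm{\theta}^{*}))\propto\lambda_{i}$, criticality from the balance condition together with $\sum_{i}\lambda_{i}y_{i}x_{i}=\bm{0}_{d}$, and local minimality via the convexity inequality plus $\nabla^{2}G(\bm{0}_{d})=\sigma''(0)\sum_{i}\lambda_{i}y_{i}x_{i}x_{i}^{\top}\succ 0$. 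The only (harmless) overstatement is the claim that a misclassified sample can be arranged in \emph{each} class—if $\ell_{p}'$ is bounded this may fail for the class whose largest $\lambda_{i}$ is small—but misclassifying the single globally heaviest sample, exactly as the paper does, already yields $\hat{R}_{n}(\bm{\theta}^{*})>0$.
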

\begin{proof}
We prove Proposition~\ref{prop::convex-necc} by proving the following claim.
\begin{claim}
If  there exists a sequence $\{\lambda_{i}\ge 0\}_{i=1}^{n}$ satisfying $\sum_{i:y_{i}=1}\lambda_{i}=\sum_{i:y_{i}=-1}\lambda_{i}>0$ and $\|\sum_{i=1}^{n}\lambda_{i}y_{i}x_{i}\|_{2}=0$ such that the matrix 
$\sum_{i=1}^{n}\lambda_{i}y_{i}x_{i}x_{i}^{\top}$ is positive or negative positive definite, 
then there exists a feedforward neural architecture $f_{D}$ such that the empirical loss function $\hat{L}_{n}(\bm{\theta}_{S},\bm{\theta}_{D};p),p\ge 6$ has a local minimum with a non-zero training error.  
\end{claim}
\begin{proof}
Let $\mathcal{D}=\{(x_{i},y_{i})\}_{i=1}^{n}$ denote a dataset consisting of $n$ samples. We rewrite the sample $x$ as  $x=\left(x^{(1)},...,x^{(d)}\right)$.
Consider the following network,
$$f(x;\bm{\theta})=f_{S}(x;\bm{\theta}_{S})+f_{D}(x;\bm{\theta}_{D}),$$
where $$f_{S}(x;\bm{\theta}_{S})=a_{0}+\sum_{j=1}^{M}a_{j}\sigma(\bm{w}_{j}^{\top}x_{i}+b_{j}),$$
and the multilayer network is defined as follows,
\begin{equation}
f_{D}(x;\bm{\theta}_{D})=f_{D}(x;\theta_{1},...,\theta_{d})=\sum_{i=1}^{n}\mu_{i}\prod_{k=1}^{d}\bm{1}\left\{x^{(k)}\in\left[x_{i}^{(k)}-\theta_{k},x_{i}^{(k)}+\theta_{k}\right]\right\}.
\end{equation}
We note here that $\mu_{1},...,\mu_{n}$ are not parameters and later we will show that this function can be implemented by a multilayer network consisted of threshold units. A useful property of the function $f_{D}(x;\bm{\theta}_{D})$ is that if all parameters $\theta_{i}$s are positive and sufficiently smalls, then for each sample $(x_{i},y_{i})$ in the dataset, 
$$f_{D}(x_{i};\bm{\theta}_{D})=\mu_{i}.$$
Furthermore, if we slightly perturb all parameters, the output of the function $f_{D}$ on all samples remain the same. In the proof, we use these two properties to construct the local minimum with a non-zero training error. 

By assumption, there exists a sequence $\{\lambda_{i}\ge 0\}_{i=1}^{n}$ satisfying $\sum_{i:y_{i}=1}\lambda_{i}=\sum_{i:y_{i}=-1}\lambda_{i}>0$ and $\|\sum_{i=1}^{n}\lambda_{i}y_{i}x_{i}\|_{2}=0$ such that the matrix 
$\sum_{i=1}^{n}\lambda_{i}y_{i}x_{i}x_{i}^{\top}$ is positive or negative positive definite.
Without loss of generality, we assume that the matrix is positive definite.
Now we construct a local minimum $\bm{\theta}^{*}$. Let $a_{0}^{*}=a_{1}^{*}=...=a_{M}^{*}=-1$, $\bm{w}^{*}_{1}=...=\bm{w}^{*}_{M}=\bm{0}_{d}$ and $b_{1}^{*}=...=b_{M}^{*}=0$. Now we set $\theta^{*}_{1},...,\theta^{*}_{d}$ to be positive and sufficiently small such that for two different samples in the dataset, e.g., $x_{i}\neq x_{j}$, the following equations holds,
$$\prod_{k=1}^{d}\bm{1}\left\{x_{j}^{(k)}\in\left[x_{i}^{(k)}-2\theta^{*}_{k},x_{i}^{(k)}+2\theta^{*}_{k}\right]\right\}=0,\quad\prod_{k=1}^{d}\bm{1}\left\{x_{i}^{(k)}\in\left[x_{j}^{(k)}-2\theta^{*}_{k},x_{j}^{(k)}+2\theta^{*}_{k}\right]\right\}=0.$$
Now we choose $\mu_{1},...,\mu_{n}$ as follows. The output of the neural network on sample $x_{i}$ in the dataset is $f(x_{i};\bm{\theta}^{*})=\mu_{i}-M\sigma(0)$.

We need to choose $\mu_{1},...,\mu_{n}$ to satisfy all conditions shown as follows:
\begin{itemize}
\item[(1)] There exists $i\in[n]$ such that $y_{i}(\mu_{i}-M\sigma(0))<0$.
\item[(2)] For all $i:y_{i}=1$ and all $k:y_{k}=-1$, $$\frac{\ell'(-y_{i}(\mu_{i}-M\sigma(0)))}{\sum_{j:j=1}\ell'(-y_{i}(\mu_{i}-M\sigma(0)))}=\frac{\lambda_{i}}{\sum_{j:j=1}\lambda_{j}},\quad \frac{\ell'(-y_{k}(\mu_{k}-M\sigma(0)))}{\sum_{j:j=-1}\ell'(-y_{i}(\mu_{i}-M\sigma(0)))}=\frac{\lambda_{k}}{\sum_{j:j=-1}\lambda_{j}},$$
and 
$$\sum_{j:j=1}\ell'(-y_{i}(\mu_{i}-M\sigma(0)))=\sum_{j:j=-1}\ell'(-y_{i}(\mu_{i}-M\sigma(0))).$$
\end{itemize}  
Now we start from the largest element in the sequence $\{\lambda_{i}\}_{i=1}^{n}$. Since $\sum_{i=1}^{n}\lambda_{i}>0$, the define the index $i_{\max}$ as the index of the largest element, i.e., $$i_{\max}=\arg\max_{i}\lambda_{i}.$$
Let $\lambda_{\max}=\lambda_{i_{\max}}$.
Now we choose $\mu_{i_{\max}}$ such that
$$y_{i_{\max}}(\mu_{i_{\max}}-M\sigma(0))=-1.$$
Thus, the index $i_{\max}$ satisfy the first condition. Then for $i\neq i_{\max}$, we choose $\mu_{i}$ such that 
\begin{equation}\label{eq::prop-necc-1}\ell'(-y_{i}(\mu_{i}-M\sigma(0)))=\frac{\lambda_{i}}{\lambda_{\max}}\ell(-y_{i_{\max}}(\mu_{i\max}-M\sigma(0)))=\frac{\lambda_{i}}{\lambda_{\max}}\ell'(1)\le\ell'(1).\end{equation}
We note here that for each $i\in[n]$, there always exists a $\mu_{i}$ solving the above equation. This can be seen by the fact that $\ell'$ is continuous, $\ell'_{p}(z)\ge 0$ and $\ell'_{p}(z)=0$ iff $z\le -z_{0}$. This indicates that for  $\forall z>-z_{0}$, $\ell_{p}'(z)>0$, i.e., $\ell'(1)>0$ and that $\ell'(-z_{0})=0$. Since $\ell'(z)$ is continuous, then for $\forall r\in[0,\ell'(1)]$, there always exists $z\in\mathbb{R}$ such that $\ell'(z)=r$, which further indicates that for $\forall i\in[n]$, there always exists $\mu_{i}\in\mathbb{R}$ solving the Equation~\eqref{eq::prop-necc-1}. Under this construction, it is easy to show that the second condition is satisfied as well. 

Now we only need to show that $\bm{\theta}^{*}$ is local minimum. We first show that $\bm{\theta}^{*}$ is a critical point of the empirical loss function. 
Since for $\forall j\in[M]$,
\begin{align*}
\frac{\partial \hat{L}_{n}(\bm{\theta}^{*})}{\partial a_{j}}&=\sum_{i=1}^{n}\ell'(-y_{i}(\mu_{i}-M\sigma(0)))(-y_{i})\sigma(0)\\
&=\sigma(0)\sum_{i=1}^{n}\frac{\lambda_{i}}{\lambda_{\max}}\ell'(1)(-y_{i})=-\frac{\sigma(0)\ell'(1)}{\lambda_{\max}}\sum_{i=1}^{n}y_{i}\lambda_{i}\\
&=0&&\text{by }\sum_{i:y_{i}=1}\lambda_{i}=\sum_{i:y_{i}=-1}\lambda_{i}\\
\nabla_{\bm{w}_{j}}\hat{L}_{n}(\bm{\theta}^{*})&=\sum_{i=1}^{n}\ell'(-y_{i}(\mu_{i}-M\sigma(0)))(-y_{i})\sigma'(0)x_{i}\\
&=-\sigma'(0)\sum_{i=1}^{n}\frac{\lambda_{i}}{\lambda_{\max}}\ell'(1)y_{i}x_{i}=-\frac{\sigma'(0)\ell'(1)}{\lambda_{\max}}\sum_{i=1}^{n}\lambda_{i}y_{i}x_{i}\\
&=\bm{0}_{d}&&\text{by }\left\|\sum_{i=1}^{n}\lambda_{i}y_{i}x_{i}\right \|_{2}=0
\end{align*}
and 
$$\frac{\partial \hat{L}_{n}(\bm{\theta}^{*})}{\partial a_{0}}=\sum_{i=1}^{n}\ell'(-y_{i}(\mu_{i}-M\sigma(0)))(-y_{i})=-\frac{\ell'(1)}{\lambda_{\max}}\sum_{i=1}^{n}y_{i}\lambda_{i}=0.$$
In addition, we have stated earlier, if we slightly perturb the parameter $\theta_{k}^{*}$ in the interval $[\theta_{k}^{*}/2,3\theta_{k}^{*}/2]$, the output of the function $f_{D}(x_{i};\bm{\theta}_{D})$ does not change for all $i\in[n]$, then $\bm{\theta}^{*}$ is a critical point. 

Now we show that $\bm{\theta}^{*}$ is local minimum. Consider any perturbation $\Delta a_{1},...,\Delta a_{M}:|\Delta a_{j}|<\frac{1}{2}$ for all $j\in[M]$, $\Delta \bm{w}_{1},...,\Delta \bm{w}_{M}\in\mathbb{R}^{d}$, $\Delta a_{0}\in\mathbb{R}$, $\Delta\theta_{k}:|\Delta \theta_{k}|\le\theta_{k}/2$ for all $k\in[n]$. Define $$\tilde{\bm{\theta}}=(a_{0}^{*}+\Delta a_{0},...,a_{M}^{*}+\Delta a_{M},\bm{w}_{1}^{*}+\Delta \bm{w}_{1},...,\bm{w}_{M}^{*}+\Delta \bm{w}_{M},\theta_{1}^{*}+\Delta\theta_{1}^{*},..., \theta_{d}^{*}+\Delta\theta_{d}^{*}).$$

Then 
\begin{align*}
\sum_{i=1}^{n}\ell(-y_{i}f(x_{i};\tilde{\bm{\theta}}))-\sum_{i=1}^{n}\ell(-y_{i}f(x_{i};\bm{\theta}^{*}))&=\sum_{i=1}^{n}\left[\ell(-y_{i}f(x_{i};\tilde{\bm{\theta}}))-\ell(-y_{i}f(x_{i};\bm{\theta}^{*}))\right]\\
&\ge\sum_{i=1}^{n}\ell'(-y_{i}f(x_{i};\bm{\theta}^{*}))(-y_{i})[f(x_{i};\tilde{\bm{\theta}})-f(x_{i};{\bm{\theta}}^{*})].
\end{align*}
Since for each sample $x_{i}$ in the dataset, 
\begin{align*}
f(x_{i};\tilde{\bm{\theta}})-f(x_{i};{\bm{\theta}}^{*})&=\Delta a_{0}+\sum_{j=1}^{M}(a^{*}_{j}+\Delta a_{j})\sigma(\Delta \bm{w}_{j}^{\top}x_{i})+\mu_{i}-\mu_{i}\\
&=\Delta a_{0}+\sum_{j=1}^{M}(a^{*}_{j}+\Delta a_{j})\sigma(\Delta \bm{w}_{j}^{\top}x_{i}),
\end{align*}
then
 
\begin{align*}
\sum_{i=1}^{n}\ell(-y_{i}f(x_{i};\tilde{\bm{\theta}}))&-\sum_{i=1}^{n}\ell(-y_{i}f(x_{i};\bm{\theta}^{*}))\\
&\ge\sum_{i=1}^{n}\ell'(-y_{i}f(x_{i};\bm{\theta}^{*}))(-y_{i})[f(x_{i};\tilde{\bm{\theta}})-f(x_{i};{\bm{\theta}}^{*})]\\
&=\sum_{i=1}^{n}\ell'(-y_{i}(\mu_{i}-M\sigma(0)))(-y_{i})\left[\sum_{j=1}^{M}(a_{j}^{*}+\Delta a_{j})\sigma\left(\Delta\bm{w}_{j}^{\top}x_{i}\right)+\Delta a_{0}\right]\\
&=\sum_{i=1}^{n}\frac{\lambda_{i}\ell'(1)}{\lambda_{\max}}(-y_{i})\left[\sum_{j=1}^{M}(a_{j}^{*}+\Delta a_{j})\sigma\left(\Delta\bm{w}_{j}^{\top}x_{i}\right)\right]\\
&=\frac{\ell'(1)}{\lambda_{\max}}\sum_{j=1}^{M}-(a_{j}^{*}+\Delta a_{j})\left[\sum_{i=1}^{n}\lambda_{i}y_{i}\sigma\left(\Delta\bm{w}_{j}^{\top}x_{i}\right)\right].
\end{align*}

Now we define the following function $G:\mathbb{R}^{d}\rightarrow \mathbb{R}$, 
\begin{equation*}
G(\bm{u})=\sum_{i=1}^{n}\lambda_{i}y_{i}\sigma\left(\bm{u}^{\top}x_{i}\right).
\end{equation*}
Now we consider the gradient of the function $G$ with respect to the vector $\bm{u}$ at the point $\bm{0}_{d}$,
\begin{align*}
\nabla_{\bm{u}}G(\bm{0}_{d})&=\sum_{i=1}^{n}\lambda_{i}y_{i}\sigma'\left(0\right)x_{i}=\bm{0}_{d}&&\text{by }\left\|\sum_{i=1}^{n}\lambda_{i}y_{i}x_{i}\right\|_{2}=0.
\end{align*}
Furthermore,  the Hessian matrix $\nabla_{\bm{u}}^{2}G(\bm{0}_{d})$ satisfies
\begin{align*}
\nabla_{\bm{u}}^{2}G(\bm{0}_{d})&=
\sum_{i=1}^{n}\lambda_{i}y_{i}\sigma''\left(0\right)x_{i}x_{i}^{\top}=\sigma''\left(0\right)\sum_{i=1}^{n}\lambda_{i}y_{i}x_{i}x_{i}^{\top}\succ 0,
\end{align*}
then the function $G(\bm{u})=\sum_{i=1}^{n}\lambda_{i}y_{i}\sigma\left(\bm{u}^{\top}x_{i}\right)$ has a local minima at $\bm{u}=\bm{0}_{d}$. This indicates that there exists $\varepsilon>0$ such that for all $(\Delta \bm{w}_{1},...,\Delta \bm{w}_{M}):\sqrt{\sum_{j=1}^{M}\|\Delta \bm{w}_{j}\|^{2}_{2}}\le\varepsilon$, 
$$\sum_{i=1}^{n}\lambda_{i}y_{i}\sigma\left(\Delta\bm{ w}_{j}^{\top}x_{i}\right)\ge \sum_{i=1}^{n}\lambda_{i}y_{i}\sigma\left(0\right)=0,$$
where the equality holds by the fact that $\sum_{i=1}^{n}y_{i}\lambda_{i}=1$.
In addition, since  $a_{j}^{*}=-1$, $|\Delta a_{j}|<\frac{1}{2}$, then for all $\Delta\bm{w}_{j}:\|\Delta\bm{w}_{j}\|_{2}\le\varepsilon$ and $\Delta b_{j}\in\mathbb{R}$,
\begin{align*}
\sum_{i=1}^{n}\ell(-y_{i}f(x_{i};\tilde{\bm{\theta}}))&-\sum_{i=1}^{n}\ell(-y_{i}f(x_{i};\bm{\theta}^{*}))\ge 0.
\end{align*}
Thus, $\bm{\theta}^{*}$ is a local minima of the empirical loss function with $f(x_{i};\bm{\theta}^{*})=\mu_{i}-M\sigma(0)$. Since there exists a $\mu_{i_{\max}}$ such that $y_{i_{\max}}(\mu_{i_{\max}}-M\sigma(0))=1$, then this means that the neural network makes an incorrect prediction on the sample $x_{i_{\max}}$. This indicates that this local minimum has a non-zero training error.

Finally, we present the way we construct the neural network $f_{D}$. Since
\begin{equation*}
f_{D}(x;\bm{\theta}_{D})=f_{D}(x;\theta_{1},...,\theta_{d})=\sum_{i=1}^{n}\mu_{i}\prod_{k=1}^{d}\bm{1}\left\{x^{(k)}\in\left[x_{i}^{(k)}-\theta_{k},x_{i}^{(k)}+\theta_{k}\right]\right\}.
\end{equation*}
Let $\thres$ denote the threshold unit, where $\thres(z)=1$ if $z\ge 0$ and $\thres(z)=0$, otherwise. Therefore, the indicator function can be represented as follows:
$$\bm{1}\left\{x^{(k)}\in\left[x_{i}^{(k)}-\theta_{k},x_{i}^{(k)}+\theta_{k}\right]\right\}=\thres\left(x^{(k)}-x_{i}^{(k)}+\theta_{k}\right)-\thres\left(x^{(k)}-x_{i}^{(k)}-\theta_{k}\right)$$
Therefore, 
\begin{align*}
\prod_{k=1}^{d}&\bm{1}\left\{x^{(k)}\in\left[x_{i}^{(k)}-\theta_{k},x_{i}^{(k)}+\theta_{k}\right]\right\}\\
&=\thres\left(\sum_{k=1}^{d}\left[\thres\left(x^{(k)}-x_{i}^{(k)}+\theta_{k}\right)-\thres\left(x^{(k)}-x_{i}^{(k)}-\theta_{k}\right)\right]-d+\frac{1}{2}\right)\\
\end{align*}
Therefore, we have 
$$f_{D}(x;\bm{\theta}_{D})=\sum_{i=1}^{n}\mu_{i}\thres\left(\sum_{k=1}^{d}\left[\thres\left(x^{(k)}-x_{i}^{(k)}+\theta_{k}\right)-\thres\left(x^{(k)}-x_{i}^{(k)}-\theta_{k}\right)\right]-d+\frac{1}{2}\right).$$
It is very easy to see that this is a two layer network consisted of threshold units. 

Furthermore, we note here that, in the proof shown above, we assume the only parameters in the network $f_{D}$ are $\bm{\theta}_{1},...,\bm{\theta}_{d}$. In fact, we can prove a more general statement where the $f_{D}$ is of the form
$$f_{D}(x;\bm{\theta}_{D})=\sum_{i=1}^{n}\mu_{i}\thres\left(\sum_{k=1}^{d}\left[a_{ik}\thres\left(x^{(k)}+u_{ik}\right)+b_{ik}\thres\left(x^{(k)}+v_{ik}\right)\right]+c_{i}\right),$$
where $a_{ik},b_{ik},u_{ik},v_{ik}, c_{i}$, $i\in[n], k\in[d]$ are all parameters. We can show that the neural network 
$$f_{D}(x;\bm{\theta}_{D})=\sum_{i=1}^{n}\mu_{i}\thres\left(\sum_{k=1}^{d}\left[\thres\left(x^{(k)}-x_{i}^{(k)}+\theta_{k}\right)-\thres\left(x^{(k)}-x_{i}^{(k)}-\theta_{k}\right)\right]-d+\frac{1}{2}\right),$$
denotes a local minimum, since any slight perturbations on parameters $a_{ik},b_{ik},u_{ik},v_{ik}, c_{i}$, $i\in[n], k\in[d]$ do not change the output of the neural network on the samples in the dataset $\mathcal{D}$.

\end{proof}

\end{proof}

\clearpage

\subsection{Proof of Example~\ref{exam::assump-full-rank}}\label{appendix::exam-dataset}

In this subsection, we present two examples to show that if either assumption~\ref{assump::full-rank} or \ref{assump::different-subspaces} is not satisfied, even if the other conditions in Theorem~\ref{thm::convex-finite-deep} are satisfied, Theorem~\ref{thm::convex-finite-deep} does not hold. 

\begin{example}\label{exam::assump-full-rank}
Assume that the distribution $\mathbb{P}_{X\times Y}$ satisfies that $\mathbb{P}_{Y}(Y=1)=\mathbb{P}_{Y}(Y=-1)$, $\mathbb{P}_{X|Y}(X=(1,0)|Y=1)=\mathbb{P}_{X|Y}(X=(-1, 0)|Y=1)=0.5$ and $\mathbb{P}_{X|Y}(X=(0, 0)|Y=-1)$. 
Assume that samples in the dataset $\mathcal{D}=\{(x_{i},y_{i})\}_{i=1}^{2n}$ are independently drawn from the distribution $\mathbb{P}_{\bm{X}\times Y}$.
Assume that the network $f_{S}$ has $M\ge 1$ neurons and neurons in $f_{S}$ satisfy the condition that $\sigma$ is analytic and has a positive second order  derivative on $\mathbb{R}$. There exists a feedforward network $f_{D}$ such that the empirical loss $\hat{L}_{n}(\bm{\theta}_{S},\bm{\theta}_{D})$ has a local minimum with non-zero training error with a probability at least $\Omega(1/n^{2})$. 
\end{example}
\textbf{Remark:} This is a counterexample where Theorem~\ref{thm::convex-finite-deep} does not hold, when Assumption~\ref{assump::different-subspaces} is satisfied and Assumption~\ref{assump::full-rank} is not satisfied. 
 This distribution can be viewed in the following way. The positive data samples are located on the linear span of the set $\{(1,0)\}$, the negative data samples locate on the linear span of the set $\{(0,1)\}$ and all samples are located on the linear span of the set $\{(1,0), (0,1)\}$. Therefore, $r=2>\max\{r_{+},r_{-}\}=1$. This means that Assumption~\ref{assump::different-subspaces} is satisfied. In addition, it is easy to check that Assumption~\ref{assump::full-rank} is not satisfied, since the matrix $\left(0, 0\right)$ has rank zero and thus does not have a full rank. This means that our main results may not hold when the assumption~\ref{assump::full-rank} is not satisfied.

\begin{proof}
Let $n_{1},n_{0},n_{-1}$ denote the number of samples at the point $(1,0),(0,0), (-1,0)$, respectively. It is easy to see that the event that $n_{1}=n_{-1}>0$ and $n_{0}>0$ happens with probability at least $\Omega(1/n^{2})$. We note that this is not a tight bounded, however, we just need to show that this happens with a positive probability. Now we consider the optimization problem under the dataset where $n_{1}=n_{-1}>0$ and $n_{0}>0$.

We first set the feedforward network $f_{D}(x;\bm{\theta}_{D})$ to constant, i.e., $f_{D}(x;\bm{\theta}_{D})\equiv 0$ for $x\in\mathbb{R}^{2}$.
Now the whole network becomes a single layer network,
$$f(x;\bm{\theta})=a_{0}+\sum_{j=1}^{M}a_{j}\sigma\left(\bm{w}^{\top}_{j}x\right).$$
Let $a_{1}^{*}=...=a_{M}^{*}=-1$ and $\bm{w}^{*}_{1}=...=\bm{w}^{*}_{M}=\bm{0}_{2}$. 

Therefore, we have $f(x;\bm{\theta}^{*})=a_{0}^{*}-M\sigma(0)$. Let $a_{0}^{*}$ be the global optimizer of the following convex optimization problem. 
$$\min_{a}\sum_{i=1}^{2n}\ell_{p}(-y_{i}(a-M\sigma(0))).$$
Thus, we have 
\begin{equation}\label{eq::prop-counter-2}\sum_{i=1}^{n}\ell_{p}'(-y_{i}(a_{0}^{*}-M\sigma(0)))(-y_{i})=0,\end{equation}
and this indicates that 
\begin{equation}\label{eq::prop-counterexample-eq2}\sum_{i:y_{i}=1}\ell_{p}'(-(a_{0}^{*}-M\sigma(0)))=\sum_{i:y_{i}=-1}\ell_{p}'(a_{0}^{*}-M\sigma(0))\quad\text{or}\quad{\ell_{p}'(-a_{0}^{*}+M\sigma(0))}{n_{+}}={\ell_{p}'(a_{0}^{*}-M\sigma(0))}{n_{-}}.\end{equation}
In addition, since for $\forall j\in[M]$,
\begin{align*}
&\frac{\partial \hat{L}_{n}(\bm{\theta}^{*})}{\partial a_{j}}=\sum_{i=1}^{2n}\ell_{p}'(-y_{i}(a_{0}^{*}-M\sigma(0)))(-y_{i})\sigma(0)=0,&&\text{by Equation }~\eqref{eq::prop-counter-2},\\
& \nabla_{\bm{w}_{j}}\hat{L}_{n}(\bm{\theta}^{*})=\sum_{i=1}^{2n}\ell_{p}'(-y_{i}(a_{0}^{*}-M\sigma(0)))(-y_{i})\sigma'(0)x_{i}=\bm{0}_{2},&&\text{by }\sum_{i:y_{i}=1}x_{i}=\sum_{i:y_{i}=-1}x_{i}=\bm{0}_{2},
\end{align*}
and 
$$\frac{\partial \hat{L}_{n}(\bm{\theta}^{*})}{\partial a_{0}}=\sum_{i=1}^{n}\ell_{p}'(-y_{i}(a_{0}^{*}-M\sigma(0)))(-y_{i})=0,$$
then $\bm{\theta}^{*}$ is a critical point. 

Next we show that $\bm{\theta}^{*}=(a_{0}^{*},...,a_{M}^{*},\bm{w}_{1}^{*},...,\bm{w}_{M}^{*})$ is a local minima. Consider any perturbation $\Delta a_{1},...,\Delta a_{M}:|\Delta a_{j}|<\frac{1}{2}$ for all $j\in[M]$, $\Delta \bm{w}_{1},...,\Delta \bm{w}_{M}\in\mathbb{R}^{2}$ and $\Delta a_{0}\in\mathbb{R}$. Define $$\tilde{\bm{\theta}}=(a_{0}^{*}+\Delta a_{0},...,a_{M}^{*}+\Delta a_{M},\bm{w}_{1}^{*}+\Delta \bm{w}_{1},...,\bm{w}_{M}^{*}+\Delta \bm{w}_{M}
).$$

Then 
\begin{align*}
\sum_{i=1}^{n}\ell_{p}(-y_{i}f(x_{i};\tilde{\bm{\theta}}))-\sum_{i=1}^{n}\ell_{p}(-y_{i}f(x_{i};\bm{\theta}^{*}))&=\sum_{i=1}^{n}\left[\ell_{p}(-y_{i}f(x_{i};\tilde{\bm{\theta}}))-\ell_{p}(-y_{i}f(x_{i};\bm{\theta}^{*}))\right]\\
&\ge\sum_{i=1}^{n}\ell_{p}'(-y_{i}f(x_{i};\bm{\theta}^{*}))(-y_{i})[f(x_{i};\tilde{\bm{\theta}})-f(x_{i};{\bm{\theta}}^{*})]\\
&=\sum_{i=1}^{n}\ell_{p}'(-y_{i}(a_{0}^{*}-M\sigma(0)))(-y_{i})[f(x_{i};\tilde{\bm{\theta}})-a_{0}^{*}+M\sigma(0)]\\
&=\sum_{i=1}^{n}\ell_{p}'(-y_{i}(a_{0}^{*}-M\sigma(0)))(-y_{i})f(x_{i};\tilde{\bm{\theta}}),
\end{align*}
where the inequality follows from the convexity of the loss function $\ell_{p}(z)$, the second equality follows from the fact that $f(x;\bm{\theta}^{*})\equiv a_{0}^{*}-M\sigma(0)$ and the third equality follows from Equation~\eqref{eq::prop-counterexample-eq2}. In addition, we have 
\begin{align*}
&\sum_{i=1}^{n}\ell_{p}'(-y_{i}(a_{0}^{*}-M\sigma(0)))(-y_{i})f(x_{i};\tilde{\bm{\theta}})\\
&=\sum_{i=1}^{n}\ell_{p}'(-y_{i}(a_{0}^{*}-M\sigma(0)))(-y_{i})\left[\sum_{j=1}^{M}(a_{j}^{*}+\Delta a_{j})\sigma\left(\Delta\bm{w}_{j}^{\top}x_{i}\right)+\Delta a_{0}\right]\\
&=\sum_{i=1}^{n}\ell_{p}'(-y_{i}(a_{0}^{*}-M\sigma(0)))(-y_{i})\left[\sum_{j=1}^{M}(a_{j}^{*}+\Delta a_{j})\sigma\left(\Delta\bm{w}_{j}^{\top}x_{i}\right)\right]&&\text{by Eq.~\eqref{eq::prop-counterexample-eq2}}\\
&=\sum_{j=1}^{M}-(a_{j}^{*}+\Delta a_{j})\left[\sum_{i=1}^{n}\ell_{p}'(-y_{i}(a_{0}^{*}-M\sigma(0)))y_{i}\sigma\left(\Delta\bm{w}_{j}^{\top}x_{i}\right)\right]\\
&=\sum_{j=1}^{M}-(a_{j}^{*}+\Delta a_{j})\left[\sum_{i=1}^{n}\ell_{p}'(-y_{i}(a_{0}^{*}-M\sigma(0)))y_{i}\sigma\left(\Delta\bm{w}^{(1)}_{j}x_{i}^{(1)}\right)\right]&&\text{by } x_{i}^{(2)}=0,\forall i\in[n].
\end{align*}
Now we define the following function $G:\mathbb{R}\rightarrow \mathbb{R}$, 
\begin{equation*}
G(u)=\sum_{i=1}^{n}\ell_{p}'(-y_{i}(a_{0}^{*}-M\sigma(0)))y_{i}\sigma\left(ux_{i}^{(1)}\right).
\end{equation*}
Now we consider the gradient of the function $G$ with respect to the variable $u$ at the point $u={0}$,
\begin{align*}
\nabla_{{u}}G({0})&=\sum_{i=1}^{n}\ell_{p}'(-y_{i}(a_{0}^{*}-M\sigma(0)))y_{i}\sigma'\left(0\right)x_{i}^{(1)}=0.
\end{align*}
Furthermore,  the second order derivative $\nabla_{u}^{2}G({0})$ satisfies
\begin{align*}
\nabla_{u}^{2}G({0})&=
\sum_{i=1}^{n}\ell_{p}'(-y_{i}(a_{0}^{*}-M\sigma(0)))y_{i}\sigma''\left(0\right)\left(x_{i}^{(1)}\right)^{2}=\sigma''\left(0\right)\sum_{i=1}^{n}\ell_{p}'(-y_{i}(a_{0}^{*}-M\sigma(0)))y_{i}\left(x_{i}^{(1)}\right)^{2}\\
&=\sigma''(0)\left[\frac{1}{n_{+}}\sum_{i:y_{i}=1}\left(x_{i}^{(1)}\right)^{2}-\frac{1}{n_{-}}\sum_{i:y_{i}=-1}\left(x_{i}^{(1)}\right)^{2}\right]> 0,
\end{align*}
then the function $G({u})=\sum_{i=1}^{n}\ell_{p}(-y_{i}(a_{0}^{*}-M\sigma(0)))y_{i}\sigma\left(ux_{i}^{(1)}\right)$ has a local minima at ${u}=0$. This indicates that there exists $\varepsilon>0$ such that for all $\Delta\bm{w}:\|\Delta\bm{w}\|_{2}\le\varepsilon$, 
$$\sum_{i=1}^{n}\ell_{p}'(-y_{i}(a_{0}^{*}-M\sigma(0)))y_{i}\sigma\left(\Delta\bm{w}^{\top}x_{i}\right)\ge \sum_{i=1}^{n}\ell_{p}(-y_{i}(a_{0}^{*}-M\sigma(0)))y_{i}\sigma\left(0\right)=0.$$
In addition, since  $a_{j}^{*}=-1$, $|\Delta a_{j}|<\frac{1}{2}$, then for all $\Delta\bm{w}_{j}:\|\Delta\bm{w}_{j}\|_{2}\le\varepsilon$,
\begin{align*}
\sum_{i=1}^{n}\ell_{p}'(-y_{i}(a_{0}^{*}-M\sigma(0)))(-y_{i})f(x_{i};\tilde{\bm{\theta}})=\sum_{j=1}^{M}-(a_{j}^{*}+\Delta a_{j})\left[\sum_{i=1}^{n}\ell_{p}(-y_{i}(a_{0}^{*}-M\sigma(0)))y_{i}\sigma\left(\Delta\bm{w}_{j}^{\top}x_{i}\right)\right]\ge 0.
\end{align*}
Therefore, we have 
$$\sum_{i=1}^{n}\ell_{p}'(-y_{i}(a_{0}^{*}-M\sigma(0)))(-y_{i})f(x_{i};\tilde{\bm{\theta}})\ge 0,$$
and this indicates that 
$$\sum_{i=1}^{n}\ell_{p}(-y_{i}f(x_{i};\tilde{\bm{\theta}}))-\sum_{i=1}^{n}\ell_{p}(-y_{i}f(x_{i};\bm{\theta}^{*}))\ge 0.$$
Thus, $\bm{\theta}^{*}$ is a local minima with $f(x;\bm{\theta}^{*})=a_{0}^{*}-M\sigma(0)=$ constant. Thus, 
$$\frac{1}{n}\sum_{i=1}^{n}\mathbb{I}\{y_{i}\neq \sgn(f(x_{i};\bm{\theta}^{*}))\}\ge \frac{\min\{n_{-},n_{+}\}}{n}.$$
Since the dataset is consisted of both positive and negative examples, then the training error is non-zero. 

\end{proof}
\clearpage
\begin{example}\label{exam::assump-diff-subspaces}
Assume that the distribution $\mathbb{P}_{X\times Y}$ satisfies that $\mathbb{P}_{Y}(Y=1)=\mathbb{P}_{Y}(Y=-1)$ and $\mathbb{P}_{X|Y}(X=2|Y=1)=\mathbb{P}_{X|Y}(X=-1|Y=1)=0.5$ and $\mathbb{P}_{X|Y}(X=0.5|Y=-1)=1$. 
Assume that samples in the dataset $\mathcal{D}=\{(x_{i},y_{i})\}_{i=1}^{2n}$ are independently drawn from the distribution $\mathbb{P}_{\bm{X}\times Y}$.
Assume that the network $f_{S}$ has $M\ge 1$ neurons and neurons in $f_{S}$ satisfy the condition that $\sigma$ is analytic and has a positive second order  derivative on $\mathbb{R}$. There exists a feedforward network $f_{D}$ such that the empirical loss $\hat{L}_{n}(\bm{\theta}_{S},\bm{\theta}_{D})$ has a local minimum with non-zero training error with probability at least $\Omega(1/n^{2})$. 
\end{example}
\textbf{Remark:} 
This is a counterexample where Theorem~\ref{thm::convex-finite-deep} does not hold, when Assumption~\ref{assump::full-rank} is satisfied and Assumption~\ref{assump::different-subspaces}   is not satisfied. 
This distribution can be viewed in the following way. The positive data samples locate on the linear span of the set $\{(1)\}$, the negative data samples locate on the linear span of the set $\{(1)\}$ and all samples locate on the linear span of the set $\{(1)\}$. It is easy to check that assumption~\ref{assump::full-rank} is satisfied. However, $r=1=\max\{r_{+},r_{-}\}=1$. This means the assumption~\ref{assump::different-subspaces} is not satisfied.

\begin{proof}
Let $n_{2},n_{-1},n_{0.5}$ denote the number of samples at the point $(2),(-1), (0.5)$, respectively. It is easy to see that the event that $n_{2}=n_{-1}>0$ and $n_{0.5}>0$ happens with probability at least $\Omega(1/n^{2})$. We note that this is not a tight bounded, however, we just need to show that this happens with a positive probability. Now we consider the optimization problem under the dataset where $n_{2}=n_{-1}>0$ and $n_{0.5}>0$.

We first set the feedforward network $f_{D}(x;\bm{\theta}_{D})$ to constant, i.e., $f_{D}(x;\bm{\theta}_{D})\equiv 0$ for $x\in\mathbb{R}$.
Now the whole network becomes a single layer network,
$$f(x;\bm{\theta})=a_{0}+\sum_{j=1}^{M}a_{j}\sigma\left({w}_{j}x\right).$$
Let $a_{1}^{*}=...=a_{M}^{*}=-1$ and ${w}^{*}_{1}=...={w}^{*}_{M}=0$. 

Therefore, we have $f(x;\bm{\theta}^{*})=a_{0}^{*}-M\sigma(0)$. Let $a_{0}^{*}$ be the global optimizer of the following convex optimization problem. 
$$\min_{a}\sum_{i=1}^{2n}\ell_{p}(-y_{i}(a-M\sigma(0))).$$
Thus, we have 
\begin{equation}\label{eq::exam-2}\sum_{i=1}^{n}\ell_{p}'(-y_{i}(a_{0}^{*}-M\sigma(0)))(-y_{i})=0,\end{equation}
and this indicates that 
\begin{equation}\label{eq::counterexample-eq2}\sum_{i:y_{i}=1}\ell_{p}'(-(a_{0}^{*}-M\sigma(0)))=\sum_{i:y_{i}=-1}\ell_{p}'(a_{0}^{*}-M\sigma(0))\quad\text{or}\quad{\ell_{p}'(-a_{0}^{*}+M\sigma(0))}{n_{+}}={\ell_{p}'(a_{0}^{*}-M\sigma(0))}{n_{-}}.\end{equation}
In addition, since for $\forall j\in[M]$,
\begin{align*}
&\frac{\partial \hat{L}_{n}(\bm{\theta}^{*})}{\partial a_{j}}=\sum_{i=1}^{2n}\ell_{p}'(-y_{i}(a_{0}^{*}-M\sigma(0)))(-y_{i})\sigma(0)=0,&&\text{by Equation }~\eqref{eq::exam-2},\\
& \nabla_{{w}_{j}}\hat{L}_{n}(\bm{\theta}^{*})=\sum_{i=1}^{2n}\ell_{p}'(-y_{i}(a_{0}^{*}-M\sigma(0)))(-y_{i})\sigma'(0)x_{i}=0,&&\text{by }\sum_{i:y_{i}=1}x_{i}=\sum_{i:y_{i}=-1}x_{i}=0,
\end{align*}
and 
$$\frac{\partial \hat{L}_{n}(\bm{\theta}^{*})}{\partial a_{0}}=\sum_{i=1}^{n}\ell_{p}'(-y_{i}(a_{0}^{*}-M\sigma(0)))(-y_{i})=0,$$
then $\bm{\theta}^{*}$ is a critical point. 

Next we show that $\bm{\theta}^{*}=(a_{0}^{*},...,a_{M}^{*},{w}_{1}^{*},...,{w}_{M}^{*})$ is a local minima. Consider any perturbation $\Delta a_{1},...,\Delta a_{M}:|\Delta a_{j}|<\frac{1}{2}$ for all $j\in[M]$, $\Delta {w}_{1},...,\Delta {w}_{M}\in\mathbb{R}^{}$ and $\Delta a_{0}\in\mathbb{R}$. Define $$\tilde{\bm{\theta}}=(a_{0}^{*}+\Delta a_{0},...,a_{M}^{*}+\Delta a_{M},{w}_{1}^{*}+\Delta {w}_{1},...,{w}_{M}^{*}+\Delta {w}_{M}
).$$

Then 
\begin{align*}
\sum_{i=1}^{n}\ell_{p}(-y_{i}f(x_{i};\tilde{\bm{\theta}}))-\sum_{i=1}^{n}\ell_{p}(-y_{i}f(x_{i};\bm{\theta}^{*}))&=\sum_{i=1}^{n}\left[\ell_{p}(-y_{i}f(x_{i};\tilde{\bm{\theta}}))-\ell_{p}(-y_{i}f(x_{i};\bm{\theta}^{*}))\right]\\
&\ge\sum_{i=1}^{n}\ell_{p}'(-y_{i}f(x_{i};\bm{\theta}^{*}))(-y_{i})[f(x_{i};\tilde{\bm{\theta}})-f(x_{i};{\bm{\theta}}^{*})]\\
&=\sum_{i=1}^{n}\ell_{p}'(-y_{i}(a_{0}^{*}-M\sigma(0)))(-y_{i})[f(x_{i};\tilde{\bm{\theta}})-a_{0}^{*}+M\sigma(0)]\\
&=\sum_{i=1}^{n}\ell_{p}'(-y_{i}(a_{0}^{*}-M\sigma(0)))(-y_{i})f(x_{i};\tilde{\bm{\theta}}),
\end{align*}
where the inequality follows from the convexity of the loss function $\ell_{p}(z)$, the second equality follows from the fact that $f(x;\bm{\theta}^{*})\equiv a_{0}^{*}-M\sigma(0)$ and the third equality follows from Equation~\eqref{eq::counterexample-eq2}. In addition, we have 
\begin{align*}
&\sum_{i=1}^{n}\ell_{p}'(-y_{i}(a_{0}^{*}-M\sigma(0)))(-y_{i})f(x_{i};\tilde{\bm{\theta}})\\
&=\sum_{i=1}^{n}\ell_{p}'(-y_{i}(a_{0}^{*}-M\sigma(0)))(-y_{i})\left[\sum_{j=1}^{M}(a_{j}^{*}+\Delta a_{j})\sigma\left(\Delta w_{j}x_{i}\right)+\Delta a_{0}\right]\\
&=\sum_{i=1}^{n}\ell_{p}'(-y_{i}(a_{0}^{*}-M\sigma(0)))(-y_{i})\left[\sum_{j=1}^{M}(a_{j}^{*}+\Delta a_{j})\sigma\left(\Delta w_{j}x_{i}\right)\right]&&\text{by Eq.~\eqref{eq::counterexample-eq2}}\\
&=\sum_{j=1}^{M}-(a_{j}^{*}+\Delta a_{j})\left[\sum_{i=1}^{n}\ell_{p}'(-y_{i}(a_{0}^{*}-M\sigma(0)))y_{i}\sigma\left(\Delta w_{j}x_{i}\right)\right]\\
&=\sum_{j=1}^{M}-(a_{j}^{*}+\Delta a_{j})\left[\sum_{i=1}^{n}\ell_{p}'(-y_{i}(a_{0}^{*}-M\sigma(0)))y_{i}\sigma\left(\Delta w_{j}x_{i}\right)\right].
\end{align*}
Now we define the following function $G:\mathbb{R}\rightarrow \mathbb{R}$, 
\begin{equation*}
G(u)=\sum_{i=1}^{n}\ell_{p}'(-y_{i}(a_{0}^{*}-M\sigma(0)))y_{i}\sigma\left(ux_{i}\right).
\end{equation*}
Now we consider the gradient of the function $G$ with respect to the variable $u$ at the point $u={0}$,
\begin{align*}
\nabla_{{u}}G({0})&=\sum_{i=1}^{n}\ell_{p}'(-y_{i}(a_{0}^{*}-M\sigma(0)))y_{i}\sigma'\left(0\right)x_{i}\\
&=\sigma'(0)\left(\frac{1}{2}\ell_{p}'(-a_{0}^{*}+M\sigma(0))n_{+}-\frac{1}{2}\ell_{p}'(a_{0}^{*}-M\sigma(0))n_{-}\right)=0,
\end{align*}
by Equation~\eqref{eq::counterexample-eq2}.
Furthermore,  the second order derivative $\nabla_{u}^{2}G({0})$ satisfies
\begin{align*}
\nabla_{u}^{2}G({0})&=
\sum_{i=1}^{n}\ell_{p}'(-y_{i}(a_{0}^{*}-M\sigma(0)))y_{i}\sigma''\left(0\right)\left(x_{i}\right)^{2}=\sigma''\left(0\right)\sum_{i=1}^{n}\ell_{p}'(-y_{i}(a_{0}^{*}-M\sigma(0)))y_{i}\left(x_{i}\right)^{2}\\
&=\sigma''(0)\left[\frac{1}{n_{+}}\sum_{i:y_{i}=1}\left(x_{i}\right)^{2}-\frac{1}{n_{-}}\sum_{i:y_{i}=-1}\left(x_{i}\right)^{2}\right]> 0,
\end{align*}
then the function $G({u})=\sum_{i=1}^{n}\ell_{p}(-y_{i}(a_{0}^{*}-M\sigma(0)))y_{i}\sigma\left(ux_{i}\right)$ has a local minima at ${u}=0$. This indicates that there exists $\varepsilon>0$ such that for all $\Delta\bm{w}:\|\Delta\bm{w}\|_{2}\le\varepsilon$, 
$$\sum_{i=1}^{n}\ell_{p}'(-y_{i}(a_{0}^{*}-M\sigma(0)))y_{i}\sigma\left(\Delta\bm{w}^{\top}x_{i}\right)\ge \sum_{i=1}^{n}\ell_{p}(-y_{i}(a_{0}^{*}-M\sigma(0)))y_{i}\sigma\left(0\right)=0.$$
In addition, since  $a_{j}^{*}=-1$, $|\Delta a_{j}|<\frac{1}{2}$, then for all $\Delta\bm{w}_{j}:\|\Delta\bm{w}_{j}\|_{2}\le\varepsilon$,
\begin{align*}
\sum_{i=1}^{n}\ell_{p}'(-y_{i}(a_{0}^{*}-M\sigma(0)))(-y_{i})f(x_{i};\tilde{\bm{\theta}})=\sum_{j=1}^{M}-(a_{j}^{*}+\Delta a_{j})\left[\sum_{i=1}^{n}\ell_{p}(-y_{i}(a_{0}^{*}-M\sigma(0)))y_{i}\sigma\left(\Delta\bm{w}_{j}^{\top}x_{i}\right)\right]\ge 0.
\end{align*}
Therefore, we have 
$$\sum_{i=1}^{n}\ell_{p}'(-y_{i}(a_{0}^{*}-M\sigma(0)))(-y_{i})f(x_{i};\tilde{\bm{\theta}})\ge 0,$$
and this indicates that 
$$\sum_{i=1}^{n}\ell_{p}(-y_{i}f(x_{i};\tilde{\bm{\theta}}))-\sum_{i=1}^{n}\ell_{p}(-y_{i}f(x_{i};\bm{\theta}^{*}))\ge 0.$$
Thus, $\bm{\theta}^{*}$ is a local minima with $f(x;\bm{\theta}^{*})=a_{0}^{*}-M\sigma(0)=$ constant. Thus, 
$$\frac{1}{n}\sum_{i=1}^{n}\mathbb{I}\{y_{i}\neq \sgn(f(x_{i};\bm{\theta}^{*}))\}\ge \frac{\min\{n_{-},n_{+}\}}{n}.$$
Since the dataset is consisted of both positive and negative examples, then the training error is non-zero. 

\end{proof}

\clearpage

\subsection{Proof of Lemma~\ref{lemma::suff-necc}}\label{appendix::lemma-suff-necc}
\begin{lemma}\label{lemma::suff-necc}
If samples in the dataset $\mathcal{D}=\{(x_{i},y_{i})\}_{i=1}^{n}$ satisfies that the matrix $\sum_{i=1}^{n}\lambda_{i}y_{i}x_{i}x_{i}^{\top}$ is indefinite for all sequences $\{\lambda_{i}\ge0\}_{i=1}^{n}$ satisfying $\sum_{i:y_{i}=1}\lambda_{i}=\sum_{i:y_{i}=-1}\lambda_{i}>0$, then there exists a matrix $A\in\mathbb{R}^{d\times d}$ and two real numbers $c_{1}>0$ and $c_{2}\in\mathbb{R}$ such that 
$y_{i}(x_{i}^{\top}Ax_{i}-c_{2})>c_{1}$ holds for all $i\in[n]$.
\end{lemma}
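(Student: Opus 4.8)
The plan is to recognize that the desired conclusion is exactly a statement of affine separability with margin for a lifted collection of points, and then to obtain it from the separating-hyperplane theorem, using the hypothesis only to rule out the single degenerate obstruction. First I would lift the data into the space of symmetric matrices. Writing $\langle A,B\rangle=\mathrm{trace}(A^{\top}B)$ and $X_{i}=x_{i}x_{i}^{\top}$, we have the identity $x_{i}^{\top}Ax_{i}=\langle A,X_{i}\rangle$, and since $x^{\top}Ax=x^{\top}\tfrac{A+A^{\top}}{2}x$ we may assume $A$ is symmetric without loss of generality. Under this identification, the sought conclusion---that there exist $A$, $c_{1}>0$, $c_{2}$ with $y_{i}(x_{i}^{\top}Ax_{i}-c_{2})>c_{1}$ for all $i$---says precisely that the linear functional $\langle A,\cdot\rangle$, together with the threshold $c_{2}$, strictly separates the labelled points $\{X_{i}:y_{i}=1\}$ from $\{X_{i}:y_{i}=-1\}$ with positive margin $c_{1}$ in the finite-dimensional space of symmetric matrices.

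Second, I would invoke the standard fact that two finite point sets $P=\{X_{i}:y_{i}=1\}$ and $N=\{X_{i}:y_{i}=-1\}$ admit such a strict affine separation if and only if their convex hulls are disjoint, $\mathrm{conv}(P)\cap\mathrm{conv}(N)=\emptyset$. Both hulls are compact since the sets are finite, so disjointness yields strict separation with a positive gap via the separating-hyperplane theorem; the converse is immediate, because a point shared by the two hulls would receive equal functional values as a convex combination from each side, forbidding any threshold. Thus it suffices to prove that the hulls are disjoint.

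Third---and this is where the hypothesis enters---I would prove disjointness by contradiction. If $\mathrm{conv}(P)\cap\mathrm{conv}(N)\neq\emptyset$, there are coefficients $\mu_{i}\ge0$ with $\sum_{i:y_{i}=1}\mu_{i}=1$ and $\nu_{j}\ge0$ with $\sum_{j:y_{j}=-1}\nu_{j}=1$ such that $\sum_{i:y_{i}=1}\mu_{i}X_{i}=\sum_{j:y_{j}=-1}\nu_{j}X_{j}$. Defining $\lambda_{i}=\mu_{i}$ for $y_{i}=1$ and $\lambda_{i}=\nu_{i}$ for $y_{i}=-1$, these coefficients are nonnegative, satisfy $\sum_{i:y_{i}=1}\lambda_{i}=\sum_{i:y_{i}=-1}\lambda_{i}=1>0$, and give $\sum_{i=1}^{n}\lambda_{i}y_{i}x_{i}x_{i}^{\top}=\sum_{i:y_{i}=1}\mu_{i}X_{i}-\sum_{j:y_{j}=-1}\nu_{j}X_{j}=\bm{0}_{d\times d}$. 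The zero matrix is not indefinite, having neither a positive nor a negative eigenvalue, which contradicts the hypothesis that $\sum_{i}\lambda_{i}y_{i}x_{i}x_{i}^{\top}$ is indefinite for every admissible $\{\lambda_{i}\}$. Hence the hulls are disjoint.

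Finally I would extract the constants. With disjoint compact hulls, separation furnishes a symmetric $A$ with $m:=\min_{i:y_{i}=1}\langle A,X_{i}\rangle>\max_{i:y_{i}=-1}\langle A,X_{i}\rangle=:M$; setting $c_{2}=(m+M)/2$ and $c_{1}=(m-M)/4>0$ then yields the strict inequalities in exactly the claimed form. If the dataset contains only one class, no admissible $\{\lambda_{i}\}$ exists, so the hypothesis is vacuous and one separates trivially with $A=\bm{0}$ and a suitable threshold. The only genuine obstacle is the reformulation in the first two steps: once the problem is cast as strict separation of the rank-one lifts $x_{i}x_{i}^{\top}$, the role of the hypothesis becomes transparent, since ``indefinite for all admissible $\{\lambda_{i}\}$'' precisely excludes the unique non-separable configuration, namely one in which a balanced nonnegative combination of the $y_{i}x_{i}x_{i}^{\top}$ vanishes.
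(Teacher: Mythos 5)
Your proposal is correct and follows essentially the same route as the paper's proof: both lift the data to the rank-one matrices $x_{i}x_{i}^{\top}$ (your trace inner product on symmetric matrices is the same device as the paper's vectorization into $\mathbb{R}^{d^{2}}$), both use the hypothesis to rule out a vanishing balanced nonnegative combination — hence disjoint compact convex hulls — and both then invoke the separating hyperplane theorem and translate the separating functional back into the matrix $A$ with thresholds $c_{1},c_{2}$. If anything, your write-up is slightly more careful than the paper's, since you make explicit that the zero matrix is not indefinite and you handle the single-class edge case.
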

\begin{proof}
For each sample $x_{i}$ in the dataset, let vec$(x_{i}x_{i}^{\top})$ denote the vectorization of the matrix $x_{i}x_{i}^{\top}$. Since we assume that for any sequence $\{\lambda_{i}\ge 0\}_{i=1}^{n}$ satisfying $\sum_{i:y_{i}=1}\lambda_{i}=\sum_{i:y_{i}=-1}\lambda_{i}=1$, the vector $\sum_{i=1}^{n}y_{i}\lambda_{i}\text{vec}(x_{i}x_{i}^{\top})$ does not equal to the zero vector $\bm{0}_{d^{2}}$, then we have that the convex hull of two vector sets $\mathcal{C}_{+}=\{\text{vec}(x_{i}x_{i}^{\top})\}_{i:y_{i}=1}$ and $\mathcal{C}_{-}=\{\text{vec}(x_{i}x_{i}^{\top})\}_{i:y_{i}=-1}$ are two disjoint closed compact sets. By the hyperplane separation theorem, this indicates that there exists a vector $\bm{w}\in\mathbb{R}^{d^{2}}$ and two real numbers $\tilde{c}_{1}<\tilde{c}_{2}$ such that $\bm{w}^{\top}\bm{u}>\tilde{c}_{2}$ and $\bm{w}^{\top}\bm{v}<\tilde{c}_{1}$ for all $\bm{u}\in \mathcal{C}_{+}$ and $\bm{v}\in\mathcal{C}_{-}$. This further indicates that there exists two real numbers $c_{1}>0$ and $c_{2}\in\mathbb{R}$ such that $y_{i}(x_{i}^{\top}Ax_{i}-c_{2})>c_{1}$ holds for all $i\in\mathbb{R}$.
\end{proof}

\clearpage

\subsection{Proof of Proposition~\ref{prop::quadratic-sn}}\label{appendix::quadratic-sn}

\begin{proposition}
Assume that the single layer neural network $f_{S}(x;\bm{\theta}_{S})$ has $M> d$ neurons and
assume that the neuron $\sigma$ is quadratic, i.e., $\sigma(z)=z^{2}$. Assume that the dataset $\mathcal{D}=\{(x_{i},y_{i})\}_{i=1}^{n}$ is consisted of both positive and negative samples. For all multilayer neural network $f_{D}$ parameterized by $\bm{\theta}_{D}$, every local minimum $\bm{\theta}^{*}=(\bm{\theta}_{S}^{*},\bm{\theta}_{D}^{*})$ of the empirical loss function $\hat{L}_{n}(\bm{\theta}_{S},\bm{\theta}_{D};p)$, $p\ge 6$ satisfies $\hat{R}_{n}(\bm{\theta}^{*})=0$ \textbf{ if and only if} the matrix $\sum_{i=1}^{n}\lambda_{i}y_{i}x_{i}x_{i}^{\top}$ is indefinite for all sequences $\{\lambda_{i}\ge0\}_{i=1}^{n}$ satisfying $\sum_{i:y_{i}=1}\lambda_{i}=\sum_{i:y_{i}=-1}\lambda_{i}>0$.
\vspace{-0.25cm}
\end{proposition}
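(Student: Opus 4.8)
# Proof Proposal for Proposition~\ref{prop::quadratic-sn}

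The plan is to prove the two directions of this ``if and only if'' statement by combining the necessary condition already established in Proposition~\ref{prop::convex-necc} (for the ``only if'' direction) with an existence-of-good-parameters argument provided by Lemma~\ref{lemma::suff-necc} (for the ``if'' direction). The key structural feature of quadratic neurons is that the network $f_S(x;\bm{\theta}_S)=a_0+\sum_{j=1}^M a_j(\bm{w}_j^\top x)^2$ realizes exactly the family of functions $x\mapsto a_0 + x^\top A x$ where $A=\sum_{j=1}^M a_j \bm{w}_j\bm{w}_j^\top$ ranges over \emph{all} symmetric matrices once $M>d$ (indeed $M\ge d$ with signed coefficients suffices to express any symmetric $A$ via its eigendecomposition). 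This is the representational fact that makes the condition sharp for quadratic neurons and not merely necessary.

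For the \textbf{``only if''} direction, I would argue by contraposition exactly as in Proposition~\ref{prop::convex-necc}: if the indefiniteness condition fails, i.e., there exists a sequence $\{\lambda_i\ge0\}$ with $\sum_{i:y_i=1}\lambda_i=\sum_{i:y_i=-1}\lambda_i>0$ for which $\sum_i \lambda_i y_i x_i x_i^\top$ is positive or negative definite, then one constructs a specific $f_D$ (the threshold-unit ``bump'' network from the proof of Proposition~\ref{prop::convex-necc}) for which the empirical loss has a local minimum with nonzero training error. I note that Proposition~\ref{prop::convex-necc} already gives this under the weaker hypothesis that the matrix is merely ``neither positive nor negative definite for all sequences with $\|\sum_i\lambda_i y_i x_i\|_2=0$''; for quadratic neurons the extra constraint $\|\sum_i\lambda_i y_i x_i\|_2=0$ can be dropped because $\sigma'(0)=0$, so that the first-order (gradient in $\bm{w}_j$) stationarity is automatic and only the Hessian condition on $\sum_i\lambda_i y_i x_i x_i^\top$ matters. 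I would verify this by re-examining the critical-point computation: at $\bm{w}_j^*=\bm{0}_d$ the term $\sigma'(0)\sum_i \ell_p' y_i x_i$ vanishes identically, removing the need for $\sum_i\lambda_i y_i x_i=0$.

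For the \textbf{``if''} direction, suppose $\sum_i\lambda_i y_i x_i x_i^\top$ is indefinite for every admissible $\{\lambda_i\}$. By Lemma~\ref{lemma::suff-necc}, this hyperplane-separation statement yields a symmetric matrix $A\in\mathbb{R}^{d\times d}$ and constants $c_1>0$, $c_2\in\mathbb{R}$ with $y_i(x_i^\top A x_i - c_2)>c_1$ for all $i\in[n]$. Since $M>d$, I can choose $a_j,\bm{w}_j$ realizing $f_S(x)=\alpha(x^\top A x - c_2)$ for a large scaling $\alpha>0$ (diagonalizing $A$ and assigning one neuron per eigenvector, with $a_j$ carrying the sign of each eigenvalue). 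Setting $f_D\equiv0$, the network output satisfies $y_i f(x_i;\bm{\theta})=\alpha\, y_i(x_i^\top A x_i-c_2)>\alpha c_1$, so choosing $\alpha>z_0/c_1$ gives $y_i f(x_i)\ge z_0$ for all $i$, whence $\hat{L}_n=0$ and a fortiori there exist parameters with zero training error. I would then invoke Proposition~\ref{prop::results-quadratic}: under the quadratic-neuron hypothesis with $M>r$ (and here $M>d\ge r$, so the hypothesis is met after restricting to the span of the data), every local minimum of $\hat{L}_n(\bm{\theta};p)$ has $\hat{R}_n(\bm{\theta}^*)=0$. Thus the existence of \emph{any} zero-error configuration, together with the ``all local minima are global/zero-error'' guarantee of Proposition~\ref{prop::results-quadratic}, forces every local minimum to have zero training error for \emph{any} $f_D$.

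The main obstacle I anticipate is the logical bridge in the ``if'' direction between the separation condition (which concerns only the data through $x_ix_i^\top$, independent of $f_D$) and the claim about \emph{every} $f_D$. The subtlety is that Proposition~\ref{prop::results-quadratic} as stated assumes Assumptions~\ref{assump::full-rank} and~\ref{assump::different-subspaces} on the data distribution, whereas here the hypothesis is purely the algebraic indefiniteness condition; I would need to check that the indefiniteness condition either implies, or can substitute for, the part of the earlier proof that used those subspace assumptions. Concretely, the earlier proof used the subspace structure only to locate basis vectors $\bm{e}_r,\bm{e}_s$ with $\bm{e}_r^\top x_i=0$ on one class and nonzero on the other; the indefiniteness condition should supply an analogous ``separating direction'' in the space of symmetric matrices (via Lemma~\ref{lemma::suff-necc}) that plays the same role in the Hessian argument. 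Reconciling these two routes — and confirming that $M>d$ rather than $M>r$ is the correct neuron count once one works in the ambient space $\mathbb{R}^d$ — is where the care is required.
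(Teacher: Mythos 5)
Your overall architecture (necessity via a counterexample network, sufficiency via the quadratic-neuron structure) points in the right direction, and two of your observations are correct and match the paper: $M\ge d$ quadratic neurons represent every function $a_0+x^{\top}Ax$, and the constraint $\|\sum_i\lambda_iy_ix_i\|_2=0$ appearing in Proposition~\ref{prop::convex-necc} can be dropped here because $\sigma'(0)=0$. However, both directions, as written, have genuine gaps. For the \textbf{``if''} direction, your reduction to Proposition~\ref{prop::results-quadratic} does not go through, and you have identified but not resolved the obstruction. That proposition assumes the distributional Assumptions~\ref{assump::full-rank} and~\ref{assump::different-subspaces} and concludes $\hat{R}_{n}(\bm{\theta}^{*})=0$ \emph{with probability one} over the draw of the dataset; here the hypothesis is a deterministic algebraic property of a single fixed dataset, which neither implies those assumptions nor lets you invoke a probability-one statement. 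The missing bridge is not a technicality --- it is the entire proof. The paper never cites Proposition~\ref{prop::results-quadratic}; it reruns the local-minimum analysis with indefiniteness playing the role previously played by the subspace structure: (i) if some $a_j^{*}=0$, the high-order Taylor/perturbation argument gives $\sum_{i}\ell_p'(-y_if(x_i;\bm{\theta}^{*}))y_ix_ix_i^{\top}=\bm{0}_{d\times d}$; (ii) if all $a_j^{*}\neq 0$, then $M>d$ yields $\alpha_1,\dots,\alpha_M$, not all zero, with $\sum_j\alpha_j\bm{w}_j^{*}=\bm{0}_d$, and substituting $\bm{u}_j=\alpha_j\bm{u}/a_j^{*}$ into the Hessian quadratic form annihilates the $\ell_p''$ terms, showing $\sum_i\ell_p'(\cdot)y_ix_ix_i^{\top}$ is positive or negative semi-definite. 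In either case, setting $\lambda_i=\ell_p'(-y_if(x_i;\bm{\theta}^{*}))\ge 0$ and using criticality in $a_0$ (which gives $\sum_{i:y_i=1}\lambda_i=\sum_{i:y_i=-1}\lambda_i$) produces an admissible sequence whose matrix is semi-definite; indefiniteness then forces all $\lambda_i=0$, i.e.\ $\hat{R}_{n}(\bm{\theta}^{*})=0$. Without carrying out (i)--(ii), your ``if'' direction is a plan, not a proof.

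For the \textbf{``only if''} direction, you have negated the hypothesis incorrectly: the failure of ``indefinite for all admissible sequences'' means that $\sum_i\lambda_iy_ix_ix_i^{\top}$ is positive or negative \emph{semi}-definite for some admissible sequence, not definite. Your contrapositive covers only the definite case; if, for instance, all $x_i$ lie in a proper subspace of $\mathbb{R}^{d}$, every such matrix is singular, so definiteness never occurs and your argument is vacuous even though indefiniteness can still fail. The counterexample construction does extend to the semi-definite case, but only because of the quadratic neuron: in the local-minimality verification, $G(\bm{u})=\sum_i\lambda_iy_i\sigma(\bm{u}^{\top}x_i)$ is exactly the quadratic form $\bm{u}^{\top}\bigl(\sum_i\lambda_iy_ix_ix_i^{\top}\bigr)\bm{u}$, so positive semi-definiteness already makes $\bm{u}=\bm{0}_d$ a \emph{global} minimizer of $G$, whereas the argument of Proposition~\ref{prop::convex-necc} that you cite uses the second-order sufficient condition $\nabla_{\bm{u}}^{2}G(\bm{0}_d)\succ 0$ and genuinely needs strict definiteness for general analytic neurons. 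The paper's proof of this direction states and exploits the semi-definite hypothesis for precisely this reason; your proof must do the same, or it leaves the degenerate (singular semi-definite) case uncovered.
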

\begin{proof}

\textbf{(1) Proof of ``if'': }
It follows from Lemma~\ref{lemma::suff-necc} that if the assumptions on the dataset are satisfied, 
 there exists a set of parameter $\bm{\theta}_{S}$ such that $f_{S}(x;\bm{\theta}_{S})$ achieves zero training error and this further indicates that  for any neural architecture $f_{D}$, there exists a set of parameter $\bm{\theta}^{*}=(\bm{\theta}^{*}_{S},\bm{\theta}^{*}_{D})$ such that $L_{n}(\bm{\theta}^{*};p)=0$ for all $p\ge1$. This means that the empirical loss function has a global minimum with a  value equal to zero. 

We first assume that the $\bm{\theta}^{*}=(\bm{\theta}^{*}_{1},\bm{\theta}_{2}^{*})$ is a local minimum.
We next prove the following two claims: 

\textbf{Claim 1:} If $\bm{\theta}^{*}=(\bm{\theta}_{S}^{*},\bm{\theta}_{D}^{*})$ is a local minimum and there exists $j\in[M]$ such that $a^{*}_{j}=0$, then $\error =0$. 

\textbf{Claim 2:} If $\bm{\theta}^{*}=(\bm{\theta}_{S}^{*},\bm{\theta}_{D}^{*})$ is a local minimum and $a^{*}_{j}\neq 0$ for all $j\in [M]$, then $\error =0$.

\textbf{(a) Proof of claim 1.} We prove that  if $\bm{\theta}^{*}=(\bm{\theta}_{S}^{*},\bm{\theta}_{D}^{*})$ is a local minima and there exists $j\in[M]$ such that $a^{*}_{j}=0$, then $\error =0$. Without loss of generality, we assume that $a_{1}^{*}=0$. Since $\bm{\theta}^{*}=(\bm{\theta}_{S}^{*},\bm{\theta}_{D}^{*})$ is a local minima, then there exists $\varepsilon_{0}>0$ such that for any small perturbations $\Delta{a}_{1}$, $\Delta \bm{w}_{1}$ on parameters $a^{*}_{1}$ and $ \bm{w}^{*}_{1}$, i.e., $|\Delta a_{1}|^{2}+\|\Delta\bm{w}_{1}\|_{2}^{2}\le \varepsilon_{0}^{2}$, we have 
$$\hat{L}_{n}(\tilde{\bm{\theta}}_{S},\bm{\theta}^{*}_{D})\ge \tilde{L}_{n}(\bm{\theta}^{*}_{S},\bm{\theta}_{D}^{*}),$$
where  $\tilde{\bm{\theta}}=(\tilde{a}_{0}, \tilde{a}_{1},...,\tilde{a}_{M},\tilde{\bm{w}}_{1},...,\tilde{\bm{w}}_{M})$, $\tilde{a}_{1}=a^{*}_{1}+\Delta a_{1}$, $\tilde{\bm{w}}_{1}=\bm{w}_{1}^{*}+\Delta \bm{w}_{1}$ and $\tilde{a}_{j}=a^{*}_{j}$, $\tilde{\bm{w}}_{j}=\bm{w}^{*}_{j}$ for $j\neq 1$.  Now we consider Taylor expansion of $\tilde{L}_{n}(\tilde{\bm{\theta}}_{S},\bm{\theta}^{*}_{D})$ at $(\bm{\theta}^{*}_{S},\bm{\theta}_{D}^{*})$. We note here that the Taylor expansion of $\hat{L}(\bm{\theta}_{S},\bm{\theta}_{D}^{*};p)$ on $\bm{\theta}_{S}$ always exists, since the empirical loss function $\hat{L}_{n}$ has continuous derivatives with respect to $f_{S}$ up to the $p$-th order and the output of the neural network $f(x;\bm{\theta}_{S})$ is infinitely differentiable with respect to $\bm{\theta}_{S}$ due to the fact that neuron activation function $\sigma$ is real analytic.

We first calculate the first order derivatives at the point $(\bm{\theta}^{*}_{1},\bm{\theta}_{2}^{*})$
\begin{align*}
\frac{d\hat{L}_{n}(\bm{\theta}^{*})}{da_{1}}&=\sum_{i=1}^{n}\ell_{p}'(-y_{i}f(x_{i};\bm{\theta}^{*}))(-y_{i})\sigma\left({\bm{w}_{1}^{*}}^{\top}x_{i}\right)=0,&& \text{$\bm{\theta}^{*}$ is a critical point,}\\
\nabla_{\bm{w}_{1}}\hat{L}_{n}(\bm{\theta}^{*})&=a^{*}_{1}\sum_{i=1}^{n}\ell_{p}'(-y_{i}f(x_{i};\bm{\theta}^{*}))(-y_{i})\sigma'\left({\bm{w}_{1}^{*}}^{\top}x_{i}\right)x_{i}=\bm{0}_{d},&& \text{$\bm{\theta}^{*}$ is a critical point.}
\end{align*}
Next, we calculate the second order derivatives at the point $(\bm{\theta}^{*}_{1},\bm{\theta}_{2}^{*})$,
\begin{align*}
\frac{d^{2}\hat{L}(\bm{\theta}^{*})}{da_{1}^{2}}&=\sum_{i=1}^{N}\ell''_{p}(-y_{i}f(x_{i};\bm{\theta}^{*}))\sigma^{2}\left({\bm{w}_{1}^{*}}^{\top}x_{i}\right)\ge 0,\\
\frac{d}{da_{1}}(\nabla_{\bm{w}_{1}}L(\bm{\theta}^{*}))&=\sum_{i=1}^{n}\ell_{p}'(-y_{i}f(x_{i};\bm{\theta}^{*}))(-y_{i})\sigma'\left({\bm{w}_{1}^{*}}^{\top}x_{i}\right)x_{i}\\
&\quad+a^{*}_{1}\sum_{i=1}^{n}\ell''_{p}(-y_{i}f(x_{i};\bm{\theta}^{*}))\sigma\left({\bm{w}_{1}^{*}}^{\top}x_{i}\right)\sigma'\left({\bm{w}_{1}^{*}}^{\top}x_{i}\right)x_{i}\\
&=\bm{0}_{d},
\end{align*}
where the first term equals to the zero vector by  the necessary condition for a local minima presented in Lemma~\ref{lemma::nec-single} and the second term equals to the zero vector by the assumption that $a^{*}_{1}=0$. Furthermore, by the assumption that $a^{*}_{1}=0$, we have 
\begin{equation*}
\nabla^{2}_{\bm{w}_{1}}\hat{L}_{n}(\bm{\theta}^{*};p)=a_{1}^{*}\nabla_{w_{1}}\left[\sum_{i=1}^{n}\ell_{p}'(-y_{i}f(x_{i};\bm{\theta}))(-y_{i})\sigma'\left({\bm{w}_{1}^{*}}^{\top}x_{i}\right)x_{i}\right]=\bm{0}_{d\times d}.
\end{equation*}
We further calculate the third order derivatives 
\begin{align*}
\frac{d}{da_{1}}\left[\nabla_{\bm{w}_{1}}^{2}{ \hat{L}_{n}(\bm{\theta}^{*};p)}\right]&=\frac{d}{da_{1}}\left[a_{1}^{*}\nabla_{\bm{w}_{1}}\left[\sum_{i=1}^{n}\ell_{p}'(-y_{i}f(x_{i};\bm{\theta}))(-y_{i})\sigma'\left({\bm{w}_{1}^{*}}^{\top}x_{i}\right)x_{i}\right]\right]\\
&=\nabla_{\bm{w}_{1}}\left[\sum_{i=1}^{n}\ell_{p}'(-y_{i}f(x_{i};\bm{\theta}))(-y_{i})\sigma'\left({\bm{w}_{1}^{*}}^{\top}x_{i}\right)x_{i}\right]+\bm{0}_{d\times d}&& \text{by $a_{1}^{*}=0$}\\
&=\sum_{i=1}^{n}\ell_{p}'(-y_{i}f(x_{i};\bm{\theta}))(-y_{i})\sigma''\left({\bm{w}_{1}^{*}}^{\top}x_{i}\right)x_{i}x_{i}^{\top}\\
&\quad+a^{*}_{1}\sum_{i=1}^{n}\ell_{p}''(-y_{i}f(x_{i};\bm{\theta}))\left[\sigma'\left({\bm{w}_{1}^{*}}^{\top}x_{i}\right)\right]^{2}x_{i}x_{i}^{\top}\\
&=\sum_{i=1}^{n}\ell_{p}'(-y_{i}f(x_{i};\bm{\theta}))(-y_{i})\sigma''\left({\bm{w}_{1}^{*}}^{\top}x_{i}\right)x_{i}x_{i}^{\top}&& \text{by $a_{1}^{*}=0$}
\end{align*}
and
$$\nabla^{3}_{\bm{w}_{1}}\hat{L}_{n}(\bm{\theta}^{*};p)=a^{*}_{1}\nabla^{2}_{\bm{w}_{1}}\left[\sum_{i=1}^{n}\ell_{p}'(-y_{i}f(x_{i};\bm{\theta}))(-y_{i})\sigma'\left({\bm{w}_{1}^{*}}^{\top}x_{i}\right)x_{i}\right]=\bm{0}_{d\times d\times d}.$$
In fact, it is easy to show that for any $2\le k\le p$, 
$$\nabla^{k}_{\bm{w}_{1}}\hat{L}_{n}(\bm{\theta}^{*};p)=a_{1}^{*}\nabla^{k-1}_{\bm{w}_{1}}\left[\sum_{i=1}^{n}\ell_{p}'(-y_{i}f(x_{i};\bm{\theta}))(-y_{i})\sigma'\left({\bm{w}_{1}^{*}}^{\top}x_{i}\right)x_{i}\right]=\bm{0}_{\underbrace{d\times d\times  ...\times d}_{ \text{$k$ times}}}.$$
Let $\varepsilon>0$, $\Delta a_{1}=\text{sgn}(a_{1})\varepsilon^{9/4}$ and $\Delta \bm{w}_{1}=\varepsilon \bm{u}_{1}$ for $\bm{u}_{1}:\|\bm{u}_{1}\|_{2}=1$. Clearly, when $\varepsilon\rightarrow 0$, $\Delta a_{1}=o(\|\Delta \bm{w}_{1}\|_{2})$, $\Delta a_{1}=o(1)$ and $\|\Delta \bm{w}_{1}\|=o(1)$. Then we expand $\hat{L}_{n}(\tilde{\bm{\theta}}_{1},\bm{\theta}_{2}^{*})$ at the point $\bm{\theta}^{*}$ up to the sixth order  and thus as $\varepsilon\rightarrow 0$,
\begin{align*}
\hat{L}_{n}(\tilde{\bm{\theta}}_{1}, \bm{\theta}_{2}^{*})&=\hat{L}_{n}({\bm{\theta}}^{*}_{1}, \bm{\theta}_{2}^{*})+\frac{1}{2!}\frac{d^{2}\hat{L}_{n}(\bm{\theta}^{*})}{d^{2}a_{1}}(\Delta a_{1})^{2}\\
&\quad+\frac{1}{2}\Delta a_{1}\Delta \bm{w}_{1}^{\top}\frac{d}{da_{1}}\left[\bm{D}_{\bm{w}_{1}}^{2}{ \hat{L}_{n}(\bm{\theta}^{*};p)}\right]\Delta \bm{w}_{1} + o(|a_{1}|^{2})+o(|a_{1}|\|\bm{w}_{1}\|^{2}_{2})+o(\|\Delta \bm{w}_{1}\|_{2}^{5})\\
&=\hat{L}_{n}({\bm{\theta}}^{*}_{1}, \bm{\theta}_{2}^{*})+\frac{1}{2!}\frac{d^{2}\hat{L}_{n}(\bm{\theta}^{*})}{d^{2}a_{1}}\varepsilon^{9/2}+\frac{1}{2}\text{sgn}(a_{1}) \varepsilon^{9/4+2}\sum_{i=1}^{n}\ell_{p}'(-y_{i}f(x_{i};\bm{\theta}))\sigma''\left({\bm{w}_{1}^{*}}^{\top}x_{i}\right)(\bm{u}_{1}^{\top}x_{i})^{2}\\
&\quad+o(\varepsilon^{9/2})+o(\varepsilon^{9/4+2})+o(\varepsilon^{5})\\
&=\hat{L}_{n}({\bm{\theta}}^{*}_{1}, \bm{\theta}_{2}^{*})+\frac{1}{2}\text{sgn}(a_{1})\varepsilon^{17/4}\sum_{i=1}^{n}\ell_{p}'(-y_{i}f(x_{i};\bm{\theta}))(-y_{i})\sigma''\left({\bm{w}_{1}^{*}}^{\top}x_{i}\right)(\bm{u}_{1}^{\top}x_{i})^{2}+o(\varepsilon^{17/4})
\end{align*}
Since $\varepsilon>0$ and $\hat{L}_{n}(\tilde{\bm{\theta}}_{1},\bm{\theta}^{*}_{2};p)\ge \loss$ holds for any $\bm{u}_{1}:\|\bm{u}_{1}\|_{2}=1$ and any $\sgn(a_{1})\in\{-1, 1\}$, then 
\begin{equation}\label{eq::prop11-part1-cond}\sum_{i=1}^{n}\ell_{p}'(-y_{i}f(x_{i};\bm{\theta}))(-y_{i})\sigma''\left({\bm{w}_{1}^{*}}^{\top}x_{i}\right)(\bm{u}^{\top}x_{i})^{2}=0, \quad\text{for any } \bm{u}\in\mathbb{R}^{d}.\end{equation}
Therefore, 
\begin{equation*}
\sum_{i=1}^{n}\ell_{p}'(-y_{i}f(x_{i};\bm{\theta}))(-y_{i})\sigma''\left({\bm{w}_{1}^{*}}^{\top}x_{i}\right)x_{i}x_{i}^{\top}=\bm{0}_{d\times d}.
\end{equation*}
Since $\sigma''(z)=2$ for all $z$, then 
\begin{equation}\label{eq::prop-quadratic-2}
\sum_{i=1}^{n}\ell_{p}'(-y_{i}f(x_{i};\bm{\theta}))(-y_{i})x_{i}x_{i}^{\top}=\bm{0}_{d\times d}.
\end{equation}
Furthermore, since $\theta^{*}$ is a critical point, then 
\begin{equation}\label{eq::prop-quadratic-1}
\frac{\partial \hat{L}_{n}(\bm{\theta};p)}{\partial a_{0}}=\frac{1}{n}\sum_{i=1}^{n}\ell'(-y_{i}f(x_{i};\bm{\theta}^{*}))(-y_{i})=0.
\end{equation}
Now we assume that $\hat{R}_{n}(\bm{\theta}^{*})>0$. This means that there exists a index $i$ such that $y_{i}f(x_{i};\bm{\theta}^{*})<0$ or $\ell'(-y_{i}f(x_{i};\bm{\theta}^{*}))>0$. 
Furthermore, since $\ell'(z)\ge 0$, then by setting $\lambda_{i}=\ell'(-y_{i}f(x_{i};\bm{\theta}^{*}))$, we have that there exists a sequence $\{\lambda_{i}\ge 0\}_{i=1}^{n}$ satisfying $\sum_{i:y_{i}=1}\lambda_{i}=\sum_{i:y_{i}=-1}\lambda_{i}>0$, where the equality follows from Equation~\eqref{eq::prop-quadratic-1} and the positiveness comes from the assumption that $\ell'(-y_{i}f(x_{i};\bm{\theta}^{*}))>0$ for some $i$, such that 
$$\sum_{i=1}^{n}\lambda_{i}y_{i}x_{i}x_{i}^{\top}=\bm{0}_{d\times d},$$
where the equality follows from Equation~\eqref{eq::prop-quadratic-2}. This leads to the contradiction with our assumption that the matrix $\sum_{i=1}^{n}\lambda_{i}y_{i}x_{i}x_{i}^{\top}$ should be indefinite for all sequences $\{\lambda_{i}\ge 0\}_{i=1}^{n}$ satisfying $\sum_{i:y_{i}=1}\lambda_{i}=\sum_{i:y_{i}=-1}\lambda_{i}>0$. Therefore, this indicates that $\hat{R}_{n}(\bm{\theta}^{*})=0.$

\textbf{(b) Proof of Claim 2:} To prove the claim 2, we first show that if $M>d$, then there exists coefficients $\alpha_{1},...,\alpha_{M}$, not all zero, such that $$\left(\alpha_{1}\bm{w}_{1}^{*}+...+\alpha_{M}\bm{w}_{M}^{*}\right)^{\top}x_{i}=0,\quad \text{for all }i\in[n].$$
Clearly, if $M>r$, then there exists coefficients $\alpha_{1},...,\alpha_{M}$, not all zero, such that $$(\alpha_{1}\bm{w}_{1}^{*}+...+\alpha_{M}\bm{w}_{M}^{*})=\bm{0}_{d},\quad \text{for all }i\in[n].$$
Now we prove the claim 2. First, we consider the Hessian matrix $H(\bm{w}_{1}^{*},...,\bm{w}_{M}^{*})$. Since $\bm{\theta}^{*}$ is a local minima, then 
\begin{equation*}
F(\bm{u}_{1},...,\bm{u}_{M})=\sum_{j=1}^{M}\sum_{k=1}^{M}\bm{u}_{j}^{\top}\nabla^{2}_{\bm{w}_{j},\bm{w}_{k}}\loss \bm{u}_{k}\ge 0
\end{equation*}
holds for any vectors $\bm{u}_{1},...,\bm{u}_{M}\in\mathbb{R}^{d}$. 
Since $\sigma''(z)=2$ and $\sigma'(z)=2z$ for all $z\in\mathbb{R}$, then
\begin{align*}
\nabla_{\bm{w}_{j}}^{2}\loss&=a_{j}^{*}\sum_{i=1}^{n}\ell_{p}'(-y_{i}f(x_{i};\bm{\theta}^{*}))(-y_{i})\sigma''\left({\bm{w}_{j}^{*}}^{\top}x_{i}\right)x_{i}x_{i}^{\top}\\
&\quad +{a_{j}^{*}}^{2}\sum_{i=1}^{n}\ell_{p}''(-y_{i}f(x_{i};\bm{\theta}^{*}))\left[\sigma'\left({\bm{w}_{j}^{*}}^{\top}x_{i}\right)\right]^{2}x_{i}x_{i}^{\top}\\
&=-2a_{j}^{*}\sum_{i=1}^{n}\ell_{p}'(-y_{i}f(x_{i};\bm{\theta}^{*}))y_{i}x_{i}x_{i}^{\top}+4{a_{j}^{*}}^{2}\sum_{i=1}^{n}\ell_{p}''(-y_{i}f(x_{i};\bm{\theta}^{*}))\left({\bm{w}_{j}^{*}}^{\top}x_{i}\right)^{2}x_{i}x_{i}^{\top},
\end{align*}
and 
\begin{align*}
\nabla_{\bm{w}_{j},\bm{w}_{k}}^{2}\loss&={a_{j}^{*}}a_{k}^{*}\sum_{i=1}^{n}\ell_{p}''(-y_{i}f(x_{i};\bm{\theta}^{*}))\left[\sigma'\left({\bm{w}_{j}^{*}}^{\top}x_{i}\right)\right]\left[\sigma'\left({\bm{w}_{k}^{*}}^{\top}x_{i}\right)\right]x_{i}x_{i}^{\top}\\
&=4{a_{j}^{*}}a_{k}^{*}\sum_{i=1}^{n}\ell_{p}''(-y_{i}f(x_{i};\bm{\theta}^{*}))\left({\bm{w}_{j}^{*}}^{\top}x_{i}\right)\left({\bm{w}_{k}^{*}}^{\top}x_{i}\right)x_{i}x_{i}^{\top}.
\end{align*}
Thus, we have 
\begin{align*}
F(\bm{u}_{1},...,\bm{u}_{M})&=-2\sum_{j=1}^{M}\left[a_{j}^{*}\sum_{i=1}^{n}\ell_{p}'(-y_{i}f(x_{i};\bm{\theta}^{*}))y_{i}\left(\bm{u}_{j}^{\top}x_{i}\right)^{2}\right]\\
&\quad +4\sum_{j=1}^{M}\sum_{k=1}^{M}\left[{a_{j}^{*}}a_{k}^{*}\sum_{i=1}^{n}\ell_{p}''(-y_{i}f(x_{i};\bm{\theta}^{*}))\left({\bm{w}_{j}^{*}}^{\top}x_{i}\right)\left({\bm{w}_{k}^{*}}^{\top}x_{i}\right)\left(\bm{u}_{j}^{\top}x_{i}\right)\left(\bm{u}_{k}^{\top}x_{i}\right)\right]\\
&=-2\sum_{j=1}^{M}\left[a_{j}^{*}\sum_{i=1}^{n}\ell_{p}'(-y_{i}f(x_{i};\bm{\theta}^{*}))y_{i}\left(\bm{u}_{j}^{\top}x_{i}\right)^{2}\right]\\
&\quad +4\sum_{i=1}^{n}\left[\ell_{p}''(-y_{i}f(x_{i};\bm{\theta}^{*}))\left(\sum_{j=1}^{M}a_{j}^{*}\left({\bm{w}_{j}^{*}}^{\top}x_{i}\right)\left(\bm{u}_{j}^{\top}x_{i}\right)\right)^{2}\right].
\end{align*}
Since there exists coefficients $\alpha_{1},...,\alpha_{M}$, not all zero, such that $(\alpha_{1}\bm{w}_{1}^{*}+...+\alpha_{M}\bm{w}_{M}^{*})^{\top}x_{i}=0$, for all $i\in[n],$ and $a_{j}^{*}\neq 0$ for all $j\in[M]$ then by setting $\bm{u}_{j}=\alpha_{j}\bm{u}/a_{j}^{*}$ for all $j\in[M]$, we have that the inequality
\begin{align*}
F(\bm{u}_{1},...,\bm{u}_{M})&=-2\sum_{j=1}^{M}\left[a_{j}^{*}\sum_{i=1}^{n}\ell_{p}'(-y_{i}f(x_{i};\bm{\theta}^{*}))y_{i}\left(\alpha_{j}/a_{j}^{*}\right)^{2}\left(\bm{u}^{\top}x_{i}\right)^{2}\right]\\
&\quad +4\sum_{i=1}^{n}\left[\ell_{p}''(-y_{i}f(x_{i};\bm{\theta}^{*}))\left(\sum_{j=1}^{M}\alpha_{j}\left({\bm{w}_{j}^{*}}^{\top}x_{i}\right)\left(\bm{u}^{\top}x_{i}\right)\right)^{2}\right]\\
&=-2\sum_{j=1}^{M}\left[a_{j}^{*}\sum_{i=1}^{n}\ell_{p}'(-y_{i}f(x_{i};\bm{\theta}^{*}))y_{i}\left(\alpha_{j}/a_{j}^{*}\right)^{2}\left(\bm{u}^{\top}x_{i}\right)^{2}\right]\\
&\quad +4\sum_{i=1}^{n}\left[\ell_{p}''(-y_{i}f(x_{i};\bm{\theta}^{*}))\left(\left(\sum_{j=1}^{M}\alpha_{j}{\bm{w}_{j}^{*}}\right)^{\top}x_{i}\right)^{2}\left(\bm{u}^{\top}x_{i}\right)^{2}\right]\\
&=-2\sum_{j=1}^{M}\left(\alpha_{j}^{2}/a_{j}^{*}\right)\cdot\sum_{i=1}^{n}\ell_{p}'(-y_{i}f(x_{i};\bm{\theta}^{*}))y_{i}\left(\bm{u}^{\top}x_{i}\right)^{2}\ge 0
\end{align*}
holds for any $\bm{u}\in\mathbb{R}^{d}$.

Next we consider the following two cases: (1) $\sum_{j=1}^{M}\left(\alpha_{j}^{2}/a_{j}^{*}\right)\neq 0$; (2) $\sum_{j=1}^{M}\left(\alpha_{j}^{2}/a_{j}^{*}\right)=0$. 

\textbf{Case 1: } If $\sum_{j=1}^{M}\left(\alpha_{j}^{2}/a_{j}^{*}\right)\neq 0$, then without loss of generality, we assume that $\sum_{j=1}^{M}\left(\alpha_{j}^{2}/a_{j}^{*}\right)<0$. This indicates that 
\begin{equation}\label{eq::prop-quadratic-4}\sum_{i=1}^{n}\ell_{p}'(-y_{i}f(x_{i};\bm{\theta}^{*}))y_{i}\left(\bm{u}^{\top}x_{i}\right)^{2}\ge 0,\quad \text{for all }\bm{u}\in\mathbb{R}^{d}.\end{equation}
Since $\bm{\theta}^{*}$ is a critical point, then 
\begin{equation}\label{eq::prop-quadratic-3}
\frac{\partial \hat{L}_{n}(\bm{\theta}^{*};p)}{\partial a_{0}}=\frac{1}{n}\sum_{i=1}^{n}\ell'(-y_{i}f(x_{i};\bm{\theta}^{*}))(-y_{i})=0.
\end{equation}
Now we assume that $\hat{R}_{n}(\bm{\theta}^{*})>0$. This means that there exists a index $i$ such that $y_{i}f(x_{i};\bm{\theta}^{*})<0$ or $\ell'(-y_{i}f(x_{i};\bm{\theta}^{*}))>0$. 
Furthermore, since $\ell'(z)\ge 0$, then by setting $\lambda_{i}=\ell'(-y_{i}f(x_{i};\bm{\theta}^{*}))$, we have that there exists a sequence $\{\lambda_{i}\ge 0\}_{i=1}^{n}$ satisfying $\sum_{i:y_{i}=1}\lambda_{i}=\sum_{i:y_{i}=-1}\lambda_{i}>0$, where the equality follows from Equation~\eqref{eq::prop-quadratic-1} and the positiveness comes from the assumption that $\ell'(-y_{i}f(x_{i};\bm{\theta}^{*}))>0$ for some $i$, such that 
$$\sum_{i=1}^{n}\lambda_{i}y_{i}x_{i}x_{i}^{\top}\succeq 0,$$
where the positive semi-definiteness follows from the inequality~\eqref{eq::prop-quadratic-4}. This leads to the contradiction with our assumption that the matrix $\sum_{i=1}^{n}\lambda_{i}y_{i}x_{i}x_{i}^{\top}$ should be indefinite for all sequences $\{\lambda_{i}\ge 0\}_{i=1}^{n}$ satisfying $\sum_{i:y_{i}=1}\lambda_{i}=\sum_{i:y_{i}=-1}\lambda_{i}>0$. Therefore, this indicates that $\hat{R}_{n}(\bm{\theta}^{*})=0.$

\textbf{{Case 2}:} If $\sum_{j=1}^{M}\left(\alpha_{j}^{2}/a_{j}^{*}\right)= 0$,  then by setting $\bm{u}_{j}=(\alpha_{j}/a_{j}^{*}+v\sgn(\alpha_{j}))\bm{u}$ for some scalar $v$ and vector $\bm{u}\in\mathbb{R}^{d}$, we have 
\begin{align*}
F(v,\bm{u})&=-2\sum_{j=1}^{M}\left[a_{j}^{*}\sum_{i=1}^{n}\ell_{p}'(-y_{i}f(x_{i};\bm{\theta}^{*}))y_{i}\left((\alpha_{j}/a_{j}^{*}+v\sgn(\alpha_{j}))\bm{u}^{\top}x_{i}\right)^{2}\right]\\
&\quad +4\sum_{i=1}^{n}\left[\ell_{p}''(-y_{i}f(x_{i};\bm{\theta}^{*}))\left(\sum_{j=1}^{M}a_{j}^{*}\left({\bm{w}_{j}^{*}}^{\top}x_{i}\right)\left((\alpha_{j}/a_{j}^{*}+v\sgn(\alpha_{j}))\bm{u}^{\top}x_{i}\right)\right)^{2}\right]\\
&=-2\sum_{j=1}^{M}\left[a_{j}^{*}\sum_{i=1}^{n}\ell_{p}'(-y_{i}f(x_{i};\bm{\theta}^{*}))y_{i}\left((\alpha_{j}/a_{j}^{*}+v\sgn(\alpha_{j}))\bm{u}^{\top}x_{i}\right)^{2}\right]\\
&\quad +4\sum_{i=1}^{n}\left[\ell_{p}''(-y_{i}f(x_{i};\bm{\theta}^{*}))\left(\left(\sum_{j=1}^{M}(\alpha_{j}+v\sgn(\alpha_{j})a^{*}_{j})\bm{w}_{j}^{*}\right)^{\top}x_{i}\right)\left(\bm{u}^{\top}x_{i}\right)^{2}\right]\\
&=-2\sum_{j=1}^{M}\left[a_{j}^{*}\sum_{i=1}^{n}\ell_{p}'(-y_{i}f(x_{i};\bm{\theta}^{*}))y_{i}\left((\alpha_{j}/a_{j}^{*}+v\sgn(\alpha_{j}))\bm{u}^{\top}x_{i}\right)^{2}\right]\\
&\quad +4v^{2}\sum_{i=1}^{n}\left[\ell_{p}''(-y_{i}f(x_{i};\bm{\theta}^{*}))\left(\left(\sum_{j=1}^{M}\sgn(\alpha_{j})a_{j}^{*}\bm{w}_{j}^{*}\right)^{\top}x_{i}\right)^{2}\left(\bm{u}^{\top}x_{i}\right)^{2}\right]\\
&\triangleq  -2\sum_{j=1}^{M}\left[a_{j}^{*}\sum_{i=1}^{n}\ell_{p}'(-y_{i}f(x_{i};\bm{\theta}^{*}))y_{i}\left((\alpha_{j}/a_{j}^{*}+v\sgn(\alpha_{j}))\bm{u}^{\top}x_{i}\right)^{2}\right]+v^{2}R(\bm{u}),
\end{align*}
where we define 
$$R(\bm{u})=4\sum_{i=1}^{n}\left[\ell_{p}''(-y_{i}f(x_{i};\bm{\theta}^{*}))\left(\left(\sum_{j=1}^{M}\sgn(\alpha_{j})a_{j}^{*}\bm{w}_{j}^{*}\right)^{\top}x_{i}\right)^{2}\left(\bm{u}^{\top}x_{i}\right)^{2}\right].$$
In addition, we have 
\begin{align*}
\sum_{j=1}^{M}&\left[a_{j}^{*}\sum_{i=1}^{n}\ell_{p}'(-y_{i}f(x_{i};\bm{\theta}^{*}))y_{i}\left((\alpha_{j}/a_{j}^{*}+v\sgn(\alpha_{j}))\bm{u}^{\top}x_{i}\right)^{2}\right]\\
&=\sum_{i=1}^{n}\ell'_{p}(-y_{i}f(x_{i};\bm{\theta}))y_{i}(\bm{u}^{\top}x_{i})^{2}\cdot\left[\sum_{j=1}^{M}(\alpha_{j}^{2}/a_{j}^{*}+2v\sgn(\alpha_{j})\alpha_{j}+v^{2}a_{j}^{*})\right]\\
&=\sum_{i=1}^{n}\ell'_{p}(-y_{i}f(x_{i};\bm{\theta}))y_{i}(\bm{u}^{\top}x_{i})^{2}\cdot\left[\sum_{j=1}^{M}(2v\sgn(\alpha_{j})\alpha_{j}+v^{2}a_{j}^{*})\right]\\
&=2v\left[\sum_{j=1}^{M}|\alpha_{j}|\right]\sum_{i=1}^{n}\ell'_{p}(-y_{i}f(x_{i};\bm{\theta}))y_{i}(\bm{u}^{\top}x_{i})^{2}+v^{2}\left[\sum_{j=1}^{M}a_{j}^{*}\right]\sum_{i=1}^{n}\ell'_{p}(-y_{i}f(x_{i};\bm{\theta}))y_{i}(\bm{u}^{\top}x_{i})^{2}.
\end{align*}
Therefore, we can rewrite $F(v, \bm{u})$ as 
\begin{align*}
F(v,\bm{u})&=2v\sum_{j=1}^{M}|\alpha_{j}|\sum_{i=1}^{n}\ell'_{p}(-y_{i}f(x_{i};\bm{\theta}))y_{i}(\bm{u}^{\top}x_{i})^{2}+v^{2}\sum_{j=1}^{M}a_{j}^{*}\cdot\sum_{i=1}^{n}\ell'_{p}(-y_{i}f(x_{i};\bm{\theta}))y_{i}(\bm{u}^{\top}x_{i})^{2}+v^{2}R(\bm{u})\\
&\triangleq2v\sum_{j=1}^{M}|\alpha_{j}|\sum_{i=1}^{n}\ell'_{p}(-y_{i}f(x_{i};\bm{\theta}))y_{i}(\bm{u}^{\top}x_{i})^{2}+v^{2}\hat{R}(\bm{u})
\end{align*}
Since $F(\bm{v},\bm{u})\ge 0$ holds for any scalar $v$ and vector $\bm{u}\in\mathbb{R}^{d}$, then we should have 
$$\sum_{j=1}^{M}|\alpha_{j}|\sum_{i=1}^{n}\ell'_{p}(-y_{i}f(x_{i};\bm{\theta}))y_{i}(\bm{u}^{\top}x_{i})^{2}=0,\quad \text{ for any }\bm{u}\in\mathbb{R}^{d}. $$
Since the coefficient $\alpha_{1},...,\alpha_{M}$ are not all zero, then for any $\bm{u}\in\mathbb{R}^{d}$, we have 
$$\sum_{i=1}^{n}\ell'_{p}(-y_{i}f(x_{i};\bm{\theta}))y_{i}(\bm{u}^{\top}x_{i})^{2}=0. $$
Applying the same analysis shown earlier, we have  $\error =0$.  

\textbf{Proof of ``only if'':} 
We prove the necessary condition by proving the following claim.
\begin{claim}
If  there exists a sequence $\{\lambda_{i}\ge 0\}_{i=1}^{n}$ satisfying $\sum_{i:y_{i}=1}\lambda_{i}=\sum_{i:y_{i}=-1}\lambda_{i}>0$ such that the matrix 
$\sum_{i=1}^{n}\lambda_{i}y_{i}x_{i}x_{i}^{\top}$ is positive or negative positive semi-definite, 
then there exists a multilayer neural architecture $f_{D}$ such that the empirical loss function $\hat{L}_{n}(\bm{\theta}_{S},\bm{\theta}_{D};p),p\ge 6$ has a local minimum with a non-zero training error.  
\end{claim}
\begin{proof}
Let $\mathcal{D}=\{(x_{i},y_{i})\}_{i=1}^{n}$ denote a dataset consisting of $n$ samples. We rewrite the sample $x$ as  $x=\left(x^{(1)},...,x^{(d)}\right)$.
Consider the following network,
$$f(x;\bm{\theta})=f_{S}(x;\bm{\theta}_{S})+f_{D}(x;\bm{\theta}_{D}),$$
where $$f_{S}(x;\bm{\theta}_{S})=a_{0}+\sum_{j=1}^{M}a_{j}\sigma(\bm{w}_{j}^{\top}x_{i}+b_{j}),$$
and the multilayer network is defined as follows,
\begin{equation}
f_{D}(x;\bm{\theta}_{D})=f_{D}(x;\theta_{1},...,\theta_{d})=\sum_{i=1}^{n}\mu_{i}\prod_{k=1}^{d}\bm{1}\left\{x^{(k)}\in\left[x_{i}^{(k)}-\theta_{k},x_{i}^{(k)}+\theta_{k}\right]\right\}.
\end{equation}
We note here that $\mu_{1},...,\mu_{n}$ are not parameters and later we will show that this function can be implemented by a multilayer network consisted of threshold units. A useful property of the function $f_{D}(x;\bm{\theta}_{D})$ is that if all parameters $\theta_{i}$s are positive and sufficiently smalls, then for each sample $(x_{i},y_{i})$ in the dataset, 
$$f_{D}(x_{i};\bm{\theta}_{D})=\mu_{i}.$$
Furthermore, if we slightly perturb all parameters, the output of the function $f_{D}$ on all samples remain the same. In the proof, we use these two properties to construct the local minimum with a non-zero training error. 

By assumption, there exists a sequence $\{\lambda_{i}\ge 0\}_{i=1}^{n}$ satisfying $\sum_{i:y_{i}=1}\lambda_{i}=\sum_{i:y_{i}=-1}\lambda_{i}>0$ such that the matrix 
$\sum_{i=1}^{n}\lambda_{i}y_{i}x_{i}x_{i}^{\top}$ is positive or negative semi-definite.
Without loss of generality, we assume that the matrix is positive semi-definite.
Now we construct a local minimum $\bm{\theta}^{*}$. Let $a_{0}^{*}=a_{1}^{*}=...=a_{M}^{*}=-1$, $\bm{w}^{*}_{1}=...=\bm{w}^{*}_{M}=\bm{0}_{d}$ and $b_{1}^{*}=...=b_{M}^{*}=0$. Now we set $\theta^{*}_{1},...,\theta^{*}_{d}$ to be positive and sufficiently small such that for two different samples in the dataset, e.g., $x_{i}\neq x_{j}$, the following equations holds,
$$\prod_{k=1}^{d}\bm{1}\left\{x_{j}^{(k)}\in\left[x_{i}^{(k)}-2\theta^{*}_{k},x_{i}^{(k)}+2\theta^{*}_{k}\right]\right\}=0,\quad\prod_{k=1}^{d}\bm{1}\left\{x_{i}^{(k)}\in\left[x_{j}^{(k)}-2\theta^{*}_{k},x_{j}^{(k)}+2\theta^{*}_{k}\right]\right\}=0.$$
Now we choose $\mu_{1},...,\mu_{n}$ as follows. The output of the neural network on sample $x_{i}$ in the dataset is $f(x_{i};\bm{\theta}^{*})=\mu_{i}-M\sigma(0)$.

We need to choose $\mu_{1},...,\mu_{n}$ to satisfy all conditions shown as follows:
\begin{itemize}
\item[(1)] There exists $i\in[n]$ such that $y_{i}(\mu_{i}-M\sigma(0))<0$.
\item[(2)] For all $i:y_{i}=1$ and all $k:y_{k}=-1$, $$\frac{\ell'(-y_{i}(\mu_{i}-M\sigma(0)))}{\sum_{j:j=1}\ell'(-y_{i}(\mu_{i}-M\sigma(0)))}=\frac{\lambda_{i}}{\sum_{j:j=1}\lambda_{j}},\quad \frac{\ell'(-y_{k}(\mu_{k}-M\sigma(0)))}{\sum_{j:j=-1}\ell'(-y_{i}(\mu_{i}-M\sigma(0)))}=\frac{\lambda_{k}}{\sum_{j:j=-1}\lambda_{j}},$$
and 
$$\sum_{j:j=1}\ell'(-y_{i}(\mu_{i}-M\sigma(0)))=\sum_{j:j=-1}\ell'(-y_{i}(\mu_{i}-M\sigma(0))).$$
\end{itemize}  
Now we start from the largest element in the sequence $\{\lambda_{i}\}_{i=1}^{n}$. Since $\sum_{i=1}^{n}\lambda_{i}>0$, the define the index $i_{\max}$ as the index of the largest element, i.e., $$i_{\max}=\arg\max_{i}\lambda_{i}.$$
Let $\lambda_{\max}=\lambda_{i_{\max}}$.
Now we choose $\mu_{i_{\max}}$ such that
$$y_{i_{\max}}(\mu_{i_{\max}}-M\sigma(0))=-1.$$
Thus, the index $i_{\max}$ satisfy the first condition. Then for $i\neq i_{\max}$, we choose $\mu_{i}$ such that 
\begin{equation}\label{eq::prop-necc-1}\ell'(-y_{i}(\mu_{i}-M\sigma(0)))=\frac{\lambda_{i}}{\lambda_{\max}}\ell(-y_{i_{\max}}(\mu_{i\max}-M\sigma(0)))=\frac{\lambda_{i}}{\lambda_{\max}}\ell'(1)\le\ell'(1).\end{equation}
We note here that for each $i\in[n]$, there always exists a $\mu_{i}$ solving the above equation. This can be seen by the fact that $\ell'$ is continuous, $\ell'_{p}(z)\ge 0$ and $\ell'_{p}(z)=0$ iff $z\le -z_{0}$. This indicates that for  $\forall z>-z_{0}$, $\ell_{p}'(z)>0$, i.e., $\ell'(1)>0$ and that $\ell'(-z_{0})=0$. Since $\ell'(z)$ is continuous, then for $\forall r\in[0,\ell'(1)]$, there always exists $z\in\mathbb{R}$ such that $\ell'(z)=r$, which further indicates that for $\forall i\in[n]$, there always exists $\mu_{i}\in\mathbb{R}$ solving the Equation~\eqref{eq::prop-necc-1}. Under this construction, it is easy to show that the second condition is satisfied as well. 

Now we only need to show that $\bm{\theta}^{*}$ is local minimum. We first show that $\bm{\theta}^{*}$ is a critical point of the empirical loss function. 
Since for $\forall j\in[M]$,
\begin{align*}
\frac{\partial \hat{L}_{n}(\bm{\theta}^{*})}{\partial a_{j}}&=\sum_{i=1}^{n}\ell'(-y_{i}(\mu_{i}-M\sigma(0)))(-y_{i})\sigma(0)\\
&=\sigma(0)\sum_{i=1}^{n}\frac{\lambda_{i}}{\lambda_{\max}}\ell'(1)(-y_{i})=-\frac{\sigma(0)\ell'(1)}{\lambda_{\max}}\sum_{i=1}^{n}y_{i}\lambda_{i}\\
&=0&&\text{by }\sum_{i:y_{i}=1}\lambda_{i}=\sum_{i:y_{i}=-1}\lambda_{i}\\
\nabla_{\bm{w}_{j}}\hat{L}_{n}(\bm{\theta}^{*})&=\sum_{i=1}^{n}\ell'(-y_{i}(\mu_{i}-M\sigma(0)))(-y_{i})\sigma'(0)x_{i}\\
&=-\sigma'(0)\sum_{i=1}^{n}\frac{\lambda_{i}}{\lambda_{\max}}\ell'(1)y_{i}x_{i}=-\frac{\sigma'(0)\ell'(1)}{\lambda_{\max}}\sum_{i=1}^{n}\lambda_{i}y_{i}x_{i}\\
&=\bm{0}_{d}&&\text{by }\sigma'(0)=0
\end{align*}
and 
$$\frac{\partial \hat{L}_{n}(\bm{\theta}^{*})}{\partial a_{0}}=\sum_{i=1}^{n}\ell'(-y_{i}(\mu_{i}-M\sigma(0)))(-y_{i})=-\frac{\ell'(1)}{\lambda_{\max}}\sum_{i=1}^{n}y_{i}\lambda_{i}=0.$$
In addition, we have stated earlier, if we slightly perturb the parameter $\theta_{k}^{*}$ in the interval $[\theta_{k}^{*}/2,3\theta_{k}^{*}/2]$, the output of the function $f_{D}(x_{i};\bm{\theta}_{D})$ does not change for all $i\in[n]$, then $\bm{\theta}^{*}$ is a critical point. 

Now we show that $\bm{\theta}^{*}$ is local minimum. Consider any perturbation $\Delta a_{1},...,\Delta a_{M}:|\Delta a_{j}|<\frac{1}{2}$ for all $j\in[M]$, $\Delta \bm{w}_{1},...,\Delta \bm{w}_{M}\in\mathbb{R}^{d}$, $\Delta a_{0}\in\mathbb{R}$, $\Delta\theta_{k}:|\Delta \theta_{k}|\le\theta_{k}/2$ for all $k\in[n]$. Define $$\tilde{\bm{\theta}}=(a_{0}^{*}+\Delta a_{0},...,a_{M}^{*}+\Delta a_{M},\bm{w}_{1}^{*}+\Delta \bm{w}_{1},...,\bm{w}_{M}^{*}+\Delta \bm{w}_{M},\theta_{1}^{*}+\Delta\theta_{1}^{*},..., \theta_{d}^{*}+\Delta\theta_{d}^{*}).$$

Then 
\begin{align*}
\sum_{i=1}^{n}\ell(-y_{i}f(x_{i};\tilde{\bm{\theta}}))-\sum_{i=1}^{n}\ell(-y_{i}f(x_{i};\bm{\theta}^{*}))&=\sum_{i=1}^{n}\left[\ell(-y_{i}f(x_{i};\tilde{\bm{\theta}}))-\ell(-y_{i}f(x_{i};\bm{\theta}^{*}))\right]\\
&\ge\sum_{i=1}^{n}\ell'(-y_{i}f(x_{i};\bm{\theta}^{*}))(-y_{i})[f(x_{i};\tilde{\bm{\theta}})-f(x_{i};{\bm{\theta}}^{*})].
\end{align*}
Since for each sample $x_{i}$ in the dataset, 
\begin{align*}
f(x_{i};\tilde{\bm{\theta}})-f(x_{i};{\bm{\theta}}^{*})&=\Delta a_{0}+\sum_{j=1}^{M}(a^{*}_{j}+\Delta a_{j})\sigma(\Delta \bm{w}_{j}^{\top}x_{i})+\mu_{i}-\mu_{i}\\
&=\Delta a_{0}+\sum_{j=1}^{M}(a^{*}_{j}+\Delta a_{j})\sigma(\Delta \bm{w}_{j}^{\top}x_{i}),
\end{align*}
then
 
\begin{align*}
\sum_{i=1}^{n}\ell(-y_{i}f(x_{i};\tilde{\bm{\theta}}))&-\sum_{i=1}^{n}\ell(-y_{i}f(x_{i};\bm{\theta}^{*}))\\
&\ge\sum_{i=1}^{n}\ell'(-y_{i}f(x_{i};\bm{\theta}^{*}))(-y_{i})[f(x_{i};\tilde{\bm{\theta}})-f(x_{i};{\bm{\theta}}^{*})]\\
&=\sum_{i=1}^{n}\ell'(-y_{i}(\mu_{i}-M\sigma(0)))(-y_{i})\left[\sum_{j=1}^{M}(a_{j}^{*}+\Delta a_{j})\sigma\left(\Delta\bm{w}_{j}^{\top}x_{i}\right)+\Delta a_{0}\right]\\
&=\sum_{i=1}^{n}\frac{\lambda_{i}\ell'(1)}{\lambda_{\max}}(-y_{i})\left[\sum_{j=1}^{M}(a_{j}^{*}+\Delta a_{j})\sigma\left(\Delta\bm{w}_{j}^{\top}x_{i}\right)\right]\\
&=\frac{\ell'(1)}{\lambda_{\max}}\sum_{j=1}^{M}-(a_{j}^{*}+\Delta a_{j})\left[\sum_{i=1}^{n}\lambda_{i}y_{i}\left(\Delta\bm{w}_{j}^{\top}x_{i}\right)^{2}\right].
\end{align*}

Since by assumption that the matrix $\sum_{i=1}^{n}\lambda_{i}y_{i}x_{i}x_{i}^{\top}$ is positive semi-definite, then for any $\Delta \bm{w}_{j}^{\top}\in\mathbb{R}^{d}$, 
$$\sum_{i=1}^{n}\lambda_{i}y_{i}\left(\Delta\bm{w}_{j}^{\top}x_{i}\right)^{2}\ge0.$$

In addition, since  $a_{j}^{*}=-1$, $|\Delta a_{j}|<\frac{1}{2}$, then for all $\Delta\bm{w}_{j}\in\mathbb{R}^{d}$,
\begin{align*}
\sum_{i=1}^{n}\ell(-y_{i}f(x_{i};\tilde{\bm{\theta}}))&-\sum_{i=1}^{n}\ell(-y_{i}f(x_{i};\bm{\theta}^{*}))\ge 0.
\end{align*}
Thus, $\bm{\theta}^{*}$ is a local minima of the empirical loss function with $f(x_{i};\bm{\theta}^{*})=\mu_{i}-M\sigma(0)$. Since there exists a $\mu_{i_{\max}}$ such that $y_{i_{\max}}(\mu_{i_{\max}}-M\sigma(0))=1$, then this means that the neural network makes an incorrect prediction on the sample $x_{i_{\max}}$. This indicates that this local minimum has a non-zero training error.

Finally, we present the way we construct the neural network $f_{D}$. Since
\begin{equation*}
f_{D}(x;\bm{\theta}_{D})=f_{D}(x;\theta_{1},...,\theta_{d})=\sum_{i=1}^{n}\mu_{i}\prod_{k=1}^{d}\bm{1}\left\{x^{(k)}\in\left[x_{i}^{(k)}-\theta_{k},x_{i}^{(k)}+\theta_{k}\right]\right\}.
\end{equation*}
Let $\thres$ denote the threshold unit, where $\thres(z)=1$ if $z\ge 0$ and $\thres(z)=0$, otherwise. Therefore, the indicator function can be represented as follows:
$$\bm{1}\left\{x^{(k)}\in\left[x_{i}^{(k)}-\theta_{k},x_{i}^{(k)}+\theta_{k}\right]\right\}=\thres\left(x^{(k)}-x_{i}^{(k)}+\theta_{k}\right)-\thres\left(x^{(k)}-x_{i}^{(k)}-\theta_{k}\right)$$
Therefore, 
\begin{align*}
\prod_{k=1}^{d}&\bm{1}\left\{x^{(k)}\in\left[x_{i}^{(k)}-\theta_{k},x_{i}^{(k)}+\theta_{k}\right]\right\}\\
&=\thres\left(\sum_{k=1}^{d}\left[\thres\left(x^{(k)}-x_{i}^{(k)}+\theta_{k}\right)-\thres\left(x^{(k)}-x_{i}^{(k)}-\theta_{k}\right)\right]-d+\frac{1}{2}\right)\\
\end{align*}
Therefore, we have 
$$f_{D}(x;\bm{\theta}_{D})=\sum_{i=1}^{n}\mu_{i}\thres\left(\sum_{k=1}^{d}\left[\thres\left(x^{(k)}-x_{i}^{(k)}+\theta_{k}\right)-\thres\left(x^{(k)}-x_{i}^{(k)}-\theta_{k}\right)\right]-d+\frac{1}{2}\right).$$
It is very easy to see that this is a two layer network consisted of threshold units. 

Furthermore, we note here that, in the proof shown above, we assume the only parameters in the network $f_{D}$ are $\bm{\theta}_{1},...,\bm{\theta}_{d}$. In fact, we can prove a more general statement where the $f_{D}$ is of the form
$$f_{D}(x;\bm{\theta}_{D})=\sum_{i=1}^{n}\mu_{i}\thres\left(\sum_{k=1}^{d}\left[a_{ik}\thres\left(x^{(k)}+u_{ik}\right)+b_{ik}\thres\left(x^{(k)}+v_{ik}\right)\right]+c_{i}\right),$$
where $a_{ik},b_{ik},u_{ik},v_{ik}, c_{i}$, $i\in[n], k\in[d]$ are all parameters. We can show that the neural network 
$$f_{D}(x;\bm{\theta}_{D})=\sum_{i=1}^{n}\mu_{i}\thres\left(\sum_{k=1}^{d}\left[\thres\left(x^{(k)}-x_{i}^{(k)}+\theta_{k}\right)-\thres\left(x^{(k)}-x_{i}^{(k)}-\theta_{k}\right)\right]-d+\frac{1}{2}\right),$$
denotes a local minimum, since any slight perturbations on parameters $a_{ik},b_{ik},u_{ik},v_{ik}, c_{i}$, $i\in[n], k\in[d]$ do not change the output of the neural network on the samples in the dataset $\mathcal{D}$.
\end{proof}

\end{proof}

\end{appendix}
\end{document}